\theoremstyle{plain}
\newtheorem{theorem}{Theorem}[section]
\newtheorem{lemma}[theorem]{Lemma}
\theoremstyle{definition}
\newtheorem*{example}{Example}
\newtheorem{remark}{Remark}
\newtheorem{assumption}{Assumption}
\newtheorem{corollary}{Corollary}
\crefname{appendix}{App.}{App.}
\crefname{subsubsubappendix}{App.}{App.}
\crefname{equation}{}{}
\crefname{lemma}{Lem.}{Lem.}
\crefname{theorem}{Thm.}{Thm.}
\crefname{Corollary}{Cor.}{Cors.}
\crefname{algorithm}{Alg.}{Algs.}
\crefname{section}{Sec.}{Sec.}
\crefname{table}{Tab.}{Tab.}
\crefname{remark}{Rem.}{Rem.}
\crefname{definition}{Def.}{Def.}
\crefname{Proposition}{Prop.}{Prop.}
\crefname{myremark}{Rem.}{Rem.}
\crefname{mylemma}{Lem.}{Lem.}
\crefname{mydefinition}{Def.}{Defs.}
\crefname{myproposition}{Prop.}{Prop.}
\crefname{mycorollary}{Cor.}{Cors.}
\crefname{myassumption}{Assum.}{Assum.}
\crefname{figure}{Fig.}{Fig.}
\crefname{myexample}{Ex.}{Ex.}
\crefname{enumi}{}{}
\crefname{name}{}{} %
    \newcommand{\betadiff}{\beta_{\delta}}
    \newcommand{\ppau}{\probparam_{\n, a}}
 \newcommand{\nhat}[1][\n, a, \threshold]{{\mbf{N}}_{#1}}
  \newcommand{\nstar}[1][\n, a, \threshold]{\mbf{N}^\star_{#1}}
  \newcommand{\nwstar}[1][\n, a, \threshold]{\wtil{\mbf{N}}^\star_{#1}} 
  \newcommand{\nnhat}[1][\n, a, \threshold]{{N}_{#1}}
  \newcommand{\nnstar}[1][\n, a, \threshold]{N^\star_{#1}}
  \newcommand{\nnwstar}[1][\n, a, \threshold]{\wtil{N}^\star_{#1}}    
  \newcommand{\tstopj}[1][\l]{\t_{(#1)}(\n, j)}
\newcommand{\N}{N}
\newcommand{\errt}[1][\n, \t, a]{\mrm{Err}_{#1}}
\newcommand{\pparamn}[1][\n]{\probparam_{#1}}
\newcommand{\pparamu}[1][\lunit]{\pparamn[#1]}
\newcommand{\NT}[1][\T]{\N_{#1}}
\newcommand{\MT}[1][\T]{M_{#1}}
\newcommand{\thresonet}[1][\T]{\threshold_{#1}'}
\newcommand{\threstwot}[1][\T]{\threshold_{#1}''}
\newcommand{\KTset}[1][\T]{\mbf{K}_{#1}}
\newcommand{\KT}[1][\T]{K_{#1}}
\newcommand{\sumN}[1][\n]{\sum_{#1=1}^{\N}}
\newcommand{\trueybar}[1][\T]{\bar{\trueobsvar}^{(a)}_{#1}}
\newcommand{\estybar}[1][\threshold_{\T}]{\hat{\bar{\trueobsvar}}^{(a)}_{#1}}
\newcommand{\estybarall}{\what{\overline{\trueobsvar}}^{(a)}}
\newcommand{\estybaralltwo}[1][\threshold_{\T}, K_{\T}]{\estybar[#1]}
\newcommand{\trueybarall}[1][\T]{\trueybar[#1]}
\newcommand{\errterm}[1][\delta]{e_{#1}}
\newcommand{\squash}[1]{\!{#1}\!}
\newcommand{\Sigv}[1][a]{\Sigma_{\ltime[]}^{(#1)}}
\newcommand{\vconst}{c_{\ltime[]}}
\newcommand{\uconst}{c_{\lunit[]}}
\newcommand{\nconst}{c_{\noise}}
\newcommand{\mat}[1]{\mbf{#1}}
\newcommand{\sumt}{\sum_{\t=1}^{\T}}
\newcommand{\sumn}[1][\n]{\sum_{#1=1}^{\N}}
\newcommand{\pmintnotag}{\underline{\policy}}
\newcommand{\pmint}[1][\T]{\pmintnotag_{a,#1}}
\newcommand{\obsvar}{Y}
\newcommand{\trueobsvar}{\theta}
\newcommand{\miss}[1][\n, \t]{A_{#1}}
\newcommand{\history}[1][\n, \t-1]{\mc F_{#1}}
\newcommand{\obs}[1][\n, \t]{\obsvar_{#1}}
\newcommand{\trueobs}[1][\n,\t]{\trueobsvar_{#1}^{(a)}}
\newcommand{\noiseobs}[1][\n, \t]{\noise_{#1}^{(a)}}
\newcommand{\estobs}[1][\n,\t, \threshold]{\what{\trueobsvar}_{#1}^{(a)}}
\newcommand{\lfun}{f}
\newcommand{\lunit}[1][\n]{u_{#1}}
\newcommand{\ltime}[1][\t]{v_{#1}}
\newcommand{\ybnd}{D}
\newcommand{\aref}[1]{\mrm{A}#1}
\newcommand{\estnbrnotag}{\mbf{\N}}
\newcommand{\nestnbrnotag}{\N}
\newcommand{\estnbr}[1][\n, \t, a, \threshold]{\estnbrnotag_{#1}}
\newcommand{\nestnbr}[1][\n, \t, a, \threshold]{\nestnbrnotag_{#1}}
\newcommand{\distnotag}{\rho}
\newcommand{\estdist}[1][\n, j, a]{\distnotag_{#1}}
\newcommand{\truedist}[1][\n, j, a]{\distnotag^\star_{#1}}
\newcommand{\threshold}{\eta}
\newcommand{\probparam}[1][a]{\Phi}
\newcommand{\suchthat}{\ \big \vert \ }
\newcommand{\nidx}[1][{i}]{#1}
\newcommand{\tidx}[1][{t}]{#1}
\newcommand{\actionset}{\mc A}
\newcommand{\ntimes}[1][T]{#1}
\newcommand{\noise}{\vareps}
\newcommand{\policy}{\uppi}
\newcommand{\T}{\ntimes}
\newcommand{\n}{\nidx}
\renewcommand{\t}{\tidx}
\newcommand{\order}{\mc{O}}
\newcommand{\mfk}{\mathfrak}
\newcommand{\indicator}{\mbf 1}
\newcommand{\snorm}[1]{\Vert #1 \Vert}
\newcommand{\sinfnorm}[1]{\snorm{#1}_\infty}
\newcommand{\sless}[1]{\stackrel{#1}{\leq}}
\newcommand{\sgrt}[1]{\stackrel{#1}{\geq}}
\newcommand{\seq}[1]{\stackrel{#1}{=}}
\newcommand{\x}{x}
\newcommand{\axi}[1][i]{\x_{#1}}
\renewcommand{\l}{\ell}
\newcommand{\eps}{\epsilon}
\newcommand{\vareps}{\varepsilon}
\newcommand{\ltwonorm}[1]{\norm{#1}_{2}}
\newcommand{\eventnotag}{\mc{E}}
\newcommand{\event}[1][]{\eventnotag_{#1}}
\newcommand{\wtil}[1]{\widetilde{#1}}
\renewcommand{\natural}{\mbb{N}}
\newcommand{\pseqxn}[1][n]{(\axi[i])_{i\geq 1}} %
\newcommand{\pseqxnn}[1][n]{(\axi[i])_{i=1}^n} %
\newcommand{\brackets}[1]{\left[ #1 \right]}
\newcommand{\parenth}[1]{\left( #1 \right)}
\newcommand{\sparenth}[1]{( #1 )}
\newcommand{\bigparenth}[1]{\big( #1 \big)}
\newcommand{\biggparenth}[1]{\bigg( #1 \bigg)}
\newcommand{\sbraces}[1]{\{ #1  \}}
\newcommand{\braces}[1]{\left\{ #1 \right \}}
\newcommand{\biggbraces}[1]{\bigg\{ #1 \bigg \}}
\newcommand{\abss}[1]{\left| #1 \right |}
\newcommand{\sabss}[1]{| #1  |}
\newcommand{\angles}[1]{\left\langle #1 \right \rangle}
\newcommand{\sangles}[1]{\langle #1 \rangle}
\newcommand{\tp}{^\top}
\newcommand{\inv}{^{-1}}
\newcommand{\real}{\ensuremath{\mathbb{R}}}
\def\balign#1\ealign{\begin{align}#1\end{align}}
\def\baligns#1\ealigns{\begin{align*}#1\end{align*}}
\def\balignat#1\ealign{\begin{alignat}#1\end{alignat}}
\def\balignats#1\ealigns{\begin{alignat*}#1\end{alignat*}}
\def\bitemize#1\eitemize{\begin{itemize}#1\end{itemize}}
\def\benumerate#1\eenumerate{\begin{enumerate}#1\end{enumerate}}
\newenvironment{talign*}
 {\let\displaystyle\textstyle\csname align*\endcsname}
 {\endalign}
\newenvironment{talign}
 {\let\displaystyle\textstyle\csname align\endcsname}
 {\endalign}
\def\balignst#1\ealignst{\begin{talign*}#1\end{talign*}}
\def\balignt#1\ealignt{\begin{talign}#1\end{talign}}
\newcommand{\qtext}[1]{\quad\text{#1}\quad} 
\newcommand{\stext}[1]{\ \text{#1}\ } 
\let\originalleft\left
\let\originalright\right
\renewcommand{\left}{\mathopen{}\mathclose\bgroup\originalleft}
\renewcommand{\right}{\aftergroup\egroup\originalright}
\def\tinycitep*#1{{\tiny\citep*{#1}}}
\def\tinycitealt*#1{{\tiny\citealt*{#1}}}
\def\tinycite*#1{{\tiny\cite*{#1}}}
\def\smallcitep*#1{{\scriptsize\citep*{#1}}}
\def\smallcitealt*#1{{\scriptsize\citealt*{#1}}}
\def\smallcite*#1{{\scriptsize\cite*{#1}}}
\def\blue#1{\textcolor{black}{{#1}}}
\def\mbf#1{\mathbf{#1}}
\def\mbb#1{\mathbb{#1}}
\def\mc#1{\mathcal{#1}}
\def\mrm#1{\mathrm{#1}}
\def\trm#1{\textrm{#1}}
\def\tbf#1{\textbf{#1}}
\def\N{\mathbb{N}}
\def\<{\left\langle} %
\def\>{\right\rangle}
\def\implies{\quad\Longrightarrow\quad}
\def\defeq{\triangleq} %
\def\half{\frac{1}{2}}
\def\quarter{\frac{1}{4}}
\def\norm#1{\left\|{#1}\right\|} %
\newcommand{\twonorm}[1]{\norm{#1}_2} %
\newcommand{\infnorm}[1]{\norm{#1}_{\infty}} %
\newcommand{\opnorm}[1]{\norm{#1}_{\mathrm{op}}} %
\def\what#1{\widehat{#1}}
\def\E{\mbb{E}} %
\def\P{\mbb{P}} %
\def\Var{\mrm{Var}} %
\def\KL#1#2{\textnormal{KL}({#1}\Vert{#2})}
\newcommand{\iid}{\textrm{i.i.d}\xspace}
\newenvironment{proof-sketch}{\noindent\textbf{Proof Sketch}
  \hspace*{1em}}{\qed\bigskip\\}
\newenvironment{proof-idea}{\noindent\textbf{Proof Idea}
  \hspace*{1em}}{\qed\bigskip\\}
\newenvironment{proof-of-lemma}[1][{}]{\noindent\textbf{Proof of Lemma {#1}}
  \hspace*{1em}}{\qed\\}
\newenvironment{proof-of-theorem}[1][{}]{\noindent\textbf{Proof of Theorem {#1}}
  \hspace*{1em}}{\qed\\}
\newenvironment{proof-attempt}{\noindent\textbf{Proof Attempt}
  \hspace*{1em}}{\qed\bigskip\\}
\newcommand{\ox}[1]{\opt{arxiv}{#1}}
\newcommand{\oa}[1]{\opt{aos}{#1}}
\newcommand{\vlf}{\mc V}
\newcommand{\ulf}{\mc U} 
\newcommand{\zalpha}{z_{\alpha/2}}
\newcommand{\seqpolicy}{Sequential policy}
\newcommand{\bilin}{Latent factorization of counterfactuals}
\newcommand{\posdefinite}{\iid latent time factors}
\newcommand{\iidlf}{\iid unit latent factors}
\newcommand{\zeromean}{\iid zero mean noise}
\newcommand{\examplediscrete}{Discrete unit factors\xspace}
\newcommand{\examplecontinuous}{Continuous unit factors\xspace}
\newcommand{\varestimatename}{Consistency of variance estimate}
\newcommand{\LT}[1][\T]{L_{#1}}
\newcommand{\anytimeasympname}{Asymptotic guarantees for $\estobs$}    
\newcommand{\errtwo}[1][\T]{\errterm[#1]}
\newcommand{\unittime}{unit$\times$time\xspace}
\newcommand{\anytimeboundname}{Non-asymptotic error bound for $\estobs$}
\newcommand{\ateconsistencyname}{Asymptotic guarantees for estimating $\trueybarall[]$}
\newcommand{\pparam}[1][\thresonet]{\ppau(#1)}
\newcommand{\goodusers}{filtered users\xspace}
\newcommand{\nonlinearanytimeboundname}{Non-asymptotic error bound for $\estobs$ under a non-linear factor model}
\newcommand{\nonlinearfactor}{Non-linear factorization of counterfactuals}
\newcommand{\nonlineartimefactor}{Lower bounded density of time factors}
\renewcommand{\N}{N}
\begin{document}

\begin{frontmatter}
\title{Counterfactual inference in sequential experiments}
\runtitle{Counterfactual inference in sequential experiments}

\begin{aug}
\author[A]{\fnms{Raaz}~\snm{Dwivedi}\ead[label=e1]{dwivedi@cornell.edu}},
\author[B]{\fnms{Katherine}~\snm{Tian}\ead[label=e2]{kattian@alumni.harvard.edu}},
\author[C]{\fnms{Sabina}~\snm{Tomkins}\ead[label=e3]{stomkins@umich.edu}},
\author[C]{\fnms{Predrag}~\snm{Klasnja}\ead[label=e4]{klasnja@umich.edu}},
\author[B]{\fnms{Susan}~\snm{Murphy}\ead[label=e5]{samurphy@harvard.edu}}
\and
\author[D]{\fnms{Devavrat}~\snm{Shah}\ead[label=e6]{devavrat@mit.edu}}
\address[A]{Department of Operations Research and Information Engineering,
Cornell Tech\printead[presep={,\ }]{e1}}

\address[B]{Department of Statistics,
Harvard University\printead[presep={,\ }]{e2,e5}}

\address[C]{School of Information,
University of Michigan\printead[presep={,\ }]{e3,e4}}
\address[D]{Department of Electrical Engineering and Computer Science,
Massachusetts Institute of Technology\printead[presep={,\ }]{e6}}

\end{aug}

\begin{abstract}
We consider after-study statistical inference for sequentially designed experiments wherein multiple units are assigned treatments for multiple time points using treatment policies that adapt over time. Our goal is to provide inference guarantees for the counterfactual mean at the smallest possible scale---mean outcome under different treatments \emph{for each unit and each time}---with minimal assumptions on the adaptive treatment policy. Without any structural assumptions on the counterfactual means, this challenging task is infeasible due to more unknowns than observed data points. To make progress, we introduce a  latent factor model over the counterfactual means that serves as a non-parametric generalization of the non-linear mixed effects model and the bilinear latent factor model considered in prior works. For estimation, we use a non-parametric method, namely a variant of nearest neighbors, and establish a non-asymptotic high probability error bound for the counterfactual mean for each unit and each time. Under regularity conditions, this bound leads to asymptotically valid confidence intervals for the counterfactual mean as the number of units and time points grows to $\infty$ together at suitable rates. We illustrate our theory via several simulations and a case study involving data from a mobile health clinical trial HeartSteps.
\end{abstract}

\begin{keyword}[class=MSC]
\kwd[Primary ]{62L10}
\kwd{62L05}
\kwd{62K05}
\kwd[; secondary ]{62G20}
\end{keyword}

\begin{keyword}
\kwd{Sequential experiments}
\kwd{counterfactual inference}
\kwd{adaptive randomization}
\kwd{non-linear factor model}
\kwd{mixed effects model}
\kwd{nearest neighbors}
\end{keyword}

\end{frontmatter}

\addtocontents{toc}{\protect\setcounter{tocdepth}{0}}

\section{Introduction}
\label{sec:introduction}
There is growing interest in using sequential experiments in a multitude of real-world applications. As examples, the domains and applications using such experiments include mobile health, e.g., to encourage healthy behavior~\cite{yom2017encouraging,liao2020personalized,forman2019can,tomkins2021intelligentpooling}, online education, e.g., to enhance learning,~\cite{liu2014trading,qi2018bandit,shaikh2019balancing,cai2021bandit}, legal context, e.g., to encourage court attendance~\cite{stanford:pre-trialnudge}, public policy~\cite{kasy2021adaptive,caria2020adaptive}, besides personalized ads and news recommendations~\cite{li2010contextual,schwartz2017customer,bakshy2013uncertainty,avadhanula2021stochastic,sawant2018contextual}, and the conventional domains like recommender systems~\citep{ijcai2019sequential}. We focus on sequential experiments where in $\N$ units undergo a sequence of treatments for $\T$ time periods. Via such experiments (also referred to as studies), practitioners aim to learn effective treatment that promote a desired behavior (or outcome) with the units in the experiment. A key task for achieving this aim is the after-study estimation of treatment effect that can inform the design of subsequent studies. Here we develop methods to estimate the effect of various treatments at the unit$\times$time-level---the smallest possible scale---for a general class of sequential experiments.

More precisely, we consider an experimental setting with finitely many treatments $\actionset$ (e.g., $\braces{0, 1}$). Let $\miss$ denote the treatment assigned to unit $\n \in[\N]$ at time $\t\in[\T]$, and let $\obs$ denote the outcome observed for unit $\n$ after the treatment was assigned at time $\t$. We assume the following model:
\begin{talign}
\label{eq:model_mc}
    \obs^{(a)} = \trueobsvar_{\n, \t}^{(a)} + \noiseobs,
    \stext{for} a\in\actionset,
    \qtext{and}
    \obs = \obs^{(\miss)},
\end{talign}
where for each treatment $a\in\actionset$, the tuple $(\obs^{(a)}, \trueobsvar_{\n, \t}^{(a)})$ denotes the counterfactual outcome (also referred to as the potential outcome) and its mean under treatment $\miss=a$, and $\noiseobs$ denotes zero-mean noise. Note that for any at a given time, only one of the outcomes is observed (denoted by $\obs$). The model~\cref{eq:model_mc} follows the classical Neyman-Rubin causal model~\cite{neyman,rubin1976}, and as stated already makes two assumptions: (i) The \emph{consistency assumption}, also known as the stable unit treatment value assumption (SUTVA), i.e., the observed outcome is equal to the potential outcome, and (ii) \emph{No delayed or spill-over effect}, i.e., the counterfactuals at time $\t$ do not depend on any past treatment.\footnote{See \cref{sub:delayed_effects} on possible extensions when spillover effects are present.} We consider settings where the treatments $\sbraces{\miss}_{\n=1}^{\N}$ are assigned using an adaptive treatment policy $\policy_{\t}$ that can depend on all the data observed until time $\t$ (hence the name sequential experiment).  In this work, we consider generic sequential policies (see \cref{assum:policy} for a formal description), e.g., it can be based on a bandit algorithm like Thompson sampling, $\epsilon$-greedy etc. Notably, we allow policies that pool the data across units during the experiment. While such pooled policies exhibit promises for effective treatment delivery during the study in noisy data settings, they introduce non-trivial challenges for after-study inference~\cite{yom2017encouraging,tomkins2021intelligentpooling}.
 
A growing line of recent work has developed methods to estimate average treatment effects in sequential experiments with stochastic adaptive policies. These works primarily leverage on knowledge of the policy and use adjustments via inverse propensity weights (the sampling probabilities), see, e.g., \cite{hadad2021confidence,zhan2021off,bibaut2021post} for off-policy evaluation, \cite{zhang2021statistical} for asymptotic properties of M-estimators in contextual bandits. These works, however, assume \iid potential outcomes---namely independent units are observed at each time (i.e., there is no joint experiment across units). \cite{NEURIPS2020_6fd86e0a,bojinov2021panel} provide strategies to estimate average treatment effects for non-stationary settings that include multiple units within an experiment and allow for non-\iid potential outcomes within units over time. Notably these works assume that the experiments are run independently for each unit so that pooling of units for policy design is \emph{not} allowed.

This work is motivated by the scientific desires of performing multiple primary and secondary analyses after-study~\citep{liu2014trading,erraqabi2017trading}, leading to our goal, namely \emph{counterfactual inference in sequential experiments} at the unit$\times$time level. In particular, we want to learn the counterfactual means  $\Theta\defeq\sbraces{\trueobs, a\in\actionset, \n\in[\N], \t\in[\T]}$ in \cref{eq:model_mc}. This goal is more ambitious than estimating the population-level mean counterfactual mean parameter $\bar{\theta}^{(a)}\!\!=\! \E[\obs^{(a)}]$ (with expectation over units, and time), since we consider heterogeneity in the means across units, time and treatments. These estimates can then be used to (i) estimate the treatment effect for a given unit at a given time (often referred to as the individual treatment effect (ITE)), the variation in ITE across units, and time, etc., (ii) to adjust the sequential policy updates in an ongoing experiment, (iii) to assess the regret of the sequential policy, and (iv) off-policy optimization for the next experiment, etc.

 \paragraph{Our contributions} Without any structural assumption on the counterfactuals $\Theta$, our task is infeasible, since the number of unknowns ($|\actionset|\N\T$) is more than the number of observations ($\N\T$). We consider a non-linear latent factor model on $\Theta$ that (i) is a (statistical) model-free generalization of a broad class of non-linear mixed effects model, (ii) allows heterogeneity in treatment effects across units and time, and (iii) makes counterfactual inference in sequential experiments feasible (see \cref{sub:model} for details). To construct the counterfactual estimates for data from a sequential experiments with a generic sequential policy, sub-Gaussian latent factors, and noise (\cref{assum:policy,assum:bilinear,assum:lambda_min,assum:iid_unit_lf,assum:zero_mean_noise}), we use a non-parametric method, namely a variant of nearest neighbors algorithm (\cref{sec:algorithm}). We establish a high probability error bound for our estimate of $\trueobs$ at any time $\t$ and for any unit $\n$ (\unittime-level; \cref{thm:anytime_bound}) that can be applied to a wide range of factor distributions. For the setting with $M$ distinct types of units (discrete units), the squared error is $\wtil{\order}(\T^{-1/2}+M/N)$ under regularity conditions (\cref{cor:anytime_bound}). We then show that under regularity conditions, our general error guarantee yields an asymptotically valid prediction interval (\cref{thm:anytime_asymp}); which admits a width of $\wtil{\order}(\T^{-1/4})$ for the setting with discrete units (\cref{cor:anytime_asymp}). We also establish asymptotic guarantees for the population-level mean counterfactual using the \unittime-level estimates (\cref{thm:ate_asymp}), with width of the intervals similar to that at the unit-level (\cref{cor:ate_asymp}). Our proof involves a novel sandwich argument to tackle the challenges introduced due to sequential nature of policy (see \cref{sub:proof_sketch}). In \cref{sec:possible_extensions}, we illustrate how our theory extends to settings with (bounded) spillover effects.
Finally, we illustrate our theory via several simulations, and a case study with data from a mobile health trial HeartSteps~\cite{liao2020personalized}.

 Overall, our work compliments the growing literature on statistical inference with adaptively collected data~\cite{hadad2021confidence,bibaut2021post,zhang2020inference,zhang2021statistical} by providing \emph{the first guarantee for counterfactual inference under a general non-parametric model that also allows for adaptive sampling policies that depend arbitrarily on all units' past observed data}. En route to establishing our guarantees, we leverage the connection between latent factor models, and matrix completion, \blue{and provide the first entry-wise guarantee for non-linear factor models, and under (sequential) missing at random (MAR) settings. Overall, our results advance the literature on factor models~\cite{bai2003inferential,bai2024likelihood,athey2021matrix} with entry-wise inference with non-linear factor models under sequential randomization as well as matrix completion results by complimenting the recent work on entry-wise guarantees for missing completely at random settings (purely exploratory experiment)~\cite{li2019nearest,chen2019inference,ieee_matrix_completion_overview} and missing not at random settings (observational study)~\cite{agarwal2021synthetic,agarwal2021causal}.}

\paragraph{Organization} We start with a description of our set-up and algorithm in \cref{sec:problem_algorithm}, followed by main results in \cref{sec:main_results}. We discuss multiple extensions of our work in \cref{sec:possible_extensions} and present our empirical vignettes of our algorithm in \cref{sec:experiments}. We conclude with a discussion of future directions in \cref{sec:discussion}. Proofs and details of experimentsare deferred to the supplement~\ox{\cite{supplement}}\oa{\cite{supplement}}.

\paragraph{Notation}
For $k \in \natural$, $[k] \defeq \braces{1, \ldots, k}$. We use bold symbols (like $\mbf N$, $\mbf T$) to denote sets, and normal font with same letters to denote the size of these sets (e.g, $N= |\mbf N|$ and $T = |\mbf T|$). We write $a_{n} = \order(b_{n})$, $a_n \ll b_n$, or $b_n \gg a_n$ to denote that there exists a constant $c\!>\!0$ such that $a_{n}\! \leq\! c b_{n}$, and $a_{n} \!=\! o(b_{n})$ when $\frac{a_{n}}{b_{n}} \!\to\! 0$ as $n \to \infty$. We use $a_n \precsim b_n$ or $b_n \succsim a_n$ to denote that $a_n = \order((\log n)^c b_n)$ for some constant $c$. We use $\wtil{O}$-notation to hide logarithmic dependencies on the underlying index set. For a sequence of real-valued random variables $X_n$, we write $X_n=\order_{P}(b_n)$ when $\frac{X_n}{b_n}$ is stochastically bounded, i.e., for all $\delta>0$, there exist finite $c_{\delta}$ and $n_{\delta}$ such that $\P(|\frac{X_n}{b_n}|\!>\!c_{\delta})\!<\!\delta$, for all $n>n_{\delta}$; and we write $X_{n} \!=\! o_p(b_{n})$ when $\frac{X_{n}}{b_{n}}\!\to\! 0$ in probability as $n \!\to\! \infty$. We often use a.s. to abbreviate almost surely.

\section{Problem set-up and algorithm}
\label{sec:problem_algorithm}

In \cref{sec:data_generation}, we formally describe the data generating mechanism and the structural assumptions on the counterfactual means, followed by the description of the nearest neighbors algorithm proposed for estimating these means (along with confidence intervals) in \cref{sec:algorithm}. We then elaborate two settings in \cref{sub:two_illustrative} that serve as running illustrative examples for our theoretical results.

Throughout this work, we use the term exogenous to denote that the quantity was generated independent of all other sources of randomness.
\vspace{-2mm}
\subsection{Sequential policy}
\label{sec:data_generation}
First, we describe how the treatments are assigned in the sequential experiment. Let $\policy_{\t}$ denote the treatment policy at time $\t$ that generates the treatments $\sbraces{\miss}_{\n=1}^{\N}$. We use 
$\history[\t\!-\!1] \defeq \braces{\obs[\n, \t'], \miss[\n, \t'], \n \in [\N], \t'\in [\t\!-\!1]}$ to denote the sigma-algebra of the data observed till time $\t\!-\!1$. Our first assumption states that the $\sbraces{\policy_{\t}}_{\t=1}^{\T}$ are sequentially adaptive, and in fact the dependence of the policy can be arbitrary on the past data:
\vspace{-2mm}
\begin{assumption}[\seqpolicy]
    \label{assum:policy}
    The treatment policy $\policy_{\t}$ at time $t$ is measurable with respect to the sigma-algebra, $\history[\t\!-\!1]$, where $\policy_{\t,\n}(a) = \P(\miss = a \vert  \history[\t\!-\!1])$ for unit $\n\in[\N]$ and treatment $a\in\actionset$. Conditioned on $\history[\t-1]$, the variables $\sbraces{\miss[\n,\t]}_{\n=1}^{\N}$ are drawn independently.
    Moreover, for $a \in \actionset$, let $\sbraces{\pmint[\t]}_{\t=1}^{\infty}$ denote a sequence of non-increasing scalars (that can decay to $0$) such that
    \begin{align}
    \label{eq:pmin}
        \inf_{\t'\leq \t, \n\in[\N]}\! \policy_{\t', \n}(a) \geq \pmint[\t]
        \qtext{almost surely for $\t \in [\T]$.}
    \end{align}
\end{assumption}
\vspace{-2mm}
\cref{assum:policy} states that the treatments at time $\t$ are assigned based on a policy that depends on the data observed till time $\t\!-\!1$; and conditional on the history, the treatments are independently assigned at time $\t$. Notably \cref{assum:policy} puts no restrictions on how the policy is updated, e.g., it could have pooled the data across units, it could be based on bandit algorithms like Thompson sampling, $\epsilon$-greedy, multiplicative weights, softmax etc, or it can come from a non-Markovian algorithm that might use past data in a non-trivial way. The term $\pmint[\t]$ captures the rate of decay in the minimum exploration of the underlying sampling policy. E.g., an $\epsilon$-greedy or pooled $\epsilon$-greedy based treatment policy, with $\epsilon$ decaying with $\t$ as $\epsilon_{\t}$, would admit $\pmint[\t]=\frac{\epsilon_{\t}}{|\actionset|}$. \cref{assum:policy} is generic and natural in several sequential experiments. In the mobile health trial considered later in \cref{sub:mobile_health_study}, for the treatment being sending a notification (or not)---whether or not user $\n$ is sent a notification at time $\t$ is independent of whether any other user $j$ is sent a notification, conditional on historical data for all users.

\paragraph{Bounded noise}
We make a standard assumption on the noise variables viz zero mean and bounded variance. To simplify the presentation, we assume bounded noise; and refer to \ox{\cref{sub:sub_gauss}}\oa{\cite[\cref{sub:sub_gauss}]{supplement}} for an extension of our theory for sub-Gaussian noise:
\vspace{-2mm}
\begin{assumption}[\zeromean]
\label{assum:zero_mean_noise}
    The noise $\sbraces{\noiseobs, \n\in[\N], \t \in [\T]}$ are exogenous, independent of each other with $\E[\noiseobs] = 0$, $\E[(\noiseobs)^2]=\sigma_{a}^2$, and $|\noiseobs| \leq \nconst$ a.s. for $a \in \mc A$.
\end{assumption}
\vspace{-2mm}
\vspace{-2mm}
\vspace{-2mm}

\subsection{Non-parametric factor model on counterfactuals}
\label{sub:model}
We impose a non-linear factor model on the collection of mean counterfactual parameters $\Theta$. Our model involves two sets of $d$-dimensional latent factors---$\ulf \defeq \sbraces{\lunit^{(a)}, a\in\actionset, \n \in[\N]}$, and $\vlf \defeq \sbraces{\ltime^{(a)}, a \in \actionset, \t\in[\T]}$. In particular, we assume that in the study, given a treatment $a\in\actionset$, for each unit we can associate a latent unit factor $\lunit^{(a)}$, and with each time we can associate a latent time factor $\ltime^{(a)}$, such that conditioned on these latent factors
\begin{align}
\label{eq:non_linear_lf}
    \trueobs = \lfun^{(a)}(\lunit^{(a)}, \ltime^{(a)}),
\end{align}
for some \emph{fixed but unknown} function $\lfun^{(a)}$. For the sequential experiments motivating our work, the unit latent factors can account for the unobserved unit-specific traits, while the latent time factors encode time-specific causes that might correspond, e.g., to unobserved societal changes. Our model allows the non-linearity, the unit and time factors to vary treatment $a$.
Putting this together with \cref{eq:model_mc}, we conclude that our counterfactual model is given by
\begin{align}
\label{eq:model_mc_complete}
    \obs^{(a)} = \lfun^{(a)}(\lunit^{(a)}, \ltime^{(a)}) + \noiseobs.
\end{align}
In particular, $\lunit^{(a)}$, and $\ltime^{(a)}$ capture the heterogeneity of counterfactuals for a given treatment $a$ across units and time respectively, while $\lfun^{(a)}$ allows for a different non-linear function across treatments.
We highlight that \cref{eq:model_mc_complete} is a flexible non-parametric model as it does not assume the knowledge of the non-linear function $\lfun^{(a)}$ and puts no parametric assumption on the distributions of latent factors and the noise variables. This functional form generalizes the classical (i) non-linear mixed effects model, and (ii) bilinear latent factor model in causal panel data, which we now discuss.
\vspace{-2mm}

\subsubsection{Generalization of prior models}
\label{sub:generalize}
We first discuss how \cref{eq:model_mc_complete} generalizes a class of non-linear mixed effects models, followed by the bilinear factor model in causal panel data.

\paragraph{Relation with non-linear mixed effects model}
Our non-linear latent factor model \cref{eq:model_mc_complete} is a (statistical) model-free generalization of non-linear mixed effect model with no observed covariates. For such a model, we note that the corresponding equation for the counterfactuals is given by~\citep[Eqn.~2.1]{vonesh1992mixed}, \citep[Eqn.~3.1]{young1997fieller}):
\begin{align}
\label{eq:mixed_effect}
    \obs^{(a)} = g(\beta, \wtil{\lunit}^{(a)}, \t, a) + \noiseobs.
\end{align}
Here the function $g$ takes a \emph{known} non-linear functional form and is parameterized by the unknown parameter $\beta$, also called the \emph{fixed effect}; $\wtil{\lunit}^{(a)}$ is a zero mean random variable specific to unit $\n$ and treatment $a$, and is referred to as unit-specific \emph{random effect} assumed be drawn \iid from a known family of distributions (typically Gaussian distribution with unknown variance); and $\noiseobs$ denotes \iid mean zero noise, that is also typically assumed to be Gaussian with unknown variance.  The noise and random effects are assumed to be drawn independent of each other. Some common examples of \cref{eq:mixed_effect} include the univariate linear mixed effects model:
\begin{align}
\label{eq:additive_re}
    g(\beta, \wtil{\lunit}^{(a)}, \t, a) = \beta_{a} +\wtil{\lunit}^{(a)} + \wtil{\ltime}^{(a)},
\end{align}
where the scalars $\wtil{\ltime}^{(a)}$ capture the variation of $g$ with time $\t$ and $a$. A multivariate linear mixed effects model is given by
\begin{align}
    \label{eq:mult_re}
    g(\beta, \wtil{\lunit}^{(a)}, \t, a)
    = \angles{\lunit^{(a)}, \ltime^{(a)}},
\end{align}
where $\lunit^{(a)}$ denotes the unit-specific random effect, and $\ltime^{(a)}$ denotes the time-specific random effect.
We highlight that \cref{eq:additive_re} can be obtained as a special case of \cref{eq:mult_re} by choosing $\lunit^{(a)} \gets (\beta_a, \wtil{\lunit}^{(a)}, 1)\tp$, and $\ltime^{(a)} \gets (1, 1, \wtil{\ltime}^{(a)})\tp$. For these models, the quantities of interest include the fixed effect (parameter) $\beta$, and the unit-specific random effect (random variable) $\wtil{\lunit}^{(a)}$ for each unit $\n$. Typical estimation strategies include Expectation-Maximization for maximum likelihood for estimating $\beta$, followed by empirical Bayes for estimating $\wtil{\lunit}^{(a)}$ when the random effects and noise are Gaussian; or Markov chain Monte Carlo for random effects and noise from a general parametric family~\cite{laird1982random,young1997fieller} (see the book~\cite{davidian2017nonlinear} for a thorough treatment for random effects models).

\paragraph{Tradeoff between assumptions and estimands} Given the discussion above, we conclude that our model~\cref{eq:model_mc_complete} is a generalization of the non-linear mixed effects model~\cref{eq:mixed_effect} since we assume \emph{no knowledge} of (i) the non-linear function $\lfun^{(a)}$, (ii) the distribution  of $\lunit^{(a)}$, and (iii) the distribution of noise $\noiseobs$. We tradeoff this increase in the flexibility of the model with a restriction on the quantities of interest from the unknown parameter and random effects to the conditional mean of counterfactuals. E.g., in the setting~\cref{eq:additive_re} instead of estimating each of the (fixed and random) effects $\beta, \wtil{\lunit}^{(a)}, \wtil{\ltime}^{(\t)}$, we directly estimate the unit-time level conditional mean $\beta_{a} +\wtil{\lunit}^{(a)} + \wtil{\ltime}^{(a)}$. Note that this change of estimand also implies that our latent factors are not necessarily mean zero. This tradeoff in the favor of a weaker assumptions with a weaker estimand is motivated by our main goal in this paper, namely estimating the \unittime-level treatment effect. This choice is well-suited for numerous applications in healthcare, behavioral science and social science,  where the functional form of treatment effect is not known due to the lack of well-established mechanistic models between the unobserved random effects and the observed outcomes. (In the mobile health study considered later in this work, the functional form of relationship between the unobserved user traits and the step-counts over time is not well-understood.) 

\paragraph{Relation with causal panel data} We note that special cases of model \cref{eq:model_mc_complete} have appeared in prior work in panel data settings, synthetic control and synthetic interventions with a bilinear $\lfun^{(a)}$ and shared unit latent factors across treatments. In particular, the synthetic control literature~\cite{abadie2,xu2017generalized,arkhangelsky2019synthetic} and the causal panel data literature~\cite{athey2021matrix,bai2021matrix} imposes a bilinear factor model as the counterfactual means under control (treatment $0$) as $\trueobsvar_{\n, \t}^{(0)} = \sangles{\lunit, \ltime}$ as their primary estimands are the control counterfactuals $\trueobsvar_{\n, \t}^{(0)} $. On the other hand, in synthetic interventions~\cite{agarwal2021synthetic}, all counterfactuals are modeled via bilinear factor model with shared unit latent factors across treatments, that is, $\trueobs = \sangles{\lunit, \ltime^{(a)}}$. Thus our model~\cref{eq:model_mc_complete} is a non-linear generalization of the counterfactual mean model assumed in these works.

We finally state our non-parametric model as a formal assumption for ease of reference throughout the paper. For simplifying the presentation of results for the most of the paper, we state most of our guarantees primarily for a bilinear latent factor model. We highlight that this assumption is \emph{not} necessary for our analysis and we present a more general result for the non-linear factor model with Lipschitz $\lfun^{(a)}$ in \cref{sub:non_linear_lf}.
\vspace{-2mm}
\begin{assumption}[\bilin]
\label{assum:bilinear}
    Conditioned on the latent factors $\ulf$ and $\vlf$, the mean counterfactuals satisfy $\trueobsvar_{\n, \t}^{(a)}\! 
    \defeq \E[\obs^{(a)}\vert \ulf,\vlf] = \!\sangles{\lunit^{(a)}, \ltime^{(a)}}$ for $a\in \actionset$.
\end{assumption}
\vspace{-2mm}
We make a final remark before describing the overall data generating mechanism.
\vspace{-2mm}
\begin{remark}[Dimensionality reduction]
   The factorization assumption~\cref{eq:model_mc_complete} implicitly serves the purpose of dimensionality reduction as it reduces the effective degrees of freedom for our counterfactual inference task with $\NT$ observed outcomes of estimating $|\actionset|\N\T$ unknowns in $\Theta$ to estimating $|\actionset|(\N+\T)d$ unknowns in $(\ulf, \vlf)$. Throughout this work, we treat the dimension $d$ of latent factors as fixed while $\N, \T$ get large.  
\end{remark}
\vspace{-2mm}
\vspace{-2mm}
\vspace{-2mm}

\subsubsection{Distributional assumptions on latent factors, and data generating mechanism}
\label{sub:lf}
Next, we state assumptions on how the latent factors are generated---put simply, we assume that all the latent factors are exogenous and drawn \iid. For simplicity, we also assume that the latent factors are bounded, and discuss how our results can be easily extended to the setting with sub-Gaussian latent factors in \ox{\cref{sub:sub_gauss}}\oa{\cite[\cref{sub:sub_gauss}]{supplement}}.
\vspace{-2mm}
\vspace{-2mm}
\begin{assumption}[\iidlf]
\label{assum:iid_unit_lf}
    The unit latent factors $\ulf \defeq \sbraces{\lunit^{(a)}, a\in\actionset}_{\n=1}^{\N}$ are exogenous, and drawn \iid from a distribution with bounded support and mean not necessarily zero, and $\Vert{\lunit^{(a)}}\Vert_2 \leq \uconst$ a.s. 
\end{assumption}
\vspace{-2mm}
\vspace{-2mm}
\vspace{-2mm}
\vspace{-2mm}
\begin{assumption}[\posdefinite]
\label{assum:lambda_min}
The time latent factors $\vlf \defeq \sbraces{\ltime^{(a)}, a\in\actionset}_{\t=1}^{\T}$ are exogenous, and for each $a \in \actionset$, they are drawn \iid from a distribution with bounded support and mean not necessarily zero, $\Vert\ltime^{(a)}\Vert_2 \leq \vconst$ a.s., and $\Sigv\defeq \E[\ltime^{(a)}(\ltime^{(a)})\tp] \succeq \lambda_{a} \mat I_{d}$ for some constants $\vconst, \lambda_a>0$. 
\end{assumption}
\vspace{-2mm}
Before discussing \cref{assum:iid_unit_lf,assum:lambda_min}, we note that putting together \cref{assum:policy,assum:zero_mean_noise,assum:lambda_min,assum:iid_unit_lf,assum:bilinear} imply the following data generating mechanism:
\begin{enumerate}[leftmargin=*]
    \item Generate $\N|\actionset|$ \iid latent factors $\ulf$, $\T|\actionset|$ \iid time latent factors $\vlf$ and $\N\T$ \iid  noise variables $\sbraces{\noiseobs}$, independently of each other. (These random variables determine the $\N\T|\actionset|$ counterfactual mean parameters $\Theta$, and the counterfactual outcomes in \cref{eq:model_mc_complete}.)
    \item Initialize the treatment policy at time $\t=1$ to some vector $\policy_1 \in [0, 1]^{\N}$.
    \item For $t=1, 2, \ldots, \T$:
    \begin{enumerate}[label=(\roman*),leftmargin=*]
        \item Generate treatments $\miss\sim\mrm{Multinomial}(\policy_{\t,\n})$, independently across all units $\n \in [\N]$.
        \item Observe noisy outcomes $\sbraces{\obs =\trueobsvar^{(\miss)}_{\n, \t} + \noiseobs}_{\n=1}^{\N}$ are observed for all units $\n \in [\N]$.
        \item Update the treatment policy to $\policy_{\t+1} \in [0, 1]^{\N}$ using all the observed data so far.
    \end{enumerate}
\end{enumerate}
This mechanism serves as a simplified but representative setting for several sequential experiments, especially the ones on digital platforms (like mobile health, online education).

Now we discuss \cref{assum:iid_unit_lf,assum:lambda_min}. The \iid assumption on the unit latent factors (\cref{assum:iid_unit_lf}) is a reasonable assumption when the units in the experiment are drawn in an \iid manner from some superpopulation. This assumption is standard in the literature on experiments (as well as its analog that the unit-specific random effects are drawn \iid in the literature on mixed-effects models). \blue{Such a distributional assumption on the unit factors allows us to easily characterize the neighborhood (number of neighbors, smoothness in how that number varies) around a fixed unit factor as a function of the distance from that unit factor. These distributional assumptions can be relaxed and instead regularity conditions that provide a tight control on the number of neighbors as a function of radius.}

The \iid assumption on the time latent factors however can be a bit restrictive, as the exogenous time-varying causes might be correlated. We discuss possible extensions to such settings in \ox{\cref{sub:non_iid}}\oa{\cite[\cref{sub:non_iid}]{supplement}}. Moreover, the positive definiteness of $\Sigv$ can also be relaxed (\ox{\cref{sub:non_psd}}\oa{\cite[\cref{sub:non_psd}]{supplement}}).
The boundedness assumption on latent factors is for simplicity, and can be extended to sub-Gaussian latent factors; see \ox{\cref{sub:sub_gauss}}\oa{\cite[\cref{sub:sub_gauss}]{supplement}}. Finally, we remark that even though unit and time latent factors are \iid, the counterfactual means in $\Theta$ are not \iid, and are correlated across units and time. E.g., for unit $\n$, the mean parameters $\sbraces{\trueobs[\n, \t]}_{\t=1}^{\T}$ across time are coupled via the unit latent factor $\lunit$. We discuss two illustrative examples of latent factor distributions in \cref{sub:two_illustrative}.

\begin{figure}[t]
    \centering
    \resizebox{\textwidth}{!}{
    \begin{tabular}{cc}
    \includegraphics[width=0.51\linewidth,trim={0 3.5cm 0 0cm},clip]{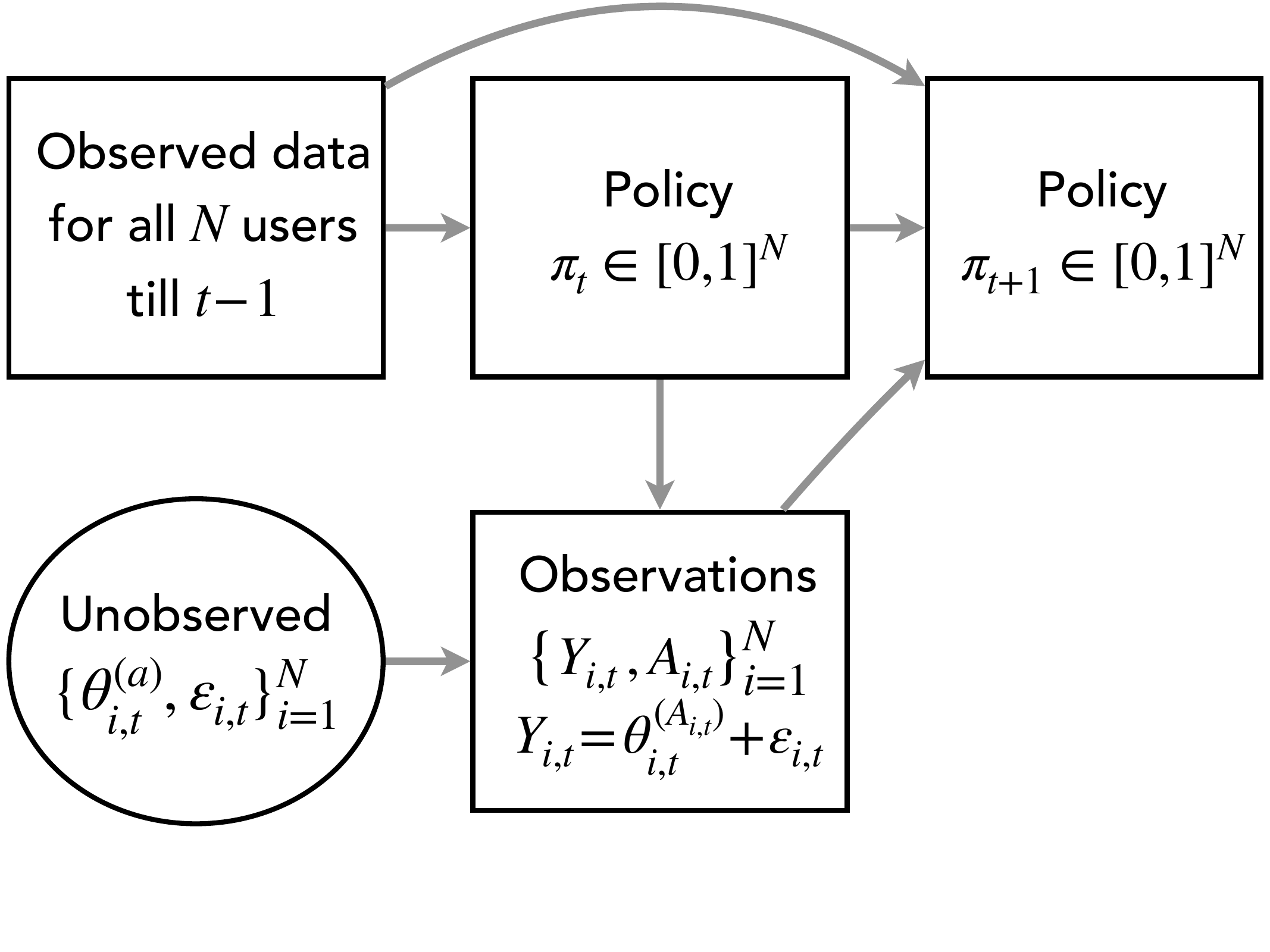} &
    \includegraphics[width=0.44\linewidth,trim={0 0cm 6cm 0},clip]{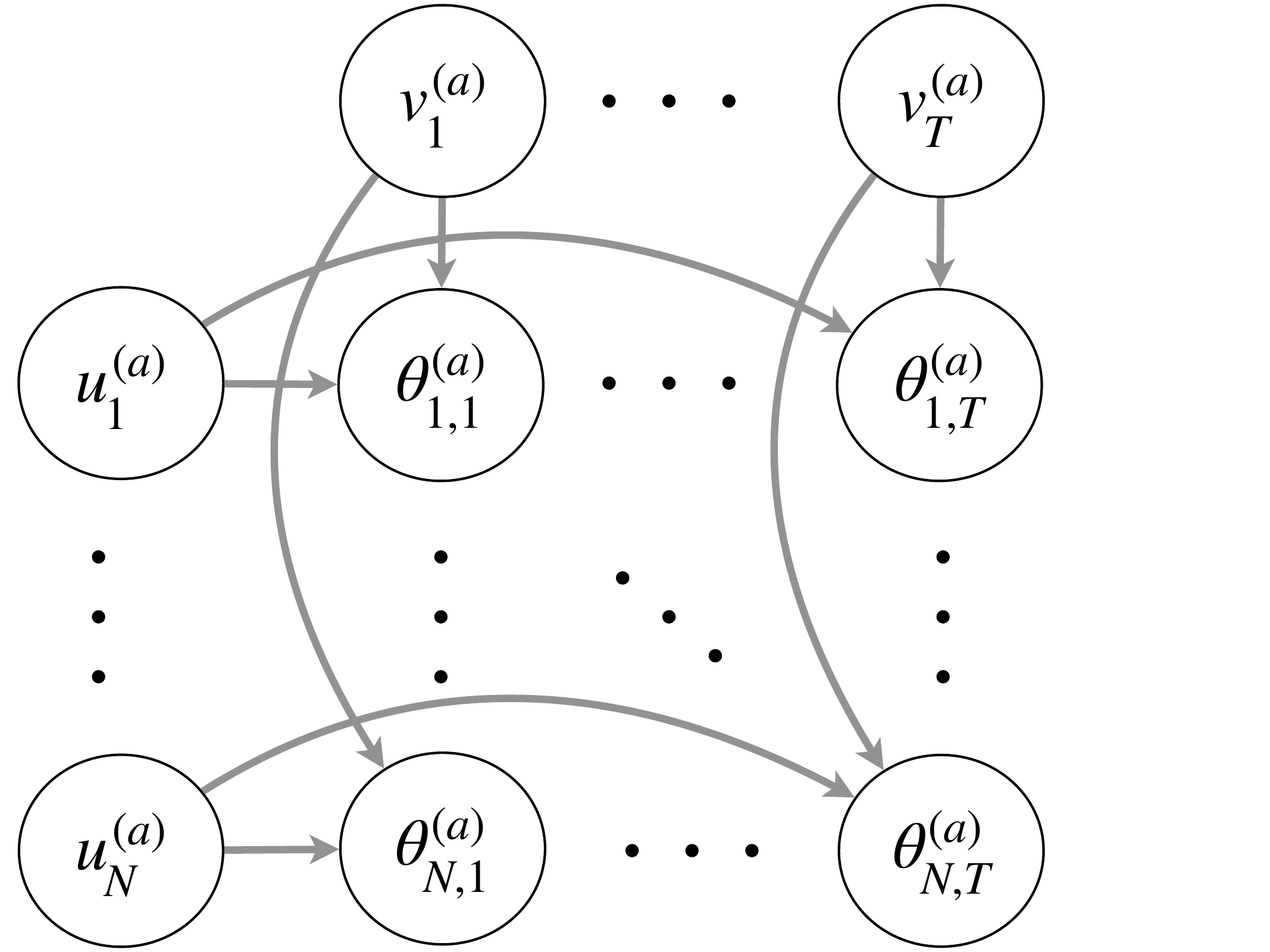}  \\
    (a) Sequential policy and observations
    &
    (b) Latent factor model on counterfactual means
    \end{tabular}
    }
    \caption{Graphical model for the sequential experiment design considered here. Circles and rectangles denote unobserved and observed variables respectively. \mbox{\tbf{Panel (a)}} illustrates the observation model~\cref{eq:model_mc}, and the sequential nature of policy (\cref{assum:policy}).
    \mbox{\tbf{Panel (b)}} visualizes the latent factor model assumed on the counterfactual mean parameters $\trueobs$ in terms of the (treatment-specific) unit and time latent factors $\lunit^{(a)}$ and $\ltime^{(a)}$ (\cref{assum:bilinear,eq:non_linear_lf}).
    }

    \label{fig:sample_split_graph}
    \vspace{-3mm}
\end{figure}

\subsubsection{Estimands and terminology}
\label{sub:estimands}
Our main goal is estimating the \unittime-level counterfactual mean $\trueobs$ (conditioned on $\lunit^{(a)}, \ltime^{(a)}$) for an arbitrary tuple $(\n, \t, a)$, and we are interested in quantifying non-asymptotic performance of our estimate as well as constructing asymptotically valid prediction intervals for the estimand.
Our secondary goal is to estimate the population-level counterfactual mean---which due to \cref{assum:bilinear} can be rewritten as $\trueybarall[]  \defeq \E[\lunit\tp\ltime^{(a)}]$ for $a \in \actionset$---and establish asymptotic consistency of the estimate and asymptotic validity of the constructed confidence intervals. Note that while $\trueobs$ is a random variable akin to the \unittime-level random effect in \cref{eq:additive_re}, $\trueybarall[]$ is a parameter akin to the fixed effect in \cref{eq:additive_re}. Consequently, we use the terms \emph{prediction interval} and \emph{confidence interval} for the intervals constructed to quantify the uncertainty in the estimation of $\trueobs$, and $\trueybarall[]$ respectively. Moreover, it is conventional to refer estimating the random variable $\trueobs$ as a prediction problem, and the corresponding estimate as a predictor. However, for consistency in the language, we continue to use the terms \emph{estimation} and \emph{inference} for our task of estimating both these quantities, and quantifying the associated uncertainty.

\subsection{Algorithm}
\label{sec:algorithm}
We solve the task at hand via a non-parameteric method, namely a variant of nearest neighbors. To estimate $\trueobs$ for a given tuple $(\n, \t, a)$, our nearest neighbors estimate is constructed in two simple steps: (i) use all the available data under treatment $a$ to estimate a set of nearest neighbors for $\n$, and (ii) compute an average of the observations across these neighbors that also have $\miss=a$. To perform step~(i), we make use of a hyperparameter $\threshold$ that needs to tuned (that we discuss in \ox{\cref{sec:exp_details}}\oa{\cite[\cref{sec:exp_details}]{supplement}}). We now state the algorithm for construction our estimates and uncertainty intervals.

\subsubsection{Estimates and uncertainty quantification}
\label{sub:estimates}
We start with the unit level estimates, and then describe the population-level estimates.

    \paragraph{Estimate and prediction interval for $\trueobs$} For any $\n, j \in [\N]$ and $a\in\actionset$ define
    \begin{align}
    \label{eq:time_nbr_dist}
    \estdist \defeq \displaystyle  \frac{\sum_{s=1}^{\T}\indicator(\miss[\n, s]=a) \indicator(\miss[j, s]=a) (\obs[\n, s]\!-\!\obs[j, s])^2}{\sum_{s=1}^{\T} \indicator(\miss[\n, s]=a) \indicator(\miss[j, s]=a)}.
    \end{align}
    In words, $\estdist$ denotes the mean squared distance of observed outcomes over time points when both unit $\n$ and $j$ were assigned treatment $a$. Next, the set of available nearest neighbors for the tuple $(\n, \t, a, \threshold)$ is defined as
     \begin{align}
        \label{eq:reliable_nbr}
        \estnbr \defeq 
        \braces{j \in [\N] \backslash\braces{\n} \suchthat \estdist \leq \threshold, 
        \miss[j, \t] = a}.
    \end{align}
    Denote $\nestnbr = |\estnbr|$. Our estimate for $\trueobs$ is given by a simple outcome average:
    \begin{align}
    \label{eq:obs_estimate}
    \stext{if} 
         \nestnbr > 0
         \quad 
        \estobs \defeq 
         \displaystyle\frac{1}{\nestnbr} \sum_{j \in 
         \estnbr[\n, \t, a, \threshold]} \obs[j, \t].
    \end{align}
    When the number of available neighbors $\nestnbr=0$, the estimate is as follows:
    \begin{align}
    \label{eq:no_nbr_est}
      \stext{if} \miss=a \quad \estobs = \obs, \qtext{else}
       \estobs = \frac{\sum_{j=1}^{\N}\indicator(\miss=a)\obs[j, \t]}{\sum_{j=1}^{\N}\indicator(\miss=a)}.
    \end{align}
    That is, when $\nestnbr=0$, we return the observed outcome if $\miss=a$, else we simply return the averaged over all observed outcomes corresponding to treatment $a$ at time $\t$. Notably, our guarantees are established for \cref{eq:obs_estimate} (and not \cref{eq:no_nbr_est}), and we establish regularity conditions such that $\nestnbr=0$ happens with small probability.\footnote{\blue{While theoretically unit $\n$ is allowed to be its own neighbor, in practice we do not include it since the test/validation error in real settings is measured with respect to held out observations and is severely impacted if we allow the unit to be its own neighbor.}} \blue{Notably, the estimates above do not require any knowledge of the sequential policy $\policy_t$ although when the policy is known, one can instead consider an inverse-propensity-weighted version (see \ox{\cref{sub:non_iid}}\oa{\cite[\cref{sub:non_iid}]{supplement}}).}

    Next, we construct the prediction interval for $\trueobs$ conditioned on $(\lunit^{(a)}, \ltime^{(a)})$. For a given level $\alpha\!>\!0$, our estimate for the $(1\!-\!\alpha)$ confidence interval for $\trueobs$ is provided only if $\nestnbr\!>\!0$ and is given by
    \begin{align}
    \label{eq:unit_ci}
        \biggparenth{\estobs \!\!-\!\! \frac{\zalpha\,\what{\sigma}_{a}}{\sqrt{\nestnbr}},\,\,
        \estobs \!\!+\!\! \frac{\zalpha\,\what{\sigma}_{a}}{\sqrt{\nestnbr}}
        },
    \end{align}
    where $z_{\frac{\alpha}{2}}$ denotes the $1\!-\!\frac{\alpha}{2}$ quantile of standard normal random variable, and $\what{\sigma}_{a}^2$ is the estimated noise variance (see \ox{\cref{sec:exp_details}}\oa{\cite[\cref{sec:exp_details}]{supplement}}).

    \paragraph{Estimate and confidence interval for $\trueybarall[]$}
    Our estimate for the counterfactual mean averaged over all units and time is
    \begin{align}
    \label{eq:ybart_estimate}
         \estybar[\threshold] \!\defeq\! \frac{1}{\N\T} \sum_{\t=1}^{\T}\sum_{\n=1}^{\N} \bigparenth{\indicator(\miss=a) \obs \!+\! \indicator(\miss\neq a)   \estobs[\n, \t, \threshold]}.
    \end{align}
    Finally, to construct the $(1-\alpha)$ confidence interval for $\trueybarall[]$, we randomly sub-sample $K$ indices (denoted by $\mbf K$) from the set $[\N] \times [\T]$ and construct the interval
    \begin{align}
        \label{eq:sigma_nn}
        &\biggparenth{\estybaralltwo[\threshold, K] \!-\!  \frac{\zalpha\,\what{\sigma}_{a,\mbf{K}}}{\sqrt{K}},\,\, \estybaralltwo[\threshold, K] \!+\!  \frac{\zalpha\,\what{\sigma}_{a,\mbf{K}}}{\sqrt{K}} },
        \qtext{where} \\
        &\estybaralltwo[\threshold, K] \!\defeq\! \frac{1}{K}\! \!\sum_{(\n,\t)\in\mbf K} \!\!\estobs[\n, \t, \threshold]
        \qtext{and}
        \what{\sigma}^2_{a,\mbf{K}}\!\defeq\! \frac{1}{K\!-\!1}\!\! \sum_{(\n,\t)\in\mbf K} \!\!(\estobs[\n, \t, \threshold]\!-\!\estybaralltwo[\threshold, K])^2.
    \end{align}

\subsection{Two illustrative examples}
\label{sub:two_illustrative}
Our results for the nearest neighbor algorithm rely on the probability of sampling a nearest neighbor for a given unit. 

Given a latent unit factor $\lunit[]$ under treatment $a$, we define the function $\probparam_{\lunit[], a}: \real_{+} \to [0, 1]$ that characterizes the probability of sampling a neighboring unit of $\lunit[]$:
\begin{align}
\label{def:phi}
   \probparam_{\lunit[], a}(r) &\defeq  \P\brackets{ (\lunit[]'-\lunit[])\tp \Sigv (\lunit[]'-\lunit[]) \leq r},
   \qtext{for}r \geq 0,
\end{align}
where $\Sigv$ is the covariance matrix of the latent time factors as defined in \cref{assum:lambda_min}, and the probability is taken only over the randomness in the draw of the unit factor $\lunit[]'$ from the corresponding distribution assumed in \cref{assum:iid_unit_lf}.

Next, we describe two distinct examples for the latent factor distributions that are later used to illustrate our theoretical results. We assume $\Sigv = \mat I_{d}$ to simplify calculations:
\begin{example}[\examplediscrete]
    \label{example:finite}
    Here, we consider a uniform latent unit distribution that over a finite set of $M$ distinct $d$-dimensional vectors (where $M$ can scale with $\T$; see \cref{item:finite_asymp}) so that for any $\lunit[]$ in the set, $\probparam_{\lunit[], a}(0) = \frac1M$.
\end{example}

\begin{example}[\examplecontinuous]
    \label{example:continuous}
    In this setting, the latent unit factors are distributed uniformly over $[0, 1]^d$ so that $\probparam_{\lunit[], a}(\threshold) \asymp \threshold^{\frac{d}{2}}$ for all $\threshold \leq 1$, and any $\lunit[] \in [0, 1]^{d}$.
\end{example}

These two examples cover two different classes of models. The set-up in \cref{example:finite} is analogous to a parametric model with degree of freedom $M$, while the set-up in \cref{example:continuous} is representative of a non-parametric model  in $d$ dimensions (the uniformity of distribution in these examples is to simplify computations; see \cref{rem:non_unif}).

\section{Main results}
\label{sec:main_results}

    In this section, we present our main results for counterfactual inference. We begin with non-asymptotic guarantee at \unittime-level (\cref{thm:anytime_bound}) in \cref{sub:non_asymp}, followed by an asymptotic version (\cref{thm:anytime_asymp}) in \cref{sub:anytime_asymp}. We discuss asymptotic guarantees at the population-level (\cref{thm:ate_asymp}) in \cref{sub:ate_asymp}. We conclude this section with a proof sketch of \cref{thm:anytime_bound} in \cref{sub:proof_sketch}, while deferring the discussion on variance estimate to \ox{\cref{sub:variance_estimate}}\oa{\cite[\cref{sub:variance_estimate}]{supplement}.}

    \subsection{Non-asymptotic guarantee at the \unittime-level}
    \label{sub:non_asymp}
     We use the following shorthands to simplify the presentation of our results:
    \begin{align}
        \label{eq:eta_chi}
        \threshold' \defeq \threshold\!-\!2\sigma_{a}^2\!\!-\!\!\errterm
        \qtext{where}
        \errterm \!\defeq\! \frac{4 ( \vconst \uconst \!+\! \nconst)^2\!\sqrt{2\log(\!\frac{4\N\T}{\delta}\!)} } {\pmint \sqrt{\T} }.
    \end{align}
    Recall the definition~\cref{def:phi} of $\ppau$, and note $C$ denotes a universal constant. We are now well-equipped to state our first main result for \unittime-level estimates.
    \begin{theorem}[\anytimeboundname]
    \label{thm:anytime_bound}
    Consider a sequential experiment with $\N$ units and $\T$ time points satisfying \cref{assum:policy,assum:zero_mean_noise,assum:bilinear,assum:lambda_min,assum:iid_unit_lf}. Given a fixed $\delta \in (0, 1)$, suppose that for the sequential policy $ \pmint \!\geq\!\T^{-\half} \sqrt{8\log(2/\delta)}$. Then for any fixed unit with latent factor $\lunit$, any fixed $\t \in [\T]$, $a \in \actionset$, and any fixed scalar $\threshold$ such that $\pmint[\t](\N\!-\!1) \ppau(\threshold') \geq 1$, conditional on $(\lunit^{(a)}, \ltime^{(a)})$ the nearest neighbor estimate $\estobs$~\cref{eq:obs_estimate} satisfies
    \begin{align}
    \label{eq:nn_bnd}
        (\estobs-\trueobs)^2 &\leq 2(\wtil{\mbb B} + \wtil{\mbb V}) \qtext{where}
        \wtil{\mbb B} \defeq \frac{\vconst^2}{\lambda_{a}}
    \parenth{\threshold -2\sigma_{a}^2} +  \frac{\vconst^2\errterm}{\lambda_{a}},
    \\ 
    \wtil{\mbb V} &\defeq 
    \frac{24\sigma_{a}^2\log(\frac{8}{\delta}) \max\braces{1,\, \frac{4\nconst^2\log(\frac{8}{\delta})/\sigma_{a}^2}{3\pmint[\t]\ppau( \threshold')\N}}}{\pmint[\t] \ppau(\threshold') (\N\!-\!1)}
     \\
        &\qquad+ 
         \frac{72\nconst^2}{\pmint[\t]^2} \max\braces{\parenth{\frac{\ppau(\threshold'\!+\!2\errterm)}{\ppau(\threshold')}\!-\!1}^2, \frac{C\log^2(\frac{8}{\delta})}{(\ppau(\threshold') (\N\!-\!1))^2}},
        \label{eq:nn_bnd_v}
\end{align}
    and $\ppau \!\defeq\! \probparam_{\lunit^{(a)}, a}$, 
    with probability at least $1\!-\!\delta\!-\!2e^{-\frac{\pmint[\t]\ppau( \threshold')(\N\squash{-}1)}{16}}$.
    \end{theorem}
    See \ox{\cref{sec:proof_of_anytime_bound}}\oa{~\cite[\cref{sec:proof_of_anytime_bound}]{supplement}} for its proof.
    \cref{thm:anytime_bound} provides a high probability error bound on the nearest neighbor estimate \blue{for generic sequential experiments, while leveraging entire data. Before we return to discussing the consequences of such a generic guarantee (in \cref{subsub:robustness}), let us} first unpack this general guarantee. We start with a corollary for the two examples discussed earlier~\cref{example:finite,example:continuous} with a scaling of $\pmint[\T] = \Omega(\T^{-\beta})$ for $\beta \in[0, \half)$, i.e., the minimum exploration is decaying with a rate bounded below by $\T^{-\beta}$ at time $\T$. E.g., in an $\epsilon$-greedy or pooled $\epsilon$-greedy with $\epsilon$ decaying with $\t$ as $\epsilon_{\t}$, this scaling asserts that $\frac{\epsilon_{\T}}{|\actionset|} = \Omega(\T^{-\beta})$ and thus as $\beta$ increases, $\epsilon_{\t}$ decreases to $0$ more rapidly, which translates to faster decay in the exploration rate, and faster increase in the exploitation rate over time.
    This corollary (proof in \ox{\cref{sub:proof_of_cor:anytime_bound}}\oa{\cite[\cref{sub:proof_of_cor:anytime_bound}]{supplement}}) treats $\t$ as fixed (while $\T$ gets large) so that $\pmint[\t]$ is a constant (while $\pmint[\T]$ does decay to $0$), and tracks dependency of the error only on $(\N, \T)$ in terms of $\wtil{O}$ notation that hides logarithmic dependencies on $\N, \T$, and constants that depend on $(d, \sigma_{a}^2, \uconst,\vconst,\nconst)$.
    \begin{corollary}
        \label{cor:anytime_bound}
        Consider a sequential experiment satisfying the assumptions of \cref{thm:anytime_bound}, such that the minimum exploration satisfies $\pmint[\T] = \Omega(\T^{-\beta})$ for some $\beta \in[0, \half)$. Let $\N, \T$ are large such that the maximum for each of the last two terms in \cref{eq:nn_bnd} is achieved by the respective first argument, then the following statements hold true.
        \begin{enumerate}[label=(\alph*)]
            \item \label{item:finite} For the setting in \cref{example:finite}, if $\N$ and $\T$  are large enough such that $\errterm = o(\min_{\lunit[] \neq \lunit[]'} \norm{\lunit[]\!-\!\lunit[]'}^2_{2})$, then for $\threshold = 2\sigma_{a}^2 + \errterm$, with probability at least $1-\delta$, we have
            \begin{align}
            \label{eq:finite_non_asymp_bnd}
               (\estobs-\trueobs)^2 =  \wtil{\order}\parenth{\frac{1}{\sqrt{\T^{1-2\beta}}}\!\!+ \frac{M}{\N}}.
            \end{align}
            \item \label{item:highd} For the setting in \cref{example:continuous}, with $\threshold=2\sigma_{a}^2 + (1+\T^{\frac{\omega}{2}})\errterm$ and a suitably chosen $\omega \in (0, 1)$, with probability at least $1-\delta$, we have
            \begin{align}
                \label{eq:highd_non_asymp_bnd}
            (\estobs-\trueobs)^2 = 
            \begin{cases}
               \wtil{\order}\parenth{\T^{-\frac{1-2\beta}{3}}} \ \stext{if} \N^{\frac{3}{d+2}} \gg \sqrt{\T^{1-2\beta}},\\[2mm]
               \wtil{\order}\parenth{\N^{-\frac2{d+2}}}\, \ \stext{otherwise.}
           \end{cases}
            \end{align}
        \end{enumerate}
    \end{corollary}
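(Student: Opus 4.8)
The plan is to specialize the bound \cref{eq:nn_bnd} of \cref{thm:anytime_bound} to the two latent-factor families of \cref{example:finite,example:continuous}: in each case I substitute the explicit form of $\ppau$, pick a convenient threshold $\threshold$, and then track the four summands of \cref{eq:nn_bnd} as functions of $(\N,\T)$. Two preliminary observations are used throughout. First, since $\pmint[\T]=\Omega(\T^{-\beta})$, the definition of $\errterm$ in \cref{eq:eta_chi} gives $\errterm=\wtil{\order}\bigparenth{\pmint[\T]^{-1}\T^{-1/2}}=\wtil{\order}\bigparenth{\T^{\beta-1/2}}=\wtil{\order}\bigparenth{(\T^{1-2\beta})^{-1/2}}$, and (as $\t$ is held fixed while $\T\to\infty$) $\pmint[\t]$ is an absolute constant absorbed into $\wtil{\order}$. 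Second, for each $\threshold$ chosen below one has $\pmint[\t]\ppau(\threshold')(\N-1)\to\infty$, so the loss $2e^{-\pmint[\t]\ppau(\threshold')(\N-1)/16}$ in the confidence level of \cref{thm:anytime_bound} is $o(1)$ and can be folded into $\delta$; similarly the requirements $\pmint[\T]\geq\T^{-1/2}\sqrt{8\log(2/\delta)}$ (which uses $\beta<1/2$) and $\pmint[\t](\N-1)\ppau(\threshold')\geq1$ hold for all large $\N,\T$.

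For \cref{item:finite} (discrete factors), $\ppau(r)=1/M$ whenever $0\leq r<\min_{\lunit[]\neq\lunit[]'}\norm{\lunit[]-\lunit[]'}_2^2$ since $\Sigv=\mat I_d$. Choosing $\threshold=2\sigma^2+\errterm$ gives $\threshold-2\sigma^2=\errterm$ and $\threshold'=0$, so $\ppau(\threshold')=1/M$, and the assumption $\errterm=o\bigparenth{\min_{\lunit[]\neq\lunit[]'}\norm{\lunit[]-\lunit[]'}_2^2}$ forces $\ppau(\threshold'+2\errterm)=\ppau(2\errterm)=1/M$ once $\N,\T$ are large. Substituting into \cref{eq:nn_bnd}: the first two summands are $\wtil{\order}(\errterm)=\wtil{\order}\bigparenth{(\T^{1-2\beta})^{-1/2}}$; the third is $\wtil{\order}\bigparenth{1/(\ppau(\threshold')\N)}=\wtil{\order}(M/\N)$ (its inner maximum equals $1$ once $\N/M\gtrsim\log(1/\delta)$); and the fourth has identically vanishing first argument, hence reduces to its second argument $\wtil{\order}\bigparenth{(M/\N)^2}\ll M/\N$ (and if $M\gtrsim\N$ the claimed bound is vacuous, $(\estobs-\trueobs)^2$ being bounded a.s.). Summing gives \cref{eq:finite_non_asymp_bnd}.

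For \cref{item:highd} (continuous factors), $\ppau(r)\asymp r^{d/2}$ for $r\leq1$, with a leading constant that, for a fixed $\lunit[]$, stabilizes as $r\to0$. Take $\threshold=2\sigma^2+(1+\T^{\omega/2})\errterm$, so that $\threshold'=\T^{\omega/2}\errterm$, and write $\rho\defeq\T^{\omega/2}\geq1$. Up to $\wtil{\order}$ factors the summands of \cref{eq:nn_bnd} become: (I)\,$\asymp\rho\errterm$; (II)\,$\asymp\errterm$, dominated by (I); (III)\,$\asymp 1/\bigparenth{(\rho\errterm)^{d/2}\N}$; and (IV)\,$\asymp\bigparenth{(1+2/\rho)^{d/2}-1}^2\asymp\rho^{-2}$, using the corollary's standing hypothesis that each maximum in \cref{eq:nn_bnd} is attained at its first argument together with $(1+x)^{d/2}-1\asymp x$ on $x\in[0,2]$. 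Inserting $\errterm\asymp(\T^{1-2\beta})^{-1/2}$, I then optimize $\rho$ in each regime. When $\N^{3/(d+2)}\gg\sqrt{\T^{1-2\beta}}$ (equivalently $\N\gg(\T^{1-2\beta})^{(d+2)/6}$), balancing (I) against (IV) gives $\rho\asymp(\T^{1-2\beta})^{1/6}$, i.e.\ $\omega=(1-2\beta)/3\in(0,1/3)$; then (I)\,$\asymp$\,(IV)\,$\asymp(\T^{1-2\beta})^{-1/3}$ while (III)\,$\asymp(\T^{1-2\beta})^{d/6}/\N\ll(\T^{1-2\beta})^{-1/3}$ exactly by the regime assumption, giving the first branch of \cref{eq:highd_non_asymp_bnd}. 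In the complementary regime one balances (I) against (III), choosing $\rho\asymp\sqrt{\T^{1-2\beta}}\,\N^{-2/(d+2)}$ (still $\geq1$, with $\omega\in(0,1)$): then (I)\,$\asymp$\,(III)\,$\asymp\N^{-2/(d+2)}$, while (IV)\,$\asymp\rho^{-2}\asymp\N^{4/(d+2)}/\T^{1-2\beta}\ll\N^{-2/(d+2)}$ precisely because $\N\ll(\T^{1-2\beta})^{(d+2)/6}$, and (II)\,$\asymp(\T^{1-2\beta})^{-1/2}\ll\N^{-2/(d+2)}$ in this regime; this gives the second branch.

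No single step is hard here; the work is the case bookkeeping --- choosing $\omega$ to equalize the two dominant summands in each $(\N,\T)$-regime and then confirming the others are genuinely lower order. The part requiring care is keeping all ``large $\N,\T$'' approximations simultaneously valid: $\ppau(r)\asymp r^{d/2}$ is only claimed for $r\leq1$, so one checks $\threshold',\threshold'+2\errterm\to0$; the estimate $(1+2/\rho)^{d/2}-1\asymp1/\rho$ needs $\rho\gtrsim1$ and the ratio $\ppau(\threshold'+2\errterm)/\ppau(\threshold')$ needs $\ppau$'s leading constant to be stable; the two maxima in \cref{eq:nn_bnd} must resolve to their first arguments; and the exponential confidence-loss term must vanish. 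The one genuinely substantive check is that the regime boundary $\N^{3/(d+2)}\asymp\sqrt{\T^{1-2\beta}}$ is precisely the crossover at which the two optimal choices of $\rho$ --- and hence the rates $(\T^{1-2\beta})^{-1/3}$ and $\N^{-2/(d+2)}$ --- coincide, so that the piecewise bound is internally consistent.
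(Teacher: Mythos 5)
Your proposal is correct and follows essentially the same route as the paper: substitute the example-specific form of $\ppau$ into \cref{eq:nn_bnd}, take $\threshold'=0$ (so the fourth term vanishes) in the discrete case, and in the continuous case choose $\omega=\frac{1-2\beta}{3}$ or $\omega=1-2\beta-\frac{4\log\N}{(d+2)\log\T}$ to balance the dominant summands in the two regimes. Your explicit $\rho=\T^{\omega/2}$ bookkeeping and the crossover check are just a more verbose presentation of the paper's direct substitution.
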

    When there are finitely many unit factors, \cref{cor:anytime_bound}\cref{item:finite} recovers the parametric rate of order $M/\N$ when $\T$ is large. On the other hand, when $\T$ is small, the error rate is of order $\T^{-\half}$, when the exploration rate does not decay ($\beta=0$). For continuous factors, \cref{cor:anytime_bound}\cref{item:highd} demonstrates that our non-parametric approach suffers from the curse of dimensionality and would be effective only when the latent factors lie in low-dimensions. Surprisingly, \cref{cor:anytime_bound} recovers the same error rate with an arbitrary sequential policy satisfying \cref{assum:policy,eq:pmin} as that for a non-adaptive, constant policy that satisfies $\policy_{\t, \n}(a) \equiv \pmint$ for all $\n,\t$ (also see \cref{rem:mcar_comparison}).

    \subsection{Discussion on the generality of \cref{thm:anytime_bound}}
    We make a few remarks about the robustness (\cref{subsub:robustness}), bias-variance-adaptivity tradeoff (\cref{sub:bias_variance_tradeoff_eta}) for nearest neighbor in our setting followed by a discussion on the assumptions on latent factor distributions (\cref{rem:non_unif}).

    \subsubsection{Robustness of (vanilla) nearest neighbors}
    \label{subsub:robustness}
    Recall that our nearest neighbors algorithm is both model and policy agnostic, i.e., it requires no knowledge of the latent factor model (and dimension $d$) and the treatment assignment policies. Yet our guarantees apply to a wide range of factor models and sequential policies. 
    Overall  \cref{thm:anytime_bound,cor:anytime_bound} (and the results and experiments in the sequel), perhaps surprisingly, immediately reveal two remarkable robustness properties of nearest neighbors.
    
     First, as long as the policies have some sufficient exploration (which can decay over time) and the neighborhood around the unit in consideration is smooth, (vanilla) nearest neighbors already provides non-trivial estimates. Such a property is revealed by two technical innovations in our proof: (i) a martingale-based analysis to show the concentration of distance and (ii) a careful sandwich argument to show that the neighbors estimated at time $t$ are not too biased despite using observations from time points $>t$. \cref{sub:bias_variance_tradeoff_eta} elaborates the bias-variance tradeoff as well as the additional tradeoff introduced by the bias arising due to adaptivity as a function of $\threshold$. Also, see \cref{sub:proof_sketch} for a sketch of the technical argument that allows us to establish a non-trivial error for the nearest neighbors.

    Second, note that while the guarantees in \cref{cor:anytime_bound} depend on the latent dimension $d$ of the factors, running the NN algorithm requires no knowledge about $d$. Put simply, \cref{thm:anytime_bound,cor:anytime_bound} demonstrate that the error guarantee of NN is adaptive to the underlying smoothness structures in the latent factors. In practice, while cross-validation is an easy way to gauge the performance of NN, one may additionally do a spectral value decomposition (see \cref{sub:mobile_health_study,fig:eda_plot}) of the outcome matrix and check if the eigenvalues decay fast enough to provide a sanity check on whether or not $d$ can be assumed to be small.

    \subsubsection{The bias-variance and \blue{adaptivity} tradeoff in \cref{thm:anytime_bound}} 
    \label{sub:bias_variance_tradeoff_eta} The general error bound~\cref{eq:nn_bnd} comprises of four terms: (i) The first term of order $\threshold-2\sigma_{a}^2$ denotes the squared bias arising due to the threshold hyper-parameter $\threshold$ and it decays to $0$ as $\threshold$ decreases to the noise variance $2\sigma_{a}^2$. (ii) The second term scales as $\wtil{\order}(1/(\pmint\sqrt{\T}))$ and denotes the second source of bias arising due to imperfect estimation of distance $\estdist$ between two units using
    common treatment points that are lower bounded with high probability by $\pmint^2\T$.\footnote{The bound for this term is tight (up to log-factors) without further assumptions, as can be verified with a non-adaptive sampling policy that satisfies $\policy_{\t, \n}(a) \equiv \pmint[\T]$.}
   . (iii) The third term corresponds to the effective noise variance due to the averaging over ``good set'' of neighbors that can be lower bounded by $\pmint\ppau(\threshold') (\N\!-\!1)$ with high probability. 
   (iv) The last term denotes the worst-case inflation in noise variance---due to the sequentially adaptive policy, since the neighbors $\estnbr$ are estimated using entire data, the noise $\sbraces{\noiseobs}$ at time $\t$ does not remain independent of the estimated set of neighbors.\footnote{Indeed, this term would be zero if the policy $\policy_{\t}$ was non-adaptive.} 

     Overall, the first term of order $\threshold-2\sigma_a^2$ in $\wtil{\mbb{B}}$~\cref{eq:nn_bnd} and the first term of order $1/(\pmint\ppau(\threshold') (\N\!-\!1))$ in $\wtil{\mbb V}$~\cref{eq:nn_bnd_v} characterize the natural bias-variance tradeoff for our estimator as a function of $\threshold$---as $\threshold$ gets large, the bias increases since the distance of neighboring units for unit $\n$ increases, but the variance decreases since the number of neighboring units increases. The second term appearing in the definition of $\wtil{\mbb{B}}$ does not depend on $\threshold$, has a very mild logarithmic dependence on $\N$, and decreases polynomially with $\T$ for sampling policies that have a suitable decay rate in their exploration; this bias gets worse as the minimum exploration rate of the policy decays faster.

     The second term appearing in \cref{eq:nn_bnd_v} introduces another tradeoff for our estimator---and this term arises purely in the presence of adaptive/sequential policies (see \cref{rem:mcar_comparison}). This term is small if the neighbor sampling probability function $\probparam_{\lunit[], a}(r)$ varies smoothly with $r$ around $r=\threshold'$; and is thereby small whenever $\threshold$ is large. This term captures the worst-case inflation in error caused due to the fact that while constructing our estimates at time $t$, we used the data from future time points ($>t$) which is correlated with the observed data at time $\t$ due to sequentially adaptive policies. Put simply, despite sequential adaptivity (and despite any adaptive/importance sampling-based reweighting), we control this error by sandwiching the estimated set of neighbors in an annulus induced by two close radii. So if the number of neighbors does not change too abruptly around the considered value of $\threshold$, this worst-case inflation in error due to adaptivity can be well controlled.

    \begin{remark}[Non-adaptive policies]
        \label{rem:mcar_comparison}        
        If the policy is non-adaptive and constant, i.e., $\policy_{\t,\n}(a)\equiv p$ for some constant $p$, \blue{our analysis can be easily adapted so that the last term in \cref{eq:nn_bnd_v} is actually zero}. This adaptation of \cref{thm:anytime_bound} recovers the prior result \citep[Thm.~1]{li2019nearest} (up to constants) that establishes an entry-wise expected error bound for nearest neighbors for matrix completion with entries missing completely at random (MCAR) with probability $p$.
    \end{remark}
    \vspace{-2mm}
    \vspace{-2mm}
    \vspace{-2mm}
    \vspace{-2mm}
    \begin{remark}[No neighbors means no guarantee]
        \label{rem:no_nbr}
        \cref{thm:anytime_bound} applies to the estimate~\cref{eq:obs_estimate} and is meaningful when there are enough number of neighbors and $\threshold$ is well tuned. It does not provide any guarantee for the estimate \cref{eq:no_nbr_est}, i.e., when there are no neighbors. Indeed, if any of $(\threshold, \N,\T, \pmint[\t])$ are too small, we may have either no neighbors, or no reliable neighbors, in which case the bound \cref{eq:nn_bnd} (or the probability of its validity) would be vacuous.
    \end{remark}

    \subsubsection{Non-uniform factor distributions}
    \label{rem:non_unif}
      The uniformity assumption in \cref{example:finite,example:continuous} is for convenience in simplifying the presentation of \cref{cor:anytime_bound}). For instance, if probability mass function for unit factors in \cref{example:finite} is non-uniform, we can simply replace $M$ by the minimum value of the probability mass function in the associated expressions in \cref{cor:anytime_bound}\cref{item:finite}. Similarly, when the distribution in non-uniform with bounded support in \cref{example:continuous}, the conclusion of \cref{cor:anytime_bound}\cref{item:highd} continue to hold as long as the density remains bounded away from $0$---up to constants that depend on the density, and dimension $d$. (Also see the next paragraph for distributions that are not uniformly lower-bounded;  and \ox{\cref{sub:sub_gauss}}\oa{\cite[\cref{sub:sub_gauss}]{supplement}} for sub-Gauss distributions.) When the matrix $\Sigv$ is not identity, the associated constants would scale with dimension $d$, the condition number, and the minimum eigenvalue of the matrix $\Sigv$---so that the conclusions continue to hold when all these quantities are treated as constants as $\N$ and $\T$ grow.   
    When the distribution or the density of the latent unit factors is not lower bounded, i.e., $\inf_u\probparam_{\lunit[], a}=0$, the arguments from the \cite[proof of Cor.~3]{li2019nearest} show that for a general class of distributions, $\probparam_{\lunit^{(a)}, a}$ can be lower bounded suitably with high probability over the draw of $\lunit$. For this setting, our proof with minor modifications yields that (a suitably modified version of) the guarantee in \cref{thm:anytime_bound} yields an analogous high probability error bound for a uniformly drawn estimate from the set $\sbraces{\estobs}_{\n=1}^{\N}$ across units for a fixed time $\t$.

    \subsection{Asymptotic guarantee at the \unittime-level}
    \label{sub:anytime_asymp}

    We now turn attention to our asymptotic guarantees for the estimate $\estobs$. In this section, we establish (i) an asymptotic consistency guarantee, and (ii) an asymptotic normality guarantee (that can be used to construct confidence intervals). These guarantees are established for a sequence of experiments indexed by the number of time points $\T$ in the experiment, such that the
     $\T$-th experiment is run with $\NT$ units for $\T$ time points, and the nearest neighbor estimates are constructed with hyper-parameter $\threshold_{\T}$. Our guarantees are established under suitable scalings of $(\threshold_{\T}, \NT, \T)$ with appropriate dependence on the nearest neighbor probability $\probparam_{\lunit[], a}$ defined in \cref{def:phi}. To simplify the presentation, we use the following shorthand:
    \begin{align}
    \label{eq:errtwo}
        \errtwo \defeq \frac{8(\vconst\uconst+\nconst)^2\sqrt{\log(\NT\T)}}{\pmint\sqrt{\T}}.
    \end{align}
    For our result, we assume that there exists a common sequence of scalars $\sbraces{\pmint[\t]}_{\t=1}^{\infty}$ that satisfies \cref{eq:pmin} for all the experiments. We treat $\pmint[\t]$ for a fixed $\t$ as a constant (i.e., it does not scale with $\T$). Moreover, to establish asymptotic normality of our estimator, we make use of a suitable sequence  $\sbraces{\LT}_{\T=1}^{\infty}$ that diverges to $\infty$, and serves as a deterministic upper bound on the number of neighbors $\nestnbr[\n, \t, \threshold_{\T}]$ in the $\T$-th experiment. This constraint can be easily enforced by randomly sub-sampling the available nearest neighbors whenever $\nestnbr[\n, \t, \threshold_{\T}]$ is larger than $\LT$, and then averaging over the sampled subset in \cref{eq:obs_estimate} to compute $\estobs$.     We now state our asymptotic guarantee (see \ox{\cref{sec:proof_of_thm_anythime_asymp}}\oa{\cite[\cref{sec:proof_of_thm_anythime_asymp}]{supplement}} for the proof).

    \begin{theorem}[\anytimeasympname]
    \label{thm:anytime_asymp}
    Given a tuple $(\n, \t, a)$, consider a series of sequential experiments with $N_{\T}$ units and $T$ time points indexed by $\T \in \mbb N$ satisfying \cref{assum:policy,assum:zero_mean_noise,assum:bilinear,assum:lambda_min,assum:iid_unit_lf} that include the unit~$\n$ with fixed latent factor $\lunit$, time latent factor $\ltime^{(a)}$ at time $\t$, and policy satisfying $ \pmint \!\geq\!\T^{-\half} \sqrt{8\log(2\T)}$. For any fixed $t \in \mbb \N$, let $\estobs[\n, \t, \threshold_{\T}]$ denote the nearest neighbor estimate~\cref{eq:obs_estimate} with threshold $\threshold_{\T} \geq 2\sigma_{a}^2 + \errtwo$ in the $\T$-th experiment, and let $\thresonet \defeq \threshold_{\T}-2\sigma_{a}^2\!-\errtwo$, and $\threstwot \defeq \thresonet+2\errtwo$ (where $\errtwo$ is defined in \cref{eq:errtwo}).

    \begin{enumerate}[label=(\alph*), leftmargin=*]
        \item\label{item:consistency} Asymptotic consistency: If the sequence $\braces{\threshold_{\T},\NT, \pmint}_{\T=1}^{\infty}$ satisfies
         \begin{align}
         \label{eq:regularity_consistency}
            \max\braces{\frac{\log \T}{\NT \pparamu(\thresonet)}, \,
            \threstwot} \seq{(a)} o_{\T}(1), \qtext{and}
            \frac{\pparamu(\threstwot)}{\pparamu(\thresonet)}\!-\!1 \seq{(b)} o_{\T}(1),
        \end{align}
        then $\estobs[\n, \t, \threshold_{\T}] \stackrel{p}{\longrightarrow} \trueobs$ as $\T\to \infty$.
        \item\label{item:clt} Asymptotic normality: If the number of nearest neighbors in \cref{eq:obs_estimate} is capped at $\LT$, and the sequence $\braces{\threshold_{\T}, \LT, \NT, \pmint}_{\T=1}^{\infty}$ satisfies $\LT\to\infty$,
        \begin{align}
        \label{eq:regularity_clt}
            \max\braces{\frac{1}{\NT\pparamu(\thresonet)},
            \LT\threstwot
            }
             \!\seq{(a)}\! o_{\T}(1), 
            \qtext{and}
            \sqrt{\LT} \abss{\frac{\pparamu(\threstwot)}{\pparamu(\thresonet)}\!-\!\!1}
            \!\!\seq{(b)}\! o_{\T}(1),
        \end{align}
        then $\sqrt{\min\sbraces{\nestnbr[\n, \t, \threshold_{\T}], \LT}} (\estobs[\n, \t, \threshold_{\T}]-\trueobs) \stackrel{d}{\Longrightarrow} \mc N(0, \sigma_{a}^2)$ as $\T\to \infty$.
    \end{enumerate}
    \end{theorem}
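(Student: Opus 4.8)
The plan for part~(a) is to read the consistency off \cref{thm:anytime_bound}. Fix $\epsilon,\delta>0$ and apply that theorem in the $\T$-th experiment with failure probability $\delta$ and threshold $\threshold=\threshold_{\T}$. For $\T$ large one has $\errterm[\delta]\le\errtwo$ (the logarithmic factors match up), so by monotonicity of $\pparamu$, $\thresonet\le\threshold'\le\threshold'+2\errterm[\delta]\le\threstwot$, and replacing $\pparamu(\threshold')$ and $\pparamu(\threshold'+2\errterm[\delta])$ by $\pparamu(\thresonet)$ and $\pparamu(\threstwot)$ in \cref{eq:nn_bnd} shows all four terms vanish under \cref{eq:regularity_consistency}: the first two are $O(\threstwot)$ (with $\errtwo\le\threstwot$), the third is $O\big(\tfrac{1}{\NT\pparamu(\thresonet)}\big)\le O\big(\tfrac{\log\T}{\NT\pparamu(\thresonet)}\big)$, and the fourth is $O\big((\tfrac{\pparamu(\threstwot)}{\pparamu(\thresonet)}-1)^2+(\NT\pparamu(\thresonet))^{-2}\big)$. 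The side condition $\pmint[\t](\NT-1)\pparamu(\threshold')\ge1$ holds for $\T$ large since $\NT\pparamu(\thresonet)\to\infty$, and the probability remainder $2e^{-\pmint[\t]\pparamu(\threshold')(\NT-1)/16}\to0$; hence $\limsup_{\T}\P(|\estobs[\n,\t,\threshold_{\T}]-\trueobs|>\epsilon)\le\delta$, and $\delta\downarrow0$ gives $\estobs[\n,\t,\threshold_{\T}]\toprob\trueobs$.

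For part~(b), I would decompose $\estobs[\n,\t,\threshold_{\T}]-\trueobs=B_\T+V_\T$ with $B_\T:=\tfrac{1}{\N^\star}\sum_{j\in\mathcal S}(\trueobs[j,\t]-\trueobs)$ and $V_\T:=\tfrac{1}{\N^\star}\sum_{j\in\mathcal S}\noiseobs[j,\t]$, where $\mathcal S$ is the (capped, possibly sub-sampled) index set averaged over in \cref{eq:obs_estimate} and $\N^\star:=\min\{\nestnbr[\n,\t,\threshold_{\T}],\LT\}=|\mathcal S|$. The engine of the proof is a \emph{sandwich}: on an event $\mathcal E_\T$ with $\P(\mathcal E_\T)\to1$, $\underline{\mbf N}\subseteq\estnbr[\n,\t,a,\threshold_{\T}]\subseteq\overline{\mbf N}$, where $\underline{\mbf N}:=\{j\ne\n:\truedist[\n,j,a]\le\thresonet,\ \miss[j,\t]=a\}$ and $\overline{\mbf N}$ is defined identically with $\threstwot$ replacing $\thresonet$. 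This follows from the uniform bound $|\estdist[\n,j,a]-\truedist[\n,j,a]-2\sigma^2|\le\errtwo$ over all $j$, which is the same concentration underlying \cref{thm:anytime_bound}. I would also fold into $\mathcal E_\T$ the binomial-concentration estimates $|\underline{\mbf N}|\gtrsim\pmint[\t]\NT\pparamu(\thresonet)$ and $|\overline{\mbf N}\setminus\underline{\mbf N}|\le\epsilon_\T|\underline{\mbf N}|$ for a deterministic $\epsilon_\T\to0$, valid because $\NT\pparamu(\thresonet)\to\infty$. The decisive point is that $\underline{\mbf N}$ and $\overline{\mbf N}$ are measurable with respect to $\mathcal G_\t:=\sigma(\ulf,\vlf,\history[\t-1],\{\miss[j,\t]\}_{j},\text{sub-sampling randomness})$, and $\mathcal G_\t$ is independent of the time-$\t$ noises $\{\noiseobs[j,\t]\}_{j}$, whereas $\estnbr[\n,\t,a,\threshold_{\T}]$ itself is \emph{not} $\mathcal G_\t$-measurable: through the time-$\t$ term of $\estdist[\n,j,a]$, and under the adaptive policy through every future treatment, it depends on exactly the noises to which the CLT must be applied.

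Given the sandwich, $B_\T$ is bounded crudely: each $j\in\mathcal S\subseteq\overline{\mbf N}$ has $\truedist[\n,j,a]\le\threstwot$, so $|\trueobs[j,\t]-\trueobs|=|\sangles{\lunit[j]^{(a)}-\lunit^{(a)},\ltime^{(a)}}|\le\vconst\sqrt{\threstwot/\lambda_a}$ by Cauchy--Schwarz and $\Sigv\succeq\lambda_a\mbf I_d$ (\cref{assum:lambda_min}), so $\sqrt{\N^\star}\,|B_\T|\le\sqrt{\LT}\,\vconst\sqrt{\threstwot/\lambda_a}=o_\T(1)$ by the first condition of \cref{eq:regularity_clt}. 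For $V_\T$, I would couple $\mathcal S$ to a $\mathcal G_\t$-measurable surrogate $\wtil{\mbf N}\subseteq\underline{\mbf N}$ with $|\wtil{\mbf N}|=\min\{\N^\star,|\underline{\mbf N}|\}$ — a uniform random subset of $\underline{\mbf N}$ realized through the same random permutation of $[\NT]$ that drives the algorithm's sub-sampling — so that on $\mathcal E_\T$, $|\wtil{\mbf N}|/\N^\star\to1$ and $|\mathcal S\triangle\wtil{\mbf N}|\lesssim\epsilon_\T\N^\star+(\text{lower order})$. Then $\tfrac{1}{\sqrt{\N^\star}}|\sum_{j\in\mathcal S}\noiseobs[j,\t]-\sum_{j\in\wtil{\mbf N}}\noiseobs[j,\t]|\le\tfrac{\nconst}{\sqrt{\N^\star}}|\mathcal S\triangle\wtil{\mbf N}|\lesssim\nconst\epsilon_\T\sqrt{\LT}+(\dots)=o_p(1)$ by the second condition of \cref{eq:regularity_clt} (using $\N^\star\le\LT$), while conditionally on $\mathcal G_\t$ the $\{\noiseobs[j,\t]\}_{j\in\wtil{\mbf N}}$ are i.i.d.\ with mean $0$, variance $\sigma^2$, and $|\noiseobs[j,\t]|\le\nconst$, with $|\wtil{\mbf N}|\toprob\infty$; a Berry--Esseen bound conditional on $\mathcal G_\t$, integrated out by dominated convergence, then gives $\tfrac{1}{\sqrt{|\wtil{\mbf N}|}}\sum_{j\in\wtil{\mbf N}}\noiseobs[j,\t]\todist\mathcal N(0,\sigma^2)$. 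By Slutsky this upgrades to $\sqrt{\N^\star}\,V_\T\todist\mathcal N(0,\sigma^2)$, and combining with $\sqrt{\N^\star}\,B_\T=o_p(1)$ and $\P(\mathcal E_\T)\to1$ gives part~(b).

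The main obstacle — and the reason the sandwich is needed — is that under a sequentially adaptive policy the estimated neighbor set is genuinely entangled with the time-$\t$ noise, so there is no single $\sigma$-algebra that both fixes the neighbor set and leaves the relevant noises i.i.d.; moreover, naive bounded-difference arguments do not help, since perturbing a single time-$\t$ outcome can change many future treatments and hence the estimated distances by a non-negligible amount. The real work is therefore (i) to show the sandwich sets can be taken \emph{noise-free} — only true latent distances and time-$\t$ treatments enter — while keeping $|\overline{\mbf N}\setminus\underline{\mbf N}|$ small relative to $\sqrt{\N^\star}$, and (ii) when the cap $\LT$ binds, to build a $\mathcal G_\t$-measurable surrogate whose symmetric difference with the random sub-sample is controlled; the consistency statement and the bias term are, by contrast, routine.
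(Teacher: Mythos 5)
Your proposal is correct and takes essentially the same route as the paper: part~(a) is read off \cref{thm:anytime_bound} with a vanishing failure probability, and part~(b) rests on exactly the sandwich $\nstar\subseteq\nhat\subseteq\nwstar$ between noise-free neighbor sets (the paper's \cref{eq:n_reln} and \cref{lem:anytime_clt_noise}), with the bias killed by $\LT\threstwot=o_{\T}(1)$ and the set discrepancy killed by $\sqrt{\LT}\,\sabss{\pparamu(\threstwot)/\pparamu(\thresonet)-1}=o_{\T}(1)$. The only substantive deviation is in the final CLT for the random-size noise average, where you use a conditional Berry--Esseen bound together with a permutation coupling for the $\LT$ cap while the paper conditions on the inner set and passes to the limit of characteristic functions; both devices are valid, and your handling of the sub-sampling cap is if anything more explicit than the paper's.
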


    In simple words, \cref{eq:regularity_consistency,eq:regularity_clt} state the regularity conditions, and the scaling of $\threshold$ with $\T$ for asymptotic consistency or normality respectively for $\estobs[\n, \t, \threshold_{\T}]$. The condition~\cref{eq:regularity_consistency} requires that (i) $\threshold_{\T}\downarrow 2\sigma_{a}^2$ so that the bias goes to $0$, while still ensuring that (ii) the number of neighbors within a ball of size $\threshold_{\T}\!-\!2\sigma_{a}^2\!-\!\errtwo$ grows to $\infty$ so that the variances goes to $0$, and (iii) the probability of sampling a neighbor within radius $\threshold_{\T}\!-\!2\sigma_{a}^2\! +\! \errtwo$ is close to that within radius $\threshold_{\T}\!-\!2\sigma_{a}^2\!-\!\errtwo$, so that the worst-case noise variance inflation goes to $0$.
     The condition (i) exhibits a tradeoff with (ii) and (iii), since the latter two require that $\threshold_{\T}$ does not decay to $0$ too rapidly. The conditions \cref{eq:regularity_clt}(a) and \cref{eq:regularity_clt}(b) represent the stronger analog of \cref{eq:regularity_consistency}(a) and \cref{eq:regularity_consistency}(b) to obtain the stronger normality guarantee. We cap the number of neighbors at $\LT$ to avoid artificially introducing an upper bound constraint on the number of units $\NT$; without such a constraint, the number of neighbors can be large when $\NT$ grows, so that the bias scaled by the number of neighbors may not decay to $0$. We remark that the conditions~\cref{eq:regularity_consistency,eq:regularity_clt} suffice to provide the first asymptotic guarantee for \unittime-level inference in sequential experiments via \emph{our} estimator, and identifying the weakest such conditions for an \emph{arbitrary} estimator is an interesting future direction.

    Next, we state the asymptotic analog of \cref{cor:anytime_bound} and obtain asymptotic guarantees from \cref{thm:anytime_asymp} for \cref{example:finite,example:continuous}. See \ox{\cref{sub:proof_of_cor:anytime_asymp}}\oa{\cite[\cref{sub:proof_of_cor:anytime_asymp}]{supplement}} for the proof.
    \begin{corollary}
    \label{cor:anytime_asymp}
    Consider the set-up as in \cref{thm:anytime_asymp} with $\pmint[\T] = \Omega(\T^{-\beta})$ for $\beta \in[0, \half]$. Then the following statements hold true as $\T, \NT, \LT \to \infty$.
    \begin{enumerate}[label=(\alph*)]
        \item \label{item:finite_asymp} For the setting in \cref{example:finite} with $\MT$ number of distinct unis in the $\T$-th experiment, if $\T$ is large enough such that $\errtwo = o(\min_{\lunit[] \neq \lunit[]'} \norm{\lunit[]\!-\!\lunit[]'}^2_{2})$, then with $\threshold_{\T} = 2\sigma_{a}^2 + \errtwo$, 
        \begin{align}
        \label{eq:countable_consistency}
            1 &=  o\parenth{\frac{\T^{1-2\beta}}{\log(\NT\T)}}
            \ \ \stext{\&} \ 
            \MT= o\parenth{\frac{\NT}{\log \T}}
            \implies \stext{\cref{eq:regularity_consistency} holds,} \qtext{and} \\
        \label{eq:countable_clt}
            \LT^2 &= o\parenth{\frac{\T^{1-2\beta}}{\log(\NT\T)}}
            \stext{\&} \ 
            \MT = o(\NT)  \qquad 
            \implies \stext{\cref{eq:regularity_clt} holds.} 
        \end{align}
        \item \label{item:highd_asymp} 
        For the setting in \cref{example:continuous}, with $\threshold_{\T}=2\sigma_{a}^2 + (1+ \T^{\frac{1-2\beta}{6}})\errtwo$, 
        \begin{align}
        \label{eq:highd_consistency}
            1 &\!=\! o\bigparenth{\frac{\T^{\frac{1-2\beta}3}}{\sqrt{\log(\NT\T)}}}
            \stext{\&}
            \bigparenth{ \frac{\T^{1+\beta}}{\log^{3/2}(\NT\T)} }^{d/6}\!=\! o(\frac{\NT}{\log\T}) \ \ \Longrightarrow \ \stext{\cref{eq:regularity_consistency} holds,}  \stext{and}
             \\
            \LT  &\!=\! o\bigparenth{\frac{\T^{\frac{1-2\beta}3}}{\sqrt{\log(\NT\T)}}} \stext{\&} \bigparenth{ \frac{\T^{1+\beta}}{\log^{3/2}(\NT\T)} }^{d/6}\!=\! o(\NT)
            \ \ \  \ \  \Longrightarrow \ \stext{\cref{eq:regularity_clt} holds.} 
            \label{eq:highd_clt}
        \end{align}
    \end{enumerate}
\end{corollary}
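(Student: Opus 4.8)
The plan is to pick the prescribed threshold sequence and then simply \emph{verify}, term by term, the regularity conditions of \cref{thm:anytime_asymp}---\cref{eq:regularity_consistency} for part~(a) of each claim and \cref{eq:regularity_clt} for part~(b)---so that the two conclusions follow at once from that theorem. The only preparatory observation is a two-sided estimate on $\errtwo$: since $\pmint[\T]=\Omega(\T^{-\beta})$ while trivially $\pmint[\T]\le 1$, there are constants $c,C$ depending only on $(\uconst,\vconst,\nconst)$ with $c\,\T^{-1/2}\sqrt{\log(\NT\T)}\le\errtwo\le C\,\T^{-(1/2-\beta)}\sqrt{\log(\NT\T)}$. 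Throughout, the upper bound is used wherever $\thresonet$ or $\threstwot$ must be shown to vanish, and the lower bound wherever $\pparamu(\thresonet)$ must be shown to be not too small.

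For \cref{example:finite} I take $\threshold_{\T}=2\sigma^2+\errtwo$, so $\thresonet=0$ and $\threstwot=2\errtwo$. Because the $\MT$ unit-factor values are distinct and equally likely and $\Sigv\succeq\lambda_a\mat I_d\succ 0$, one gets $\pparamu(0)=\Prob[\lunit[]'=\lunit^{(a)}]=1/\MT$; and once $\T$ is large enough that $2\errtwo<\lambda_a\min_{\lunit[]\neq\lunit[]'}\norm{\lunit[]-\lunit[]'}_2^2$---guaranteed by the hypothesis $\errtwo=o(\min_{\lunit[]\neq\lunit[]'}\norm{\lunit[]-\lunit[]'}_2^2)$---the radius-$2\errtwo$ ball around $\lunit^{(a)}$ contains no other atom, so $\pparamu(\threstwot)=1/\MT$ as well. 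Hence the ratio terms in \cref{eq:regularity_consistency}(b) and \cref{eq:regularity_clt}(b) are identically zero, and what remains to check is $\max\{\MT\log\T/\NT,\ 2\errtwo\}=o_{\T}(1)$ for consistency and $\max\{\MT/\NT,\ 2\LT\errtwo\}=o_{\T}(1)$ for normality. Inserting the upper bound on $\errtwo$ and solving the $o_{\T}(1)$ requirements yields exactly $\MT=o(\NT/\log\T)$ together with $1=o(\T^{1-2\beta}/\log(\NT\T))$, and $\MT=o(\NT)$ together with $\LT^2=o(\T^{1-2\beta}/\log(\NT\T))$; these are \cref{eq:countable_consistency,eq:countable_clt}.

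For \cref{example:continuous} I take $\threshold_{\T}=2\sigma^2+(1+\T^{(1-2\beta)/6})\errtwo$, so $\thresonet=\T^{(1-2\beta)/6}\errtwo$ and $\threstwot=(\T^{(1-2\beta)/6}+2)\errtwo$; the key algebraic cancellation is that the ratio $\threstwot/\thresonet=1+2\,\T^{-(1-2\beta)/6}$ is \emph{exact} and independent of $\errtwo$. Since $\lunit^{(a)}$ is a fixed interior point of $[0,1]^d$ and $\thresonet,\threstwot\to 0$, for all large $\T$ the associated Euclidean balls lie inside the cube, so $\pparamu(r)=c_d\,r^{d/2}$ \emph{exactly} on this range (cf.\ \cref{rem:non_unif}); therefore $\pparamu(\threstwot)/\pparamu(\thresonet)-1=(1+2\T^{-(1-2\beta)/6})^{d/2}-1\le C_d\,\T^{-(1-2\beta)/6}=o_{\T}(1)$, and this remains $o_{\T}(1)$ after multiplication by $\sqrt{\LT}$ as soon as $\LT=o(\T^{(1-2\beta)/3})$. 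The remaining term $(\log\T\text{ or }1)/(\NT\,c_d(\thresonet)^{d/2})$ is made $o_{\T}(1)$ by inserting the \emph{lower} bound $\thresonet\ge c\,\T^{(1-2\beta)/6-1/2}\sqrt{\log(\NT\T)}=c\,\T^{-(1+\beta)/3}\sqrt{\log(\NT\T)}$, which converts the requirement into $(\T^{1+\beta}/\log^{3/2}(\NT\T))^{d/6}=o(\NT/\log\T)$ (resp.\ $o(\NT)$); and the term $\threstwot$ (resp.\ $\LT\threstwot$), bounded above via $\threstwot\le C\,\T^{-(1-2\beta)/3}\sqrt{\log(\NT\T)}$, becomes $1=o(\T^{(1-2\beta)/3}/\sqrt{\log(\NT\T)})$ (resp.\ $\LT=o(\T^{(1-2\beta)/3}/\sqrt{\log(\NT\T)})$, which also implies the $\LT=o(\T^{(1-2\beta)/3})$ used above). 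These are exactly \cref{eq:highd_consistency,eq:highd_clt}. The two side hypotheses of \cref{thm:anytime_asymp}, $\threshold_{\T}\ge 2\sigma^2+\errtwo$ and $\pmint[\T]\ge\T^{-1/2}\sqrt{8\log(2\T)}$, hold for large $\T$ (the latter from $\pmint[\T]=\Omega(\T^{-\beta})$ and $1=o(\T^{1-2\beta}/\log(\NT\T))$, already assumed).

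The manipulations above---substituting $\threshold_{\T}$, expanding $\pparamu$, and solving the $o_{\T}(1)$ inequalities---are all elementary; the genuinely delicate point is the simultaneous two-sided control of $\errtwo$ in \cref{example:continuous}. Because the hypothesis provides only $\pmint[\T]=\Omega(\T^{-\beta})$, the exponent of $\T$ inside $\thresonet$ can be anywhere in $[-(1+\beta)/3,\,-(1-2\beta)/3]$, and one must use the slower-decay endpoint exactly where a lower bound on $\thresonet$ is wanted and the faster-decay endpoint exactly where an upper bound on $\threstwot$ is wanted; this is precisely what forces the paired conditions in \cref{eq:highd_consistency} to feature the two different powers $\T^{1+\beta}$ and $\T^{(1-2\beta)/3}$. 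A related subtlety worth flagging is that \cref{eq:regularity_clt}(b) genuinely needs $\pparamu(r)=c_d r^{d/2}(1+o(1))$ near $r=\thresonet$, not merely $\pparamu(r)\asymp r^{d/2}$, so one has to invoke the interior/uniform structure of \cref{example:continuous} rather than just its stated order of growth.
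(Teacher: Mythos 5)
Your proposal is correct and follows essentially the same route as the paper's proof: substitute the prescribed $\threshold_{\T}$, compute $\thresonet,\threstwot$ and the resulting $\pparamu$ values (exactly $1/\MT$ in the discrete case, an exact local power law $c\,r^{d/2}$ in the continuous case), and convert each $o_{\T}(1)$ requirement of \cref{eq:regularity_consistency,eq:regularity_clt} into the stated growth conditions using the two-sided bound $c\,\T^{-1/2}\sqrt{\log(\NT\T)}\le\errtwo\le C\,\T^{\beta-1/2}\sqrt{\log(\NT\T)}$. Your closing observation—that the ratio condition \cref{eq:regularity_clt}(b) needs the exact local power-law behavior of $\pparamu$ rather than just $\pparamu(r)\asymp r^{d/2}$—is a fair point that the paper's write-up glosses over but which holds for the uniform distribution of \cref{example:continuous}.
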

    For the simplified setting of $\beta=0$ and with large enough $\NT$, \cref{cor:anytime_asymp} provides a CLT at the rate (i) $\T^{\quarter-\alpha}$ for  for discrete units (with constant $\MT$), and (ii) $\T^{\frac16-\alpha}$ for continuous units, where $\alpha>0$ denotes an arbitrary small positive number. We once again note that with $\real^d$-valued unit factors, the requirement on $\NT$ grows exponentially with $d$, so that our strategy would be effective only with low-dimensional latent factors.
    We highlight that \cref{cor:anytime_asymp} allows the minimum exploration rate to decay as fast as $\T^{-\half}$ up to logarithmic factors.

    \begin{remark}[Variance estimate and data-dependent $\threshold$]
        \label{rem:cons_sigma}
        \cref{thm:anytime_asymp}\cref{item:clt} implies asymptotic validity of the confidence interval~\cref{eq:sigma_nn} whenever the noise variance estimate $\what{\sigma}^2$ is consistent; see \ox{\cref{sub:variance_estimate}}\oa{\cite[\cref{sub:variance_estimate}]{supplement}} for one such estimate under regularity conditions.
         \blue{While $\threshold$ would be tuned in practice, we do note that currently our theoretical analysis for asymptotics in \cref{thm:anytime_asymp} assumes that the sequence of $\threshold$ is pre-specified. Deriving a central limit theorem for a data-driven sequence of hyperparameter is a very exciting future direction.}
    \end{remark}
    
    \begin{remark}[Side product: Guarantees for matrix completion]
        \label{rem:prior_ci}
        
        \cref{thm:anytime_asymp,rem:cons_sigma} immediately provide an asymptotically valid prediction interval for each entry in a matrix completion task where the entries of a given matrix are missing completely at random (MCAR) with probability $p$. While the matrix completion literature is rather vast (see the survey \cite{ieee_matrix_completion_overview} for an overview), as noted in~\cite{chen2019inference} the literature for the statistical inference and  uncertainty quantification for the matrix entries is relatively sparse. \blue{While \citep[Thm.~2]{chen2019inference} does provide a prediction interval for bilinear factor model with MCAR patterns, our work advances such results to a vast number of sequentially missing patterns while also allowing for non-linear factor models (see \cref{sub:non_linear_lf}).}
    \end{remark}

\subsection{Asymptotic guarantees at the population-level}
\label{sub:ate_asymp}
    Next, we state our guarantees for estimating $\trueybarall[]$. With the same series of experiments as in \cref{thm:anytime_asymp} and $\sbraces{\KT}_{\T=1}^{\infty}$ denoting a suitable sequence diverging to $\infty$, we use
     $\estybar$
    and $(\estybaralltwo, \what{\sigma}^2_{\KT})$,
    to denote the analogs of $\estybar[\threshold]$ defined in \cref{eq:ybart_estimate}, and $(\estybaralltwo[\threshold, K], \what{\sigma}^2_{K})$ defined in \cref{eq:sigma_nn} for the $\T$-th experiment. Recall that $\probparam_{\lunit[], a}$ was defined in \cref{def:phi}, $\pmint$ in \cref{eq:pmin}, and $\errtwo$ in \cref{eq:errtwo}. In the next result, we abuse notation for $\thresonet,\threstwot$, as their definitions in \cref{thm:ate_asymp} are very slightly different ($\errtwo$ replaced by $\sqrt{2}\errtwo$) than those in \cref{thm:anytime_asymp}. The proof is provided in \ox{\cref{sec:proof_of_thm:ate_asymp}}\oa{\cite[\cref{sec:proof_of_thm:ate_asymp}]{supplement}}.

    \begin{theorem}[\ateconsistencyname]
    \label{thm:ate_asymp}
    Consider the set-up from \cref{thm:anytime_asymp}, with $\threshold_{\T} \geq 2\sigma_{a}^2\!+\!\sqrt{2}\errtwo$, $\thresonet \defeq \threshold_{\T}\!-\!2\sigma_{a}^2\!-\!\sqrt{2}\errtwo$, and $\threstwot \defeq \thresonet\!+\! 2\sqrt{2}\errtwo$.
    \begin{enumerate}[label=(\alph*), leftmargin=*]
        \item\label{item:consistency_ate} Asymptotic consistency: If the sequence $\braces{\threshold_{\T}', \N_{\T}, \pmint}_{\T=1}^{\infty}$ satisfies
         \begin{align}
        &\max\bigg\{
        \threstwot, \,\,
         \NT \sup_{\lunit[]} \sum_{\t=1}^{\T} e^{-\frac{1}{64}\pmint[\t]\probparam_{\lunit[], a}(\thresonet)\NT\!}, 
             \\
           &\qquad
        \frac{\sqrt{\log(\NT\T)}\sum_{\t=1}^{\T}\pmint[\t]^{-\half}/\T}
                 {\inf_{\lunit[]} \sqrt{\NT\probparam_{\lunit[], a}(\thresonet)}}, \, \, \,
           \sup_{\lunit[]} \abss{\frac{\probparam_{\lunit[], a}(\threstwot)}{\probparam_{\lunit[], a}(\thresonet)}\!-\!1}\, \frac{\sum_{\t=1}^{\T}\pmint[\t]\inv}{\T}
           \bigg\} = o_{\T}(1),
           \label{eq:ate_regularity_consistency}
         \end{align}
        then $\estybar \stackrel{p}{\longrightarrow} \trueybarall[]$ as $\T \to \infty$.
        \item\label{item:clt_ate} Asymptotic normality: If the sequence $\braces{\threshold_{\T}', \KT, \N_{\T}, \pmint}_{\T=1}^{\infty}$ satisfies
        \begin{align}
         &\max\bigg\{\KT\threstwot, \,\,\,
             \NT \sup_{\lunit[]} \sum_{\t=1}^{\T} e^{-\frac{1}{64}\pmint[\t]\probparam_{\lunit[], a}(\thresonet)(\NT\!-\!1)},  \\
           &\ 
        \frac{\sqrt{\KT\log(\NT\T)}\sum_{\t=1}^{\T}\pmint[\t]^{-\half}/\T}
                 {\inf_{\lunit[]} \sqrt{\NT\probparam_{\lunit[], a}(\thresonet)}}, \, \, 
           \sqrt{\KT}\sup_{\lunit[]} \abss{\frac{\probparam_{\lunit[], a}(\threstwot)}{\probparam_{\lunit[], a}(\thresonet)}\!-\!1}\, \frac{\sum_{\t=1}^{\T}\pmint[\t]\inv}{\T}
           \bigg\} \!=\! o_{\T}(1),
           \label{eq:ate_regularity_clt}
        \end{align}
        and $\what{\sigma}_{\KT} \stackrel{p}{\longrightarrow} c>0$,
        then $\frac{\sqrt{\KT}}{\what{\sigma}_{\KT}}(\estybaralltwo-\trueybarall[]) \ \stackrel{d}{\Longrightarrow} \ \mc N(0, 1)$ as $\T\to \infty$.
    \end{enumerate}
    \end{theorem}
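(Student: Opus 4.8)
\emph{Consistency (part (a)).} The plan is to start from the definition \cref{eq:ybart_estimate}, substitute $\obs = \trueobs + \noiseobs$ on the event $\{\miss=a\}$, and write $\estybar - \trueybarall[] = \mathrm{(I)}+\mathrm{(II)}+\mathrm{(III)}$, where $\mathrm{(I)} = \frac1{\NT\T}\sum_{\n,\t}\trueobs - \trueybarall[]$ is a law-of-large-numbers term, $\mathrm{(II)} = \frac1{\NT\T}\sum_{\n,\t}\indicator(\miss=a)\noiseobs$ is a noise term, and $\mathrm{(III)} = \frac1{\NT\T}\sum_{\n,\t}\indicator(\miss\neq a)(\estobs[\n,\t,\threshold_{\T}]-\trueobs)$ is the estimation-error term. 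For (I), \cref{assum:bilinear} gives $\trueobs = \sangles{\lunit^{(a)},\ltime^{(a)}}$, so $\frac1{\NT\T}\sum_{\n,\t}\trueobs = \sangles{\bar u_{\NT},\bar v_{\T}}$ with $\bar u_{\NT} = \frac1{\NT}\sum_\n\lunit^{(a)}$ and $\bar v_{\T} = \frac1{\T}\sum_\t\ltime^{(a)}$; since the latent factors are exogenous, i.i.d.\ and bounded (\cref{assum:iid_unit_lf,assum:lambda_min}), the strong law yields $\bar u_{\NT}\to\E[\lunit[]^{(a)}]$ and $\bar v_{\T}\to\E[\ltime[]^{(a)}]$, and by independence of the unit and time factors $\sangles{\E[\lunit[]^{(a)}],\E[\ltime[]^{(a)}]} = \E[\sangles{\lunit[]^{(a)},\ltime[]^{(a)}}] = \trueybarall[]$, so $\mathrm{(I)}=o_P(1)$. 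For (II), conditioning on $\history[\t-1]$ and the treatments at time $\t$ and using that $\noiseobs$ is exogenous with mean zero (\cref{assum:zero_mean_noise}) shows the summands $\indicator(\miss=a)\noiseobs$ are pairwise uncorrelated and bounded, whence $\Var(\mathrm{(II)})\le\sigma^2/(\NT\T)$ and $\mathrm{(II)}=o_P(1)$ by Chebyshev.

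\emph{The estimation-error term (III).} This is the crux. I would work on a high-probability event $\eventnotag$ on which, simultaneously for every $(\n,\t)$, we have $\nestnbr[\n,\t,a,\threshold_{\T}]>0$ (so $\estobs$ is the neighbour average \cref{eq:obs_estimate}, not the fallback \cref{eq:no_nbr_est}; cf.\ \cref{rem:no_nbr}) and the bound \cref{eq:nn_bnd} of \cref{thm:anytime_bound} holds. A union bound over the $\NT\T$ cells costs $\delta\NT\T$ (absorbed by rescaling $\delta$) plus $\sum_{\n,\t}2e^{-\pmint[\t]\probparam_{\lunit^{(a)},a}(\threshold_{\T}')(\NT-1)/16}$; since $\threshold'\ge\thresonet$ and $\probparam$ is non-decreasing, this is at most $\NT\sup_{\lunit[]}\sum_\t 2e^{-\pmint[\t]\probparam_{\lunit[],a}(\thresonet)\NT/64}$ — the second entry of the $\max$ in \cref{eq:ate_regularity_consistency}, which also guarantees $\pmint[\t](\NT-1)\probparam_{\lunit^{(a)},a}(\thresonet)\ge1$ for each $\t$, as needed to invoke \cref{thm:anytime_bound}. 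On $\eventnotag$, taking square roots in \cref{eq:nn_bnd}, using $\sqrt{x+y}\le\sqrt x+\sqrt y$, and averaging $|\estobs-\trueobs|$ over $(\n,\t)$ gives four terms: a $\lessorder\sqrt{\threstwot}$ bias (using $\threshold_{\T}-2\sigma^2\le\threstwot$), matching the first entry of the $\max$; a $\lessorder\sqrt{\errtwo}\lessorder\sqrt{\threstwot}$ distance-estimation bias; a variance term $\lessorder\sqrt{\log(\NT\T)}\,(\tfrac1\T\sum_\t\pmint[\t]^{-1/2})/\inf_{\lunit[]}\sqrt{\NT\probparam_{\lunit[],a}(\thresonet)}$, the third entry; and a worst-case-inflation term $\lessorder\sup_{\lunit[]}\abss{\probparam_{\lunit[],a}(\threstwot)/\probparam_{\lunit[],a}(\thresonet)-1}\,(\tfrac1\T\sum_\t\pmint[\t]^{-1})$, the fourth entry. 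Hence $\mathrm{(III)}=o_P(1)$ under \cref{eq:ate_regularity_consistency}, and combining the three pieces gives $\estybar\toprob\trueybarall[]$.

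\emph{Normality (part (b)).} Here I would exploit that the index set $\mbf K$ is drawn uniformly and \emph{independently} of all the data. Write $\estybaralltwo-\trueybarall[] = \mathrm{(A)}+\mathrm{(B)}$ with $\mathrm{(A)} = \estybaralltwo-\bar Z$ and $\mathrm{(B)} = \bar Z-\trueybarall[]$, where $\bar Z = \frac1{\NT\T}\sum_{\n,\t}\estobs[\n,\t,\threshold_{\T}]$. Conditionally on the data, $\mathrm{(A)}$ is the centred mean of a uniform sub-sample of size $\KT$ from the fixed, uniformly bounded population $\{\estobs[\n,\t,\threshold_{\T}]\}_{\n,\t}$ (each bounded by $\vconst\uconst+\nconst$), and $\what{\sigma}_{\KT}^2$ is the corresponding sub-sample variance; a self-normalized finite-population CLT (its Lindeberg condition automatic by boundedness, given $\KT\to\infty$ and $\KT=o(\NT\T)$, or immediate if $\mbf K$ is sampled with replacement) gives $\sqrt{\KT}\,\mathrm{(A)}/\what{\sigma}_{\KT}\todist\mc N(0,1)$ conditionally, hence — the limit being data-free — unconditionally. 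For $\mathrm{(B)}$, I would rerun the analysis of (I)--(III) while tracking the $\sqrt{\KT}$-scaled quantity: the LLN and noise pieces are $O_P(1/\sqrt{\min(\NT,\T)})$, hence $o_P(\KT^{-1/2})$, while $\sqrt{\KT}$ times the averaged estimation error is $o_P(1)$ precisely because the four averaged quantities above reappear multiplied by $\KT$ or $\sqrt{\KT}$ in \cref{eq:ate_regularity_clt} (e.g.\ $\KT\threstwot=o(1)$, $\sqrt{\KT}\sup_{\lunit[]}\abss{\probparam_{\lunit[],a}(\threstwot)/\probparam_{\lunit[],a}(\thresonet)-1}\tfrac1\T\sum_\t\pmint[\t]^{-1}=o(1)$, and the analogous noise/union-bound entries). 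Thus $\sqrt{\KT}\,\mathrm{(B)}=o_P(1)$, and since $\what{\sigma}_{\KT}\toprob c>0$ by hypothesis, Slutsky together with the CLT for $\mathrm{(A)}$ yields $\frac{\sqrt{\KT}}{\what{\sigma}_{\KT}}(\estybaralltwo-\trueybarall[])\todist\mc N(0,1)$.

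\emph{Main obstacle.} The hard part throughout is the uniform handling of $\mathrm{(III)}$ and $\mathrm{(B)}$: \cref{thm:anytime_bound} is an entry-wise high-probability statement whose failure probability and error both degrade with the time index $\t$ (through the decaying exploration $\pmint[\t]$) and with the target unit (through $\probparam_{\lunit^{(a)},a}$). One must (i) union-bound over all $\NT\T$ cells without the per-entry failure probabilities $e^{-\pmint[\t]\probparam_{\lunit^{(a)},a}(\thresonet)\NT/16}$ overwhelming the estimate — exactly the role of the term $\NT\sup_{\lunit[]}\sum_\t e^{-\pmint[\t]\probparam_{\lunit[],a}(\thresonet)\NT/64}$ in \cref{eq:ate_regularity_consistency,eq:ate_regularity_clt}, which is licit because the unit-dependence of the exponent enters only through the ($\t$-independent) quantity $\probparam_{\lunit[],a}(\thresonet)$ — and (ii) convert the entry-wise bound \cref{eq:nn_bnd} into a \emph{time-averaged} one, replacing $\pmint[\t]^{-1}$- and $\pmint[\t]^{-1/2}$-type factors by their Cesàro means $\tfrac1\T\sum_\t\pmint[\t]^{-1}$ and $\tfrac1\T\sum_\t\pmint[\t]^{-1/2}$ and the unit-dependent $\probparam_{\lunit^{(a)},a}$ by $\inf_{\lunit[]}\probparam_{\lunit[],a}(\thresonet)$ resp.\ $\sup_{\lunit[]}\abss{\probparam_{\lunit[],a}(\threstwot)/\probparam_{\lunit[],a}(\thresonet)-1}$; matching these averages term-by-term against the (somewhat intricate) hypotheses \cref{eq:ate_regularity_consistency,eq:ate_regularity_clt} is the bulk of the bookkeeping. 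A secondary point worth flagging is that $\estybaralltwo$ uses the neighbour average $\estobs$ on \emph{every} sub-sampled cell (including those with $\miss=a$), so no independence across the sub-sample is required — the entire asymptotic fluctuation is carried by the exogenous sub-sampling randomness, which is what makes the conditional finite-population CLT applicable despite the strong dependence among the $\{\estobs[\n,\t,\threshold_{\T}]\}$.
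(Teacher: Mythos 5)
Your proposal is correct and follows essentially the same route as the paper's proof: the same decomposition into a latent-factor LLN term, a noise term, and a unit$\times$time estimation-error term handled by union-bounding \cref{thm:anytime_bound} over all $\NT\T$ cells and matching the resulting time-averaged bias/variance/inflation terms against \cref{eq:ate_regularity_consistency,eq:ate_regularity_clt}, plus the same subsampling CLT for part (b) with the bias killed by the $\sqrt{\KT}$-scaled conditions. The only (harmless) deviation is that you control the observed-noise average by pairwise uncorrelatedness and Chebyshev, whereas the paper uses a stopping-time martingale concentration bound (\cref{lem:noise_avg}); both suffice for the $o_P(1)$ conclusion needed here.
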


    In simple words, \cref{eq:ate_regularity_consistency,eq:ate_regularity_clt} denote the regularity conditions to ensure asymptotic consistency of the estimate~\cref{eq:ybart_estimate} and asymptotic validity of the confidence interval~\cref{eq:sigma_nn} for the averaged counterfactual mean. Notably, our current analysis does not yield $\sqrt{\NT\T}$-rate CLT for $\estybar$ due to the $\Omega(\T^{-\quarter})$ bias term (and the lack of decay in bias even after averaging), which diverges after getting multiplied by $\sqrt{\T}$. Consequently, we use subsampling~\cref{eq:sigma_nn} to construct an estimate with CLT at rate slower than $\T^{-\quarter}$ (as the examples below demonstrate). 
    In a nutshell, the four terms on the LHS of \cref{eq:ate_regularity_consistency,eq:ate_regularity_clt} exhibit a tradeoff over the choice of $\threshold_{\T}$---as $\threshold_{\T}\downarrow2\sigma_{a}^2$, the first term decays to $0$, while the remaining three terms grow. The second term in both displays denotes the upper bound on the failure probability---for the event of observing enough available nearest neighbors---accumulated over all time and all units. Furthermore, the first (bias), third and fourth terms in \cref{eq:ate_regularity_clt} constrain how quickly $\KT$ can grow; we note that a large $\KT$ is preferred to obtain narrower intervals for $\trueybarall[]$.

    Next, we present a corollary (with proof in \ox{\cref{sub:proof_of_cor:ate_asymp}}\oa{\cite[\cref{sub:proof_of_cor:ate_asymp}]{supplement}}) that states sufficient conditions in simplified terms to apply \cref{thm:ate_asymp} for \cref{example:finite,example:continuous}.
    \begin{corollary}
    \label{cor:ate_asymp}
    Consider the set-up as in \cref{thm:ate_asymp} with $\pmint[\T] = \Omega(\T^{-\beta})$ for $\beta \in[0, \half]$. Then the following statements hold true as $\T, \NT, \KT \to \infty$.
    \begin{enumerate}[label=(\alph*)]
        \item \label{item:finite_ate_asymp} For the setting in \cref{example:finite} with $\MT$ number of distinct unis in the $\T$-th experiment, if $\errtwo \!=\! o(\min_{\lunit[] \neq \lunit[]'} \norm{\lunit[]\!-\!\lunit[]'}^2_{2})$ for all $\T$ large enough, then with $\threshold_{\T} \!=\! 2\sigma_{a}^2 \!+\! \sqrt{2} \errtwo$, 
        \begin{align}
        \label{eq:countable_ate_consistency}
            1 &=  o\parenth{\frac{\T^{1-2\beta}}{\log(\NT\T)}}
            \ \stext{\&} \ 
            \MT= o\parenth{\frac{\NT}{\T^{\beta}\log(\NT\T)} } 
            \ \ \Longrightarrow \ \ \stext{\cref{eq:ate_regularity_consistency} holds,} \qtext{and} \\
        \label{eq:countable_ate_clt}
            \KT^2 &= o\parenth{\frac{\T^{1-2\beta}}{\log(\NT\T)}}
            \stext{\&} \ 
            \MT = o\parenth{\frac{\NT}{\T^{\half}\log(\NT\T)} }
            \ \ \Longrightarrow \ \ \stext{\cref{eq:ate_regularity_clt} holds.} 
        \end{align}
        \item \label{item:highd_ate_asymp} 
        For the setting in \cref{example:continuous}, with $\threshold_{\T}\!=\!2\sigma_{a}^2 \!+\! \sqrt{2}(1\!+\!\T^{\frac{1+2\beta}{6}}) \errtwo$, if $\beta\leq\quarter$, and
        \begin{align}
        \label{eq:highd_ate_consistency}
            &1 \!=\! o\bigparenth{\frac{\T^{\frac{1-4\beta}3}}{\sqrt{\log(\NT\T)}}}
            \stext{\&}
            \bigparenth{ \frac{\T^{1-\beta}}{\log^{\frac32}(\NT\T)} }^{\frac{d}{6}} \T^{\beta}\!=\! o(\frac{\NT}{\log(\NT\T)}) \Longrightarrow \stext{\cref{eq:ate_regularity_consistency} holds,}  \stext{and}
             \\
            &\KT \!=\! o\bigparenth{\frac{\T^{\frac{1-4\beta}3}}{\sqrt{\log(\NT\T)}}} \stext{\&} \bigparenth{\!\frac{\T^{1-\beta}}{\log^{\frac32}(\NT\T)}\! }^{\frac{d}{6}} \T^{\frac{1-\beta}{3}}\!=\! o(\frac{\NT}{\log(\NT\T)})
            \Longrightarrow \stext{\cref{eq:ate_regularity_clt} holds.} 
            \label{eq:highd_ate_clt}
        \end{align}
    \end{enumerate}
\end{corollary}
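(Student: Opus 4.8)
The plan is to substitute the explicit forms of $\probparam_{\lunit[], a}$ from \cref{example:finite,example:continuous} into the four quantities that must be $o_{\T}(1)$ in \cref{eq:ate_regularity_consistency} (for consistency) and \cref{eq:ate_regularity_clt} (for the CLT) of \cref{thm:ate_asymp}, and verify that the stated simple conditions force each to vanish. Several deterministic reductions apply in both examples and I would dispatch them first. Since $\pmint[\t]\defeq\inf_{\t'\le\t,\n}\policy_{\t',\n}(a)$ is non-increasing in $\t$ and $\pmint[\T]=\Omega(\T^{-\beta})$,
\[
\frac{1}{\T}\sum_{\t=1}^{\T}\pmint[\t]^{-1/2}\;\le\;\pmint[\T]^{-1/2}\;=\;O(\T^{\beta/2}),\qquad
\frac{1}{\T}\sum_{\t=1}^{\T}\pmint[\t]^{-1}\;\le\;\pmint[\T]^{-1}\;=\;O(\T^{\beta}).
\]
Moreover $e^{-c\,\pmint[\t]\probparam_{\lunit[], a}(\thresonet)\NT}$ is maximized at $\t=\T$, so the failure term obeys
\[
\NT\sup_{\lunit[]}\sum_{\t=1}^{\T}e^{-\frac{1}{64}\pmint[\t]\probparam_{\lunit[], a}(\thresonet)\NT}\;\le\;\NT\T\,e^{-\frac{c}{64}\T^{-\beta}\underline{\probparam}(\thresonet)\NT},\qquad \underline{\probparam}(\thresonet)\defeq\inf_{\lunit[]}\probparam_{\lunit[], a}(\thresonet),
\]
which vanishes once $\NT\gg\T^{\beta}\log(\NT\T)/\underline{\probparam}(\thresonet)$. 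The smoothness (ratio) term is then treated by a first-order expansion of $\probparam_{\lunit[], a}$ around $\thresonet$ together with the $O(\T^{\beta})$ bound above; after these reductions everything is expressed through $\underline{\probparam}(\thresonet)$, $\threstwot$, and $\probparam_{\lunit[], a}(\threstwot)/\probparam_{\lunit[], a}(\thresonet)$.

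For \cref{example:finite}, the assumption $\errtwo=o(\min_{\lunit[]\ne\lunit[]'}\norm{\lunit[]-\lunit[]'}_2^2)$ with $\threshold_{\T}=2\sigma^2+\sqrt2\errtwo$ yields $\thresonet=0$ and $\threstwot=2\sqrt2\errtwo$, both strictly below the minimum squared inter-atom distance for $\T$ large; hence $\probparam_{\lunit[], a}(\thresonet)=\probparam_{\lunit[], a}(\threstwot)=1/\MT$ exactly, so the smoothness term is identically zero. The bias term is $\threstwot\asymp\errtwo\asymp\sqrt{\log(\NT\T)/\T^{1-2\beta}}$ (times $\KT$ in the CLT version), the failure term vanishes once $\NT\gg\MT\T^{\beta}\log(\NT\T)$, and the remaining (variance) term is $\sqrt{\MT\log(\NT\T)}\,\T^{\beta/2}/\sqrt{\NT}$ up to the $O(\T^{\beta/2})$ bound (times $\sqrt{\KT}$ in the CLT version). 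Imposing $o_\T(1)$ on all of these, and combining the induced cap on $\KT$ with the stated bounds on $\MT$ and $\NT$, reproduces \cref{eq:countable_ate_consistency,eq:countable_ate_clt}.

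For \cref{example:continuous}, I would set $\omega\defeq\tfrac{1+2\beta}{6}$, so that with $\threshold_{\T}=2\sigma^2+\sqrt2(1+\T^{\omega})\errtwo$ one has $\thresonet=\sqrt2\,\T^{\omega}\errtwo$, hence $\threstwot/\thresonet=1+2\T^{-\omega}\to1$ and, using $\probparam_{\lunit[], a}(r)\asymp r^{d/2}$, $\probparam_{\lunit[], a}(\threstwot)/\probparam_{\lunit[], a}(\thresonet)-1\asymp d\,\T^{-\omega}$. The delicate point is that $\errtwo$ carries $\pmint[\T]^{-1}$, for which only a one-sided bound is known: I would bound $\errtwo$ from above via $\pmint[\T]\ge c\T^{-\beta}$, making $\threstwot\lesssim\T^{\omega+\beta-1/2}\sqrt{\log(\NT\T)}=\T^{-(1-4\beta)/3}\sqrt{\log(\NT\T)}$ (which forces $\beta\le\tfrac14$ for decay), and bound $\errtwo$ from below via the trivial $\pmint[\T]\le1$, making $\underline{\probparam}(\thresonet)=\thresonet^{d/2}\gtrsim(\T^{\omega-1/2}\sqrt{\log(\NT\T)})^{d/2}\gtrsim\T^{-d(1-\beta)/6}(\log(\NT\T))^{d/4}$. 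Feeding these into the four terms: the bias term is $\threstwot$ (times $\KT$), the failure and variance terms are both controlled precisely when $\NT\gg\T^{\beta}(\T^{1-\beta}/\log^{3/2}(\NT\T))^{d/6}\log(\NT\T)$ (the variance term additionally capping $\sqrt{\KT}$), and the smoothness term $\asymp d\,\T^{-\omega}\cdot\T^{\beta}=d\,\T^{-(1-4\beta)/6}$ (times $\sqrt{\KT}$) is automatically dominated by the bias constraint. Collecting these gives \cref{eq:highd_ate_consistency,eq:highd_ate_clt}.

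I expect the main obstacle to be choosing and justifying the single exponent $\omega$: the bias term $\KT\threstwot$ pushes $\omega$ small, while the variance term $\KT/(\NT\underline{\probparam}(\thresonet))$ and the smoothness term $\sqrt{\KT}\,\T^{\beta}\,|\probparam_{\lunit[], a}(\threstwot)/\probparam_{\lunit[], a}(\thresonet)-1|$ both push $\omega$ large, and one must verify that $\omega=\tfrac{1+2\beta}{6}$ renders all three simultaneously $o_\T(1)$ under the quoted lower bound on $\NT$. This requires carefully threading the two-sided use of $\pmint[\T]$ inside $\errtwo$ (upper bound $\pmint[\T]\ge c\T^{-\beta}$ for $\threstwot$, trivial lower bound $\pmint[\T]\le1$ for $\underline{\probparam}(\thresonet)$) with the side condition $\beta\le\tfrac14$. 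The confidence-interval statements then follow from the identical estimates after reinstating the $\KT$ and $\sqrt{\KT}$ factors, invoking \cref{thm:ate_asymp}\cref{item:clt_ate} and the hypothesis $\what{\sigma}_{\KT}\toprob c>0$.
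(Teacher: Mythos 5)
Your proposal is correct and follows essentially the same route as the paper's proof: the same deterministic reductions of the time-averaged $\pmint[\t]^{-1/2}$, $\pmint[\t]^{-1}$ and failure terms to $\T^{\beta/2}$, $\T^{\beta}$ and $\T e^{-c\T^{-\beta}\probparam(\thresonet)\NT}$, the same collapse of the ratio term to zero in the discrete case, and the same choice of exponent $\frac{1+2\beta}{6}$ (the paper's $\omega/2$) with the identical two-sided use of $\pmint[\T]$ inside $\errtwo$ to bound $\threstwot$ from above and $\probparam(\thresonet)\asymp(\thresonet)^{d/2}$ from below in the continuous case. The resulting sufficient conditions match the paper's \cref{eq:suff_ate_regularity_consistency,eq:suff_ate_regularity_clt} term for term.
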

        When $\beta=0$, \cref{cor:ate_asymp} yields the same CLT rates for the $\trueybar$ as that from \cref{cor:anytime_asymp} for a single $\trueobs$in both \cref{example:finite,example:continuous} while requiring comparatively larger number of units $\NT$ than those needed in \cref{cor:anytime_asymp}. These constraints are consequences of (i) the need to ensure enough number of neighbors are available throughout all time points, (ii) lack of improvement in the bias due to $\threshold$ (first term in \cref{eq:nn_bnd}) after averaging, (iii) the scaling of the bias due to distance computation (second term in \cref{eq:nn_bnd}) averaged over time as ${\sum_{\t=1}^{\T}\pmint[\t]^{\inv}}/{\T}$) in contrast to just as $\pmint[\t]\inv$ (treated as a constant with $\t$ fixed) in \cref{cor:anytime_asymp}. 
        In \cref{cor:ate_asymp}\cref{item:highd_ate_asymp}, the minimum exploration rate is constrained to be $\Omega(\T^{-\quarter})$, which is much slower than the $\Omega(\T^{-\half})$ constraint in \cref{cor:anytime_asymp}\cref{item:highd_asymp}. This worse scaling is a consequence of a poorer scaling of the worst-case inflation in noise variance averaged over time (the fourth term in \cref{eq:nn_bnd}) with $\T$ due to the dependence on $\pmint[\T]$ (rather than just $\pmint[\t]$ treated as a constant for a fixed $\t$ in \cref{cor:anytime_asymp}). Moreover, the lower bound on $\NT$ getting smaller as $\beta$ gets larger is unconventional due to the unusual choice of $\threshold_{\T}$ which increases with $\beta$ to ensure that the aforementioned variance term can decay to $0$. We leave the sharpening of constraints on $(\beta, \NT)$ here as an interesting future direction.

        \blue{We note that while \cref{thm:ate_asymp,cor:ate_asymp} derive asymptotic guarantees for estimating the average treatment effect, the same machinery can be easily extended to estimating unit-specific treatment effects averaged over time, or time-specific treatment effects averaged over units.}

\subsection{Proof sketch}
\label{sub:proof_sketch}

We now provide a sketch of the proof of \cref{thm:anytime_bound}.
A key highlight of our proof is a novel sandwich argument with nearest neighbors that may be of independent interest for dealing with adaptively collected data. \blue{In particular, this sandwich argument is the key technique that allows us to establish the robustness of nearest neighbors for sequential policies without requiring any knowledge of the policy (as discussed in \cref{subsub:robustness}).}

Overall, our proof proceeds in several steps. First, we decompose the error in terms of a bias ($\mbb B$) and variance term ($\mbb V$) using basic inequalities~\ox{\cref{eq:det_final}}\oa{\cite[Eq.~\cref{eq:det_final}]{supplement}}:
\begin{align}
\label{eq:bias_var_decomp}
	\mbb B \defeq \parenth{ \frac{\sum_{j \in \estnbr} |\trueobs[\n,\t]- \trueobs[j, \t]|}{\nestnbr}  }^2,
	\qtext{and}
     \mbb V \defeq \parenth{ \frac{\sum_{j \in \estnbr} \noiseobs[j, \t]}{\nestnbr}  }^2.
\end{align}
Next, we discuss our strategy to bound the bias $\mbb B$ in \cref{sub:bias} using martingales, followed by the strategy to bound the variance $\mbb V$ in \cref{sub:sandwich} using a sandwich argument.

\subsubsection{Controlling the bias via martingales}
\label{sub:bias}
We show that the bias $\mbb B$ bounded by the worst-case distance between unit factor $\norm{\lunit-\lunit[j]}_2$ over $j$ in the nearest neighbor set $\estnbr$~\ox{\cref{eq:average_error}}\oa{\cite[Eq.~\cref{eq:average_error}]{supplement}}. To show that this distance is small, we use suitably constructed martingales and establish a tight control on the estimated distance $\estdist$ leveraging the exogenous nature of latent factors and noise variables despite the sequential nature of policy. In particular, we show that $\estdist$ concentrates around its suitable expectation denoted by $\truedist$ (see \ox{\cref{lemma:dist_noise_conc}}\oa{\cite[\cref{lemma:dist_noise_conc}]{supplement}}), 
which in turn helps us to control the distance between unit factors in terms of $\threshold, \sigma_{a}^2$ and $\lambda_a$ 
using the fact that $\estdist\leq \threshold$. To provide this tight control on the separation between $\estdist$ and $\truedist$, 
we use the condition~\cref{eq:pmin} and a standard Binomial concentration result and establish a lower bound of $\Omega(\pmint^2\T)$ on the denominator in \cref{eq:time_nbr_dist}. Overall, we prove that the first two terms on the RHS of the display \cref{eq:nn_bnd} upper bound the bias $\mbb B$~\cref{eq:bias_var_decomp}.

\subsubsection{Controlling the variance via sandwiching the estimated neighbors}
\label{sub:sandwich}
Establishing a bound on $\mbb V$ is more involved due to the sequential nature of policy in \cref{assum:policy} which induces correlations between the neighbor set $\estnbr$ and the noise terms $\sbraces{\noiseobs[j,\t]}$.  For this part, we use the novel sandwich argument with the estimated set of neighbors. Without loss of generality we set $a=1$, so that we can use the simplified notation $\indicator(\miss=a) = \miss$. The complete argument is provided in \ox{\cref{lemma:dist_noise_conc}}\oa{\cite[\cref{lem:var_bnd,lem:nset_bnd}]{supplement}}. %

We first show that there exists two sets of units denoted by $\nstar$ and $\nwstar$ that are contained in the sigma-algebra of the unit factors $\ulf$, and there exists a high probability event conditioned on $\ulf$ (corresponding to the tight control on $\estdist$ discussed above), such that the estimated set of neighbors $\nhat \defeq \sbraces{j \in [\N]\suchthat\estdist\leq \threshold}$ is sandwiched as
\begin{align}
\label{eq:n_sandwich}
    \nstar \subseteq \nhat \subseteq \nwstar.
\end{align}
This sandwich relation helps us to provide a tight control on the variance $\mbb V$ despite the arbitrary dependence between the observed noise at time $\t$ with the neighbors  $\nhat$ due to (i) the sequential nature of the treatment policy, and (ii) the fact that entire data is used to estimate the neighbors $\nhat$. In particular, \cref{eq:n_sandwich} allows to decompose the variance $\mbb V$ in two terms, each of which are easy to control as we now elaborate.

Note that by definition~\cref{eq:reliable_nbr}, we have $\estnbr  = \sbraces{j\in\nhat\suchthat \miss[j, \t]=1}$, which when combined with the sandwich condition~\cref{eq:n_sandwich} implies that the variance $\mbb V$ can be decomposed further in two terms: %
\begin{align}
\label{eq:var_decomp}
	\mbb V_1 \!\defeq\! \biggparenth{\frac{\sum_{j\in\nstar}\miss[j, t]\noiseobs[j, \t]}{\sum_{j\in\nhat}\miss[j, t]}}^2
	\stext{and} 
    \mbb V_2 \!\defeq\! \biggparenth{\frac{\sum_{j\in\nhat\backslash\nstar}\miss[j, t]\noiseobs[j, \t]}{\sum_{j\in\nhat}\miss[j, t]}}^2.
\end{align}
In simple words, $\mbb V_1$ corresponds to the variance of averaged noise for the units in $\nstar$ that are assigned treatment $a$ at time $\t$, and  $\mbb V_2$ corresponds to the averaged noise in $\nhat\backslash \nstar$ that are assigned treatment $a$ at time $\t$. 

\paragraph{Bounding $\mbb V_1$ and $\mbb V_2$}
We argue that the variance $\mbb V_1$ behaves (roughly) like the variance of an average of \iid noise variables due to \cref{assum:policy,assum:iid_unit_lf,assum:zero_mean_noise}, so that this averaged noise is of the order $1/(\pmint[\t]|\nstar|)$ and hence decays with the number of neighbors in $\nstar$ increases. 

To bound $\mbb V_2$, first we note that since the policy is sequential, the noise terms at time $\t$ affect the observed data in the future, and thereby the noise terms in $\nhat\backslash \nstar$ are correlated as their realized values affect the future policy, and thereby the observed data and subsequently which units are included in the set $\nhat$. Analyzing this term without further assumptions remains non-trivial, and we overcome this challenge by using a worst-case bound on this term in terms of the bound on noise, and the fact that $|\nhat\backslash \nstar| \leq |\nwstar\backslash \nstar|$ due to \cref{eq:n_sandwich}. Overall, we prove that $\mbb V_2$ is of the order of $|\nwstar\backslash \nstar|^2/(\pmint[\t]^2|\nstar|^2)$, and thereby would decay if the relative difference between the two neighbor sets $\nwstar$ and $\nstar$ decays.  See \ox{\cref{sec:proof_of_anytime_bound}}\oa{\cite[\cref{sec:proof_of_anytime_bound}]{supplement}} for the complete argument. %

\paragraph{Bounding the size of sets $\nstar$ and $\nwstar\backslash\nstar$}
Finally, we prove a lower bound on $|\nstar|$, and an upper bound on $|\nwstar\backslash \nstar|$ using \iid sampling of units (\cref{assum:iid_unit_lf}), the definition of the neighbor sampling probability~\cref{def:phi}, and a standard Binomial concentration.

Putting the pieces together, we conclude that that the third term and fourth term on the RHS of \cref{eq:nn_bnd} respectively upper bound $\mbb V_1$, and $\mbb V_2$ defined in \cref{eq:var_decomp}, which when combined with \cref{eq:bias_var_decomp} yields the bound~\cref{eq:nn_bnd} established in \cref{thm:anytime_bound}.

    \section{Extensions}
    \label{sec:possible_extensions}
    We now discuss on extension of our theoretical results to non-linear factor model in \cref{sub:non_linear_lf} and a model with bounded spillover effects in \cref{sub:delayed_effects}. We refer the reader to \ox{\cref{sub:other_extensions}}\oa{\cite[\cref{sub:other_extensions}]{supplement}} for additional extensions to sub-Gaussian factors, time factors that are non-iid or have singular covariance, and other spillover settings where the spillover effect depends on the entire sequence of treatments.
    
    \subsection{Non-linear latent factor model}
    \label{sub:non_linear_lf}
    We now describe how our results do not critically hinge on the bilinearity assumption made in \cref{assum:bilinear}. In particular, we consider the non-linear latent factor model from \cref{eq:non_linear_lf} now stated as an assumption with some regularity conditions. We also simultaneously state a different version of \cref{assum:lambda_min} on the time factor distributions needed for the analysis in this section.
    \begin{assumption}[\nonlinearfactor]
    \label{assum:nonlinear}
    Conditioned on the latent factors $\ulf$ and $\vlf$, the mean counterfactuals satisfy
    \begin{align}
        \trueobsvar_{\n, \t}^{(a)}\! 
    \defeq \E[\obs^{(a)}\vert \ulf,\vlf] =\lfun^{(a)}(\lunit^{(a)}, \ltime^{(a)})
    \label{eq:non_linear_f}
    \end{align}
     for $a\in \actionset$. Moreover, the non-linearity $\lfun^{(a)}$ is bounded by $G_1$, is $G_2$-Lipschitz in $\infnorm{\cdot}$-norm in the second coordinate, and $G_3$-Lipschitz in the first coordinate. That is, $\sinfnorm{\lfun^{(a)}} \leq G_1$ and the following bounds hold for all $\lunit[], \lunit[]', \ltime[], \ltime[]'$:
     \begin{align}
     \sabss{\lfun^{(a)}(\lunit[], \ltime[]) -\lfun^{(a)}(\lunit[], \ltime[]')} \leq G_2 \sinfnorm{\ltime[]-\ltime[]'} \qtext{and}
     \sabss{\lfun^{(a)}(\lunit[], \ltime[]) -\lfun^{(a)}(\lunit[]', \ltime[])} \leq G_3 \sinfnorm{\lunit[]-\lunit[]'}.
     \label{eq:non_linear_lip}
     \end{align}
    \end{assumption}
    \vspace{-2mm}
    \vspace{-2mm}
    \vspace{-2mm}
    \begin{assumption}[\nonlineartimefactor]
\label{assum:time_factor_lower_bound}
The time latent factors $\vlf \defeq \sbraces{\ltime^{(a)}, a\in\actionset}_{\t=1}^{\T}$ are exogenous, and for each $a \in \actionset$, they are drawn \iid from a distribution with support $[0, 1]^d$ with a density that is lower bounded by a constant $c_{\mc P, v}>0$. 
\end{assumption}
   We note that the choice of $\infnorm{\cdot}$ for Lipschitzness in \cref{assum:nonlinear} and $[0, 1]^d$ for the support of time factors in \cref{assum:time_factor_lower_bound} is for convenience in the analysis. The two conditions can respectively be generalized to any $\norm{\cdot}_p$-norm for $p\geq 1$ and a compact subset of $\real^d$, and would impact merely the $d$-dependent constants implicit in the following results. The lower bound on density of time factors serves as the analog of assuming a lower bound on the minimum eigenvalue of the covariance matrix for the time factors in the bilinear setting. We note that the Lipschitzness with respect to the first argument is not required explicitly for the main result below, but useful to argue that any given unit will have a non-trivial number of neighbors.

    Given the non-linear model~\cref{eq:non_linear_f}, the function $\probparam_{\lunit[], a}: \real_{+} \to [0, 1]$ from \cref{def:phi}, that characterizes the probability of sampling a neighboring unit of $\lunit[]$ is suitably redefined as
\begin{align}
\label{def:non_linear_phi}
   \probparam^{\trm{non-lin}}_{\lunit[], a}(r) &\defeq  \P_{u'}\brackets{ \E_{\ltime[]^{(a)}}(\lfun^{(a)}(\lunit[], \ltime[]^{(a)})-\lfun^{(a)}(\lunit[]', \ltime[]^{(a)}) )^2 \leq r},
   \qtext{for}r \geq 0,
\end{align}
where the expectation inside is computed only over the latent time factor distribution assumed in \cref{assum:time_factor_lower_bound} and like \cref{def:phi}, the probability is computed only over the randomness in the draw of the unit factor $\lunit[]'$ from the corresponding distribution assumed in \cref{assum:iid_unit_lf}. One can immediately verify that for \cref{example:finite}, we still have $\probparam^{\trm{non-lin}}_{\lunit[], a}(r) = \frac{1}{M}$. Moreover, for \cref{example:continuous}, we can also conclude that $\probparam^{\trm{non-lin}}_{\lunit[], a}(r) \leq c_d r^{d/2}$ since under \cref{assum:nonlinear}, $\infnorm{\lunit[]-\lunit[]'} \leq \sqrt{r} $ implies that $|\lfun^{(a)}(\lunit[], \ltime[]^{(a)})-\lfun^{(a)}(\lunit[]', \ltime[]^{(a)})| \leq G_3 \sqrt{r}$.

We are now ready to state our generalization of \cref{thm:anytime_bound}, namely, the non-asymptotic error guarantee, for the non-linear factor model. We state a new error term in \cref{eq:nonlinear_eta_chi}, which is the analog of $\errterm$ from \cref{eq:eta_chi} that uses the bound $G_1$ on $\sinfnorm{\lfun^{(a)}}$:
\begin{align}
\label{eq:nonlinear_eta_chi}
    \errterm' \defeq \frac{4(G_1+\nconst)^2\sqrt{32\log(4\N\T/\delta)}}{\pmint\sqrt{\T}}.
\end{align}
    
    \begin{theorem}[\nonlinearanytimeboundname]
    \label{thm:non_linear_anytime_bound}
    Consider a sequential experiment with $\N$ units and $\T$ time points satisfying \cref{assum:policy,assum:zero_mean_noise,assum:nonlinear,assum:time_factor_lower_bound,assum:iid_unit_lf}. Given a fixed $\delta \in (0, 1)$, suppose that for the sequential policy $ \pmint \!\geq\!\T^{-\half} \sqrt{8\log(2/\delta)}$. Then for any fixed unit with latent factor $\lunit$, any fixed $\t \in [\T]$, $a \in \actionset$, and $\threshold$ such that $\pmint[\t](\N\!-\!1) \ppau(\threshold') \geq 1$ and $\threshold-2\sigma_{a}^2+\errterm' \to 0$, conditional on $(\lunit^{(a)}, \ltime^{(a)})$ the nearest neighbor estimate $\estobs$~\cref{eq:obs_estimate} satisfies
    \begin{align}
        (\estobs-\trueobs)^2 &\leq2(\wtil{\mbb B}^{\trm{non-lin}} + \wtil{\mbb V}) \qtext{where}
        \wtil{\mbb B}^{\trm{non-lin}} \defeq  c_dG_2^{\frac{2d}{d+2}} \parenth{\frac{\threshold-2\sigma_{a}^2 + \errterm'}{c_{\mc P, v}}}^{\frac{2}{d+2}},
    \label{eq:nonlinear_bias}
    \end{align}
    $\wtil{\mbb V}$ is as in \cref{thm:anytime_bound}, with probability at least $1\!-\!\delta\!-\!2e^{-\frac{\pmint[\t]\ppau( \threshold')(\N\squash{-}1)}{16}}$. In this result, we use $\ppau\defeq\probparam^{\trm{non-lin}}_{\lunit, a}$ (defined in \cref{def:non_linear_phi}), $\threshold' = \threshold-2\sigma_{a}^2 - \errterm'$, and $c_d$ denotes a $d$-dependent constant.
    \end{theorem}
    See \ox{\cref{sec:proof_non_linear}}\oa{\cite[\cref{sec:proof_non_linear}]{supplement}} for the proof. The main difference between \cref{thm:anytime_bound} and \cref{thm:non_linear_anytime_bound} lies in the bias~$\mbb B$ term. While, the bias in \cref{eq:nonlinear_bias} suffers from the curse of dimensionality with respect to $\threshold, \T$, the bias in \cref{thm:anytime_bound} does not. Such a difference arises since for the bilinear model the expected error and entry-wise error are linearly related (see \ox{\cref{eq:u_bound}}\oa{\cite[Eq.~\cref{eq:u_bound}]{supplement}}). On the other hand, for the non-linear model upper bounding entry-wise error with the expected error is not straightforward. However, both results have identical scaling for the variance~$\mbb V$ term. 

    We highlight that the guarantee~\cref{eq:nonlinear_bias} is the first guarantee on entry-wise recovery of a non-linear Lipschitz latent factor model obtained for a general class of sequential policies specified in \cref{assum:nonlinear}. 
    To our knowledge, no entry-wise guarantee in the matrix completion literature with no-linear Lipschitz latent factor model is known even under complete randomization case. Prior works with non-linear factor model~\cite{LeeLiShahSong16,yu2022nonparametric} establish an expected squared error guarantee, where the expectation is taken over the time factor. Besides allowing for a wider range of missing patterns (as in \cref{assum:policy}), \cref{thm:non_linear_anytime_bound} innovates on such prior guarantees by converting the expected error guarantee to an entry-wise guarantee (via a technical result~\ox{\cref{lem:inf_lip_bound}}\oa{\cite[\cref{lem:inf_lip_bound}]{supplement}}). Note that the inference results from \cref{thm:anytime_asymp} can also be generalized to the non-linear setting; see \ox{\cref{sec:asymp_nonlinear}}\oa{\cite[\cref{sec:asymp_nonlinear}]{supplement}} for a discussion.

    \subsection{Inference with spillover effects}
    \label{sub:delayed_effects}
    So far, we assumed that there is no spillover effect since the observed outcome for unit $\n$ at time $\t$ depends only on the treatment assigned to that unit at that time $\miss$, i.e., $\obs = \obs^{(\miss)}$ in \cref{eq:model_mc}.
    Next, we show how nearest neighbor algorithm and the results can be generalized to several settings with spillover effects.

    \paragraph{Bounded spillover effects}
    Consider a setting, where the potential outcomes at time $t$ depend on a bounded sequence of treatments. In particular, suppose there exists an integer $K$, such that for any $(i, t)$ and any treatment sequence $\overline{a} \in \sbraces{0, 1}^K$, we have the following potential outcome and observation model:
    \begin{align}
            \obs^{(\overline{a})} = \trueobsvar_{\n, \t}^{(\overline{a})} + \noise_{\n, \t}^{(\overline{a})}
            \stext{and}
            \obs = \obs^{(\miss[\n, \t-K+1:\t])},
            \stext{where}
            \miss[\n, s_1:s_2] \defeq (\miss[\n, s_2], \miss[\n, s_2-1], \miss[\n, s_1]), s_2 \geq s_1,
            \label{eq:bnd_spill}
    \end{align}    
    where $\trueobsvar_{\n, \t}^{(\overline{a})}$ denotes the mean counterfactual for unit $i$ at time $t$ for the sequence of treatments $\overline{a}$, and  $\noise_{\n, \t}^{(\overline{a})}$ denotes exogenous zero-mean noise. This model allows \emph{bounded spillover/lagged effects} in time since (i) the counterfactual mean parameters are parameterized by $K$ treatments and (ii) the outcome observed at time $\t$ is the potential outcome corresponding to the sequence of treatments assigned for the last $K$ time points. 
    The presence of lagged effects necessitates us to define the treatment effects for unit $\n$ at time $\t$ as a function of past $K$ treatments.  For this model, \ox{\cref{sub:excursion_effects}}\oa{\cite[\cref{sub:excursion_effects}]{supplement}} shows how a direct extension of our NN algorithm would yield consistent estimates for  $\trueobsvar_{\n, \t}^{(\overline{a})}$ for any $\overline{a} \in \sbraces{0, 1}^K$ when $\policy_{\n,\t}\in[p, 1-p]$ and  $\min(p^K \T, p^{K} \N) \to \infty $.

    In \ox{\cref{sub:excursion_effects}}\oa{\cite[\cref{sub:excursion_effects}]{supplement}}, we also consider a setting with unbounded $K$, i.e., where the lagged effects at time $\t$ can depend on the entire sequence of treatments till time $\t$, but are moderated by the lagged outcome $\obs[\n, \t-1]$.

    The two extensions above, while generalizing the factor model set-up to spillover effects, is also complimentary to the growing literature in reinforcement learning for off-policy evaluation~\cite{kallus2020double,uehara2022review,bian2023off}, which are naturally equipped to handle spillover effects. Recently, \cite{bian2023off} leverage a factor model to account for the unobserved confounding in the treatment assignment and outcome models and provides entry-wise guarantees while allowing for spillover effects. However, to our knowledge all these works assume that after conditioning on latent factors and past unit-specific data, the treatments are assigned independently across units---a setting violated for the pooled sequential policies (\cref{assum:policy})  motivating this work. 

\section{Experiments}
\label{sec:experiments}
We now present two vignettes to compliment our theoretical results established so far, one with simulated data and another with data from a mobile health study.

\begin{figure}[!t]
    \centering
    \begin{tabular}{c}
    \includegraphics[width=\linewidth]{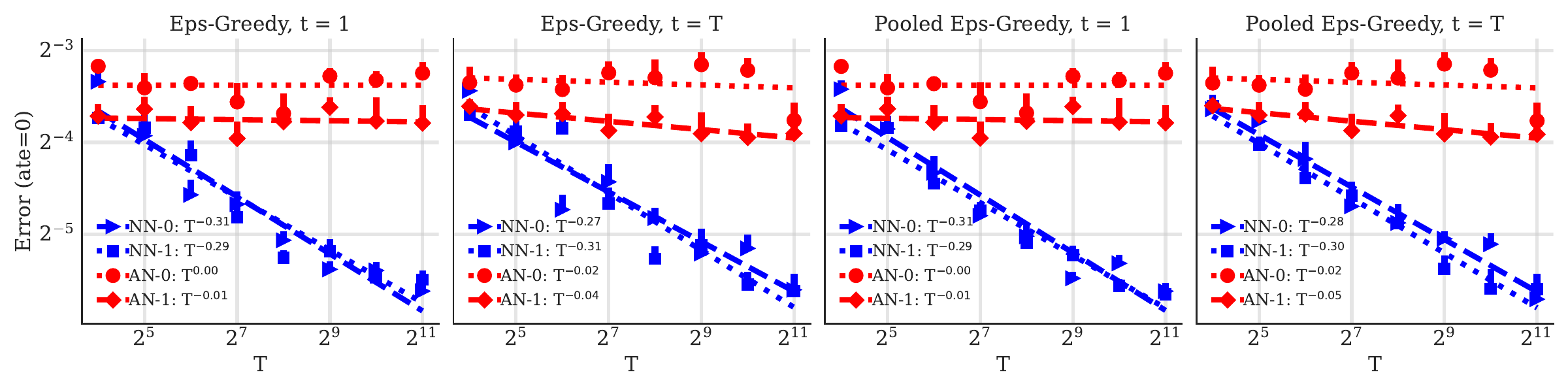} \\[-2mm]
    \includegraphics[width=\linewidth]{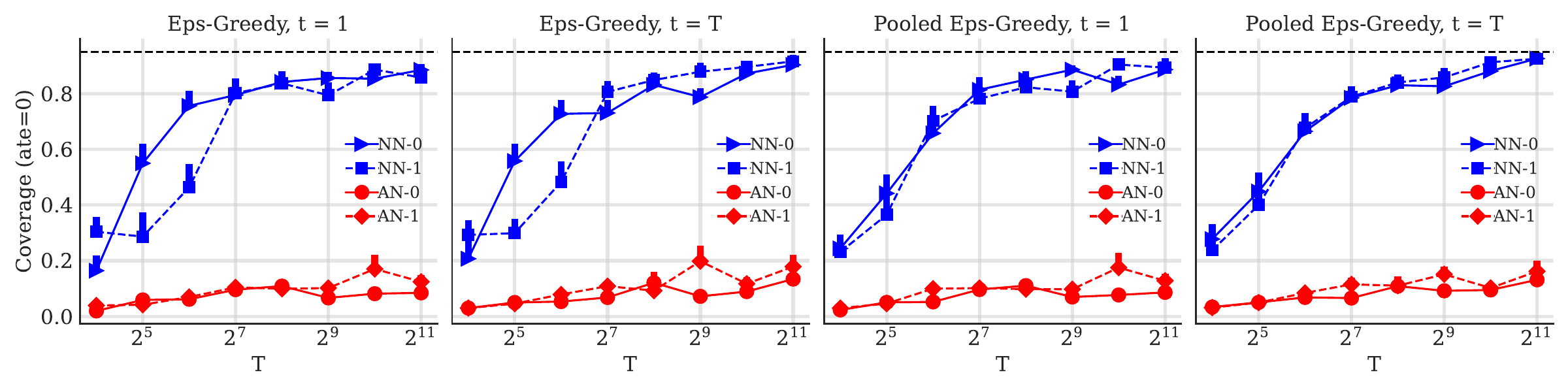} \\[-2mm]
    \includegraphics[width=\linewidth]{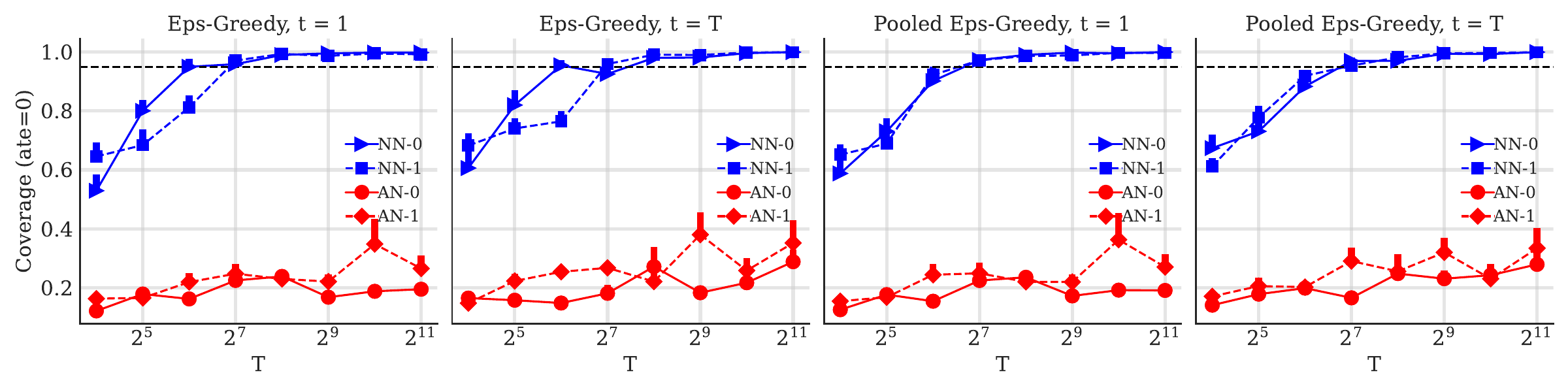} \vspace{-5mm}
    \end{tabular}
    \caption{\tbf{Error (top) and coverage for asymptotic prediction intervals using $\what{\sigma}^2 \!=\! \frac{\eta}{2}$ (middle) and $\what{\sigma}^2$ with finite sample correction (bottom),  as a function of total time points $\T$, when $\mrm{ATE}=0$ in \cref{eq:sims}}. We plot the results for estimates using nearest neighbors (NN-a)~\cref{eq:obs_estimate} and a baseline that treats all units as neighbors (AN-a), for treatments $a\in\braces{0, 1}$ at time $\t \in \{1, \T\}$. When the total number of time points in the trial is equal to the value on the x-axis, the corresponding value on the y-axis for error plots denotes the mean of the errors $|\estobs\!-\!\trueobs|$ across all $\N\!=\!512$ users for treatment $a$ at time $\t$ in the error plots, and that for the coverage plots denotes the fraction of all $N\!=\!512$ users for which the interval~\cref{eq:unit_ci} with appropriate variance estimate covers the true counterfactual mean $\trueobs$ for treatment $a$ at time~$t$. 
    }
    \label{fig:ate_0_plot}
    \vspace{-3mm}
\end{figure}

\subsection{Simulations}
\label{sec:simulations}

We generate data involving two treatments $a\in\sbraces{0, 1}$ and $\N$ user latent factors and $\T$ time latent factors per treatment along with \iid noise in the observations. The latent factors and noise are generated in an (exogenous) \iid manner as follows:
\begin{align}
\label{eq:sims}
    \wtil{u}_{\n}^{(a)} \sim \mrm{Unif}[-0.5, 0.5]^{d}, 
    \ 
    \wtil{v}_{\t}^{(a)} \sim \mrm{Unif}[-0.5, 0.5]^{d}
    \stext{for} a \in \sbraces{0, 1},
    \stext{and}
    \noiseobs  \sim \mc N(0, \sigma_{\varepsilon}^2).
\end{align}
We generate the counterfactual parameters using the model:
\begin{align}
\label{eq:sim_model}
    \trueobsvar^{(a)}_{\n, \t} = \sangles{\wtil{u}_{\n}^{(a)}, \wtil{v}_{\t}^{(a)}} + \trm{ATE} \cdot \mathbf{1}(a=1),
    \stext{for} \trm{ATE} \in \sbraces{0, 0.1}.
\end{align}
We sample treatments using two sequential policies obtained by different variants of $\epsilon$-greedy algorithm: (i) when it is run independently for each unit and (ii) when it is run by pooling the data across all units. See \ox{\cref{sub:sampling}}\oa{\cite[\cref{sub:sampling}]{supplement}} for more details on these sampling algorithms. 
By default, we set $\N= 512, d=2, \sigma_{\varepsilon} = 0.1$, and $\epsilon=0.5$, and vary $\T$ in \cref{fig:ate_0_plot}. In \cref{fig:other_things} of \ox{\cref{sec:exp_details}}\oa{\cite[\cref{sec:exp_details}]{supplement}}, we present results that show robustness of the performance to more general settings with different values of $d, N, \epsilon$ as well as to setting with correlated time factors.

\paragraph{Results} 
We plot the mean of the errors $|\estobs-\trueobs|$ across all $\N$ users for a given treatment $a$ at time $\t$ as a function of the total time points $\T$. Weplot our results with respect to $\T$ for a fixed $\N$ to verify the sharpness of our analysis with respect to $\T$ as its dimension free nature makes it easier to verify the results. We plot a least squares fit for the log error onto the log of study duration $\T$ and display the resulting fit and its slope, e.g., when the slope is $-0.3$, we report an empirical decay rate of $\T^{-0.3}$ for the mean error. For coverage, we report the fraction of users for which the interval~\cref{eq:unit_ci} covers the underlying mean ($\trueobs$ for a given $a, \t$). We abbreviate our nearest neighbors strategy as NN and a simple baseline where all users are treated as neighbors as AN (equivalent to NN with $\threshold = \infty$). The results are averaged over $8$ independent runs along with $1$ standard error bars (which are generally too small to be noticeable) for \cref{fig:ate_0_plot} and 20 runs for \ox{\cref{fig:other_things}}\oa{\cite[\cref{fig:other_things}]{supplement}}. Tuning of $\threshold$ and estimation of variance estimate is discussed in \ox{\cref{sub:variance_estimate}}\oa{\cite[\cref{sub:variance_estimate}]{supplement}}. To improve coverage of our intervals in finite samples, we add the within neighbors variance to our asymptotic variance estimate \ox{\cref{eq:unit_ci_adj}}\oa{(\cite[Eq.~\cref{eq:unit_ci_adj}]{supplement})} and plot these results in the bottom row of \cref{fig:ate_0_plot}. Notably, we use the same estimate for noise variance for both NN and AN.

\paragraph{Results and coverage} We plot the error decay results for the simulations with $\trm{ATE}=0$ in \cref{fig:ate_0_plot}. From the top row we observe that across (i) both the treatments, (ii) both the sampling policies (with or without pooling of users), and (iii) both the first and last time points of the experiment, i.e., $\t\in\sbraces{1, \T}$, the absolute error decays with $\T$ at rate close to $\T^{-0.3}$. This decay rate is roughly consistent with our theory as for the absolute error the decay rate from \cref{thm:anytime_bound} is $\wtil{O}(\T^{-0.25})$. In contrast, the baseline method (AN) that effectively uses sample average over all users provides a trivial error that does not exhibit any decay with $\T$ (as expected). Similar trends are observed for NN and AN in \ox{\cref{fig:ate_1_plot}}\oa{\cite[\cref{fig:ate_1_plot}]{supplement}} where the data is generated with $\trm{ATE}=0.1$, with one noticeable difference: The error and coverage for $a=1$ is better across all settings when compared with $a=0$. This difference can be explained by the sampled frequency of treatments $0$ and $1$. Due to non-zero ATE, the (pooled) $\eps$-greedy samples $\miss = 1$ much more frequently than $\miss=0$, so that the bias of our nearest neighbor estimates would be smaller for $a=1$ than that for $a=0$. Roughly speaking, a difference is also implied by our theoretical bound~\cref{eq:nn_bnd} since we effectively have $\pmintnotag_{1, \T} = 0.75$ and $\pmintnotag_{0, \T} = 0.25$ (see \cref{assum:policy}) for this simulation.

With the asymptotic intervals, the coverage generally improves as $\T$ increases and the 95\% intervals provide $\geq80\%$ coverage for $\T \geq 128$ and approach $90-95\%$ coverage when $\T= 2048$. Moreover, the finite sample adjustments (\ox{\cref{rem:simple_eta}}\oa{\cite[\cref{rem:simple_eta}]{supplement}}) do provide a significant boost to coverage and the intervals now provide over-coverage for all $\T \geq 128$. For the baseline approach, although the coverage does improve due to finite sample adjustments, overall it remains poor since the intervals have width $\Omega(\N^{-1/2})$ as all users are averaged, thereby providing an overconfident estimate. The trends in \ox{\cref{fig:ate_1_plot}}\oa{\cite[\cref{fig:ate_1_plot}]{supplement}} are also similar, except for generally improved performance of NN for $a=1$ since there are more observations for that treatment due to the non-zero ATE as noted above.

\subsection{HeartSteps case study}
\label{sub:mobile_health_study}

We illustrate our approach using data from HeartSteps, a mobile health clinical trial aimed at improving physical activity in users with stage I hypertension~\cite{liao2020personalized}. In this trial, users were equipped with a Fitbit tracker and a mobile app that used a contextual bandit variant of the Thompson Sampling (TS) algorithm~\cite{DBLP:journals/ftml/RussoRKOW18} to decide whether to send activity messages at five daily decision points. The outcome was the logarithm of the user’s step count within 30 minutes after a decision. Treatment $a=0$ denotes not sending a message, while $a=1$ denotes sending one.

We focus on data from 45 participants over a 90-day period from September 14, 2019, to December 5, 2019, where a treatment decision was made 5 times per day. Of the 45 users, 35 were assigned treatments via an individualized TS, a reinforcement learning algorithm based on a Bayesian Gaussian linear model, updated independently for each user using their own data. These 35 are referred to as non-pooled users.
The remaining 10 users participated in a feasibility study and were assigned treatments using a pooled variant of TS called IntelligentPooling (IP-TS)~\cite{tomkins2021intelligentpooling}. IP-TS learns both person-specific and population-level parameters to balance the need for individualization in a heterogeneous population with the need for quick learning from limited data. These users are referred to as pooled users. The IP-TS algorithm was initialized with data from 10 non-pooled users prior to September 14, 2019, and updated nightly using data from all pooled users and the 10 non-pooled users.

\subsubsection{Exploratory data analysis} 
We refer the reader to \ox{\cref{sub:heartsteps}}\oa{\cite[\cref{sub:heartsteps}]{supplement}} for data processing details,  after which we obtain a 
 data with $\N=35$ \goodusers (with 27 non-pooled and 8 pooled) and $\T=450$ decision times. \cref{fig:eda_plot}(a) plots the frequency of the treatments across users and we find that $a=0$ was assigned between $20$ to $125$ decision times, and $a=1$ between 0 to 50 times. The points are colored by the average \emph{dosage}, i.e. the fraction of times with $a=1$, which dosage primarily lies below $0.25$ and always below $0.5$. Overall, we find that $a=1$is less frequent than $a=0$ (consistent with the goal of HeartSteps team to minimize burden from activity messages). \cref{fig:eda_plot}(b) plots the histogram of the outcomes colored by treatment, and shows that the range of outcome values is similar across the two treatments. 
 To check whether the observed data is low-dimensional, the left y-axis of \cref{fig:main_results}(c) plots the (ordered) singular values $\sigma_1 \geq \cdots \geq \sigma_{35}$ of the matrix of observed outcomes for the two treatments. To compute these values, the missing values under treatment $a$ due to non-availability or other reasons (e.g., user not wearing their fitbit tracker) with the user-specific mean of the observed values under treatment $a$. The right y-axis in \cref{fig:eda_plot}(c) plots the fraction of variance explained given by $\frac{\sum_{\l=1}^{k}\sigma_\l^2}{\sum_{\l=1}^{35}\sigma_\l^2}$, over index $k$. In both cases, the first component accounts for more than $99\%$ of the variance. The high fraction of non-availability across users (maximum available time points for a user is $164$ out of possible 450) makes the singular values sensitive to the choice of imputing the missing outcomes (see \ox{\cref{fig:svd_plot}}\oa{\cite[\cref{fig:svd_plot}]{supplement}}).

\begin{figure}[t!]
    \centering
    \resizebox{\linewidth}{!}{
    \begin{tabular}{ccc}
    \includegraphics[width=0.33\linewidth]{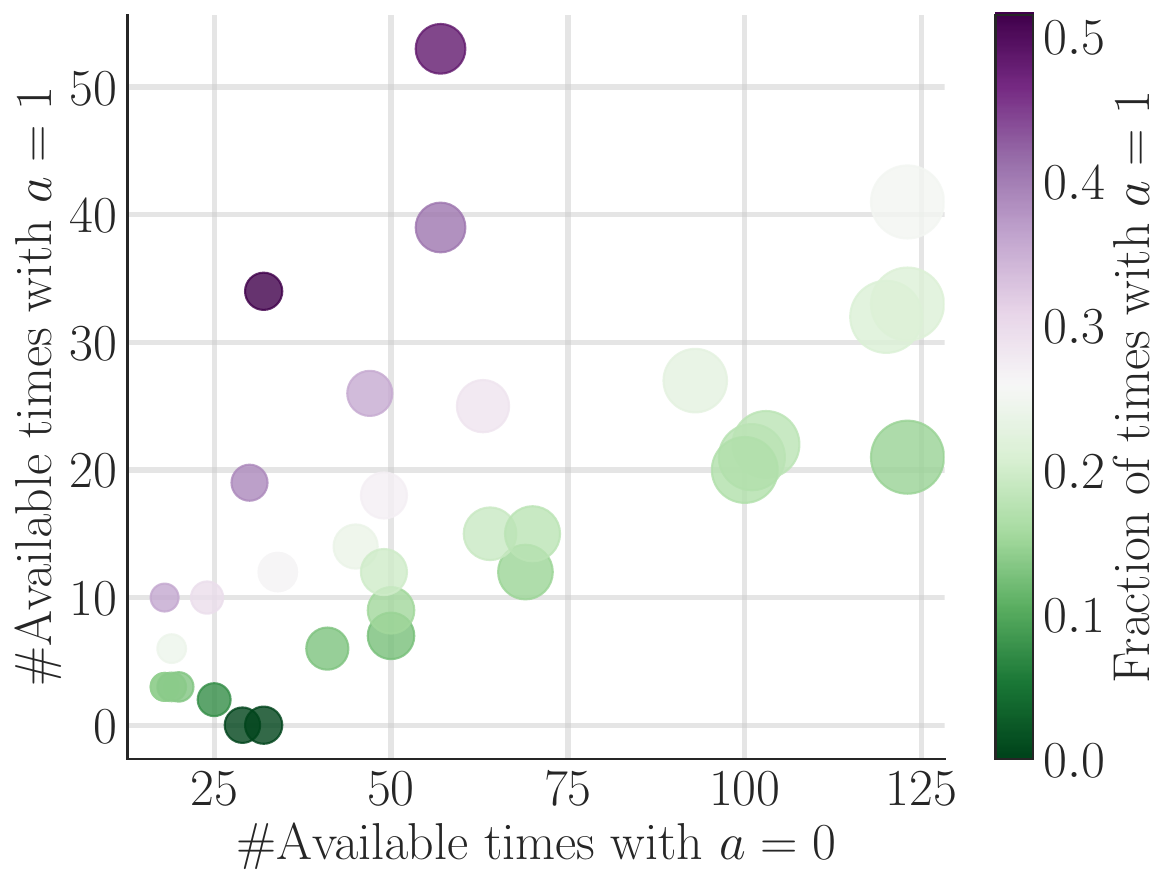} \hspace{1mm} &
    \includegraphics[width=0.33\linewidth]{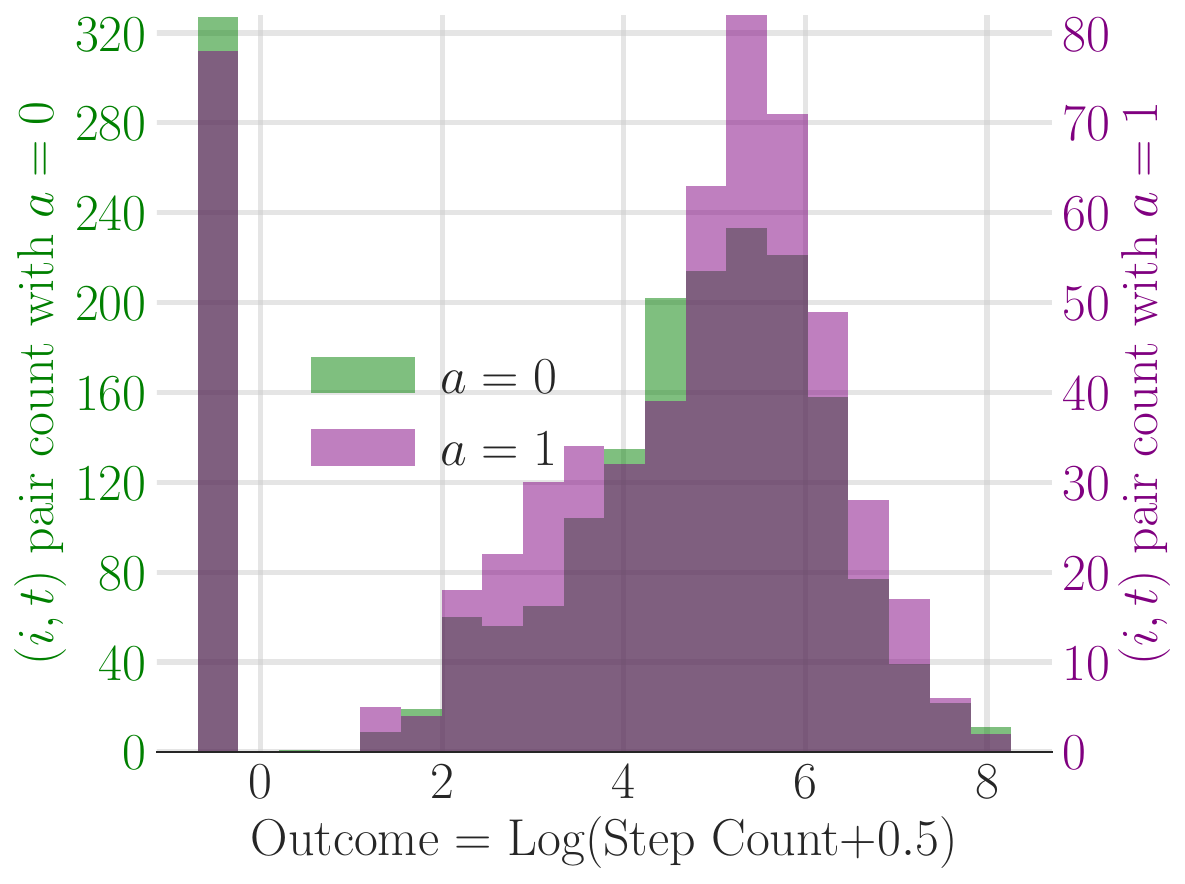}\hspace{1mm} &
    \includegraphics[width=0.33\linewidth]{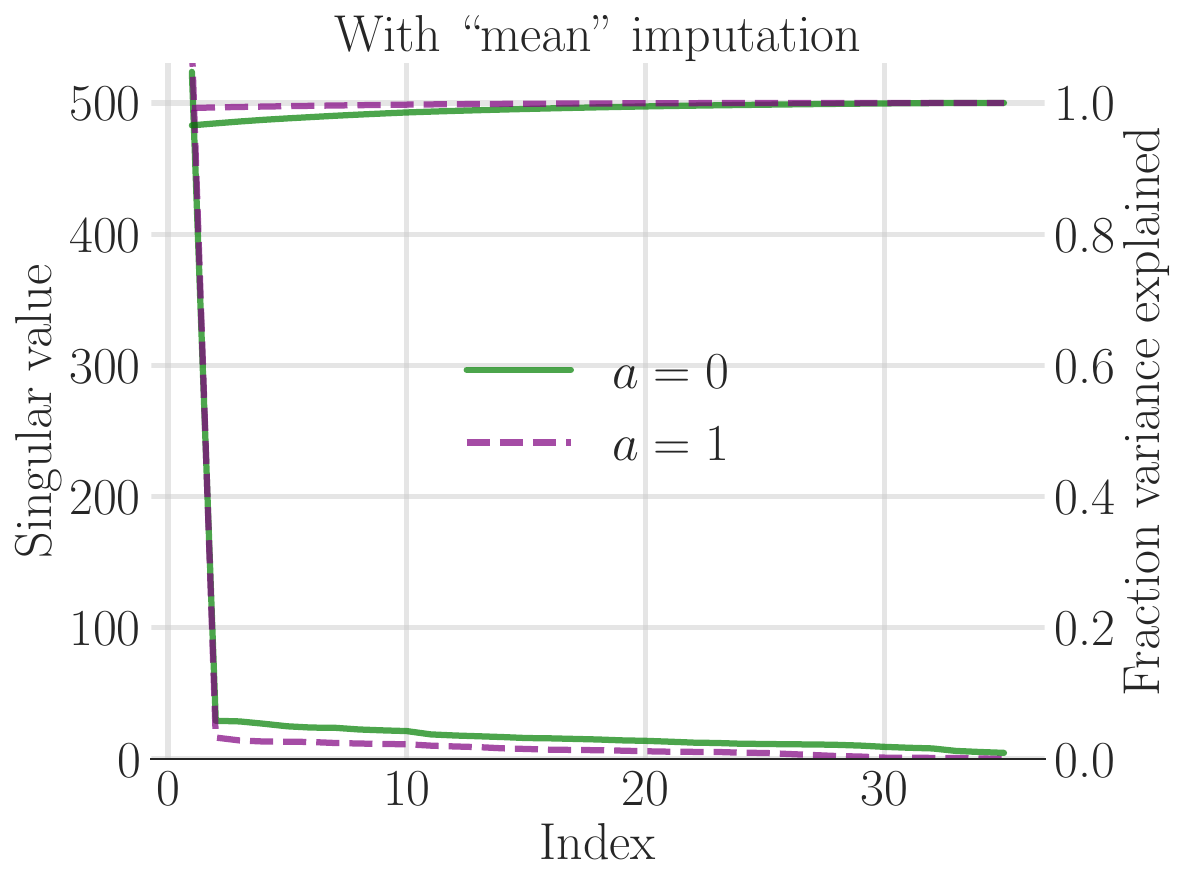}  \\
    (a) & (b) & (c)
    \end{tabular}
    }
    \caption{\tbf{An overview of frequency of treatments and rewards, and the singular values in HeartSteps data.} Panel (a) displays the number of times a user was sent an activity message ($a=1$) on the y-axis, the number of times a user was not sent any message ($a=0$) on the x-axis (and in marker size); the color represents the fraction of times with $a=1$. Panel (b) plots a histogram of outcomes with counts for $a=1$ and $a=0$ respectively on the left and right y-axes. Panel (c) shows the singular values for outcomes under two treatments in order with index on the x-axis, values on the left y-axis, and the fraction of variance explained on the right y-axis.
    }
    \label{fig:eda_plot}
    \vspace{-3mm}
\end{figure}

\subsubsection{Our goal and the challenges posed by limited data} 
Our goal is to estimate counterfactuals for each user across all decision times. In this work, we ignore context information, reducing the signal-to-noise ratio, and treat this as a proof-of-concept for our methodology in inference with adaptively collected data and pooled policies.

The dataset's limited number of users and decision times presents challenges. Often, there are too few decision times where both users in a pair receive treatment $1$, making the distance $\estdist$ for $a=1$ unreliable. Therefore, we estimate neighbors only under treatment $0$ and use the same neighbors to estimate counterfactuals for $a=1$. This approach aligns with our framework if counterfactuals follow the latent factor model $\trueobs=\lfun^{(a)}(\lunit, \ltime^{(a)})$.

Since we don't know the true counterfactuals, we randomly hold out 100 decision times from the 450 as a test set and use the remaining 350 to train our nearest neighbor estimator. We estimate the neighbor set $\sbraces{j: \estdist \leq \threshold}$ for each user and tune $\threshold$. Results are presented by comparing the estimated counterfactuals to the held-out outcomes and evaluating how often the 95\% prediction interval covers the observed outcomes across test decision times.

\vspace{-3mm}
\begin{figure}[ht!]
    \centering
    \resizebox{\textwidth}{!}{
    \begin{tabular}{ccc}
    \includegraphics[width=0.35\textwidth]{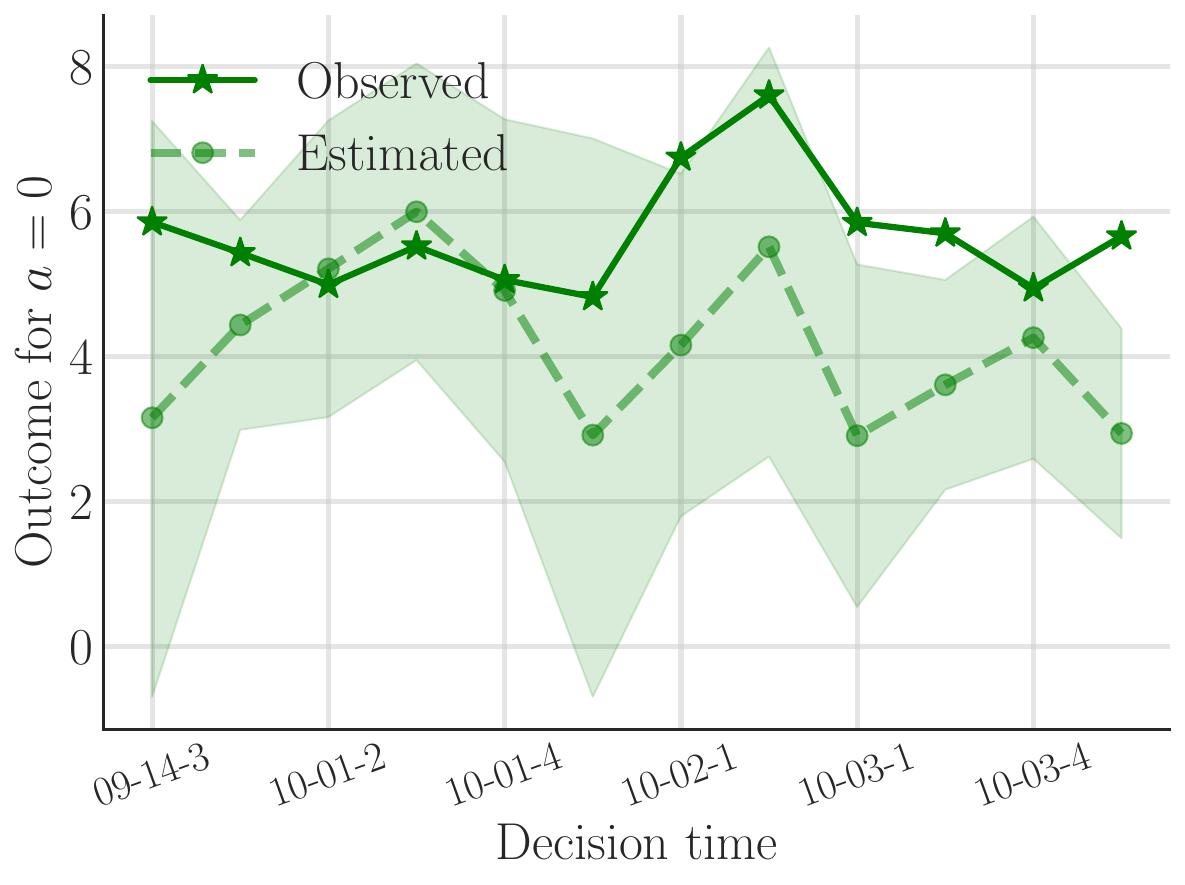}& 
    \includegraphics[width=0.35\textwidth]{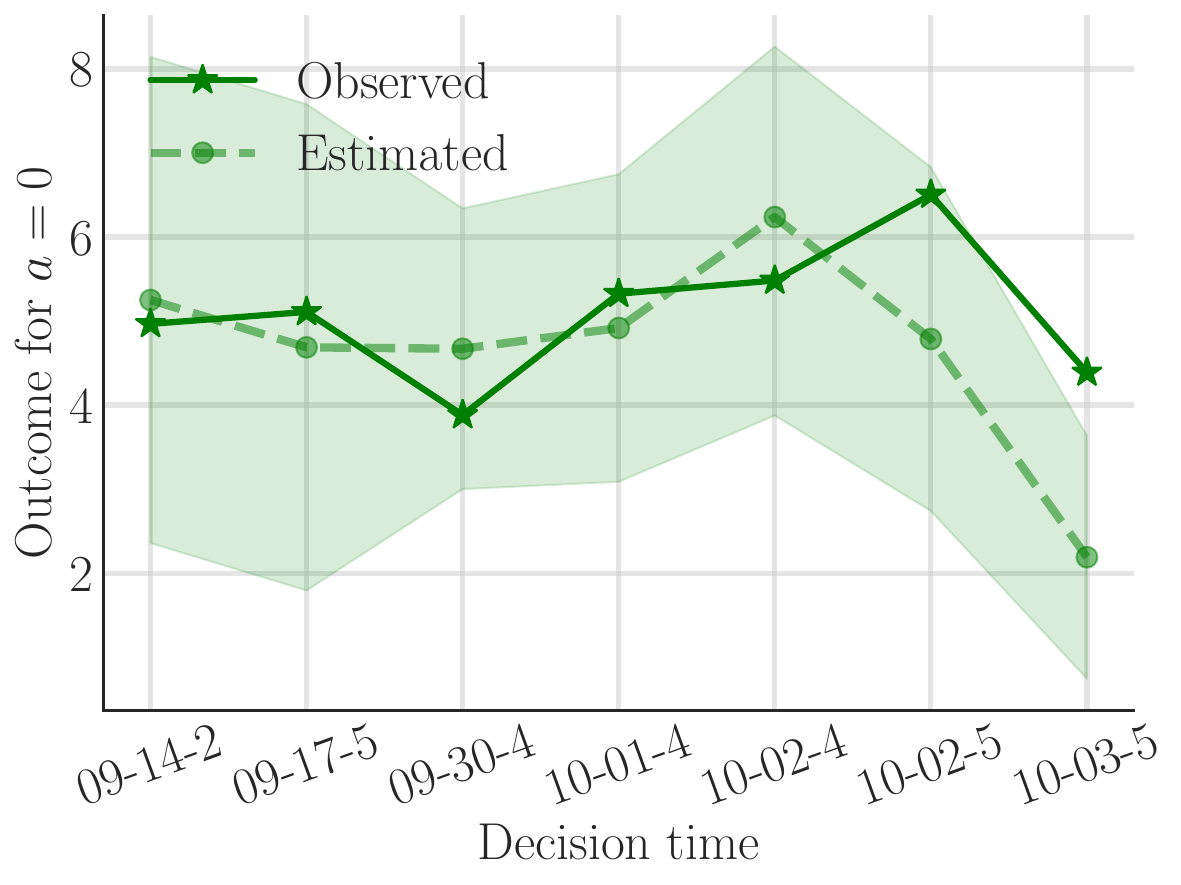} & 
    \includegraphics[width=0.35\textwidth]{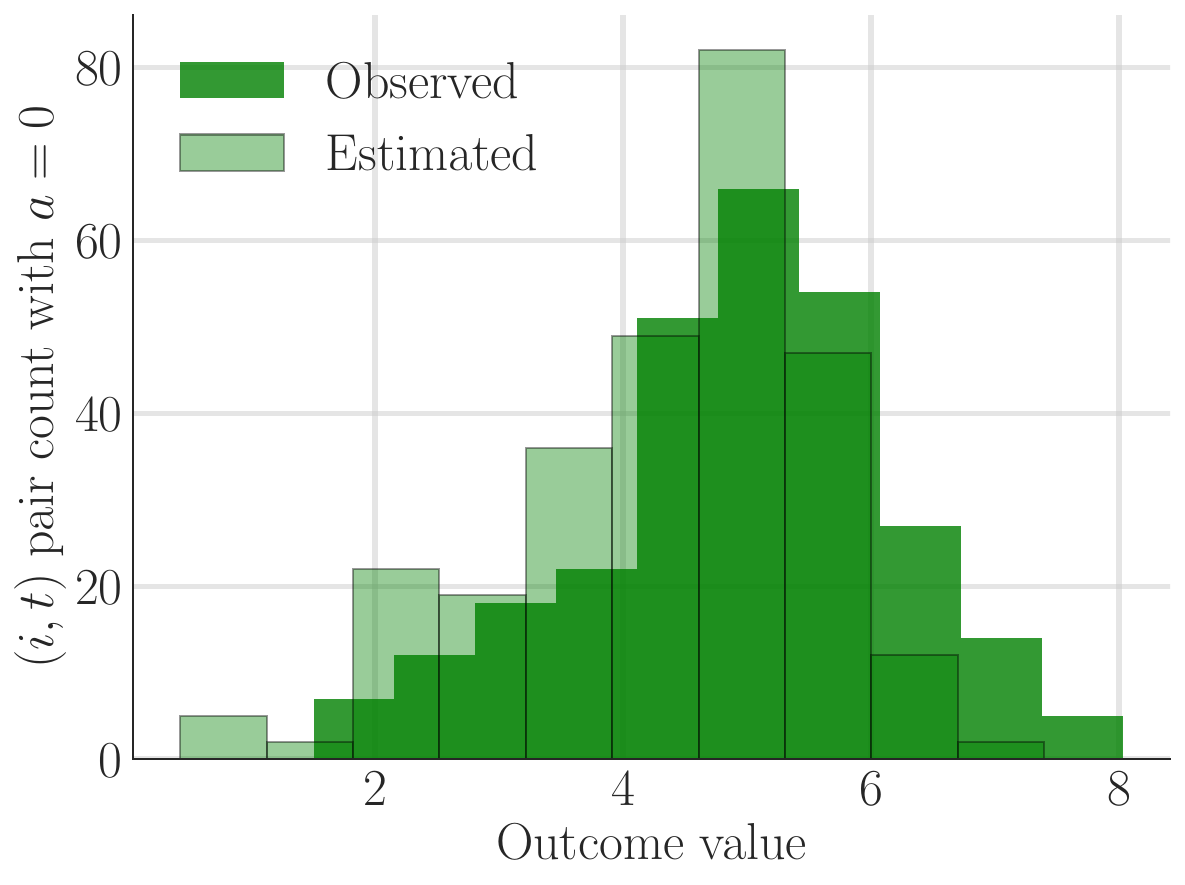} \\[-1mm]
    (a) Pooled user $1$ & (b)  Non-pooled user $1$ & (c) All users
     \\[1mm] \hline \\[-1mm]
    \includegraphics[width=0.35\textwidth]{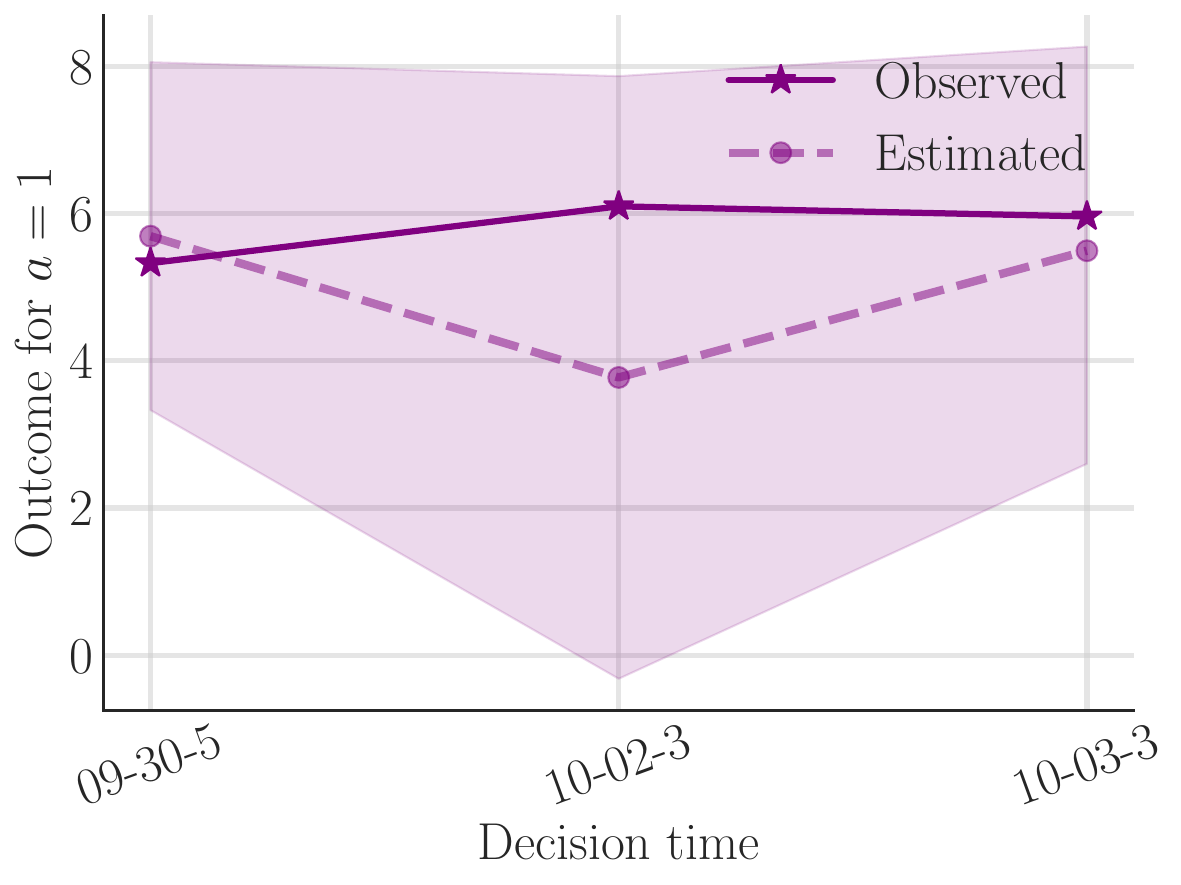} & 
    \includegraphics[width=0.35\textwidth]{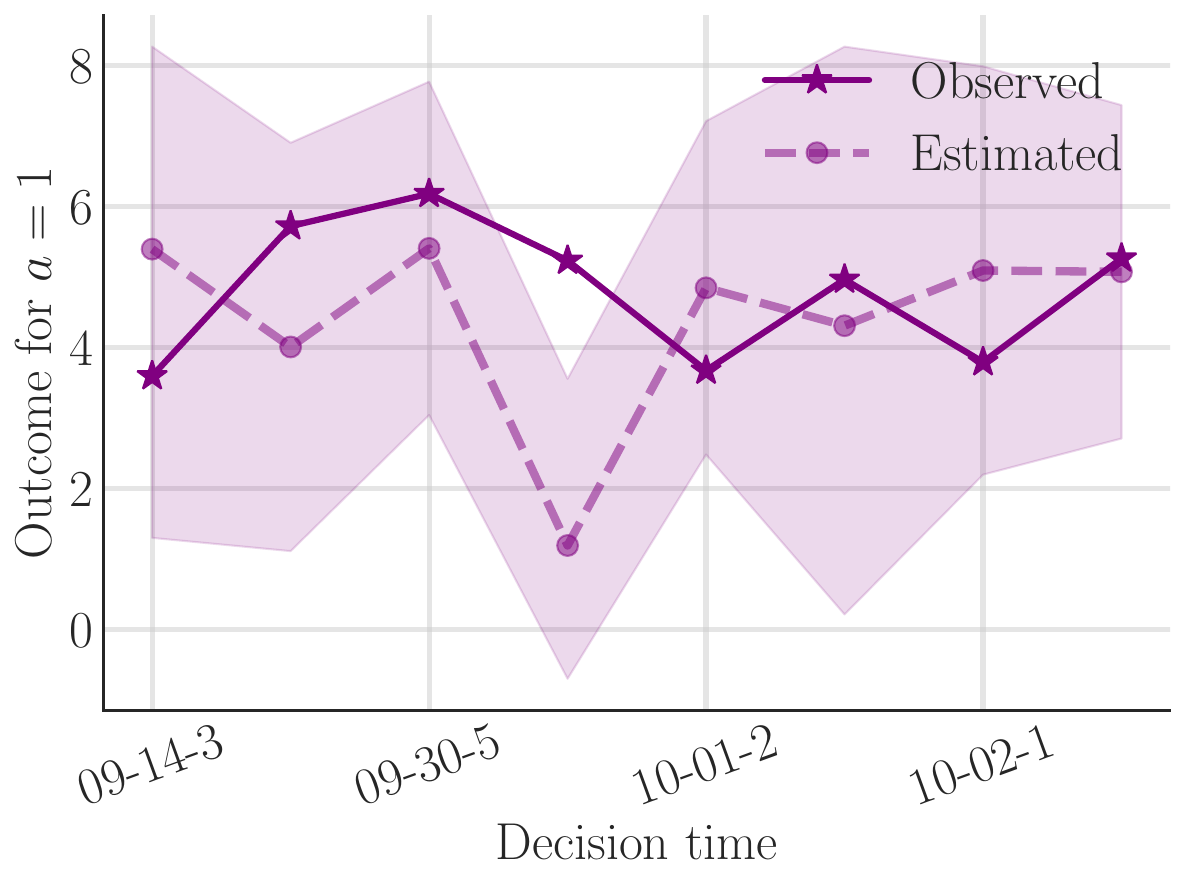} & 
    \includegraphics[width=0.35\textwidth]{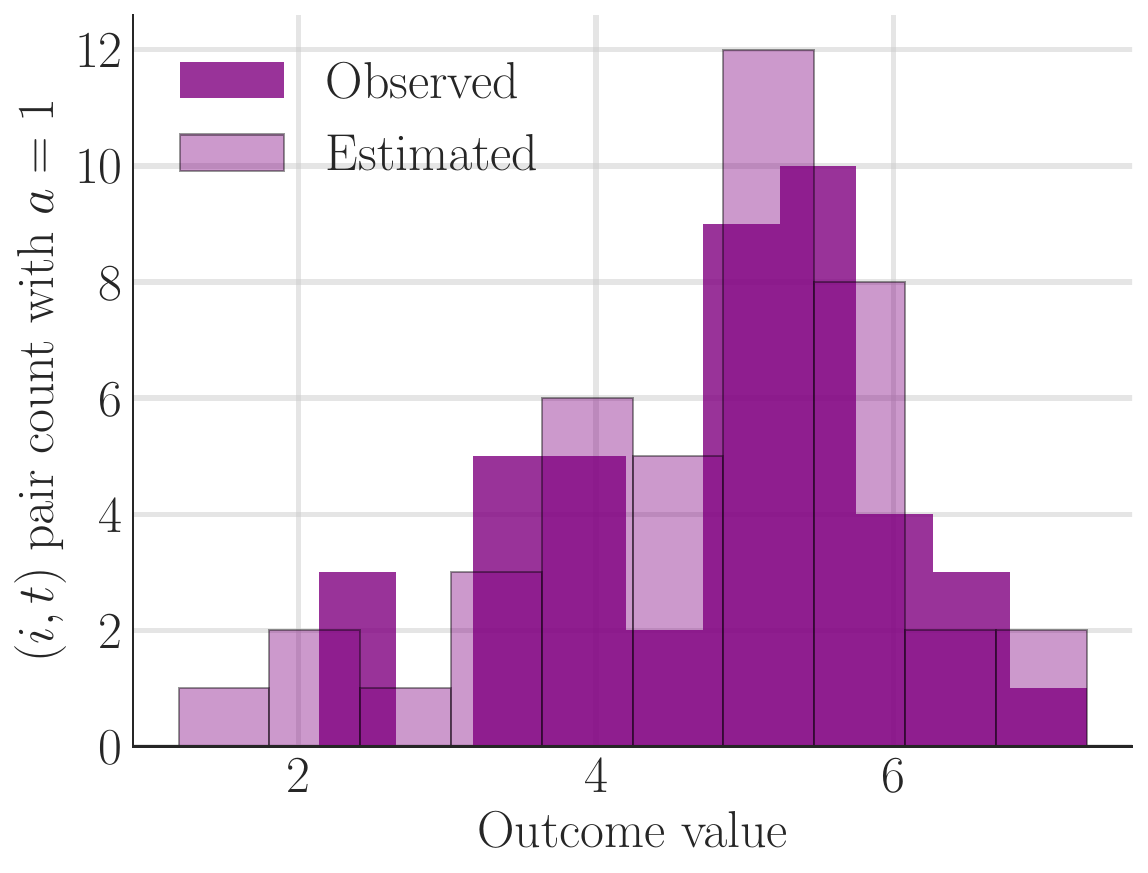} \\[-1mm]
    (d) Pooled user $1$ & (e)  Non-pooled user $1$ & (f) All users 
    \\[1mm] \hline \\[-1mm]
    \includegraphics[width=0.35\textwidth]{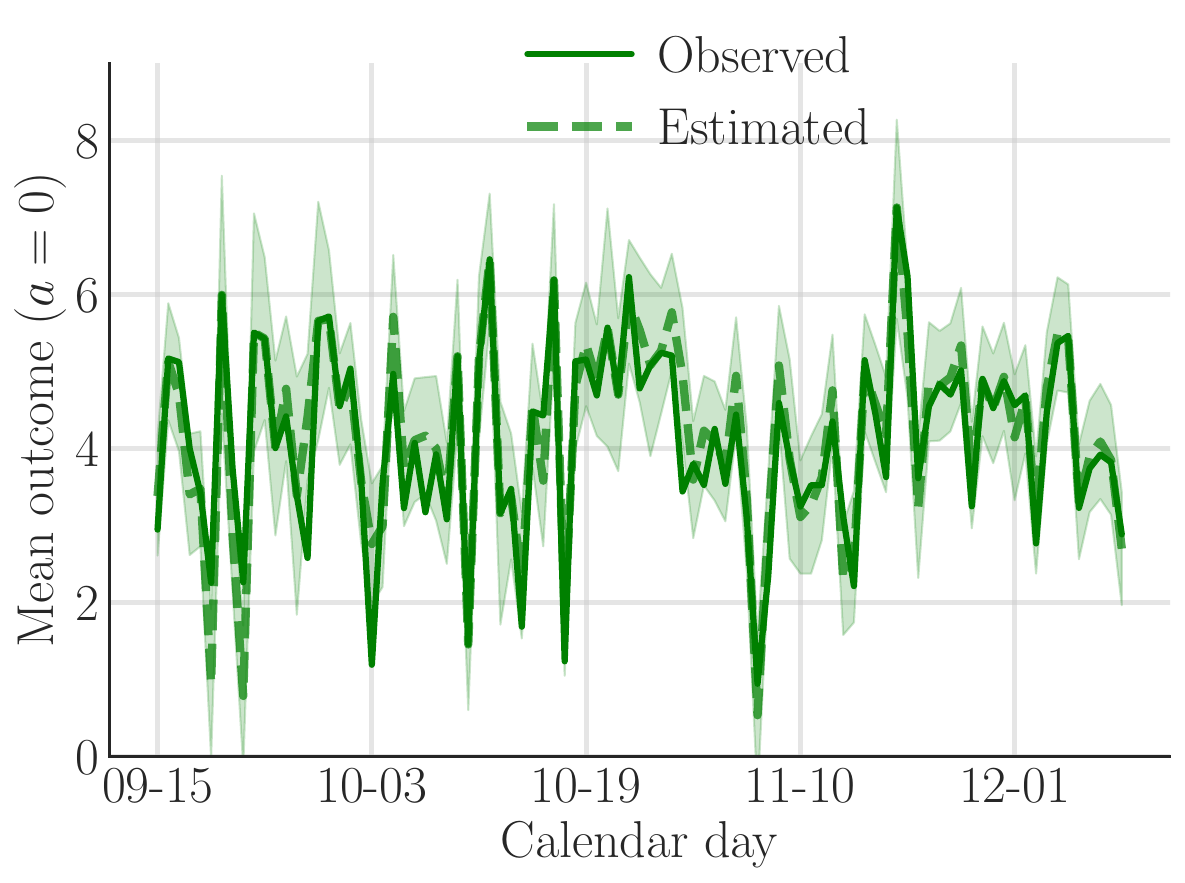}
    & \includegraphics[width=0.35\textwidth]{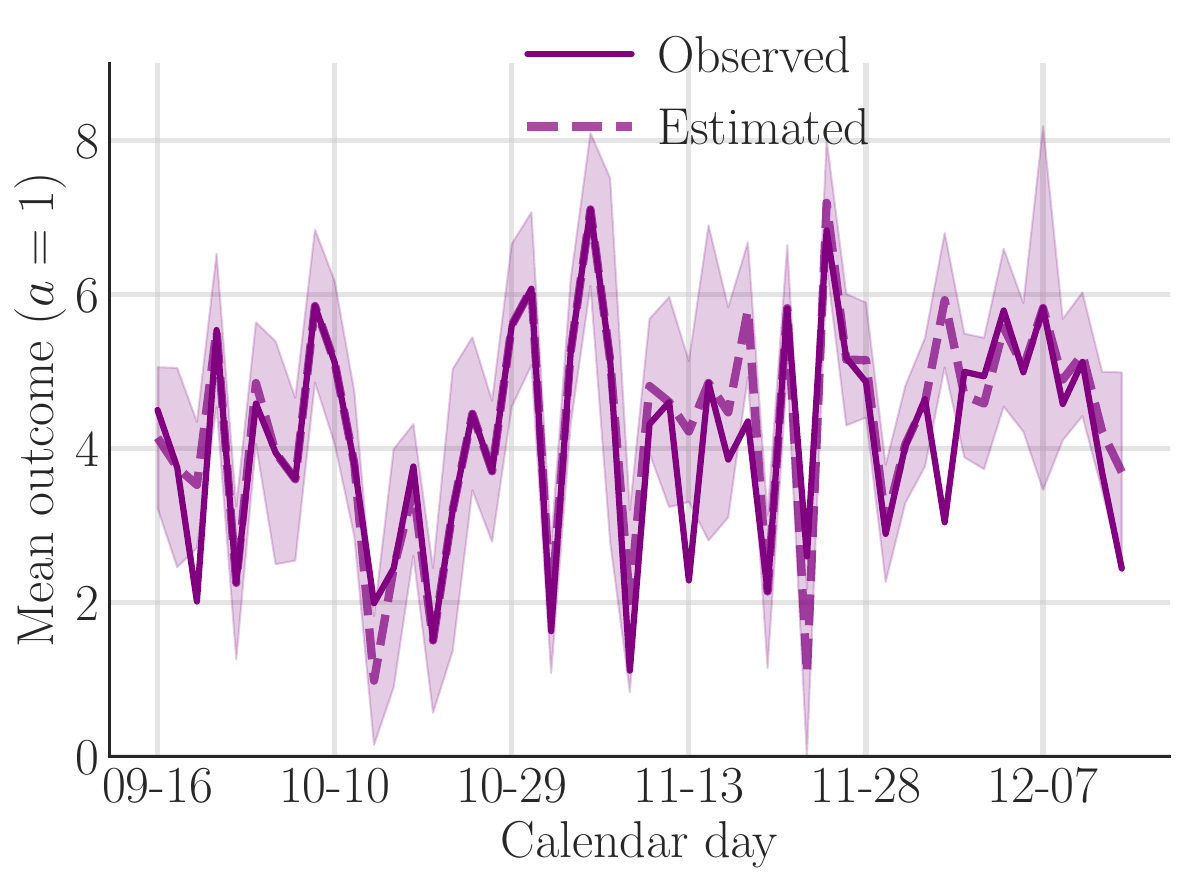}
    & \includegraphics[width=0.35\textwidth]{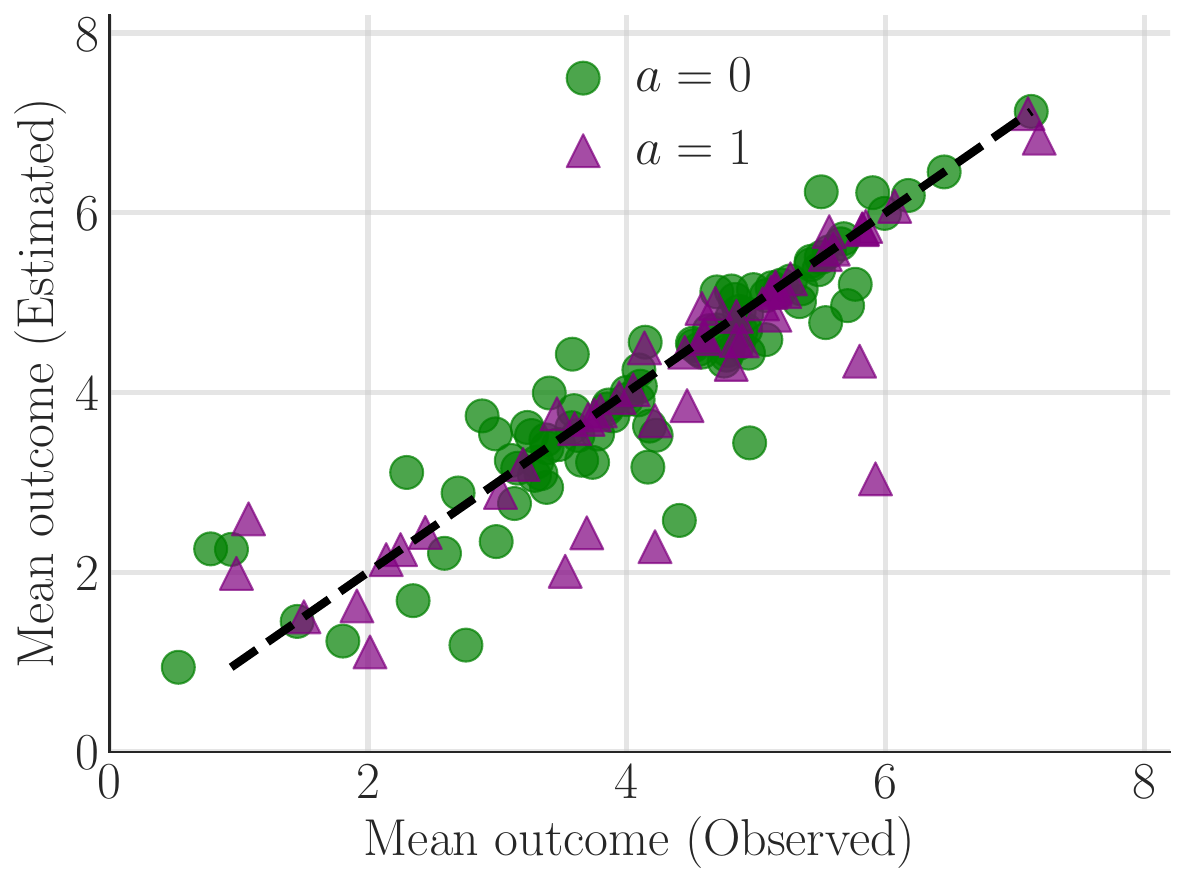} \\ 
    (g) Mean across users $(a=0)$ & (h) Mean across users $(a=1)$ & (i) Another visual for (g) and (h)
    \end{tabular}
    }
    \caption{\tbf{Nearest neighbors results for HeartSteps.} 
    The top and middle rows of the plot show the results for treatment $a=0$ and $a=1$, respectively. Panels (a, d) display the results for a specific pooled, and panels (b, e) for a specific non-pooled user; the x-axis represents decision time, dark-colored stars denote the observed outcomes, and light-colored circles denote the estimated outcomes. The confidence intervals are displayed as shaded regions. Panels (c) and (f) show histograms of observed and estimated outcomes for all $(\n,\t)$ pairs from the test time for $a=0$ and $a=1$, respectively. In panels (g) and (h), we present the results for outcomes averaged across users, with time on the x-axis and values on the y-axis. Observed values are shown in dark colors, estimated values in light colors, and the confidence intervals as light-colored shaded regions. Finally, panel (i) is a scatter plot of the mean outcomes from panels (g) and (h). Observed values are on the x-axis, and estimated values are on the y-axis. Values for $a=0$ are represented by green circles, and values for $a=1$ by purple triangles.
    }
    \label{fig:main_results}
    \vspace{-3mm}
\end{figure}

\subsubsection{Results}
Due to low availability of the users, the number of decision times with held out estimates is much less than $100$, and often we do not have enough neighbors to provide a reliable estimate; the situation is further exacerbated for treatment $1$ (due to low dosage as described above). In the sequel, when considering treatment $a$, we limit our attention to test decision times $\t$ such that the (i) the user has a positive value of outcome for treatment $a$ at $\t$, and (ii) we have at least two nearest neighbors with treatment equal to $a$ at time $\t$. %
We highlight that to add truth-in-advertising, we do not allow a user to be its own neighbor. This filtering and the sparsity of sending activity message led to 6 users having any valid test decision time for $a=1$, while 30 users have non-zero valid test times for $a=0$.

We present our results in \cref{fig:main_results}. First, we note that the tuned $\eta=8.72$, which leads to a variance estimate of $\what\sigma= 2.08$ which is rather large compared to the scale of outcomes, which lie between 0 and 8; see \ox{\cref{sub:heartsteps}}\oa{\cite[\cref{sub:heartsteps}]{supplement}} for further details on these choices. We plot the unit$\times$time-specific results for two users, one pooled user and one non-pooled user respectively, for treatment $0$ in panels (a) and (b) of \cref{fig:main_results}. For the same users, we present the results for treatment $1$ in panels (d) and (e) of \cref{fig:main_results} respectively.
We observe an empirical coverage---defined as the percentage of decision times where the (user-specific) confidence interval covers the observed outcome---of 67\%, 86\%, 100\%, and 88\% respectively in panels (a), (b), (d), and (e).
The relatively high value of $\what \sigma$ and the fact that we often have few neighbors leads to relatively wide unit$\times$time-specific intervals in panels (a, b, d, e) of \cref{fig:main_results}. In panels~(c) and (f), we plot the histogram of all held out unit$\times$time outcome values and the corresponding estimates, respectively for treatment $0$ and $1$. In panels (c) and (f), we omit the time points where the observed step count is $0$. We observe that the estimates for $a=0$ provide a good approximation for the test outcomes, and that of $a=1$ have more discrepancies with respect to the corresponding test outcomes. 

Finally, in the bottom row of \cref{fig:main_results}, we present the results for the mean outcome averaged across all users for the two treatments. For treatment $a$ at time $t$, we compute the mean observed outcome as $\overline{Y}_t \defeq \frac{\sum_{i=1}^{35} \obs \indicator(\miss=a)}{\sum_{i=1}^{35} \indicator(\miss=a)}$ and the mean estimated outcome as $\overline{\what\theta}^{(a)}_t \defeq \frac{\sum_{i=1}^{35} \estobs \indicator(\nestnbr\geq 2)}{\sum_{i=1}^{35} \indicator(\nestnbr\geq 2)}$. We plot these two quantities over time in panels (g) and (h) respectively for treatment $0$ and $1$, along with a $95\%$ confidence interval $\overline{\what\theta}^{(a)}_t\pm \frac{\what \sigma}{\sqrt{\sum_{i=1}^{35} \indicator(\nestnbr\geq 2)}}$ as a shaded region. We observe an empirical coverage of 91\% and 88\% respectively for $a=0$ and $a=1$. We also provide a scatter plot of the observed means and estimated means in \cref{fig:main_results}(i), where we observe a correlation of $0.91$ and $0.89$ (each with p-value $<10^{-10}$ for a t-test) between the observed and estimated means, for treatment 0 and 1 respectively.

\vspace{-3mm}
\begin{remark}
    For the pooled users, overtime the IP-TS algorithm should learn to depend less on global data for those users who are most unique. Thus, we expect that for the \emph{unique users}, the NN algorithm would find the fewer number of neighbors. We check this hypothesis by inspecting the correlation between the number of neighbors found by the NN algorithm for a given user, and that user's random effect (analogous to latent factor $\lunit$ in our set-up) estimated by IP-TS by the end of the feasibility study. We expect that as a user's random effect grows, the user should be more unique and their number of neighbors should decrease. 
    In \ox{\cref{fig:corr_plot}}\oa{\cite[\cref{fig:corr_plot}]{supplement}}, we find a correlation of $-0.69$ between the user's number of neighbors and the user's estimated random effects across the 8 pooled users (with a p-value of $0.06$ for a t-test).  This exploratory assessment provides some evidence that both the pooling and the NN algorithms are potentially capturing user heterogeneity across the pooled users.
\end{remark}

\section{Conclusion}
\label{sec:discussion}
In this work, we introduced a non-parametric latent factor model for counterfactual inference at \unittime-level in experiments with sequentially adaptive treatment policies. Using a variant of nearest neighbors algorithm, we estimate each of the counterfactual means and provide a non-asymptotic error bound, and asymptotic normality results for these means. We also use these estimates to construct confidence interval for the average treatment effect. We illustrate our theory via several examples, and then illustrate its benefits via simulations, and two case studies.

Our work naturally opens door to several future directions. While this paper focused on counterfactual inference in sequential experimental design, our \unittime-level results also can be easily extended to provide guarantees for a broad class of panel data settings (like synthetic control~\cite{xu2017generalized}, synthetic interventions~\cite{agarwal2021synthetic}, including those with staggered adoption~\cite{athey2021matrix}. While our model generalizes the non-linear mixed effects model, and the bilinear factor model considered in prior work, it does not cover the settings where time-varying features for units are available at each decision time (e.g., in contextual bandits). Our model also does not allow spill-over effects of treatment which arises in several real-world settings. 

\blue{Notably, our algorithm can be applied to tackle two settings easily: (i) categorical context variables by treating each possible covariate category separately and (ii) context variables that themselves follow a factor model structure, by simply treating the variables as latent. In either case, not leveraging the additional known information would necessarily lead to loss of statistical power, and thus developing a better strategy that allows us to learn from contexts is desirable. It would also be interesting to characterize the downstream gains of our \unittime-level estimates and the associated guarantees with adaptively collected data, e.g., in off-policy evaluation, or optimization, when the data is collected using a learning algorithm (like in \cref{assum:policy}).} 

Finally, we note that our work can also be useful for data imputation strategies. Our guarantees can help provide theoretical justification (with a generic non-parameteric factor model) for the success of nearest neighbor observed empirically in various missing data settings and multiple imputation strategies~\cite{beretta2016nearest,bertsimas2017personalized,bertsimas2017predictive,rubin2004multiple}--- when the data is missing at random in a sequential manner.

\begin{acks}[Acknowledgments]
The authors would like to thank Isaiah Andrews, Avi Feller, Natesh Pillai, Qingyuan Zhao, and Rina Barber for their helpful comments and suggestions that improved the
quality of this paper.
\end{acks}

\begin{funding}
RD, SM and DS acknowledge support by National Science Foundation under Grant No. DMS2023528 for the Foundations of Data Science Institute (FODSI). DS acknowledges support by NSF DMS-2022448, and DSO National Laboratories grant DSO-CO21070. PK acknowledges support by NIH NHLBI R01HL125440 and 1U01CA229445. SM was supported by NIH grants P50DA054039, P41EB028242 and UH3DE028723.  SM holds concurrent appointments at Harvard University and as an Amazon Scholar. This paper describes work performed at Harvard University and is not associated with Amazon.
\end{funding}

\renewcommand {\thepage} {\arabic{page}}
\ox{
\renewcommand {\theequation} {S\arabic{equation}}
\setcounter{equation}{0}}
\counterwithin{mylemma}{section}
\counterwithin{mydefinition}{section}
\counterwithin{myproposition}{section}
\counterwithin{mycorollary}{section}

\addtocontents{toc}{\protect\setcounter{tocdepth}{2}}
\begin{center}
	\bf \large
	\ox{Table of Contents for the Appendix}
\end{center}
\makeatletter
\@starttoc{toc}
\makeatother

\begin{appendix}

    \section{Proof of Thm.~\ref{thm:anytime_bound}: \anytimeboundname}
    \label{sec:proof_of_anytime_bound}
    We refer the reader to \cref{sub:proof_sketch} for an overview of our proof strategy, and proceed here directly. We have
    \begin{align}
        (\trueobs-\estobs)^2
        &=
        \parenth{\trueobs-\frac{\sum_{j \in
             \estnbr} \obs[j, \t]}{\nestnbr} }^2 \\
        &\seq{\cref{eq:model_mc}}\parenth{ \frac{\sum_{j \in \estnbr} (\trueobs[\n,\t]- \trueobs[j, \t]) + \noiseobs[j, \t] }{\nestnbr}  }^2 \\
        &\sless{(i)} 
        2 \parenth{ \frac{\sum_{j \in \estnbr} |\trueobs[\n,\t]- \trueobs[j, \t]|}{\nestnbr}  }^2
        + 2 \parenth{ \frac{\sum_{j \in \estnbr} \noiseobs[j, \t]}{\nestnbr}  }^2
        \defeq 2(\mbb B + \mbb V)
        \label{eq:det_final}
    \end{align}
    where step~(i) follows from the facts that $(a+b)^2 \leq 2a^2 + 2b^2$ and  $(a+b)^2 \leq (|a|+|b|)^2$ for all scalars $a, b$. We use $\mbb B$ to denote the bias term and $\mbb V$ to denote the variance term in \cref{eq:det_final}.

    To control the bias term, using \cref{assum:bilinear,assum:lambda_min} and Cauchy-Schwarz's inequality, we obtain that
    \begin{align}
     \frac{\sum_{j \in \estnbr} |\trueobs[\n,\t]- \trueobs[j, \t]|}{\nestnbr} 
     \seq{(\aref{\ref{assum:bilinear}})} \frac{\sum_{j \in \estnbr} |\angles{\lunit[\n], \ltime[\t]}- \angles{\lunit[j], \ltime[\t]}|}{\nestnbr} 
     &\leq \ltwonorm{\ltime} \frac{\sum_{j \in \estnbr} \ltwonorm{\lunit[\n]- \lunit[j]}}{\nestnbr} \\
     &\sless{(\aref{\ref{assum:lambda_min}})} \vconst \max_{j \in \estnbr}\ltwonorm{\lunit[\n]- \lunit[j]}.
     \label{eq:average_error} 
    \end{align}    
    Recall the notations~\cref{eq:eta_chi}. Define
    \begin{align}
    \label{eq:truedist}
        \truedist \defeq \norm{\lunit\!-\!\lunit[j]}^2_{\Sigv} + 2\sigma_{a}^2.
    \end{align}
    Fix $\delta\in(0, 1]$, and define the event $\event[1]$ such that
    \begin{align}
        \abss{\estdist\!-\!\truedist} \leq \frac{(\vconst\uconst+\nconst)^2\sqrt{32\log(4\N\T/\delta)}}{\pmint\sqrt{\T}} \seq{\cref{eq:eta_chi}} \errterm, \stext{for all} j \in [\N]\backslash\braces{\n}.
        \label{eq:event_dist}
    \end{align}
    Notably, $\event[1]$ denotes the event that the estimated distance between unit $\n$ and all other units $j$ concentrates around the expected value $\truedist$. As a direct consequence of \cref{assum:lambda_min}, we obtain that on event $\event[1]$,
    \begin{align}
        \max_{j \in \estnbr}\ltwonorm{\lunit[\n]- \lunit[j]}^2 \leq \max_{j \in \estnbr}\frac{\truedist-2\sigma_{a}^2}{\lambda_{\min}(\Sigma_{\ltime[]})} &\leq \frac{\estdist-2\sigma_{a}^2 + \errterm}{\lambda} \\
        &\sless{(i)} \frac{\threshold-2\sigma_{a}^2+\errterm}{\lambda}
        \seq{\cref{eq:eta_chi}} \frac{\threshold' + 2 \errterm}{\lambda}, 
    \label{eq:u_bound}
    \end{align}
    where step~(i) follows from the definition~\cref{eq:reliable_nbr} of $\estnbr$. Overall, on event $\event[1]$, putting together \cref{eq:average_error,eq:truedist,eq:event_dist}, we conclude that
    \begin{align}
    \label{eq:bias_bnd}
        \mbb B \leq \frac{ \vconst^2}{\lambda_a} (\threshold' + 2 \errterm)
        \seq{\cref{eq:eta_chi}} \frac{\vconst^2}{\lambda_a} (\threshold-2\sigma_{a}^2 + 2\errterm)
        \seq{\eqref{eq:nn_bnd}} \wtil{\mbb B}
    \end{align}

    Next, define the event $\event[2]$ that provides a direct bound on the variance:
    \begin{align}
         \mbb V &\leq \frac{24\sigma_{a}^2\log(\frac{8}{\delta})}{\pmint[\t]\ppau(\threshold') (\N\!\!-\!\!1)} \cdot \max\braces{1, \frac{4\nconst^2\log(\frac{8}{\delta}) /\sigma_{a}^2}{3\pmint[\t]\ppau(\threshold') (\N\!\!-\!\!1)}}
         \\ 
        &\qquad +\frac{72\nconst^2}{\pmint[\t]^2} \parenth{\frac{\ppau(\threshold'\!+\!2\errterm)}{\ppau(\threshold')}-1}^2  \max\braces{1, \frac{C\log^2(4/\delta)}{(\ppau(\threshold') (\N\!\!-\!\!1))^2 }} \\ 
        &\seq{\eqref{eq:nn_bnd_v}} \wtil{\mbb V}.
        \label{eq:event_noise}
    \end{align}
    Finally, our next lemma shows that the two events $\event[1]$ and $\event[2]$ hold with probability:
    \begin{lemma}
        \label{lemma:dist_noise_conc}
        Under the setting of \cref{thm:anytime_bound}, the events $\event[1]$~\cref{eq:event_dist}, and  $\event[2]$~\cref{eq:event_noise} satisfy
        \begin{align}
        \P(\event[1]\vert \ulf) \sgrt{(a)}  1-\frac{\delta}{2},
        \qtext{and}
        \P(\event[2], \event[1] \vert \lunit) \sgrt{(b)} 1-\frac{\delta}{2}-2\exp\parenth{-\frac{\pmint[\t]\ppau( \threshold')(\N\squash{-}1)}{8}}.
        \label{eq:event_dist_noise_conc}
    \end{align}
    \end{lemma}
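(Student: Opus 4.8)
\medskip
\noindent\textbf{Proof plan.} The plan is to establish the two bounds of \cref{eq:event_dist_noise_conc} separately, both conditionally on the unit latent factors $\ulf$ (which in particular fixes $\lunit$). Part~(a) is a concentration statement for the estimated distance $\estdist$ in \cref{eq:time_nbr_dist}: I would control its numerator by a martingale argument and lower-bound its denominator by a Binomial concentration bound. Part~(b) is the variance estimate underlying the sandwich decomposition: I would first deduce the inclusions $\nstar\subseteq\nhat\subseteq\nwstar$ from the event $\event[1]$, and then bound the two pieces $\mbb V_1,\mbb V_2$ of \cref{eq:var_decomp} using Bernstein's inequality over the fresh time-$\t$ noise together with Binomial bounds over the \iid unit factors.

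\smallskip
\noindent\emph{Part (a).} Fix $\ulf$ and $j\neq\n$, and abbreviate $w_{j\t}\defeq\indicator(\miss=a)\indicator(\miss[j,\t]=a)$, so that $\estdist-\truedist=\bigparenth{\sumt w_{j\t}\bigparenth{(\obs-\obs[j,\t])^2-\truedist}}/\sumt w_{j\t}$ with $\truedist=\norm{\lunit-\lunit[j]}_{\Sigv}^2+2\sigma^2$ as in \cref{eq:truedist}. On $\sbraces{w_{j\t}=1}$ one has, via \cref{assum:bilinear}, $(\obs-\obs[j,\t])^2=\sangles{\lunit-\lunit[j],\ltime}^2+2\sangles{\lunit-\lunit[j],\ltime}(\noiseobs-\noiseobs[j,\t])+(\noiseobs-\noiseobs[j,\t])^2$, and since by \cref{assum:policy} the policy at time $\t$ is $\history[\t-1]$-measurable while, by \cref{assum:lambda_min,assum:zero_mean_noise}, the time-$\t$ latent factor $\ltime^{(a)}$ (with $\E[\ltime^{(a)}(\ltime^{(a)})\tp]=\Sigv$) and the time-$\t$ noises are exogenous, one gets $\E[(\obs-\obs[j,\t])^2\mid\history[\t-1],\ulf,\miss=\miss[j,\t]=a]=\norm{\lunit-\lunit[j]}_{\Sigv}^2+2\sigma^2=\truedist$. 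Reindexing by the co-observation times $\sbraces{\t:w_{j\t}=1}$, the partial sums of $w_{j\t}\bigparenth{(\obs-\obs[j,\t])^2-\truedist}$ form a martingale whose increments are bounded by a constant multiple of $(\vconst\uconst+\nconst)^2$ (as $\sabss{\obs}\le\vconst\uconst+\nconst$ a.s.); Azuma--Hoeffding, with a union over the at most $\T$ possible co-observation counts, gives $\sabss{\sumt w_{j\t}\bigparenth{(\obs-\obs[j,\t])^2-\truedist}}\lesssim(\vconst\uconst+\nconst)^2\sqrt{\bigparenth{\sumt w_{j\t}}\log(4\N\T/\delta)}$ with probability at least $1-\tfrac{\delta}{4\N}$. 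Since $\P(\miss=a\mid\history[\t-1])\ge\pmint$ and likewise for $j$, and these are conditionally independent by \cref{assum:policy}, $\sumt w_{j\t}$ stochastically dominates a $\Bin(\T,\pmint^2)$ variable, so \cref{lem:bc_bound} and the hypothesis $\pmint\ge\T^{-1/2}\sqrt{8\log(2/\delta)}$ yield $\sumt w_{j\t}\gtrsim\pmint^2\T$ with high probability. Dividing the last two displays gives $\sabss{\estdist-\truedist}\le\errterm$, and a union bound over $j\in[\N]\setminus\sbraces{\n}$ proves \cref{eq:event_dist_noise_conc}(a).

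\smallskip
\noindent\emph{Part (b).} Condition on $\ulf$; as in \cref{sub:sandwich} set $a=1$, so that $\indicator(\miss[j,\t]=a)$ is written $\miss[j,\t]$. With $\threshold'=\threshold-2\sigma^2-\errterm$ (see \cref{eq:eta_chi}), define the $\sigma(\ulf)$-measurable sets $\nstar\defeq\sbraces{j\neq\n:\norm{\lunit-\lunit[j]}_{\Sigv}^2\le\threshold'}$ and $\nwstar\defeq\sbraces{j\neq\n:\norm{\lunit-\lunit[j]}_{\Sigv}^2\le\threshold'+2\errterm}$. On $\event[1]$ we have $\sabss{\estdist-\truedist}\le\errterm$ with $\truedist=\norm{\lunit-\lunit[j]}_{\Sigv}^2+2\sigma^2$, so $\estdist\le\threshold$ whenever $\norm{\lunit-\lunit[j]}_{\Sigv}^2\le\threshold'$ and $\estdist\le\threshold$ forces $\norm{\lunit-\lunit[j]}_{\Sigv}^2\le\threshold'+2\errterm$; hence $\nstar\subseteq\nhat\subseteq\nwstar$, and since $\estnbr=\sbraces{j\in\nhat:\miss[j,\t]=1}$ this yields $\mbb V\le2(\mbb V_1+\mbb V_2)$ as in \cref{eq:var_decomp}. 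To bound $\mbb V_1$, note that $\estnbr\cap\nstar=\sbraces{j\in\nstar:\miss[j,\t]=1}$ depends only on $\ulf$, $\history[\t-1]$, and the time-$\t$ treatment draws, hence is independent of the exogenous time-$\t$ noise, so conditioning on it reduces $\sum_{j\in\estnbr\cap\nstar}\noiseobs[j,\t]$ to a sum of $\sabss{\estnbr\cap\nstar}$ \iid mean-zero $[-\nconst,\nconst]$-valued $\sigma^2$-variance terms; Bernstein's inequality, the bound $\nestnbr\ge\sabss{\estnbr\cap\nstar}$, and the Binomial lower bounds $\sabss{\estnbr\cap\nstar}\ge\tfrac12\pmint[\t]\sabss{\nstar}$ and $\sabss{\nstar}\ge\tfrac12(\N-1)\ppau(\threshold')$ (the latter as $\sabss{\nstar}$ is a sum of $\N-1$ \iid $\mathrm{Bernoulli}(\ppau(\threshold'))$ variables by \cref{assum:iid_unit_lf,def:phi}) together give the first term of \cref{eq:event_noise}. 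To bound $\mbb V_2$, the sequential policy correlates $\nhat\setminus\nstar$ with the time-$\t$ noise, so no cancellation is available and I use the worst-case bound $\sabss{\sum_{j\in\nhat\setminus\nstar}\miss[j,\t]\noiseobs[j,\t]}\le\nconst\sabss{\nhat\setminus\nstar}\le\nconst\sabss{\nwstar\setminus\nstar}$, combined with $\nestnbr\ge\tfrac14\pmint[\t](\N-1)\ppau(\threshold')$ and the Binomial upper bound $\sabss{\nwstar\setminus\nstar}\le\max\sbraces{2(\N-1)\bigparenth{\ppau(\threshold'+2\errterm)-\ppau(\threshold')},\,c\log(8/\delta)}$, which produces the second term of \cref{eq:event_noise}. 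A union bound over $\event[1]$ (probability $\ge1-\delta/2$ from part~(a)) and the conditionally-independent Bernstein and Binomial events then gives \cref{eq:event_dist_noise_conc}(b), with the $2\exp(-\pmint[\t]\ppau(\threshold')(\N-1)/8)$ term isolating the two ``too few neighbors'' failures ($\sabss{\nstar}$ and $\sabss{\estnbr\cap\nstar}$ below half their means) and the remaining concentration events absorbed into the $\delta$-budget through the $\log(4\N\T/\delta)$ and $\log(8/\delta)$ factors.

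\smallskip
\noindent\emph{Main obstacle.} The delicate point is $\mbb V_2$: the sequential design couples the estimated neighbor set $\nhat$ with the time-$\t$ noise, so a naive analysis fails. The sandwich $\nstar\subseteq\nhat\subseteq\nwstar$ fixes this by replacing the data-dependent $\nhat\setminus\nstar$ by $\nwstar\setminus\nstar$, which is deterministic given $\ulf$; but for the ensuing crude bound to be useful one needs both (i) the radii of $\nstar$ and $\nwstar$ to be exactly $\threshold'$ and $\threshold'+2\errterm$ so the inclusions fall out of $\event[1]$, and (ii) $\ppau$ to grow by little over an interval of width $2\errterm$ --- precisely the ``smoothness of $\ppau$'' quantity $\ppau(\threshold'+2\errterm)/\ppau(\threshold')-1$ appearing in \cref{eq:event_noise} and hence in \cref{thm:anytime_bound}. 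A pervasive secondary subtlety is filtration bookkeeping: every mean-zero and conditional-independence claim above must be verified against the correct sigma-algebra, since the policy at time $\t$ sees all of $\history[\t-1]$ but, by exogeneity, nothing about the time-$\t$ latent factor or noise.
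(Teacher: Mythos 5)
Your proposal is correct and follows essentially the same route as the paper: part (a) via a martingale over the co-treatment times (the paper phrases your "reindexing" as an explicit stopping-time construction), Azuma--Hoeffding with a union bound over the at most $\T$ possible co-observation counts, a Binomial--Chernoff lower bound of order $\pmint^2\T$ on the denominator, and a union bound over $j$; part (b) via exactly the paper's sandwich $\nstar\subseteq\nhat\subseteq\nwstar$ (your radius-$\threshold'$ and radius-$(\threshold'+2\errterm)$ sets coincide with the paper's definitions after substituting $\truedist=\norm{\lunit-\lunit[j]}_{\Sigv}^2+2\sigma^2$), Bernstein plus two-stage Binomial lower bounds for $\mbb V_1$, and the worst-case $\nconst|\nwstar\setminus\nstar|$ bound with the $\max\{\cdot,\log(1/\delta)\}$ set-size control for $\mbb V_2$ (the paper handles the small-mean regime of that last bound with an explicit KL/entropy tail inequality). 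The differences are only in constants and presentation.
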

    Taking \cref{lemma:dist_noise_conc} as given at the moment, we now proceed to finish the proof. Note that the bound~\cref{eq:det_final} when put together with \cref{eq:bias_bnd,eq:event_noise} implies the bound~\cref{eq:nn_bnd} under events $\event[1]\cap\event[2]$, and
    \begin{align}
        \P(\event[1] \cap \event[2] \vert \lunit)
        = \P(\event[1]\vert \lunit) \cdot \P(\event[2]\vert \event[1], \lunit)
        \sgrt{\cref{eq:event_dist_noise_conc}} 1-\delta-2\exp\parenth{-\frac{\pmint[\t]\ppau( \threshold')(\N\squash{-}1)}{8}},
    \end{align}
    which immediately yields the claimed high probability result of \cref{thm:anytime_bound}.
    
    It remains to prove the two high probability bounds in \cref{lemma:dist_noise_conc}, which we do one-by-one. In particular, we first establish \cref{eq:event_dist_noise_conc}(a) in \cref{sub:proof_of_distance_conc}, and then \cref{eq:event_dist_noise_conc}(b) in \cref{sub:proof_of_noise_conc}.

    We also make use of the following high probability bound for binomial random variables throughout our proofs:
   \begin{lemma}
   \label{lem:bc_bound}
       Given $X_{\l} \sim \mrm{Bernoulli}(p_{\l})$ for  $\l\in [\N]$, we have
       \begin{align}
           \P\parenth{\sum_{\l=1}^{N} X_{\l} \leq \frac{\sum_{\l=1}^{\N} p_{\l}}{2} } \leq \exp\parenth{-\frac{\sum_{\l=1}^{\N} p_{\l}} {8}}.
       \end{align}
   \end{lemma}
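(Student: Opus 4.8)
The plan is to recognize \cref{lem:bc_bound} as the standard multiplicative Chernoff bound for the lower tail of a sum of independent Bernoulli variables, evaluated at relative deviation $\half$. Throughout, write $S \defeq \sum_{\l=1}^{\N} X_{\l}$ and $\mu \defeq \sum_{\l=1}^{\N} p_{\l}$, and note that (with the $X_{\l}$ independent) $\E[S] = \mu$, so the claim is precisely that $\P(S \leq \mu/2) \leq e^{-\mu/8}$.

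First I would apply the exponential Markov (Chernoff) inequality: for any $t > 0$,
\begin{align}
\P\parenth{S \leq \mu/2} = \P\parenth{e^{-tS} \geq e^{-t\mu/2}} \leq e^{t\mu/2}\, \E\sbrackets{e^{-tS}} = e^{t\mu/2} \prod_{\l=1}^{\N} \E\sbrackets{e^{-tX_{\l}}},
\end{align}
where the last equality uses independence of the $X_{\l}$. Next, for each $\l$ I would bound the moment generating function via $\E[e^{-tX_{\l}}] = 1 + p_{\l}(e^{-t}-1) \leq \exp\parenth{p_{\l}(e^{-t}-1)}$, using the elementary inequality $1 + x \leq e^{x}$; multiplying over $\l$ then gives $\P(S \leq \mu/2) \leq \exp\parenth{\mu(e^{-t}-1) + t\mu/2}$.

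Finally I would optimize the exponent over $t > 0$. Minimizing $f(t) \defeq e^{-t} - 1 + t/2$ yields $t = \log 2$ and exponent value $\mu\parenth{\half\log 2 - \half} = -\mu\cdot\tfrac{1 - \log 2}{2}$; since $\tfrac{1-\log 2}{2} \approx 0.153 > \tfrac18$, this already delivers the stated bound $\P(S \leq \mu/2) \leq e^{-\mu/8}$ (indeed a slightly stronger one). Equivalently, one can invoke the textbook form $\P(S \leq (1-\delta)\mu) \leq \bigparenth{e^{-\delta}/(1-\delta)^{1-\delta}}^{\mu} \leq e^{-\delta^2\mu/2}$ for $\delta \in (0,1)$ and set $\delta = \half$. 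There is essentially no obstacle in this argument: the sole point needing a (one-line) check is the numerical comparison that the optimized exponent has magnitude at least $\mu/8$, i.e.\ that $(1-\delta)\log(1-\delta) + \delta \geq \delta^2/2$ holds at $\delta = \half$, which is immediate.
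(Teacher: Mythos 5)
Your proposal is correct and takes essentially the same route as the paper: the paper simply cites the standard multiplicative Chernoff lower-tail bound $\P(S \leq (1-\delta)\mu) \leq e^{-\delta^2\mu/2}$ and substitutes $\delta = \tfrac12$, whereas you additionally derive that bound from the exponential Markov inequality and MGF computation. Your direct optimization over $t$ even yields the slightly sharper exponent $-(1-\log 2)\mu/2$, but the conclusion and the underlying argument are the same.
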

   \begin{proof}
   The standard Binomial-Chernoff bound~\citep[Thm.~4.2]{motwani1995randomized} for independent bernoulli variables $\braces{X_{\l}}$ with $\mu \defeq \sum_{\l=1}^{\N} \E[X_{\l}] =\sum_{\l=1}^{\N} p_{\l} $ is given by
   \begin{align}
       \P\parenth{\sum_{\l=1}^{\N} X_{\l} \leq (1-\delta)\mu }
       \leq \exp\parenth{-\frac{\delta^2\mu}{2}}
       \qtext{for} \delta  \in (0, 1).
   \end{align}
   Substituting $\delta=\half$ yields the claim.
   \end{proof}

    \subsection{Proof of \cref{lemma:dist_noise_conc}: Proof of inequality \cref{eq:event_dist_noise_conc}(a)}
    \label{sub:proof_of_distance_conc}
    Without loss of generality we can assume $a=1$, so that we can use the simplified notation $\indicator(\miss=a) = \miss$.
    Our proof makes use of a carefully constructed Martingale argument, so we start with the statement of the concentration bound.

    \paragraph{Azuma Martingale Concentration} 
    \cite[Thm.~16]{chung2006concentration} states the following high probability bound: Given a sequence of variables $\braces{X_i}_{i=1}^{n}$ adapted to the filtration $\braces{\history[i]}_{i=1}^n$, we can construct the Martingale $S_i \defeq \sum_{j=1}^{i} (X_j - \E[X_j\vert \history[j-1]])$ for $i \geq 1$, and $S_0 \defeq 0$. If we have $|X_i-\E[X_i\vert \history[i-1]]|\leq M$ almost surely for all $i$, then we have
    \begin{align}
    \label{eq:hpb_bound_martingale}
        \sabss{S_n} =  \sabss{\sum_{i=1}^n X_i - \sum_{i=1}^n \E[X_i\vert \history[i-1]]} \leq M \sqrt{n\log(2/\delta)},
    \end{align}
    with probability at least $1-\delta$.

    \paragraph{Proof} We proceed in three steps: (i) Constructing a suitable martingale, (ii) relating it to the distance $\estdist$, and (iii) applying martingale concentration.

    \paragraph{Step~(i): Constructing the martingale}
    For any fixed $j \in [\N]\backslash\braces{\n}$, define the sequence $\braces{\tstopj}_{\l=0}^{\infty}$ as follows: Set $\tstopj[0]\defeq 0$, and for $\l = 1, 2, \ldots$, define
    \begin{align}
    \label{eq:tstop}
        \tstopj[\l] = \begin{cases}
            \min\braces{\t: \tstopj[\l-1] < \t \leq \T, \miss\miss[j,\t] = 1 } \stext{if feasible,} \\
            \T+1 \qtext{otherwise.}
        \end{cases}
    \end{align}
    That is, $\tstopj[\l]$ denotes the time point such that the units $\n$ and $j$ simultaneously receive the treatment $1$ for the $\l$-th time. 
    We note that $\tstopj$ is a stopping time with respect to the filtration $\braces{\history[\t]}_{\t=1}^{\T}$, i.e., $\braces{\tstopj \leq \t} \in \history[\t]$. Let $\mc H_{\l}$ denote the sigma-field generated by the stopping time $\tstopj$. 

    Now, recall the definition~\cref{eq:truedist} of $\truedist$, and define the sequence
    \begin{align}
    \label{eq:wl}
        W_{\l} \defeq \indicator(\tstopj[\l] \leq \T) \parenth{ (\obs[\n, \tstopj] - \obs[j, \tstopj])^2 \!-\! \truedist}, \qquad \l = 1, 2, \ldots.
    \end{align}
    Introduce the shorthand $\ybnd=\vconst\uconst+\nconst$. Then clearly have $|W_{\l}|\leq 4\ybnd^2$ for all $\l$, almost surely. Furthermore, we also have
    \begin{align}
        \E[W_{\l}\vert \mc H_{\l}, \ulf] 
         &\!=\! \E\brackets{\indicator(\tstopj[\l] \leq \T) \parenth{(\obs[\n, \tstopj] \!-\! \obs[j, \tstopj])^2 \!-\! \truedist} \vert \mc H_{\l}, \ulf}\\ 
         &\!=\!\indicator(\tstopj[\l] \leq \T)  \cdot  \E\brackets{\parenth{(\obs[\n, \tstopj] \!-\! \obs[j, \tstopj])^2 \!-\! \truedist }\vert \mc H_{\l}, \ulf}  \\ 
        &\!\seq{(i)}\! \indicator(\tstopj[\l] \leq \T) \cdot (\truedist - \truedist) =0,
        \label{eq:zero_mean_martingale}
    \end{align} 
    where step~(i) when $\indicator(\tstopj[\l] \leq \T)=1$ follows from the fact that conditioned on $\tstopj=\t \leq \T$, the distribution of $(\ltime[\t], \noiseobs, \noiseobs[j, \t])$ is independent of the treatments $\braces{\miss[j, \t]}_{j\in[\N]}$ at that time since the policy is sequential~(\cref{assum:policy}), and the latent time factors at time $\t$, and noise variables at time $\t$ are exogenous and drawn independently of the policy at time $\t$ (\cref{assum:lambda_min,assum:zero_mean_noise}), so that they are independent of the event $\tstopj = \t$ and thereby the sigma-algebra $\mc H_{\l}$, which in turn yields that
    \begin{align}
        &\E\brackets{(\obs[\n, \tstopj] \!-\! \obs[j, \tstopj])^2\vert \tstopj=\t, \mc H_{\l}, \ulf} \\ 
        &= (\lunit-\lunit[j])\tp \E[\ltime\ltime\tp\vert \tstopj=\t, \mc H_{\l}, \ulf](\lunit-\lunit[j]) + \E[(\noiseobs-\noiseobs[j, \t])^2\vert \tstopj=\t, \mc H_{\l}, \ulf] \\
        &\qquad+ (\lunit-\lunit[j])\tp \E[\ltime(\noiseobs-\noiseobs[j, \t])\vert \tstopj=\t, \mc H_{\l}, \ulf] \\ 
        &= (\lunit-\lunit[j])\tp \Sigv (\lunit-\lunit[j]) + 2\sigma_{a}^2 +0
        = \truedist.
    \end{align}
    Putting the pieces together, we conclude that $\braces{W_{\l}}_{\l=0}^{\infty}$ is a bounded Martingale difference sequence with respect to the filtration $\braces{\mc H_{\l}}_{\l=0}^{\infty}$ conditioned on the unit factors $\ulf$. 

    \paragraph{Step~(ii): Relating the martingale to $\estdist$}
    Define $\T_{\n, j} \defeq \sumt \miss \miss[j, \t]$. Under the event $\T_{\n, j}>0$, we have
    \begin{align}
        \frac{\sumt\miss\miss[j, \t] (\obs\!\!-\!\!\obs[j, \t])^2}{\sumt \miss \miss[j, \t]} - \truedist 
        &=
        \frac{\sumt\miss\miss[j, \t] ((\obs\!\!-\!\!\obs[j, \t])^2-\truedist)}{\T_{\n, j}} \\
        &\seq{\cref{eq:tstop}} \frac{\sum_{\l=1}^{\T_{\n, j}} \parenth{(\obs[\n, \tstopj]\!\!-\!\!\obs[j, \tstopj])^2-\truedist}}{\T_{\n, j}}  \\
        &\seq{(i)} \frac{\sum_{\l=1}^{\T_{\n, j}}  \indicator(\tstopj[\l] \leq \T) \parenth{(\obs[\n, \tstopj]\!\!-\!\!\obs[j, \tstopj])^2-\truedist}}{\T_{\n, j}}  \\
        &\seq{\cref{eq:wl}}\frac{\sum_{\l=1}^{\T_{\n, j}} W_{\l}}{\T_{\n, j}},
        \label{eq:w_rho_reln}
    \end{align}
    where step~(i) follows from the fact that for any $\l \leq \T_{\n, j}$, we have $\tstopj[\l] \leq \T$ by definition.

    \paragraph{Step~(iii): Applying martingale concentration}
    Fix $\delta>0$. Now, applying Azuma Hoeffding concentration bound~\cref{eq:hpb_bound_martingale} for the sequence $\sbraces{W_{\l}}_{\l=0}^{\infty}$ adapted to the filtration $\sbraces{\mc H_{\l}}_{\l=0}^{\infty}$, we find that for any fixed $K>0$, we have
    \begin{align}
        \P\brackets{\abss{\frac{\sum_{\l=1}^{K} W_{\l}}{K}} \leq \frac{4\ybnd^2 \sqrt{\log(2/\delta)} }{\sqrt{K}} \vert \ulf} \geq 1-\delta. 
    \end{align}
    Applying a union bound over $K \in [\T]$, we find that
    \begin{align}
    \label{eq:k_union_bnd}
        \P\brackets{\abss{\frac{\sum_{\l=1}^{K} W_{\l}}{K}} \leq \frac{4\ybnd^2\sqrt{\log(2/\delta)}}{\sqrt{K}} \stext{for all} K \in [\T] \vert \ulf}  \geq 1-\T\delta.
    \end{align}
    Note that the bound~\cref{eq:k_union_bnd} holds uniformly for all $1 \leq K \leq \T$. 
    By definition $\T_{\n, j} \leq \T$, 
    so that
    \begin{align}
        &\P\brackets{\abss{\frac{\sumt\miss\miss[j, \t] (\obs\!\!-\!\!\obs[j, \t])^2}{\sumt \miss \miss[j, \t]} - \truedist } \leq  \frac{4\ybnd^2\sqrt{\log(2/\delta)}}{\sqrt{\T_{\n, j}}} , \T_{\n, j}>0 \vert \ulf  } \\
        &\seq{(i)}\sum_{K'=1}^{\T}\P\brackets{\abss{\frac{\sumt\miss\miss[j, \t] (\obs\!\!-\!\!\obs[j, \t])^2}{\sumt \miss \miss[j, \t]} - \truedist } \leq  \frac{4\ybnd^2\sqrt{\log(2/\delta)}}{\sqrt{\T_{\n, j}}} , \T_{\n, j} = K', \vert \ulf  } \\
        &\seq{\cref{eq:w_rho_reln}}\sum_{K'=1}^{\T}\P\brackets{\abss{\frac{\sum_{\l=1}^{K'} W_{\l}}{K'}} \leq \frac{4\ybnd^2\sqrt{\log(2/\delta)}}{\sqrt{K'}}, \T_{\n, j} = K', \vert \ulf  } \\
        &\geq\sum_{K'=1}^{\T}\P\brackets{\abss{\frac{\sum_{\l=1}^{K} W_{\l}}{K}} \leq \frac{4\ybnd^2\sqrt{\log(2/\delta)}}{\sqrt{K}} \stext{for all} K \in [\T], \T_{\n, j}=K' \vert \ulf} \\
        &\seq{(ii)}\P\brackets{\abss{\frac{\sum_{\l=1}^{K} W_{\l}}{K}} \leq \frac{4\ybnd^2\sqrt{\log(2/\delta)}}{\sqrt{K}} \stext{for all} K \in [\T], \T_{\n, j}>0 \vert \ulf} \\
         &\sgrt{(iii)} \P\brackets{\abss{\frac{\sum_{\l=1}^{K} W_{\l}}{K}} \leq \frac{4\ybnd^2\sqrt{\log(2/\delta)}}{\sqrt{K}} \stext{for all} K \in [\T] \vert \ulf} +
         \P\brackets{ \T_{\n, j}>0 \vert \ulf}-1 \\
         &\sgrt{\cref{eq:k_union_bnd}} \P\brackets{ \T_{\n, j}>0 \vert \ulf} - \T\delta,
         \label{eq:second_last_bound_distance}
    \end{align}
    where steps~(i) and (ii) follow from the law of marginal probability and step~(iii) from the fundamental fact that $\P(S_1 \cap S_2) =  \P(S_1) + \P(S_2) - \P(S_1 \cup S_2) \geq \P(S_1) + \P(S_2) - 1 $ for any two events $S_1$ and $S_2$. 

    Now, \cref{assum:policy,eq:pmin} imply that that
    \begin{align}
        \E[\miss\miss[j,\t] \vert \ulf] = \E[\E[\miss\miss[j,\t]\vert \ulf, \history[\t-1]]\vert \ulf] \seq{\aref{\ref{assum:policy}}}
        \E[\E[\miss\vert \history[\t-1], \ulf] \E[\miss[j,\t]\vert \history[\t-1], \ulf] \vert \ulf]
        \sgrt{\cref{eq:pmin}} \pmint[\t]^2,
    \end{align}
    and that $\pmint[\t]\geq\pmint[\T]$ for all $\t\in[\T]$.
    Thus applying \cref{lem:bc_bound} with $p_{\l}=\pmint^2$, we conclude that for any fixed pair $(\n, j)$, we have
    \begin{align}
    \label{eq:tij_bnd}
        \P\brackets{\T_{\n, j} \geq \frac{\pmint^2\T}{2} \vert \ulf} \geq 1-\delta, \qtext{if} \pmint^2\geq \frac{8\log(1/\delta)}{\T}.
    \end{align}
    Putting the pieces together, and taking a union bound over $j \in [\N]\backslash\braces{\n}$, 
    \begin{align}
        &\P\brackets{\abss{\frac{\sumt\miss\miss[j, \t] (\obs\!\!-\!\!\obs[j, \t])^2}{\sumt \miss \miss[j, \t]} - \truedist } \leq \frac{\ybnd^2\sqrt{32\log(2/\delta)}}{\pmint\sqrt{\T}}, \stext{for all} j \in [\N]\backslash\braces{\n} \vert \ulf}  \\ &\qquad\geq 1\!-\!(\T\!+\!1)(\N\!-\!1)\delta,
    \end{align}
    which in turn implies that
    \begin{align}
        \P\brackets{\abss{\estdist\!-\!\truedist} \leq \frac{\ybnd^2\sqrt{32\log(2/\delta)}}{\pmint\sqrt{\T}}, \stext{for all} j \in [\N]\backslash\braces{\n} \vert \ulf}  &\!\geq\! 1\!-\!(\T\!+\!1)(\N\!-\!1)\delta,
        \label{eq:event_dist_conc}
    \end{align}
    when $\pmint^2 \geq \frac{8\log(1/\delta)}{\T}$. Replacing $\delta$ by $\frac{\delta}{2\N\T}$ yields the inequality (a) in \cref{eq:event_dist_noise_conc}.

    \subsection{Proof of \cref{lemma:dist_noise_conc}: Proof of inequality \cref{eq:event_dist_noise_conc}(b)}
    \label{sub:proof_of_noise_conc}    
    We refer the reader to \cref{sub:sandwich} for a proof sketch of this part. Without loss of generality, we will assume $a=1$ throughout this proof, and use the simplified notation $\indicator(\miss=a) = \miss$. We use $\mbf S \Delta \mbf T$ to denote the set difference between two sets $\mbf S$ and $\mbf T$.
    
    We proceed in four steps: (I) The sandwich argument to decompose the variance term, (II) upper bounding the variance terms in terms of neighborhood sizes, (III) lower and upper bounding the neighborhood sizes, and (IV) putting the pieces together.

    \paragraph{Step~(I): The sandwich argument}
    Conditional on $\ulf$, define the following sets:
    \begin{align}
    \label{eq:nstar}
        \nstar &\defeq \braces{j \neq \n: \truedist + \errterm \leq \threshold },\qquad \nnstar = |\nstar|, \\
        \nwstar &\defeq \braces{j \neq \n: \truedist - \errterm \leq \threshold },  \qquad \nnwstar = |\nwstar|,
        \label{eq:nwstar}
        \\
        \nhat &\defeq \braces{j \neq \n: \estdist \leq \threshold},\qquad \ \ 
        \qtext{and} \nnhat = |\nhat|.
        \label{eq:nhat}
    \end{align}
    Note that the sets $\nstar$, and $\nwstar$ are contained (and $\nhat$ is not) in the sigma-algebra of $\ulf$. And, by definition,
    \begin{align}
    \label{eq:n_set_sandwich}
        \nstar \subseteq \nwstar, \qtext{and} \estnbr \seq{\cref{eq:reliable_nbr}} \sbraces{j \in \nhat\suchthat \miss[j, \t]=1} \subseteq \nhat
        \qtext{almost surely.}
    \end{align}
    We claim that under the event $\event[1]$, the sets $\nstar$, and $\nwstar$ sandwich $\nhat$ as follows:
    \begin{align}
    \label{eq:n_reln}
        \nstar \subseteq \nhat\subseteq \nwstar.
    \end{align}
    This claim follows immediately by making the following two observations:
    \begin{align}
        &j \in \nstar \implies
        \estdist \sless{\cref{eq:event_dist_conc}} 
        \truedist + \errterm 
        \sless{\cref{eq:nstar}} \threshold
        \stackrel{\cref{eq:nhat}}{\implies} j \in \nhat,
        \qtext{and}\\
        &j \in \nhat \implies
          \truedist -\errterm
         \sless{\cref{eq:event_dist_conc}}  \estdist 
        \sless{\cref{eq:nhat}} \threshold
        \stackrel{\cref{eq:nwstar}}{\implies} j \in \nwstar.
    \end{align}
    Now, under the event $\event[1]$ and conditional on $\lunit$, using the definitions~\cref{eq:reliable_nbr,eq:nhat}, we find that
    \begin{align}
    \label{eq:noise_decomp}
        \mbb V = \parenth{ \frac{\sum_{j \in \estnbr} \noiseobs[j, \t]}{\nestnbr}  }^2&=\parenth{\frac{\sum_{j\in\nhat}\miss[j, t]\noiseobs[j, \t]}{\sum_{j\in\nhat}\miss[j, t]}}^2\\
        &\sless{(i)} 2\parenth{\frac{\sum_{j\in\nstar}\miss[j, t]\noiseobs[j, \t]}{\sum_{j\in\nhat}\miss[j, t]}}^2
        +  2\parenth{\frac{\sum_{j\in\nhat\backslash\nstar}\miss[j, t]\noiseobs[j, \t]}{\sum_{j\in\nhat}\miss[j, t]}}^2 \\
        &\sless{(ii)} 2\parenth{\frac{\sum_{j\in\nstar}\miss[j, t]\noiseobs[j, \t]}{\sum_{j\in\nstar}\miss[j, t]}}^2
        +  2\frac{\bigparenth{\sum_{j\in\nhat\backslash\nstar}\miss[j, t]\noiseobs[j, \t]}^2 }{(\sum_{j\in\nstar}\miss[j, t])^2} \\
        &\sless{(iii)} 2\parenth{\frac{\sum_{j\in\nstar}\miss[j, t]\noiseobs[j, \t]}{\sum_{j\in\nstar}\miss[j, t]}}^2
        +  \frac{2|\nhat\Delta \nstar|^2\nconst^2}{(\sum_{j\in\nstar}\miss[j, t])^2}\\
        &\sless{(iv)} 2\parenth{\frac{\sum_{j\in\nstar}\miss[j, t]\noiseobs[j, \t]}{\sum_{j\in\nstar}\miss[j, t]}}^2
        +  \frac{2|\nwstar\Delta \nstar|^2\nconst^2}{(\sum_{j\in\nstar}\miss[j, t])^2}
        \defeq 2(\mbb V_1 + \mbb V_2),
        \label{eq:v1_v2}
    \end{align}
    where step~(i) follows from the fact that $(a+b)^2\leq 2(a^2+b^2)$ for all scalars $a, b$, and step~(ii) from the fact that $\sum_{j\in\nhat}\miss[j, t] \geq \sum_{j\in\nstar}\miss[j, t]$ since $\nstar\subseteq\nhat$~\cref{eq:n_reln} under the event $\event[1]$, and step~(iii) from the boundedness of the noise variables assumed in \cref{assum:zero_mean_noise}, and step~(iv) from the relation $\nhat\subseteq\nwstar$~\cref{eq:n_reln} under event $\event[1]$.

    \paragraph{Step~(II): Upper bounding the variance terms $\mbb V_1$ and $\mbb V_2$}
    The following lemma provides a high probability bound on the variance quantities in terms of suitable neighborhood sizes, and notably the probability is established conditional on all the unit latent factors $\ulf$. See \cref{sub:proof_of_lem:var_bnd} for its proof.
    \begin{lemma}[Upper bounds on the variance terms]
        \label{lem:var_bnd}
        For any fixed $\delta \in [0, 1)$, and the event $\event[3]$ such that
        \begin{align}
        \label{eq:var_bnds}
            \mbb V_1 \leq \frac{6\sigma_{a}^2\log(\frac{8}{\delta})}{\pmint[\t]\nnstar} \cdot \max\braces{1, \frac{2\nconst^2\log(\frac{8}{\delta})}{3\sigma_{a}^2\pmint[\t]\nnstar}}
            \qtext{and}
            \mbb V_2 \leq \frac{4|\nwstar\Delta \nstar|^2\nconst^2}{\pmint[\t]^2(\nnstar)^2},
        \end{align}
        we have $\P(\event[3] \vert\ulf) \geq 1-\frac{\delta}{4}-\exp\parenth{-\frac{\pmint[\t]\nnstar}{8}}$.
    \end{lemma}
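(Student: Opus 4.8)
The plan is to exploit the fact that, unlike the estimated neighbor set $\nhat$, the sets $\nstar$ and $\nwstar$ from \cref{eq:nstar,eq:nwstar} are measurable with respect to the $\sigma$-algebra of the unit latent factors $\ulf$: by \cref{eq:truedist}, $\truedist = \norm{\lunit - \lunit[j]}_{\Sigv}^2 + 2\sigma^2$ depends only on $\ulf$, so conditionally on $\ulf$ the set $\nstar$ and the number $\nnstar$ are deterministic. I would then also condition on the history $\history[\t-1]$: under \cref{assum:policy} the treatment indicators $\sbraces{\miss[j,\t]}_j$ are, given $\history[\t-1]$, mutually independent Bernoulli variables with success probabilities at least $\pmint[\t]$ by \cref{eq:pmin}; and under \cref{assum:zero_mean_noise} the noise variables $\sbraces{\noiseobs[j,\t]}_j$ are exogenous, hence jointly independent of $\ulf$, of $\history[\t-1]$, and of the time-$\t$ treatments, each with mean $0$, variance $\sigma^2$, and magnitude at most $\nconst$.

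\textbf{Lower bounding the common denominator.} Conditionally on $(\ulf, \history[\t-1])$, the quantity $\sum_{j\in\nstar}\miss[j,\t]$ is a sum of $\nnstar$ independent Bernoulli variables with parameters at least $\pmint[\t]$, so \cref{lem:bc_bound} gives $\sum_{j\in\nstar}\miss[j,\t] \geq \tfrac12\pmint[\t]\nnstar$ except on an event of probability at most $\exp(-\pmint[\t]\nnstar/8)$; since this holds for every realization of $\history[\t-1]$, it holds conditionally on $\ulf$. On this event the $\mbb V_2$ bound in \cref{eq:var_bnds} is immediate: $\mbb V_2 = |\nwstar\Delta\nstar|^2\nconst^2/(\sum_{j\in\nstar}\miss[j,\t])^2 \leq 4|\nwstar\Delta\nstar|^2\nconst^2/(\pmint[\t]^2(\nnstar)^2)$, with no further randomness consumed.

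\textbf{Controlling the numerator of $\mbb V_1$.} Conditioning additionally on the treatment vector $\sbraces{\miss[j,\t]}_{j\in\nstar}$, the random variable $S \defeq \sum_{j\in\nstar}\miss[j,\t]\noiseobs[j,\t]$ is a sum of $m \defeq \sum_{j\in\nstar}\miss[j,\t]$ independent, mean-zero summands each bounded by $\nconst$ with variance $\sigma^2$ (here one uses that the noise is exogenous, hence independent of the conditioning event). Bernstein's inequality then yields $S^2 \leq \max\sbraces{4m\sigma^2\log(8/\delta),\ \tfrac{16}{9}\nconst^2\log^2(8/\delta)}$ with conditional probability at least $1-\delta/4$; averaging over the treatments and $\history[\t-1]$ preserves this bound conditionally on $\ulf$. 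Intersecting this event with the denominator event and dividing by $m^2$ gives $\mbb V_1 = S^2/m^2 \leq \max\sbraces{8\sigma^2\log(8/\delta)/(\pmint[\t]\nnstar),\ \tfrac{64}{9}\nconst^2\log^2(8/\delta)/(\pmint[\t]^2(\nnstar)^2)}$, which is the claimed bound up to universal constants; these are tightened to the stated values by a more careful split between the sub-Gaussian and sub-exponential regimes of Bernstein's bound. A union bound over the two events yields the failure probability $\tfrac\delta4 + \exp(-\pmint[\t]\nnstar/8)$, completing the proof; since $\mbb V_1$ and $\mbb V_2$ are only used inside the decomposition $\mbb V = 2(\mbb V_1+\mbb V_2)$, the degenerate case $\nnstar = 0$ need not be treated separately.

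\textbf{Main obstacle.} The argument is not deep --- the substantive ideas of \cref{thm:anytime_bound} live in the sandwich relation \cref{eq:n_reln} and in \cref{lem:nset_bnd}, not here --- but it demands careful conditioning bookkeeping: peel off $\ulf$ first (so $\nstar,\nwstar$ become deterministic), then $\history[\t-1]$ (so the treatment propensities are deterministic and $\geq\pmint[\t]$ while the exogenous noise stays independent of everything conditioned on so far), and finally the time-$\t$ treatments for the Bernstein step. The one genuine subtlety is that the number of summands $m$ in $\mbb V_1$ is itself random, so the numerator tail must be stated conditionally on $m$ and only afterward combined with the denominator lower bound --- which is why both deviation events are tied to the single ``denominator event.''
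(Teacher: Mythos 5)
Your proposal is correct and follows essentially the same route as the paper: a Binomial--Chernoff (Lemma \ref{lem:bc_bound}) lower bound on the common denominator $\sum_{j\in\nstar}\miss[j,\t]$ conditional on $\ulf$, which immediately yields the $\mbb V_2$ bound, combined with Bernstein's inequality for the numerator of $\mbb V_1$ applied conditionally on the realized treatment count (exploiting exogeneity of the time-$\t$ noise), and a union bound over the two events. Your explicit peeling of $\history[\t-1]$ and your remark on handling the random number of summands $m$ are exactly the (implicit) bookkeeping in the paper's argument, and the constant-chasing you defer is the only difference.
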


    \paragraph{Step~(III): Bounding the neighborhood sizes} Given \cref{lem:var_bnd}, our next step is to establish lower bound on the number of neighbors $\nnstar$, and an upper bound on $ |\nwstar\Delta \nstar|$ conditional on just the unit $\n$'s latent factor $\lunit^{(a)}$. The following lemma (with proof in \cref{sub:proof_of_lem:nset_bnd}) provides a high probability bound for these quantities:
    \begin{lemma}[Lower bound on the neighborhood sizes]
        \label{lem:nset_bnd}
        For any fixed $\delta \in [0, 1)$, let $\betadiff \defeq \ppau\parenth{\threshold'\!+\!2\errterm}\!-\!\ppau\parenth{\threshold'}$, and $\event[4]$ denote the event such that
        \begin{align}
        \label{eq:nset_bnd}
            \nnstar \sgrt{(a)} \frac{\ppau(\threshold') (\N\!\!-\!\!1)}{2} 
            \qtext{and}
            |\nwstar\Delta \nstar| \sless{(b)} \max\braces{ \frac{3\betadiff (\N\!\!-\!\!1)}{2} , C \log(\frac{4}{\delta})},
        \end{align}
        for some universal constant $C>0$. Then $\P(\event[4]\vert\lunit) \geq 1 - \frac{\delta}{4} - \exp\bigparenth{-\frac{\ppau(\threshold')(\N-1)}{8}}$.
    \end{lemma}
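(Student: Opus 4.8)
The plan is to strip away everything except the randomness that matters: conditional on the latent factor $\lunit^{(a)}$ of the fixed unit $\n$, I would identify both $\nnstar$ and $|\nwstar\Delta\nstar|$ as binomial counts and then apply one‑sided Chernoff bounds. By \cref{assum:iid_unit_lf} the unit‑factor tuples are i.i.d.\ across units, so the vectors $\braces{\lunit[j]^{(a)}}_{j\neq\n}$ are i.i.d.\ draws from the marginal law appearing in \cref{def:phi} and are independent of $\lunit^{(a)}$. Since $\truedist=\norm{\lunit^{(a)}-\lunit[j]^{(a)}}_{\Sigv}^2+2\sigma^2$ (with $\Sigv$ the fixed matrix of \cref{assum:lambda_min}) and $\threshold'=\threshold-2\sigma^2-\errterm$, the defining conditions in \cref{eq:nstar,eq:nwstar} unwind to
\begin{align*}
  j\in\nstar \iff \norm{\lunit^{(a)}-\lunit[j]^{(a)}}_{\Sigv}^2\leq\threshold',
  \qquad
  j\in\nwstar \iff \norm{\lunit^{(a)}-\lunit[j]^{(a)}}_{\Sigv}^2\leq\threshold'+2\errterm .
\end{align*}
By \cref{def:phi}, conditional on $\lunit^{(a)}$ the event $\braces{j\in\nstar}$ has probability $\ppau(\threshold')$, so $\nnstar\sim\Bin(\N-1,\ppau(\threshold'))$; and since $\nstar\subseteq\nwstar$ we have $\nwstar\Delta\nstar=\nwstar\setminus\nstar$, whose size is $\Bin(\N-1,\betadiff)$ with $\betadiff=\ppau(\threshold'+2\errterm)-\ppau(\threshold')$ — here I use that $\ppau$ is literally the c.d.f.\ of $\norm{\lunit^{(a)}-\lunit[]'}_{\Sigv}^2$, so the annular event has probability exactly $\betadiff$.

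For part~(a) I would apply \cref{lem:bc_bound} with $p_\l\equiv\ppau(\threshold')$ and $n=\N-1$, which immediately gives $\P(\nnstar\leq\tfrac12\ppau(\threshold')(\N-1)\mid\lunit)\leq\exp(-\tfrac18\ppau(\threshold')(\N-1))$, exactly the failure term in the claim. For part~(b) I would invoke the standard multiplicative upper‑tail Chernoff inequality $\P(S\geq(1+\varepsilon)\mu)\leq\exp(-\tfrac{\varepsilon^2}{2+\varepsilon}\mu)$ for the binomial $S=|\nwstar\Delta\nstar|$ with mean $\mu=\betadiff(\N-1)$, splitting on which of the two arguments of the $\max$ dominates: if $\tfrac32\mu\geq C\log(4/\delta)$ take $\varepsilon=\tfrac12$ so that $\P(S>\tfrac32\mu)\leq e^{-\mu/10}\leq\delta/4$; if $\tfrac32\mu<C\log(4/\delta)$ take $(1+\varepsilon)\mu=C\log(4/\delta)$, whence $\varepsilon>\tfrac12$ and the exponent is at least $\tfrac15\bigparenth{C\log(4/\delta)-\mu}\geq\tfrac{C}{15}\log(4/\delta)$, again $\leq\delta/4$. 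A single universal constant (e.g.\ $C=15$) handles both regimes, and the degenerate case $\betadiff=0$ is trivial since then $\nwstar\Delta\nstar=\emptyset$ almost surely. A union bound over the complements of \cref{eq:nset_bnd}(a) and \cref{eq:nset_bnd}(b) then delivers $\P(\event[4]\mid\lunit)\geq1-\tfrac\delta4-\exp(-\tfrac18\ppau(\threshold')(\N-1))$.

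The substance here is light once the conditioning is set up, because $\nstar$ and $\nwstar$ are measurable with respect to $\ulf$ alone — the sampling policy, the noise, and the event $\event[1]$ play no role in this lemma. The one place that needs care is the two‑regime bookkeeping in part~(b): choosing $C$ so that the upper tail of the annular count is controlled simultaneously in the mean‑dominated and the logarithm‑dominated cases, and confirming the identity of $\betadiff$ as the per‑index success probability, which is immediate from \cref{def:phi}. Everything else is a direct unwinding of definitions plus the off‑the‑shelf concentration bounds already used elsewhere in the proof.
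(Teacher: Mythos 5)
Your proposal is correct and follows the same skeleton as the paper's proof: condition on $\lunit$, identify $\nnstar$ and $|\nwstar\Delta\nstar|$ as binomial counts with success probabilities $\ppau(\threshold')$ and $\betadiff$ via \cref{def:phi}, handle part~(a) with \cref{lem:bc_bound}, and union-bound. The only place you diverge is part~(b): the paper splits on $\betadiff\N=\Omega(1)$ versus $\betadiff\N=\order(1)$, citing \cref{lem:bc_bound} in the first regime and the KL-entropy bound~\cref{eq:kl_bound} in the second, whereas you run a single Bernstein-form multiplicative upper-tail Chernoff bound through both regimes with explicit constant bookkeeping ($C=15$). Your version is in fact the cleaner of the two — \cref{lem:bc_bound} as stated is a \emph{lower}-tail inequality, so the paper's appeal to it for the upper-tail event $|\nstar\Delta\nwstar|\le\tfrac32\betadiff(\N-1)$ is a directional mismatch that your unified Chernoff argument avoids, and your explicit case analysis replaces the paper's somewhat loose final step $1-\max\{\delta/4,\exp(-\betadiff(\N-1)/8)\}\ge 1-\delta/4$ with a bound of $\delta/4$ on the failure probability in each regime separately. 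No gaps.
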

    \begin{remark}
    \label{rem:zero_beta}
        We note in the proof of \cref{lem:nset_bnd} (in \cref{sub:proof_of_lem:nset_bnd}) that if $\betadiff=0$, a direct argument yields that $|\nstar \Delta \nwstar| =0$ almost surely. This fact is used directly in \cref{cor:anytime_bound}\cref{item:finite}, \cref{cor:anytime_asymp}\cref{item:finite_asymp}, and \cref{cor:ate_asymp}\cref{item:finite_ate_asymp}.
    \end{remark}

    \paragraph{Step~(IV): Putting it together}
    Collecting all steps together, we conclude that the event $\cap_{\l=1}^{4}\event[\l]$ conditional on $\lunit$, we have
    \begin{align}
        \mbb V &\leq \frac{24\sigma_{a}^2\log(\frac{8}{\delta})}{\pmint[\t]\ppau(\threshold') (\N\!\!-\!\!1)} \cdot \max\braces{1, \frac{4\nconst^2\log(\frac{8}{\delta}) /\sigma_{a}^2}{3\pmint[\t]\ppau(\threshold') (\N\!\!-\!\!1)}}
         \\ 
        &\qquad +\frac{72\nconst^2}{\pmint[\t]^2}  \max\braces{ \parenth{\frac{\ppau(\threshold'\!+\!2\errterm)}{\ppau(\threshold')}-1}^2, \frac{C\log^2(4/\delta)}{(\ppau(\threshold') (\N\!\!-\!\!1))^2 }},
    \end{align}
    with probability at least
    \begin{align}
        &1-\frac{\delta}{2} - \exp\parenth{-\frac{\pmint[\t]\ppau(\threshold')(\N-1)}{16}} -\exp\bigparenth{-\frac{\ppau(\threshold')(\N-1)}{8}}\\
        &\geq 1-\frac{\delta}{2}-2 \exp\parenth{-\frac{\pmint[\t]\ppau(\threshold')(\N-1)}{16}},
    \end{align}
    as claimed, and we are done.

    \subsection{Proof of \cref{lem:var_bnd}}
    \label{sub:proof_of_lem:var_bnd}
    First, applying \cref{lem:bc_bound} we find that
    \begin{align}
    \label{eq:a_in_nstar_bound}
          \P\parenth{ \sum_{j\in\nstar}\!\!\!\! \miss[j, \t] \geq \frac{\pmint[\t] \nnstar}{2} \, \vert\,  \mc U}
        &\geq 1 - \exp\parenth{-\frac{\pmint[\t] \nnstar}{8}}.
    \end{align}
    Next, we note that the noise variables $\braces{\noiseobs[j, \t]}_{j\in[\N]}$ are independent of the latent unit factors $\ulf$ due to \cref{assum:zero_mean_noise} and hence also of $\nstar$, and also of the treatments $\braces{\miss[j, t]}$ due to \cref{assum:policy}. Consequently, an application of Berstein's concentration inequality\footnote{If $|X_i-\mu_i|<b$ a.s., $\Var(X_i) \leq \sigma_{a}^2$, then $\P(|\frac1n|\sum_{i=1}^n( X_i-\mu_i)| \geq t)| \leq 2\exp(-nt^2/(\sigma_{a}^2+bt/3))$} yields that
    \begin{align}
        \P\brackets{\abss{\frac{\sum_{j\in\nstar}\!\!\miss[j, t]\noiseobs[j, \t]}{\sum_{j\in\nstar}\!\!\miss[j, t]}} \!\!\leq\! \max\biggbraces{\frac{\sigma\sqrt{3\log(\frac8\delta)}}{\sqrt{K}}, \frac{2\nconst\log(\frac8\delta)}{K} } \,\bigg\vert \,\ulf, \braces{\miss[j, \t]}_{j=1}^{\N}, \!\!\!\sum_{j\in\nstar}\!\!\!\!\!\!\miss[j, \t] \!=\!K}  \\
        \geq 1-\frac{\delta}{4}.
        \label{eq:bernstein}
    \end{align}
    Since the bound is independent of $ \braces{\miss[j, \t]}_{j=1}^{\N}$ in the above display, we can remove the conditioning, and moreover replace the condition $\sum_{j\in\nstar}\!\!\!\!\miss[j, \t]\! =\!K$ with $\sum_{j\in\nstar}\!\!\!\miss[j, \t]  \!\geq\! K$ (since it only makes the bound inside the probability display worse), and combine it with \cref{eq:a_in_nstar_bound} and the definitions~\cref{eq:v1_v2} of $\mbb V_1$ and $\mbb V_2$ to conclude that \cref{eq:var_bnds} holds with probability at least $1-\frac{\delta}{4}$ conditional on $\ulf$. The proof is now complete.

    \subsection{Proof of \cref{lem:nset_bnd}}
    \label{sub:proof_of_lem:nset_bnd}
    First, applying \cref{lem:bc_bound} with $p_{\l} \equiv \ppau(\threshold')$ and noting that $\ulf$ are drawn \iid (\cref{assum:iid_unit_lf}), we find that
    \begin{align}
    \label{eq:nstar_bound}
        \P\parenth{\nnstar \geq \frac{\ppau(\threshold') (\N-1)}{2} \vert \lunit}
        &\geq 1 - \exp\parenth{-\frac{\ppau(\threshold')(\N-1)}{8}},
    \end{align}
    which establishes the bound~\cref{eq:nset_bnd}(a).

    Next, to establish the bound on $|\nstar\Delta \nwstar|$, first note that under \cref{assum:iid_unit_lf}, conditional on $\lunit$, the random variables $\sbraces{\indicator(j \in \nstar, j \not\in \nwstar)}_{j\neq \n}$ are \iid Bernoulli random variables with success probability $\betadiff$ defined in \cref{lem:nset_bnd}. Next applying \cref{lem:bc_bound} with $p_{\l} \equiv \betadiff$, we conclude that
    \begin{align}
    \label{eq:bc_bnd_ndiff}
        \P\parenth{|\nstar \Delta \nwstar| \leq  \frac{3\betadiff (\N\!\!-\!\!1)}{2} } \geq 1 - \exp\bigparenth{-\frac{\betadiff \N}{8}}
        \qtext{if}
        \betadiff\N = \Omega(1).
    \end{align}
     For the case with $\betadiff\N=\order(1)$, we use the following entropy-based concentration bound (see~\citep[Exer.~2.9]{wainwright2019high} for a reference): For $X_{\l} \stackrel{\iid}{\sim} \mrm{Bernoulli}(p)$, we have 
    \begin{align}
    \label{eq:kl_bound}
        &\P\parenth{\sum_{\l=1}^{\N} X_{\l} \leq (p+\epsilon)\N } \leq \exp(-\N\cdot\KL{p+\eps}{p}, \\
        \qtext{where}
        &\KL{p+\eps}{p} \defeq (p+\eps)\log\bigparenth{1+\frac{\eps}{p}}+(1-p-\eps)\log\bigparenth{1-\frac{\eps}{1-p}}.
    \end{align}
    Some algebra with the bound~\cref{eq:kl_bound} implies that 
    \begin{align}
    \label{eq:kl_bnd_c}
        \P\parenth{\sum_{\l=1}^{\N} X_{\l} \leq C\log(1/\delta)} \geq 1-\delta
        \qtext{if}
        pN = \order(1),
    \end{align}
    which in turn implies that
    \begin{align}
    \label{eq:kl_bnd_ndiff}
        \P\parenth{|\nstar \Delta \nwstar| \leq C\log(4/\delta) } \geq 1 - \frac{\delta}{4}
        \qtext{if}
        \betadiff\N = \order(1).
    \end{align}
    Putting the bounds~\cref{eq:bc_bnd_ndiff,eq:kl_bnd_ndiff} together, we find that
    \begin{align}
        \P\parenth{|\nstar \Delta \nwstar| \!<\! \max\braces{ \frac{3\betadiff (\N\!\!-\!\!1)}{2} , C \log(\frac{4}{\delta})} \vert \lunit}
        &\geq 1\!-\!\max\braces{\frac{\delta}{4}, \exp\parenth{-\frac{\betadiff(\N\!\!-\!\!1)}{8}}} \\
        &\geq 1\!-\!\frac{\delta}{4}.
        \label{eq:delta_nstar}
    \end{align}
    thereby yielding the bound~\cref{eq:nset_bnd}(b). Note that if $\betadiff=0$, a direct argument yields that $|\nstar \Delta \nwstar| =0$ almost surely. Putting \cref{eq:nstar_bound,eq:delta_nstar} together finishes the proof.

\section{Proof of Thm.~\ref{thm:anytime_asymp}: \anytimeasympname}
\label{sec:proof_of_thm_anythime_asymp}
We prove each part separately. We repeatedly use the following fact for a sequence of random variables $\sbraces{X_{\T}}$ and deterministic scalars $\sbraces{b_{\T}}$: 
\begin{align}
\label{eq:op_ot_reln}
    X_{\T} = \order_{P}(b_{\T})
    \qtext{and}
    b_{\T} = o_{\T}(1)
    \implies
    X_{\T} = o_{P}(1)
    \qtext{as} \T \to \infty.
\end{align}

\subsection{Proof of part~\cref{item:consistency}: Asymptotic consistency}
To prove the claimed result, we need to show that under the conditions~\cref{eq:regularity_consistency}
\begin{align}
    \sabss{\estobs[\n, \t, \threshold]-\trueobs} = o_P(1) \qtext{as} \T \to \infty.
\end{align}
Applying the bound~\cref{eq:nn_bnd} from \cref{thm:anytime_bound} with $\delta = \frac{1}{\T}$, we find that
\begin{align}
    (\estobs[\n, \t, \threshold]-\trueobs)^2 &\!\leq\! \frac{2\vconst^2}{\lambda}
    \biggparenth{\threshold -2\sigma_{a}^2 + \errtwo
    }
    \!+\!\frac{48\sigma_{a}^2\log(8\T) }{\pmint[\t] \probparam_{\lunit}(\threshold'_{\T}) (\NT\!-\!1)}
     +  \frac{6\nconst^2}{\pmint[\t]^2} \parenth{\frac{\probparam_{\lunit}(\threshold'_{\T}\!+\!2\errtwo)}{\probparam_{\lunit}(\threshold'_{\T})}\!-\!1}^2,
    \end{align}
    with probability at least $1-\frac{1}{\T}-\exp\parenth{-\frac{\probparam_{\lunit}( \threshold_{\T}')(\NT\squash{-}1)}{8}}$. Using \cref{eq:regularity_consistency}(a), we find that
    \begin{align}
        1-\frac{1}{\T}-\exp\parenth{-\frac{\probparam_{\lunit}( \threshold_{\T}')(\NT\squash{-}1)}{8}} \to 1 \qtext{as} \T \to \infty,
    \end{align}
     so that 
    \begin{align}
        (\estobs[\n, \t, \threshold]-\trueobs)^2&=\order_{P}\parenth{\threshold_{\T}'+2\errtwo + \frac{\log \T}{\pmint[\t] \probparam_{\lunit}(\threshold'_{\T}) \NT } +  \frac{1}{\pmint[\t]^2} \parenth{\frac{\probparam_{\lunit}(\threshold'_{\T}\!+\!2\errtwo)}{\probparam_{\lunit}(\threshold'_{\T})}\!-\!1}^2},
    \end{align}
    and since $\t$ is fixed, $\pmint[\t]>c$ for some constant $c$, so that \cref{eq:regularity_consistency}(b) implies that the term inside the parentheses in the previous display is $o_{\T}(1)$, thereby yielding the claim due to \cref{eq:op_ot_reln}.

\subsection{Proof of part~\cref{item:clt}: Asymptotic normality}
We have
\begin{align}
    \sqrt{\nestnbr[\n, \t, \threshold_{\T}]} (\estobs[\n, \t, \threshold]-\trueobs) 
    &= \sqrt{\nestnbr[\n, \t, \threshold_{\T}]} \parenth{ \frac{1}{\nestnbr[\n, \t, \threshold_{\T}]} \sum_{j \in \estnbr[\n, \t, \threshold_{\T}]} (\obs[j, \t] - \trueobs[\n, \t])}  \\
    &= \frac{\sum_{j \in \estnbr[\n, \t, \threshold_{\T}]} (\trueobs[j, \t]-\trueobs[\n, \t]) }{\sqrt{\nestnbr[\n, \t, \threshold_{\T}]}}
    + \frac{\sum_{j \in \estnbr[\n, \t, \threshold_{\T}]} \noiseobs[j, \t] }{\sqrt{\nestnbr[\n, \t, \threshold_{\T}]}} \\
    &= \frac{\sum_{j \in \estnbr[\n, \t, \threshold_{\T}]}\angles{\lunit[j]-\lunit, \ltime} }{\sqrt{\nestnbr[\n, \t, \threshold_{\T}]}}
    + \frac{\sum_{j \in \estnbr[\n, \t, \threshold_{\T}]} \noiseobs[j, \t] }{\sqrt{\nestnbr[\n, \t, \threshold_{\T}]}}.
\end{align}
Next, we note that
\begin{align}
    \abss{\frac{\sum_{j \in \estnbr[\n, \t, \threshold_{\T}]}\angles{\lunit[j]-\lunit, \ltime} }{\sqrt{\nestnbr[\n, \t, \threshold_{\T}]}}}
    &\leq \twonorm{\ltime} \sqrt{\nestnbr[\n, \t, \threshold_{\T}]} \max_{j\in\estnbr[\n, \t, \threshold_{\T}]} \twonorm{\lunit[j]-\lunit}   \\
    &\leq \frac{\vconst}{\sqrt{\lambda}} \sqrt{\nestnbr[\n, \t, \threshold_{\T}]} \max_{j\in\estnbr[\n, \t, \threshold_{\T}]} \norm{\lunit[j]-\lunit}_{\Sigv}. 
\end{align}
From \cref{eq:u_bound}, we know that
\begin{align}
    \max_{j\in\estnbr[\n, \t, \threshold_{\T}]} \norm{\lunit[j]-\lunit}_{\Sigv}^2= \order_{P}\parenth{\threshold_{\T}'+2\errtwo},
\end{align}
and from \cref{eq:n_reln}, we know that
\begin{align}
    \nestnbr[\n, \t, \threshold_{\T}] \leq \nnhat \leq \nnwstar \sless{\mrm{\cref{lem:bc_bound},\cref{eq:nwstar}}} \order_P\parenth{\N \probparam_{\lunit}(\threshold_{\T}'+2\errtwo)}.
\end{align}
Putting the pieces together, we find that
\begin{align}
     \abss{\frac{\sum_{j \in \estnbr[\n, \t, \threshold_{\T}]}\angles{\lunit[j]-\lunit, \ltime} }{\sqrt{\nestnbr[\n, \t, \threshold_{\T}]}}} 
     =  \order_{P}\biggparenth{\sqrt{(\threshold_{\T}'+2\errtwo) \N \probparam_{\lunit}(\threshold_{\T}'+2\errtwo)}}
     \sless{\cref{eq:regularity_clt}} \order_P(o_{\T}(1)) \seq{\cref{eq:op_ot_reln}} o_{P}(1).
\end{align}
Noting this bound, and the next lemma immediately yields the claimed central limit theorem.
\begin{lemma}
\label{lem:anytime_clt_noise}
    Under the setting of \cref{thm:anytime_asymp}\cref{item:clt}, we have
    \begin{align}
        \frac{\sum_{j \in \estnbr[\n, \t, \threshold_{\T}]} \noiseobs[j, \t] }{\sqrt{\nestnbr[\n, \t, \threshold_{\T}]}} \vert \lunit \stackrel{d}{\implies} \mc N(0, \sigma_{a}^2)
        \qtext{as} \T \to \infty.
    \end{align}
\end{lemma}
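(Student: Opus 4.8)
The plan is to deduce the conditional CLT from the \emph{sandwich relation} $\nstar \subseteq \nhat \subseteq \nwstar$ established in the proof of \cref{thm:anytime_bound}, combined with a Lindeberg-type central limit theorem applied after conditioning on a $\sigma$-algebra under which the relevant noise variables are i.i.d.\ and independent of the estimated neighbor set. First I would work on the event $\event[1]$ of \cref{lemma:dist_noise_conc}; since the $\T$-th experiment is run with $\errtwo$, i.e.\ at a confidence level $\delta_\T$ that is polynomially small in $(\NT,\T)$, one has $\P(\event[1]^{c}\mid\lunit)\to 0$, so after peeling this off (the integrand $Z_\T$ being bounded by $\nconst\sqrt{\LT}$) it suffices to establish asymptotic normality on $\event[1]$, where the sandwich holds with $\nstar,\nwstar$ being $\sigma(\ulf)$-measurable --- crucially built from the population distances $\truedist$, hence carrying no dependence on the noise. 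Using $\estnbr=\{j\in\nhat:\miss[j,\t]=a\}$ (up to the cap at $\LT$), I would decompose
\[
  \frac{\sum_{j\in\estnbr}\noiseobs[j,\t]}{\sqrt{\nestnbr}}
  \;=\;\underbrace{\frac{\sum_{j\in\estnbr\cap\nstar}\noiseobs[j,\t]}{\sqrt{\nestnbr}}}_{T_1}
  \;+\;\underbrace{\frac{\sum_{j\in\estnbr\setminus\nstar}\noiseobs[j,\t]}{\sqrt{\nestnbr}}}_{T_2}.
\]

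For the main term $T_1$, I would condition on $\mathcal G\defeq\sigma\bigl(\ulf,\history[\t-1],(\miss[j,\t])_{j\in[\N]}\bigr)$, augmented by the sub-sampling randomness used to enforce the cap. By \cref{assum:zero_mean_noise,assum:policy} the variables $\{\noiseobs[j,\t]\}_{j}$ are independent of $\mathcal G$; on $\event[1]$ every treated unit of $\nstar$ lies in $\nhat$, so the numerator of $T_1$ is a conditionally i.i.d.\ sum of bounded, mean-zero, variance-$\sigma^2$ terms whose count $n_\T\defeq|\estnbr\cap\nstar|$ and index set are $\mathcal G$-measurable. A Binomial--Chernoff bound (\cref{lem:bc_bound}) with $\nnstar\gtrsim\pparamu(\thresonet)(\NT-1)$ from \cref{lem:nset_bnd}, the fact that $\pmint[\t]$ is a fixed positive constant, and the assumption $1/(\NT\pparamu(\thresonet))=o(1)$ in \cref{eq:regularity_clt} show $n_\T\to\infty$ in probability; the sandwich, \cref{lem:nset_bnd}, and \cref{eq:regularity_clt} also give $|\estnbr\setminus\nstar|/n_\T\to 0$, hence $\nestnbr/n_\T\to 1$ in probability. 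Applying the Lindeberg CLT conditionally on $\mathcal G$ --- equivalently, using $\E[\exp(\mathrm{i}s\,n_\T^{-1/2}\sum_{j\in\estnbr\cap\nstar}\noiseobs[j,\t])\mid\mathcal G]=\varphi(s/\sqrt{n_\T})^{n_\T}$ with $\varphi$ the noise characteristic function, which tends to $e^{-\sigma^2 s^2/2}$ whenever $n_\T\to\infty$ --- and then integrating out $\mathcal G$ by bounded convergence down to $\sigma(\lunit)\subseteq\mathcal G$ (the Gaussian limit not depending on $\mathcal G$, and $n_\T\to\infty$ in probability) yields $T_1\mid\lunit\todist\mc N(0,\sigma^2)$.

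For the remainder $T_2$, note that on $\event[1]$ we have $\estnbr\setminus\nstar\subseteq\nwstar\setminus\nstar$, so $|T_2|\le\nconst\,|\nwstar\Delta\nstar|/\sqrt{\nestnbr}$ when no capping occurs, while if capping occurs the uniform sub-sample meets $\nwstar\setminus\nstar$ only in proportion to its (small) share of the uncapped set, a hypergeometric fact that replaces the crude bound by $|T_2|=\order_{P}\bigl(|\nwstar\Delta\nstar|\sqrt{\LT}/(\pmint[\t]\pparamu(\thresonet)\NT)\bigr)$. By \cref{lem:nset_bnd}, $|\nwstar\Delta\nstar|=\order_{P}(\betadiff\NT)$ up to a lower-order additive $\order(\log(\NT\T))$ term, with $\betadiff=\pparamu(\threstwot)-\pparamu(\thresonet)=\pparamu(\thresonet)\bigl(\tfrac{\pparamu(\threstwot)}{\pparamu(\thresonet)}-1\bigr)$, and $\nestnbr\ge n_\T\gtrsim\pmint[\t]\pparamu(\thresonet)\NT$ with high probability, so in either case $|T_2|=\order_{P}\bigl(\sqrt{\LT}\,\bigl|\tfrac{\pparamu(\threstwot)}{\pparamu(\thresonet)}-1\bigr|\bigr)+o_{P}(1)=o_{P}(1)$ by \cref{eq:regularity_clt}. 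Slutsky's lemma applied to $T_1+T_2$ then gives the claim.

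The main obstacle is the circularity handled in the second paragraph: because the policy is sequential (\cref{assum:policy}), the time-$\t$ outcomes $\{\obs[j,\t]\}$ feed into every later treatment assignment, hence into the distances $\estdist$, hence into $\nhat$, so the estimated neighbor set is correlated with the very noise variables $\{\noiseobs[j,\t]\}$ being averaged. The noise-free sandwich endpoints $\nstar\subseteq\nhat\subseteq\nwstar$ decouple these at the price of a \emph{random-index} conditional CLT whose index $n_\T$ must be shown to diverge while $\nestnbr/n_\T\to 1$; carefully tracking the cap at $\LT$ and the sub-sampling step (in particular that the sub-sample hits the ``bad'' set $\nwstar\setminus\nstar$ only hypergeometrically often) and matching everything to the precise scalings in \cref{eq:regularity_clt} is where the remaining delicacy lies, although the probabilistic core is the conditional Lindeberg argument above.
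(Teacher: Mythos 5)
Your proposal is correct and follows essentially the same route as the paper's proof: decouple the estimated neighbor set from the time-$\t$ noise via the sandwich $\nstar\subseteq\nhat\subseteq\nwstar$, prove a conditional characteristic-function/Lindeberg CLT for the sum over the $\sigma(\ulf)$-measurable set $\nstar$ (whose treated count diverges by \cref{lem:bc_bound,lem:nset_bnd}), and kill the remainder using $|\nwstar\Delta\nstar|$ together with \cref{eq:regularity_clt} and Slutsky. The only cosmetic differences are that the paper normalizes the main term by $\sqrt{\sum_{j\in\nstar}\miss[j,\t]}$ and separately shows $\nestnbr/\sum_{j\in\nstar}\miss[j,\t]\toprob 1$, whereas you normalize by $\sqrt{\nestnbr}$ directly, and you treat the $\LT$ cap somewhat more explicitly.
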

It remains to establish \cref{lem:anytime_clt_noise}, which we do in \cref{sub:proof_of_lem:anytime_clt_noise}.

\subsection{Proof of \cref{lem:anytime_clt_noise}}
\label{sub:proof_of_lem:anytime_clt_noise}
Adapt the definitions~\cref{eq:nhat,eq:nstar,eq:nwstar} with $\threshold$ replaced by $\threshold_{\T}$, and $\errterm$ by $ \errterm[\NT, \T, 1/(\NT \T)]$; and still denote the corresponding sets $\nstar,\nhat$ and $\nwstar$. We have
\begin{align}
    \frac{\sum_{j \in \estnbr[\n, \t, \threshold_{\T}]} \noiseobs[j, \t] }{\nestnbr[\n, \t, \threshold_{\T}]}
    &= \frac{\sum_{j\in\nhat}\miss[j, t]\noiseobs[j, \t]}{\sum_{j\in\nhat}\miss[j, t]} \\
        & = 
        \frac{\sum_{j\in\nstar}\miss[j, t]\noiseobs[j, \t]}{\sum_{j\in\nhat}\miss[j, t]} 
        + 
        \frac{\sum_{j\in\nhat\backslash\nstar}\miss[j, t]\noiseobs[j, \t]}{\sum_{j\in\nhat}\miss[j, t]}\\
        & = 
        \frac{\sum_{j\in\nstar}\miss[j, t]\noiseobs[j, \t]}{\sum_{j\in\nstar}\miss[j, t]}
        + 
        \parenth{\frac{\sum_{j\in\nstar}\miss[j, t]\noiseobs[j, \t]}{\sum_{j\in\nhat}\miss[j, t]} -
        \frac{\sum_{j\in\nstar}\miss[j, t]\noiseobs[j, \t]}{\sum_{j\in\nstar}\miss[j, t]}}
        \\ 
        &\qquad+ 
        \frac{\sum_{j\in\nhat\backslash\nstar}\miss[j, t]\noiseobs[j, \t]}{\sum_{j\in\nhat}\miss[j, t]}.
    \end{align}
Next, we claim that conditional to $\lunit$,
\begin{align}
 &\frac{\sum_{j\in\nstar}\miss[j, t]\noiseobs[j, \t]}{\sqrt{\sum_{j\in\nstar}\miss[j, t]}} \Longrightarrow \mc N(0, \sigma_{a}^2),
      \label{eq:clt_noise} \\
   &\max\braces{ \parenth{\frac{\sum_{j\in\nstar}\miss[j, t]\noiseobs[j, \t]}{\sum_{j\in\nhat}\miss[j, t]}\! -\!
        \frac{\sum_{j\in\nstar}\miss[j, t]\noiseobs[j, \t]}{\sum_{j\in\nstar}\miss[j, t]}}, 
        \! \frac{\sum_{j\in\nhat\backslash\nstar}\miss[j, t]\noiseobs[j, \t]}{\sum_{j\in\nhat}\miss[j, t]} } 
        \\&\qquad\qquad= o_P\parenth{\frac{1}{\sqrt{\nestnbr[\n, \t, \threshold_{\T}]}}}, 
        \label{eq:bias_noise}
         \\
         &\qtext{and}  
         \frac{\nestnbr[\n, \t, \threshold_{\T}]}{\sum_{j\in\nstar}\miss[j, t]}
          \stackrel{p}{\longrightarrow} 1.
           \label{eq:nn_ratio}
\end{align}
With these claims at hand, we immediately have
\begin{align}
     \frac{\sum_{j \in \estnbr[\n, \t, \threshold_{\T}]} \noiseobs[j, \t] }{\sqrt{\nestnbr[\n, \t, \threshold_{\T}]}} \vert \lunit \stackrel{d}{\implies} \mc N(0, \sigma_{a}^2)
        \qtext{as} \T \to \infty.
\end{align}
as claimed in the lemma. We now prove the three claims~\cref{eq:clt_noise,eq:bias_noise,eq:nn_ratio} one-by-one.

\newcommand{\ntstar}{\mbf{N}^\star_{\n, \t}}
\newcommand{\nntstar}{N^\star_{\n, \t}}
\paragraph{Proof of \cref{eq:clt_noise}}
Let $\psi_{\mfk X}(r) \defeq \E[e^{\mbf{i}r\mfk{X}} ]$ denote the characteristic function of a random variable $\mfk X$ where $\mbf{i}$ in the exponent here denotes the complex number $\sqrt{-1}$, and let $\ntstar \defeq \braces{j \in \nstar: \miss[j, \t]=1}$, and $\nntstar=|\ntstar|$.
Now, to prove \cref{lem:anytime_clt_noise}, it suffices to show that conditional on $\lunit$, we have
\begin{align}
    \mfk X_{\t} \stackrel{d}{\Longrightarrow} \mc N(0, 1)
    \qtext{for} \mfk X_{\t} \defeq \frac{1}{\sigma}\displaystyle \frac{\sum_{j\in\ntstar}\noiseobs[j, \t]}{\sqrt{\nntstar}},
\end{align}

and in turn applying \citep[Thm.~3.3.17]{durrett2019probability}, it suffices to show that the characteristic function of $\mfk X_{\t}$ converges to that of $\xi \sim \mc N(0, 1)$, i.e., 
\begin{align}
\label{eq:psi_convergence}
    \psi_{\mfk X_{\t} \vert \lunit}(r) = \E[e^{\mbf{i}r\mfk{X}_{\t}} \vert \lunit ]  \quad \stackrel{\T \to \infty}{\longrightarrow} \quad \psi_{\xi}(r) = e^{-\frac{r^2}{2}}  \qtext{for each} r \in \real.
\end{align}
We now prove the claim~\cref{eq:psi_convergence}.
 
Since $\E[\noiseobs[j, \t]] = 0$ and $\E[(\noiseobs[j, \t])^2 = \sigma_{a}^2]$, \cite[Thm.~3.3.20]{durrett2019probability} yields that
\begin{align}
\label{eq:psi_noise}
    \psi_{\mrm{noise}}(r) \defeq \psi_{\noiseobs[j, \t]/\sigma}(r) = 1- \frac{r^2}{2} + o(r^2).
\end{align}
Then we have
\begin{align}
    \psi_{\mfk X_{\t} \vert \lunit}(r)
    = \E\brackets{\exp\parenth{\mbf{i}r \frac{\sum_{j \in \ntstar} \noiseobs[j, \t] / \sigma }{\sqrt{\nntstar}}} \vert\lunit} 
    &= \E\brackets{\E\brackets{\exp\parenth{\mbf{i}r \frac{\sum_{j \in \ntstar } \noiseobs[j, \t] / \sigma }{\sqrt{\nntstar}}} \vert \ntstar,\lunit}\vert\lunit } \\
    &\seq{(i)} \E\brackets{ \prod_{j \in \ntstar}\psi_{\noiseobs[j, \t]/\sigma\vert \ntstar,\lunit}\parenth{\frac{r}{\sqrt{\nntstar}}} \vert \lunit } 
\end{align}
where step~(i) follows from the fact that the set $\ntstar$ depends only on $\ulf$, and the observed data till time $\T-1$ under \cref{assum:policy}, and hence independent of the noise variables $\braces{\noiseobs[j, \t]}$ generated at time $\T$, and thus under \cref{assum:zero_mean_noise} the noise variables $\braces{\noiseobs[j, \t]}$ remain independent even conditional on $\ntstar$. Next, we have
\begin{align}
    \psi_{\mfk X_{\t} \vert \lunit}(r)
    = \E\brackets{ \prod_{j \in \ntstar}\psi_{\mrm{noise}}\parenth{\frac{r}{\sqrt{\nntstar}}} \vert \lunit } 
    &= \E\brackets{ \parenth{\psi_{\mrm{noise}}\parenth{\frac{r}{\sqrt{\nntstar}}}}^{\nntstar} \vert \lunit } 
    \\
    &= \sum_{k=0}^{\infty} \P(\nntstar = k  \vert \lunit ) \psi^{k}_{\mrm{noise}}\parenth{\frac{r}{\sqrt{k}}},
\end{align}
where we also use the fact the noise variables are independent of latent unit factors.
Moreover, noting that $\sabss{\psi_{\mfk X}(r)} \leq 1$ for all $r$, and any random variable $\mfk X$, we obtain that
\begin{align}
    \abss{\psi_{\mfk X_{\t} \vert \lunit}(r) - e^{-\frac{r^2}{2}}} &= \abss{\sum_{k=0}^{\infty} \P(\nntstar = k\vert \lunit) \psi^{k}_{\noiseobs[]}\parenth{\frac{r}{\sqrt{k}}} -  e^{-\frac{r^2}{2}} }
    \\
    &\leq \sum_{k=0}^{K_{\T}} \P(\nntstar = k\vert \lunit) + \abss{\psi^{K_{\T}}_{\noiseobs[]}\parenth{\frac{r}{\sqrt{K_{\T}}}} -  e^{-\frac{r^2}{2}}} \\
    &\sless{\cref{eq:psi_noise}}  \P(\nntstar \leq K_{\T}\vert \lunit) \!+\!
    \abss{\brackets{1\!-\!\frac{r^2}{2K_{\T}} \!+\! o\parenth{\frac{r^2}{K_{\T}}} }^{K_{\T}} \!-\! 
     e^{-\frac{r^2}{2}}},
     \label{eq:char_bound}
\end{align}
for any arbitrary choice of $K_{\T}$.

Repeating the arguments around \cref{eq:a_in_nstar_bound,eq:nstar_bound} from \cref{sub:proof_of_noise_conc} yields that
\begin{align}
     \P(\nntstar \leq  K_{\T}\vert \lunit) &\leq \exp\parenth{-\frac{\pmint[\t] \probparam_{\n}(\threshold_{\T}') (\NT-1)}{32}}+\exp\parenth{-\frac{\probparam_{\n}(\threshold_{\T}')(\NT-1)}{8}}, \\
     \qtext{where}
     K_{\T} &\defeq \frac{\probparam_{\n}(\threshold_{\T}')}{4} \cdot (\NT-1).
     \label{eq:k_t}
\end{align}
For this choice of $K_{\T}$~\cref{eq:k_t}, invoking \cref{eq:regularity_clt} (treating $\pmint[\t]$ as a constant as $\T$ scales) we have
\begin{align}
\label{eq:nn_star_bound}
    \P(\nntstar \leq  K_{\T}\vert \lunit) \to 0 \qtext{and} K_{\T} \to \infty \qtext{as} \T \to \infty.
\end{align}
Since $K_{\T} \to \infty$, for small $r$, using \cref{eq:psi_noise}, we find that
\begin{align}
\abss{\brackets{1\!-\!\frac{r^2}{2K_{\T}} \!+\! o\parenth{\frac{r^2}{K_{\T}}} }^{K_{\T}} \!-\! 
     e^{-\frac{r^2}{2}}}
     &\leq 
    \abss{\brackets{1\!-\!\frac{r^2}{2K_{\T}} \!+\! o\parenth{\frac{r^2}{K_{\T}}} }^{K_{\T}} \!-\! 
    \parenth{1\!-\!\frac{r^2}{2K_{\T}} }^{K_{\T} } }
    \!+\! \abss{\parenth{1-\frac{r^2}{2K_{\T}} }^{K_{\T}} \!-\! e^{-\frac{r^2}{2}}} \\
    &\leq K_{\T} \cdot o\parenth{\frac{r^2}{K_{\T}}} \!+\! \abss{\parenth{1-\frac{r^2}{2K_{\T}} }^{K_{\T}} \!-\! e^{-\frac{r^2}{2}}}
    \to 0 \qtext{as} \T \to \infty.
\end{align}
Morever, for a large $r$, we can replace the $o(r^2)$ approximation of \cref{eq:psi_noise} by $\psi_{\mrm{noise}}(r) \leq 1-r^2/2 + 2\abss{r}^3$ (see \cite[Proof of Thm.~3.3.20, Eq.~3.3.3]{durrett2019probability}),
and repeat the arguments above for large $|r|$ to find that 
\begin{align}
    \abss{\brackets{1\!-\!\frac{r^2}{2K_{\T}} \!+\! o\parenth{\frac{r^2}{K_{\T}}} }^{K_{\T}} \!-\! 
     e^{-\frac{r^2}{2}}}
     &\leq K_{\T} \cdot \order\parenth{\frac{|r|^3}{K_{\T}^{3/2}}} \!+\! \abss{\parenth{1-\frac{r^2}{2K_{\T}} }^{K_{\T}} \!-\! e^{-\frac{r^2}{2}}} \\
     &\leq \order\parenth{\frac{|r|^3}{K_{\T}^{1/2}}} \!+\! \abss{\parenth{1-\frac{r^2}{2K_{\T}} }^{K_{\T}} \!-\! e^{-\frac{r^2}{2}}},
\end{align}
which still goes to $0$ for any fixed $r$ when $K_{\T} \to \infty$. Putting the pieces together, we conclude that $\sabss{\psi_{\mfk X_{\t}\vert \lunit}(r) - e^{-\frac{r^2}{2}}}  \to 0$ as $\T \to \infty$ for all $r$, as required. The proof is now complete.

\paragraph{Proof of \cref{eq:bias_noise}}
Define $\mfk B_{\t} \defeq \sqrt{\nestnbr[\n, \t, \threshold_{\T}]} \cdot \frac{\sum_{j\in\nstar}\miss[j, t]\noiseobs[j, \t]}{\sum_{j\in\nstar}\miss[j, t]} $.
Then, we have
\begin{align}
    \sqrt{\nestnbr[\n, \t, \threshold_{\T}]}
        \parenth{\frac{\sum_{j\in\nstar}\miss[j, t]\noiseobs[j, \t]}{\sum_{j\in\nhat}\miss[j, t]} -
        \frac{\sum_{j\in\nstar}\miss[j, t]\noiseobs[j, \t]}{\sum_{j\in\nstar}\miss[j, t]}}
        = \mfk B_{\t} \cdot \frac{\sum_{j\in\nhat}\miss[j, t]-\sum_{j\in\nstar}\miss[j, t]}{\sum_{j\in\nhat}\miss[j, t]}.
\end{align}
Note that \cref{eq:nn_ratio,eq:clt_noise} implies that $\mfk B_{\t}=\order_{P}(1)$. Next, we have
\begin{align}
    \abss{\frac{\sum_{j\in\nhat}\miss[j, t]-\sum_{j\in\nstar}\miss[j, t]}{\sum_{j\in\nhat}\miss[j, t]}} \seq{\cref{eq:n_reln}} \order_{P}\parenth{\frac{\sum_{j\in\nwstar}\miss[j, t]\!-\!\sum_{j\in\nstar}\miss[j, t]}{\sum_{j\in\nstar}\miss[j, t]}}
    \!\seq{\cref{eq:a_in_nstar_bound}}\! \order_{P}\parenth{\frac{|\nwstar\Delta\nstar|}{\pmint[\t] \nnstar}}.
    \label{eq:a_diff}
\end{align}
Moreover, we have
\begin{align}
    \frac{|\nwstar\Delta\nstar|}{\pmint[\t] \nnstar}
    \seq{\cref{eq:delta_nstar},\cref{eq:nn_star_bound}} \frac{1}{\pmint[\t]} \parenth{\frac{\probparam_{\lunit}(\threshold'_{\T}+2\errterm)}{\probparam_{\lunit}(\threshold'_{\T})}-1}
    \seq{\cref{eq:regularity_clt}} o_{\T}(1).
    \label{eq:a_diff_ot}
\end{align}
Similarly, 
\begin{align}
    \abss{\sqrt{\nestnbr[\n, \t, \threshold_{\T}]} \frac{\sum_{j\in\nhat\backslash\nstar}\miss[j, t]\noiseobs[j, \t]}{\sum_{j\in\nhat}\miss[j, t]} }
    = \order_P\parenth{\frac{|\nwstar\Delta\nstar|} {\sqrt{\nestnbr[\n, \t, \threshold_{\T}]}}}
    = \order_P\parenth{\frac{|\nwstar\Delta\nstar|}{\sqrt{\pmint[\t] \nnstar}}}
\end{align}
and we have
\begin{align}
    \frac{|\nwstar\Delta\nstar|}{\sqrt{\nnstar}}
    = \order_{P}\parenth{\parenth{\frac{\probparam_{\lunit}(\threshold'_{\T}+2\errterm)}{\probparam_{\lunit}(\threshold'_{\T})}-1} \sqrt{\probparam_{\lunit}(\threshold'_{\T}) \NT}}
    \seq{\cref{eq:regularity_clt}} o_{\T}(1).
\end{align}

\paragraph{Proof of \cref{eq:nn_ratio}}
We have
\begin{align}
     \abss{\frac{\nestnbr[\n, \t, \threshold_{\T}]}{\sum_{j\in\nstar}\miss[j, t]}-1}
     &=
     \abss{\frac{\sum_{j\in\nhat}\miss[j, t]-\sum_{j\in\nstar}\miss[j, t]}{\sum_{j\in\nstar}\miss[j, t]}}
    \seq{\cref{eq:a_diff}} \order_{P}\parenth{\frac{|\nwstar\Delta\nstar|}{\pmint[\t] \nnstar}}
    \seq{\cref{eq:a_diff_ot}}
    \order_P(o_{\T}(1)),
\end{align}
as desired.

  \section{Proof of Thm.~\ref{thm:ate_asymp}: \ateconsistencyname}
    \label{sec:proof_of_thm:ate_asymp}

    We prove the two parts separately, and once again use \cref{eq:op_ot_reln} repeatedly.

    \subsection{Proof of part \cref{item:consistency_ate}: Asymptotic consistency}

    We have
    \begin{align}
    \label{eq:ate_decomp}
       \estybarall_{\threshold_{\T}} - \trueybarall
       = \frac1{\NT\T} \sum_{\t=1}^{\T}\sum_{\n=1}^{\NT} \miss \noiseobs 
       + \frac1{\NT\T} \sum_{\t=1}^{\T}\sum_{\n=1}^{\NT} (1-\miss)(\trueobs-\estobs[\n,\t, \threshold_{\T}])
    \end{align}
    To control the first term we make use of the following lemma based on martingale arguments similar to that in the proof of \cref{lemma:dist_noise_conc}(a) in \cref{sub:proof_of_distance_conc} (see \cref{sub:proof_of_lem:noise_avg} for the proof):
    \begin{lemma}
    \label{lem:noise_avg}
        Under \cref{assum:policy,assum:zero_mean_noise}, for any $\delta\in(0, 1]$, we have
        \begin{align}
             \P\brackets{|\sum_{\n=1}^{\N} \sum_{\t=1}^{\T} \miss \noiseobs| \leq 
             \nconst \sum_{\n=1}^{\N} \biggparenth{\log(\N\T/\delta) \sum_{\t=1}^{\T} \miss}^{\half} }\geq 1-\delta.
        \end{align}
    \end{lemma}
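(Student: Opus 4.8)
The plan is to rewrite the double sum as a sum over units of a per-unit martingale stopped at a controlled (random) length, and then invoke Azuma--Hoeffding, mirroring the stopping-time construction from the proof of \cref{lemma:dist_noise_conc}(a). Fix $a\in\actionset$ (as in the earlier proofs we take $a=1$ without loss of generality, so that $\miss$ is the treatment indicator $\indicator(A_{\n,\t}=a)$), and fix a unit $\n\in[\N]$. Let $\tau_{(1)}<\tau_{(2)}<\cdots$ denote the successive times at which unit $\n$ is assigned treatment $a$, setting $\tau_{(\l)}:=\T+1$ once fewer than $\l$ such times exist; each $\tau_{(\l)}$ is a stopping time for the filtration $\{\history[\t]\}_{\t=1}^{\T}$, and $K_{\n}:=\sum_{\t=1}^{\T}\miss$ equals the number of indices $\l$ with $\tau_{(\l)}\le\T$. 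Define $W_{\l}:=\indicator(\tau_{(\l)}\le\T)\,\noiseobs[\n,\tau_{(\l)}]$ and let $\mc H_{\l}$ be the associated stopping-time $\sigma$-field. Since the noise is exogenous and mean zero (\cref{assum:zero_mean_noise}) while the policy at each time is measurable with respect to the past (\cref{assum:policy}), the noise at time $\tau_{(\l)}$ keeps its unconditional law given $\{\tau_{(\l)}=\t\}$, so the computation behind \cref{eq:zero_mean_martingale} yields $\E[W_{\l}\mid\mc H_{\l}]=0$; moreover $|W_{\l}|\le\nconst$ almost surely. Hence $\{W_{\l}\}_{\l\ge1}$ is a bounded martingale difference sequence and, by construction, $\sum_{\t=1}^{\T}\miss\,\noiseobs=\sum_{\l=1}^{K_{\n}}W_{\l}$.

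Next I would apply the Azuma--Hoeffding bound \cref{eq:hpb_bound_martingale} to $\{W_{\l}\}$ at each fixed length $K\in[\T]$ and union over $K$, exactly as in \cref{eq:k_union_bnd}: with probability at least $1-\T\delta'$ one has $\abss{\sum_{\l=1}^{K}W_{\l}}\le\nconst\sqrt{K\log(2/\delta')}$ simultaneously for all $K\in[\T]$. Because $K_{\n}\le\T$, marginalizing over the value of $K_{\n}$ — the same bookkeeping that leads to \cref{eq:second_last_bound_distance}, via $\P(S_1\cap S_2)\ge\P(S_1)+\P(S_2)-1$ — gives $\P\bigl[\abss{\sum_{\t=1}^{\T}\miss\,\noiseobs}\le\nconst\sqrt{K_{\n}\log(2/\delta')}\bigr]\ge1-\T\delta'$, the bound being vacuous on $\{K_{\n}=0\}$. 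Taking $\delta'$ of order $\delta/(\N\T)$ and a union bound over $\n\in[\N]$ then yields, with probability at least $1-\delta$, that $\abss{\sum_{\t=1}^{\T}\miss\,\noiseobs}\le\nconst\bigl(\log(\N\T/\delta)\sum_{\t=1}^{\T}\miss\bigr)^{1/2}$ for every $\n$ simultaneously (up to the precise absolute constant inside the logarithm); summing over $\n$ and using the triangle inequality $\abss{\sum_{\n=1}^{\N}\sum_{\t=1}^{\T}\miss\,\noiseobs}\le\sum_{\n=1}^{\N}\abss{\sum_{\t=1}^{\T}\miss\,\noiseobs}$ gives the stated claim.

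The only genuine difficulty is the adaptivity already confronted in \cref{lemma:dist_noise_conc}(a): since the realized noise at a time feeds into later treatment assignments, the count $K_{\n}$ and the locations $\{\tau_{(\l)}\}$ are random and entangled with the noise, so one cannot simply condition on unit $\n$'s treatment pattern and treat its noise variables as independent. The stopping-time martingale together with the union over all admissible values $K_{\n}\in\{0,1,\dots,\T\}$ is exactly what preserves the per-unit $\sqrt{K_{\n}}$ scaling; everything else (Azuma, the two union bounds, and the triangle inequality) is routine.
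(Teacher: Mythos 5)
Your proposal is correct and follows essentially the same route as the paper's own proof: the per-unit stopping-time martingale $W_{\l}=\indicator(\tau_{(\l)}\le\T)\,\noiseobs[\n,\tau_{(\l)}]$, Azuma--Hoeffding at each fixed length with a union bound over $K\in[\T]$ and over units, and the final triangle inequality. The only differences are cosmetic (your $\T+1$ convention for exhausted stopping times and the acknowledged slack in the constant inside the logarithm), neither of which affects the argument.
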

    Invoking \cref{lem:noise_avg} with $\delta=\delta_{\T} = \frac{1}{\NT\T}$, we conclude that
    \begin{align}
    \label{eq:ate_term_1}
        \sum_{\t=1}^{\T}\sum_{\n=1}^{\N} \miss \noiseobs &= \order_{P}\parenth{ \sum_{\n=1}^{\N}  \biggparenth{  \log(\NT\T)\sum_{\t=1}^{\T}\miss }^{\half} }
        = \order_{P}\parenth{\sigma\NT \sqrt{ \T \log(\NT\T)}}
    \end{align}
    since $\sum_{\t=1}^{\T}\miss  \leq \T$ deterministically, which in turn implies that $\frac1{\NT\T} \sum_{\t=1}^{\T}\sum_{\n=1}^{\NT} \miss \noiseobs  = o_{P}(1)$.
    Next, applying \cref{thm:anytime_bound} with $\delta = \frac{1}{\NT\T^2}$ and a union bound, we find that
    \begin{align}
         &\frac1{\NT\T} \sum_{\t=1}^{\T}\sum_{\n=1}^{\NT} (1-\miss)(\trueobs-\estobs[\n,\t, \threshold_{\T}]) \\
         &\leq \frac1{\T}\sum_{\t=1}^{\T}\max_{\n: \miss=0} \abss{\trueobs-\estobs[\n,\t, \threshold_{\T}]} \\
         &\leq \order\parenth{\sqrt{\threstwot} +
         \frac{\sqrt{\log(\NT\T)}\sum_{\t=1}^{\T}\pmint[\t]^{-\half}/\T}
         {\sup_{\lunit[]} \sqrt{\probparam_{\lunit[], a}(\thresonet) (\NT\!-\!1)}}
            +
          \sup_{\lunit[]} \abss{\frac{\probparam_{\lunit[], a}(\threstwot)}{\probparam_{\lunit[], a}(\thresonet)}\!-\!1}\, \frac{\sum_{\t=1}^{\T}\pmint[\t]\inv}{\T }}
          \label{eq:ate_err}
     \end{align} 
     with probability at least
     \begin{align}
     \label{eq:ate_err_1}
         1- \frac{1}{\T} - \sum_{\t=1}^{\T}\sum_{\n=1}^{\NT} \sup_{\lunit[]} \exp\parenth{-\probparam_{\lunit[], a}(\thresonet)(\NT\!-\!1)},
     \end{align}
     which goes to $1$ as $\T\to \infty$ due to \cref{eq:ate_regularity_consistency}.
     Furthermore, \cref{eq:ate_regularity_consistency} also implies that the sum inside the parentheses in display~\cref{eq:ate_err} is $o_{\T}(1)$. Putting the pieces together, we conclude that
     \begin{align}
     \label{eq:ate_err_2}
         \frac1{\NT\T} \sum_{\t=1}^{\T}\sum_{\n=1}^{\NT} (1-\miss)(\trueobs-\estobs[\n,\t, \threshold_{\T}]) = o_{P}(1),
     \end{align}
     which when combined with \cref{eq:ate_decomp,eq:ate_term_1} yields the claimed consistency result.

     \subsection{Proof of part \cref{item:clt_ate}: Asymptotic normality}
     Define the averaged quantities
     \begin{align}
         \what{\overline{\theta}}^{(a)}_{\threshold_{\T}, \mrm{full}} \defeq  \frac1{\NT\T} \sum_{\t=1}^{\T}\sum_{\n=1}^{\NT} \estobs[\n,\t, \threshold_{\T}],
         \qtext{and}
         \trueybarall \defeq  \frac1{\NT\T} \sum_{\t=1}^{\T}\sum_{\n=1}^{\NT}\trueobs.
     \end{align}
     Now we note that
     \begin{align}
     \label{eq:clt_decomp}
         \sqrt{\KT}(\estybaralltwo-\trueybarall[]) &= 
         \sqrt{\KT}(\estybaralltwo-\what{\overline{\theta}}^{(a)}_{\threshold_{\T}, \mrm{full}}) + 
         \sqrt{\KT}(\what{\overline{\theta}}^{(a)}_{\threshold_{\T}, \mrm{full}}-\trueybarall)  \\
         &\qquad+\sqrt{\KT}(\trueybarall-\trueybarall[]).
     \end{align}
     Since the set $\KTset$ is drawn independently of the data and the estimates are bounded $\braces{\estobs[\n,\t, \threshold_{\T}]}$, a standard triangular array central limit theorem yields that
     \begin{align}
         \frac{\sqrt{\KT}}{\what{\sigma}_{\KTset}} (\estybaralltwo-\what{\overline{\theta}}^{(a)}_{\threshold_{\T}, \mrm{full}}) \stackrel{d}{\implies} \mc N(0, 1),
         \label{eq:ate_clt_avg}
     \end{align}
     as $\KT\to\infty$, where we note that a standard argument also yields that $\what{\sigma}^2_{a, \KTset}$~\cref{eq:sigma_nn} is a consistent estimate of the variance $\sigma_{a,nn}^2\defeq \lim_{\T\to\infty}\frac1{\NT\T} \sum_{\t=1}^{\T}\sum_{\n=1}^{\NT} (\estobs[\n,\t, \threshold_{\T}]-\overline{Y}_{\threshold_{\T}})^2$.
     Furthermore, using arguments similar to that for \cref{eq:ate_err,eq:ate_err_1,eq:ate_err_2}, we find that
     \begin{align}
          |\sqrt{\KT} (\what{\overline{\theta}}^{(a)}_{\threshold_{\T}, \mrm{full}}-\trueybarall)|
         &= \sqrt{\KT} \cdot \frac1{\NT\T} \abss{\sum_{\t=1}^{\T}\sum_{\n=1}^{\NT}(\trueobs-\estobs[\n,\t, \threshold_{\T}])} \\
         &\leq \sqrt{\KT}  \max_{\n, \t}  |\trueobs-\estobs[\n,\t, \threshold_{\T}]| \\ 
         &=\order_{P} \Big(\sqrt{\KT\threstwot} +
         \frac{\sqrt{\KT\log(\NT\T)}\sum_{\t=1}^{\T}\pmint[\t]^{-\half}}
         {\sup_{\lunit[]} \sqrt{\probparam_{\lunit[], a}(\thresonet) (\NT\!-\!1)}}
            \\
            &\qquad\qquad\qquad+ 
          \sqrt{\KT} \sup_{\lunit[]} \abss{\frac{\probparam_{\lunit[], a}(\threstwot)}{\probparam_{\lunit[], a}(\thresonet)}\!-\!1}\, \frac{\sum_{\t=1}^{\T}\pmint[\t]\inv}{\T } \Big)\\
        &\seq{\cref{eq:ate_regularity_clt}} \order_{P}(o_{\T}(1)) \seq{\cref{eq:op_ot_reln}} o_P(1).
        \label{eq:ate_clt_err}
     \end{align}
     \newcommand{\ubar}[1][\T]{\overline{u}_{#1}}
     \newcommand{\vbar}[1][\T]{\overline{v}_{#1}^{(a)}}
     Define
     \begin{align}
         \ubar \defeq \frac{1}{\NT}\sum_{\n=1}^{\NT} \lunit
         \qtext{and}
         \vbar \defeq \frac{1}{\T}\sum_{\t=1}^{\T} \ltime^{(a)}.
     \end{align}
     Then, 
     \begin{align}
         \trueybarall\!-\!\trueybarall[] \!=\!
         \angles{\ubar,\vbar} \!-\! \angles{\E[\lunit], \E[\ltime^{(a)}]}
         \!=\! \angles{\ubar\!-\!\E[\lunit],\vbar} \!+\! \angles{\E[\lunit], \vbar\!-\! \E[\ltime^{(a)}]},
     \end{align}
     and the \iid sampling of the bounded latent factors (\cref{assum:iid_unit_lf,assum:lambda_min}) implies that
     \begin{align}
         \twonorm{\ubar-\E[\lunit]} = \order_P\parenth{\frac{1}{\sqrt{\NT}}}
         \qtext{and}
         \twonorm{\vbar-\E[\ltime]} = \order_P\parenth{\frac{1}{\sqrt{\T}}}
     \end{align}
     so that the condition~\cref{eq:ate_regularity_clt} implies that
     \begin{align}
          \sqrt{\KT}(\trueybarall-\trueybarall[]) = \order_P\parenth{\sqrt{\frac{\KT}{\NT}} + \sqrt{\frac{\KT}{\T}}  } = \order_P(o_{\T}(1)) = o_{P}(1).
          \label{eq:bias_two_mean}
     \end{align}

     Putting \cref{eq:clt_decomp,eq:ate_clt_avg,eq:ate_clt_err,eq:bias_two_mean} together, and noting the assumption $\sigma^2_{a, \mrm{nn}} >c$, we conclude that
     \begin{align}
        \frac{\sqrt{\KT}}{\what{\sigma}_{\KTset}}(\estybaralltwo-\trueybarall[])
          \seq{\cref{eq:ate_clt_avg}} 
         \frac{\sqrt{\KT}}{\what{\sigma}_{\KTset}} (\estybaralltwo-\overline{Y}_{\threshold_{\T}})
         + o_{P}(1)
         \stackrel{d}{\implies} \mc N(0, 1),
     \end{align}
      which is the desired claim.

      \subsection{Proof of \cref{lem:noise_avg}}
    \label{sub:proof_of_lem:noise_avg}
    We make use of a martingale concentration argument similar to that in the proof of \cref{lemma:dist_noise_conc}(a) in \cref{sub:proof_of_distance_conc}.
    Without loss of generality we can assume $a=1$, so that we can use the simplified notation $\indicator(\miss=a) = \miss$ henceforth.

    \newcommand{\tstop}[1][\l]{\t_{(#1)}(\n)}
    \paragraph{Step~(i): Constructing the martingale}
    For any fixed $\n\in[\N]$, define the sequence $\braces{\tstop}_{\l=0}^{\infty}$ as follows: Set $\tstop[0]\defeq 0$, and for $\l = 1, 2, \ldots$, define
    \begin{align}
    \label{eq:tstop_i}
        \tstop[\l] = \begin{cases}
            \min\braces{\t: \tstop[\l-1] < \t <\T, \miss = 1 } \stext{if this problem is feasible,} \\
            \T \qtext{otherwise}
        \end{cases}
    \end{align}
    That is, $\tstop[\l]$ denotes the time point such that the unit $\n$ received the treatment $1$ for the $\l$-th time. Clearly, $\tstop \leq \T$ for all $\l$ deterministically, since  $\T$ is the total number of time points. Moreover, $\tstop$ is a stopping time with respect to the filtration $\braces{\history[\t]}_{\t=1}^{\T}$, i.e., $\braces{\tstop \leq \t} \in \history[\t]$. Let $\mc H_{\l}$ denote the sigma-field generated by the stopping time $\tstop$. Define the sequence
    \begin{align}
    \label{eq:wl_eps}
        W_{\l} \defeq \noiseobs[\n, \tstop], \qquad \l = 1, 2, \ldots, \T.
    \end{align}
    We clearly have $|W_{\l}|\leq \nconst$ for all $\l$, almost surely, as well as 
    \begin{align}
    \label{eq:wl_eps_zero}
        \E[W_{\l}\vert \mc H_{\l}] \!=\! \E\brackets{\noiseobs[\n, \tstop] \vert \mc H_{\l}} &\!=\! \E\brackets{\noiseobs[\n, \tstop]\vert \mc H_{\l}} 
        \!\seq{(i)}\!0,
    \end{align} 
    so that $\braces{W_{\l}}_{\l=0}^{\T}$ is a bounded Martingale difference sequence with respect to the filtration $\braces{\mc H_{\l}}_{\l=0}^{\T}$; here step~(i) follows from the fact that conditioned on $\tstop=\t$, the distribution of $\noiseobs$ is independent of the treatment $\miss$ at that time since the policy is sequential~(\cref{assum:policy}), and the latent time factors at time $\t$, and noise variables at time $\t$ are exogenous and drawn independently of the policy at time $\t$ (\cref{assum:zero_mean_noise}), so that the noise at time $\t$ is independent of the event $\tstop = \t$ (and thereby the sigma-algebra $\mc H_{\l}$).

    \paragraph{Step~(ii): Relating the martingale to $\sum_{\t=1}^{\T} \miss \noiseobs$}
    Define $\T_{\n} \defeq \sumt \miss $. Then we have
    \begin{align}
    \label{eq:wsum_eps}
        \sum_{\t=1}^{\T} \miss \noiseobs
        \seq{\cref{eq:tstop_i}} \sum_{\l=1}^{\T_{\n}} \noiseobs[\n, \tstop] 
         \seq{\cref{eq:wl_eps}} \sum_{\l=1}^{\T_{\n}} W_{\l}.
    \end{align}

    \paragraph{Step~(iii): Applying martingale concentration}
    Fix $\delta>0$. Now, applying Azuma Hoeffding concentration bound~\cref{eq:hpb_bound_martingale}, we find that for any fixed $K>0$, we have
    \begin{align}
        \P\brackets{\abss{\sum_{\l=1}^{K} W_{\l}} \leq \nconst \sqrt{K\log(2/\delta)} } \geq 1-\delta. 
    \end{align}
    Note that if $K=0$, the bound insider the parentheses above applies deterministically.
    Applying a union bound over $K \in [\T]$, and noting that the event $\T_{\n}>0$ implies $\T_{\n} \in [\T]$, we conclude that 
    \begin{align}
        \P\brackets{\abss{ \sum_{\l=1}^{\T_{\n}} W_{\l}} \leq  \nconst\sqrt{ \T_{\n}\log(2/\delta)}}  \geq 1-\T\delta.
    \end{align}
    
    Putting the pieces together, and taking a union bound over $\n \in [\N]$, we obtain that
    \begin{align}
        \P\brackets{\abss{ \sum_{\l=1}^{\T_{\n}} W_{\l}} \leq  \nconst \sqrt{\T_{\n}\log(2/\delta)} \stext{for all} \n \in[\N] }  \geq 1-\N\T\delta,
    \end{align}
    Finally, noting that $|\sum_{\n=1}^{\N} \sum_{\t=1}^{\T} \miss \noiseobs| \leq \sum_{\n=1}^{\N} |\sum_{\t=1}^{\T} \miss \noiseobs|$ along with \cref{eq:wsum_eps} we conclude that
    \begin{align}
        \P\brackets{|\sum_{\n=1}^{\N} \sum_{\t=1}^{\T} \miss \noiseobs| \leq 
             \nconst \sum_{\n=1}^{\N} \sqrt{\log(\N\T/\delta) \sum_{\t=1}^{\T} \miss} }\geq 1-\delta,
    \end{align}
    where we have also substituted $\delta$ by $\delta/(\N\T)$ in the bounds above. The claim now follows.

\section{Proofs of the corollaries}
\label{sec:example_details}

In this appendix, we collect the proofs related to all the corollaries presented in the main paper. We remind the reader that the big-$\order$ and $\Omega$ notation is taken with respect to $\T$.

    \subsection{Proof of \cref{cor:anytime_bound}}
    \label{sub:proof_of_cor:anytime_bound}

    For brevity, we use the shorthand $\errt \defeq (\estobs-\trueobs)^2$, and use $a \precsim b$ to denote $a = \wtil{O}(b)$.
    	\subsubsection{Proof of part \cref{item:finite}}
        Let $\N$ and $\T$ be large enough such that $\min_{\lunit[] \neq \lunit[]', \lunit[],\lunit[]' \in \mc U_{\mrm{fin}}} \norm{\lunit[]-\lunit[]'}^2_{2}$ is much larger that $2\errterm$.
        Then choosing $\threshold = 2\sigma_{a}^2 + \errterm$, we find that $\ppau(\threshold')\geq \frac1M$ (since $\ppau(0) =\frac1M$), and $\ppau(\threshold'\!+\!2\errterm)= \ppau(\threshold')$, so that the bound~\cref{eq:nn_bnd} simplifies to 
        \begin{align}
        \label{eq:finite_non_asymp_bnd}
            \errt \leq \frac{12\vconst^2 (\vconst \uconst \!+\! \nconst)^2 \sqrt{2\log(\frac{4\N\T}{\delta})}}{\lambda \pmint\sqrt{\T}} + \frac{12\sigma_{a}^2M\log(\frac{8}{\delta})} {\pmint[\t] (\N\!-\!1)} 
            \precsim \parenth{\frac{1}{\sqrt{\T^{1-2\beta}}}\!\!+ \frac{M}{\N}}.
        \end{align}
        In simple words, when (i) $\N$ and $\T$ are large enough such that the error term $\errterm$ is smaller than the minimum separation between the distinct unit factors, (ii) $\threshold$ is chosen suitably, the error guarantee from \cref{thm:anytime_bound} decreases as $\N$ and $\T$ increase, and naturally gets worse when $M$ gets large. In this case, the property (i) ensures that the inflation in the noise variance (fourth term in \cref{eq:nn_bnd}) can be reduced to zero (see \cref{rem:zero_beta}). Moreover, the bound gets worse as the exploration rate decays faster (i.e., as $\beta$ gets larger). And, to ensure a vanishing error with $\T$, we allow $\beta\leq \half$ assuming that the exploration rate decreases slower than $\T^{-\half}$ by poly-logarithmic factors.
        \subsubsection{Proof of part \cref{item:highd}}
        For readers convenience, we state the results first for $d=1$ and then a general $d$ with the choice $\threshold=2\sigma_{a}^2 + \errterm + \T^{\frac{\omega}{2}}\errterm$ and a suitably chosen $\omega \in (0, 1)$. For $d=1$, the bound~\cref{eq:nn_bnd} can be simplified to be of the following order:
        \begin{align}
           \errt 
           \precsim\frac{1}{\sqrt{\T^{1-\omega-2\beta}}} + \frac{\T^{\frac{1-\omega-2\beta}{4}} } {\N} + \frac{1}{\T^{\omega}} 
           &\stackrel{(i)}{\precsim} \begin{cases}
               \T^{-\frac{1-2\beta}{3}} \quad \stext{if} \N \gg \sqrt{\T^{1-2\beta}}, \\[2mm]
               \N^{-\frac23} \quad\ \  \ \ \stext{otherwise,}
           \end{cases}
           \label{eq:final_simple}
        \end{align}
        where step~(i) is obtained by choosing $\omega\!=\!\frac{1-2\beta}{3}$ for the first case, and $\omega = 1\!-\!2\beta \!-\! \delta_1$ with $\delta_1\! =\! \frac{4\log\N}{3\log\T}$ for the second case assuming that $1\!-\!2\beta\!>\!\frac{3\delta_1}{2}$.
        For a general $d$, similar algebra yields that
        \begin{align}
        \label{eq:highd_non_asymp_bnd}
            \errt\! 
            \precsim\frac{1}{\sqrt{\T^{1-\omega-2\beta}}} + \frac{\T^{d(\frac{1-\omega-2\beta)}{4}} } {\N} + \frac{d^2}{\T^{\omega}}  
            &\stackrel{(ii)}{\precsim} \begin{cases}
               \T^{-\frac{1-2\beta}{3}} \ \stext{if} \N^{\frac{3}{d+2}} \gg \sqrt{\T^{1-2\beta}},\\[2mm]
               \N^{-\frac2{d+2}}\, \ \stext{otherwise,}
           \end{cases}
        \end{align}
         where step~(ii) is obtained by choosing $\omega\!=\!\frac{1-2\beta}{3}$ for the first case, and $\omega = 1\!-\!2\beta \!-\! \delta_{d}$ with $\delta_{d}\! \defeq\! \frac{4\log\N}{(d+2)\log\T}$ for the second case assuming that $1\!-\!2\beta\!>\!\frac{3\delta_{d}}{2}$. When $\N$ is exponentially (in $d$) large compared to $\T$, the error decays as $\T^{-\frac{1-2\beta}{3}}$ so that the exploration rate has to decay slower than $\T^{-\half}$ to get a decaying error with $\T$. On the other hand, when $\T$ is large enough (and we can take $\delta_{d} \to 0$ so that the exploration rate can still be allowed to be close to $\T^{-\half}$), the squared error decays as $\N^{-\frac2{d+2}}$. 

	\subsection{Proof of \cref{cor:anytime_asymp}}
	\label{sub:proof_of_cor:anytime_asymp}
	For brevity, we use $a_{\T}\gg b_{\T}$ to denote that $b_{\T} = o(a_{\T})$, in this proof. 
	We note that to ensure the requirement \cref{eq:regularity_consistency} for asymptotic consistency, it suffices to ensure that
	\begin{align}
	\label{eq:suff_regularity_consistency}
		\threstwot
		\stackrel{(a)}{\ll} 1,
		\quad 
		\NT \stackrel{(b)}{\gg}  \frac{\log \T}{\pparam}
		\qtext{and}
		\abss{ \frac{\pparam[\threstwot]}{\pparam[\thresonet]}\!-\!1} \stackrel{(c)}{\ll} 1,
	\end{align}
	and to ensure the requirement \cref{eq:regularity_clt} for asymptotic normality, it suffices to ensure that
	\begin{align}
	\label{eq:suff_regularity_clt}
		\threstwot \LT
		\stackrel{(a)}{\ll} 1,
		\quad 
		\NT \stackrel{(b)}{\gg}  \frac{1}{\pparam}
		\qtext{and}
		\LT\abss{ \frac{\pparam[\threstwot]}{\pparam[\thresonet]}\!-\!1} \stackrel{(c)}{\ll} 1.
	\end{align}

    \subsubsection{Proof of part~\cref{item:finite_asymp}}
    \label{sub:proof_for_example_finite_asymp}
    First note that $\pparam[0] = \frac1{\MT}$. Furthermore, under the assumption $ \min_{\lunit[] \neq \lunit[]', \lunit[], \lunit[]' \in \mc U_{\mrm{fin}, \T} } \norm{\lunit[]\!-\!\lunit[]'}^2_{\Sigv} \gg 2\errtwo$, with $\threshold_{\T} = 2\sigma_{a}^2 + \errtwo$ we find that
\begin{align}
\label{eq:countable_param_ratio}
	\pparam[\threstwot] = \pparam[\thresonet]
	= \pparam[0]  = \frac1{\MT}
	\implies \frac{\pparam[\threstwot]}{\pparam[\thresonet]} =1,
\end{align}
since $\thresonet=0$ and $\threstwot=2\errtwo$. Moreover, the condition $1\geq \pmint = \Omega(\T^{-\beta})$ implies that
\begin{align}
\label{eq:thres_scaling_finite}
	\threstwot = \order\parenth{\sqrt{\frac{\log(\NT\T)}{\T^{1-2\beta}}}}
	\qtext{and}
	\frac{1}{\threstwot} = \order\parenth{\sqrt{\frac{\T} {\log(\NT\T)}}}.
\end{align}

\paragraph{Consistency conditions}
 Now, we can directly verify that \cref{eq:suff_regularity_consistency}(c) is satisfied due to \cref{eq:countable_param_ratio}, and \cref{eq:suff_regularity_consistency}(a) is satisfied when $\errtwo = \order(\pmint\inv\sqrt{\T^{-1}\log(\NT\T)}) \ll 1$, which in turn is satisfied when
\begin{align}
\label{eq:countable_pmin_log}
	\T^{-\beta}\sqrt{\T} \gg \sqrt{\log(\NT\T)}
	\qtext{or}
	\T^{1-2\beta} \gg \log(\NT\T)
\end{align}
since $1\geq \pmint = \Omega(\T^{-\beta})$; this condition is feasible only if $\beta<\half$. Finally, a direct verification yields that \cref{eq:suff_regularity_consistency}(b) is satisfied when 
\begin{align}
\label{eq:countable_N_log}
	\NT \gg \MT \log\T.
\end{align}
Putting together the pieces yields that the stated consistency conditions imply \cref{eq:suff_regularity_consistency} (which in turn implies \cref{eq:regularity_consistency}).

\paragraph{Normality conditions} Next, we establish that the stated CLT conditions imply \cref{eq:suff_regularity_clt}, which in turn implies \cref{eq:regularity_clt} as desired. Note that \cref{eq:countable_param_ratio} implies that \cref{eq:suff_regularity_clt}(b) holds when $\NT\gg \MT$, and \cref{eq:countable_pmin_log} directly implies that  \cref{eq:suff_regularity_clt}(c) holds. Finally, a direct verification using \cref{eq:thres_scaling_finite} shows that the condition~\cref{eq:suff_regularity_clt}(a) holds when
\begin{align}
\label{eq:lt_bound_finite}
	\T^{1-2\beta} \gg \LT^2 \log(\NT\T).
\end{align}
 Putting together the pieces yields the desired claim.

 \subsubsection{Proof for part~\cref{item:highd_asymp}}
\label{sub:proof_for_example_highd_asymp}
Use the shorthand $\omega\defeq\frac{1-2\beta}{2}$. Then with $\threshold_{\T}=2\sigma_{a}^2 \!+\! \errtwo \!+\! \T^{\omega/2} \errtwo$, we find that
\begin{align}
	\thresonet = \T^{\omega/2} \errtwo,
	\qquad
	\threstwot = (\T^{\omega/2}+2) \errtwo,
	\qtext{and}
	\frac{\threstwot}{\thresonet} = 1+\frac{2}{\T^{\omega/2}},
	\label{eq:eta_high_d}
\end{align}
which when combined with \cref{eq:errtwo} and the fact that $1 \geq \pmint = \Omega(\T^{-\beta})$ yields
\begin{align}
	\max\braces{\thresonet, \threstwot} \seq{(i)}  \order\parenth{\frac{\sqrt{\log(\NT\T)}}{\sqrt{T^{1-\omega-2\beta}}}},
	\stext{and}
	\max\braces{\frac{1}{\thresonet}, \frac{1}{\threstwot}} \seq{(ii)} \order\parenth{\frac{{\sqrt{T^{1-\omega}}}}{\sqrt{\log(\NT\T)}}},
	\label{eq:high_d_threshold_scaling}
\end{align}
Next, using \cref{eq:eta_high_d}, we find that
\begin{align}
\label{eq:phi_ratio_highd}
	\frac{\pparam[\threstwot]}{\pparam[\thresonet]}
	\asymp \parenth{\frac{\threstwot}{\thresonet}}^{\frac{d}{2}}
	\seq{\cref{eq:eta_high_d}} \parenth{1 + \frac{2}{\T^{\omega/2}}}^{\frac{d}{2}},
\end{align}

\paragraph{Consistency conditions} The display~\cref{eq:phi_ratio_highd} directly implies that
\begin{align}
\label{eq:highd_param_ratio}
	\frac{\pparam[\threstwot]}{\pparam[\thresonet]}-1
	= \order\parenth{\frac{d}{\T^{\omega/2}}}
	= o_{\T}(1) \qtext{if} \omega>0,
\end{align}
so that \cref{eq:suff_regularity_consistency}(c) is satisfied. Now, using~\cref{eq:high_d_threshold_scaling}(i), we find that \cref{eq:suff_regularity_consistency}(a) is satisfied if 
\begin{align}
\label{eq:condition_on_T}
	\T^{1-\omega-2\beta} \gg \log\T.
\end{align}
Next, using $\pparam[\thresonet]\asymp(\thresonet)^{d/2}$ and \cref{eq:high_d_threshold_scaling}(ii), we find that \cref{eq:suff_regularity_consistency}(b) is satisfied when
\begin{align}
\label{eq:highd_N_log}
	\NT \gg \parenth{ \frac{\T^{1-\omega}}{\log(\NT\T)} }^{\frac{d}{4}} \log(\NT\T).
\end{align}
Putting the pieces together, and substituting $\omega = \frac{1-2\beta}{3}$ yields the desired claim.

\subsection{Proof of \cref{cor:ate_asymp}}
\label{sub:proof_of_cor:ate_asymp}
For brevity, we use $a_{\T}\gg b_{\T}$ to denote that $b_{\T} = o(a_{\T})$ in this proof. 
Next, we note that the slight abuse of notation in the definitions of $\thresonet,\threstwot$ in \cref{thm:anytime_asymp,thm:ate_asymp}, where in a factor of $\sqrt{2}$ is multiplied to $\errtwo$ in \cref{thm:ate_asymp} has no real consequence on the discussion to follow. Moreover, several algebraic simplifications done here follow from our arguments in \cref{sub:proof_of_cor:anytime_asymp}.

Since $\pmint = \Omega(\T^{-\beta})$, we have
\begin{align}
	\frac{\sumt \pmint^{-\half}}{\T} \leq c \T^{\frac{\beta}{2}},
	\qquad
	\frac{\sumt \pmint\inv}{\T} &\leq c' \T^{\beta},  \\
	\qtext{and}
	\sum_{\t=1}^{\T} \exp\biggparenth{-\frac{1}{16}\pmint[\t]\probparam_{\lunit[], a}(\thresonet)\NT\!}
	&\leq \T \exp\parenth{-c''\T^{-\beta}\probparam_{\lunit[], a}(\thresonet)\NT\!}
\end{align}
for some universal constants $c, c', c''$.
Consequently, the following conditions suffice for \cref{eq:ate_regularity_consistency}:
\begin{align}
	\label{eq:suff_ate_regularity_consistency}
	\threstwot
	\stackrel{(a)}{\ll} 1,
	\quad 
	\NT \stackrel{(b)}{\gg}  \frac{\T^{\beta}\log(\NT\T)}{\inf_{\lunit[]} \probparam_{\lunit[], a}(\thresonet) }
	\qtext{and}
	\T^{\beta}\abss{ \frac{\pparam[\threstwot]}{\pparam[\thresonet]}\!-\!1} \stackrel{(c)}{\ll} 1.
\end{align}
Notably, \cref{eq:suff_ate_regularity_consistency}(a) is identical to \cref{eq:suff_regularity_consistency}(a), while \cref{eq:suff_ate_regularity_consistency}(b) has an extra factor of $\T^{\beta}$ on the RHS when compared to the RHS of \cref{eq:suff_regularity_consistency}(b), and \cref{eq:suff_ate_regularity_consistency}(c) has an extra factor of $\T^{\beta}$ on the LHS when compared to the LHS of \cref{eq:suff_regularity_consistency}(c).

Similarly, the following conditions suffice to ensure \cref{eq:ate_regularity_clt}:
\begin{align}
	\label{eq:suff_ate_regularity_clt}
	\KT\threstwot
	\stackrel{(a)}{\ll} 1,
	\quad 
	\NT \stackrel{(b)}{\gg}  \frac{\KT\T^{\beta}\log(\NT\T)}{\inf_{\lunit[]} \probparam_{\lunit[], a}(\thresonet) }
	\qtext{and}
	\sqrt{\KT}\T^{\beta}\abss{ \frac{\pparam[\threstwot]}{\pparam[\thresonet]}\!-\!1} \stackrel{(c)}{\ll} 1.
\end{align}

\subsubsection{Proof for \cref{item:finite_ate_asymp}: \examplediscrete}
\label{sub:proof_for_example_finite_ate_asymp}
We repeat some arguments from \cref{sub:proof_for_example_finite_asymp}. 

\paragraph{Consistency conditions}
For $\threshold_{\T}=2\sigma_{a}^2+\sqrt{2}\errtwo$, we have $\thresonet = 0$, and $\threstwot=2\sqrt{2}\errtwo$. Now repeating arguments from \cref{sub:proof_for_example_finite_asymp}, we conclude that \cref{eq:suff_ate_regularity_consistency}(a) and (c) are immediately satisfied under the scaling assumed in this part. Next, noting that $\inf_{\lunit[]} \probparam_{\lunit[], a}(\thresonet) = \frac1{\MT}$, we conclude that \cref{eq:suff_ate_regularity_consistency}(b) is satisfied if
\begin{align}
	\NT \gg \MT\T^{\beta}\log(\NT\T),
\end{align}
thereby yields that the conditions assumed for consistency imply the conditions in \cref{eq:suff_ate_regularity_consistency}. 

\paragraph{Normality conditions} First, note that \cref{eq:countable_param_ratio} immediately implies that \cref{eq:suff_ate_regularity_clt}(c) is satisfied. Next, argument similar to that for \cref{eq:lt_bound_finite} implies that \cref{eq:suff_ate_regularity_clt}(a) is satisfied if
\begin{align}
\label{eq:kt_bound_finite}
	\T^{1-2\beta} \gg \KT^2\log(\NT\T),
\end{align}
which is feasible only if $\beta\leq\half$. Finally, we can directly verify that \cref{eq:suff_ate_regularity_clt}(b) is satisfied if
\begin{align}
	\NT \gg \MT \sqrt{\T}\log(\NT\T).
\end{align}
where we assert that  \cref{eq:kt_bound_finite} holds. Putting the pieces together yields the desired claim.

\subsubsection{Proof for \cref{item:highd_ate_asymp}: \examplecontinuous}
\label{sub:proof_for_example_highd_ate_asymp}
In the discussion below, we make use of some arguments from \cref{sub:proof_for_example_highd_asymp}.

\paragraph{Consistency conditions}
 For $\threshold_{\T}\!=\!2\sigma_{a}^2 \!+\! \sqrt{2}\errtwo \!+\! \sqrt{2}\T^{\frac{\omega}{2} } \! \errtwo$, we have
 \begin{align}
	\max\braces{\thresonet, \threstwot} \seq{(i)}  \order\parenth{\frac{\sqrt{\log(\NT\T)}}{\sqrt{T^{1-\omega-2\beta}}}},
	\qtext{and}
	\max\braces{\frac{1}{\thresonet}, \frac{1}{\threstwot}} \seq{(ii)} \order\parenth{\sqrt{\frac{\T^{1-\omega}}{\log(\NT\T)}}},
	\label{eq:high_d_ate_threshold_scaling}
\end{align}
Then, using \cref{eq:highd_param_ratio}, we conclude that \cref{eq:suff_ate_regularity_consistency}(c) is satisfied if 
\begin{align}
	\beta<\omega/2.
\end{align}
Next, noting that $\inf_{\lunit[]} \probparam_{\lunit[], a}(\thresonet) \asymp (\thresonet)^{\frac{d}{2}}$ and \cref{eq:high_d_ate_threshold_scaling}(ii), \cref{eq:suff_ate_regularity_consistency}(b) is satisfied if
\begin{align}
	\NT \gg  \parenth{ \frac{\T^{1-\omega}}{\log(\NT\T)} }^{\frac{d}{4}} \T^{\beta}\log(\NT\T).
\end{align}
Finally, using \cref{eq:high_d_ate_threshold_scaling}(i), we find that \cref{eq:suff_ate_regularity_consistency}(a) is satisfied if
\begin{align}
	\T^{1-\omega-2\beta} \gg \log(\NT\T).
\end{align}
Putting the pieces together, and substituting $\omega = \frac{1+2\beta}{3}$ with $\beta<\quarter$ leads to the sufficient conditions stated in this part.

\paragraph{Normality conditions}
Next, we verify that the conditions assumed for normality imply the conditions in \cref{eq:suff_ate_regularity_clt}. Using similar arguments as above, we find that \cref{eq:suff_ate_regularity_clt}(c) is satisfied if 
\begin{align}
	\KT \ll \T^{\omega-2\beta}.
\end{align}
Using \cref{eq:high_d_ate_threshold_scaling}(i), we find that \cref{eq:suff_ate_regularity_clt}(a) is satisfied if 
\begin{align}
	\KT \ll \frac{\T^{\frac{1-\omega-2\beta}{2}}}{ \sqrt{\log(\NT\T)} }.
\end{align}
Moreover, \cref{eq:suff_ate_regularity_clt}(b) is satisfied if
\begin{align}
	\NT \gg \parenth{ \frac{\T^{1-\omega}}{\log(\NT\T)} }^{\frac{d}{4}} \KT \T^{\beta}\log(\NT\T).
\end{align}
Substituting $\omega = \frac{1+2\beta}{3}$, with $\beta<\quarter$ yields the desired claim.

    \section{Detailed deferred from experiments}

    We first discuss our strategy for hyper-parameter tuning and variance estimation followed by additional details about the experiments.

    \subsection{Tuning $\threshold$ and estimating $\sigma$} 
    \label{sub:tune_eta}
    We do a data-split to tune $\threshold$. Consider a two-way split of time points $[\T]$---denoted by $\mc T_1$ and $\mc T_2$. First, we compute the distances $\rho_{\mrm{vec}, a}\defeq\braces{\estdist}_{i\neq j}$~\cref{eq:time_nbr_dist} for all pairs of units and for each treatment $a$ using only the data in time $\mc T_1$. Next, given a threshold $\threshold$, let $\estobs[\n, \t, \threshold]$ denote the estimate for $\trueobs$ using neighbors using the distances in $\rho_{\mrm{vec}, a}$. Now, we compute a grid of $k$ suitably chosen percentiles from the s the distance vector $\rho_{\mrm{vec}, a}$, and call this the candidate set $\eta_{\mrm{set}, a} \defeq \braces{\threshold_1, \ldots, \threshold_{k}}$. Then, for each treatment $a$, we tune $\threshold$ as
    \begin{align}
    \label{eq:eta_tune}
        \threshold_{\trm{tuned}, a} = \arg\min_{\eta \in \eta_{\mrm{set}, a} } \what\sigma^2_{a, \eta}
        \qtext{where} \what\sigma^2_{a, \eta} \defeq\frac{\sum_{\n\in\N}\sum_{\t\in\mc T_2} \indicator(\miss=a) (\obs-\estobs[\n, \t, \threshold])^2}{\sum_{\n\in\N}\sum_{\t\in\mc T_2} \indicator(\miss=a) }.
    \end{align}
    Note that $\what\sigma^2_{a, \eta}$ is effectively the mean squared error on the validation set.

\subsection{Consistent estimate for the noise variance}
\label{sub:variance_estimate}

Our next result establishes the consistency of the noise variance estimate obtained using the procedure described above.

\begin{theorem}[\varestimatename]
\label{thm:var_est}
Consider the set-up from \cref{thm:ate_asymp} such that the sequence $\braces{\threshold_{\T}', \N_{\T}, \pmint}_{\T=1}^{\infty}$ satisfies \cref{eq:ate_regularity_consistency} after substituting $\pmint^{-1/2}, \pmint\inv$ in place of $\frac1{\T}\sum_{\t=1}^{\T}\pmint[\t]^{-1/2}, \frac1{\T}\sum_{\t=1}^{\T}\pmint[\t]^{-1/2}$ respectively. Then the estimate $\what{\sigma}^2_{\threshold_{\T}}$~\cref{eq:eta_tune} using $\mc T_1 = [\T/2]$, and $\mc T_{2} = \sbraces{\frac{\T}{2}\!+\!1,\! \ldots, \!\T}$  satisfies $\what{\sigma}^2_{\threshold_{\T}} \stackrel{p}{\longrightarrow} \sigma_{a}^2$ as $\T\to\infty$.
\end{theorem}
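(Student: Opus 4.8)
The plan is to fix the treatment $a$, set $D_{\T}\defeq\sum_{\n=1}^{\NT}\sum_{\t\in\mc T_2}\indicator(\miss=a)$ for the (random) denominator of $\what{\sigma}^2_{\threshold_{\T}}$ in \cref{eq:sigmahat}, and, using \cref{eq:model_mc} to write the residual as $\obs-\estobs[\n,\t,\threshold_{\T}]=\noiseobs+(\trueobs-\estobs[\n,\t,\threshold_{\T}])$, expand the square to get
\begin{align*}
\what{\sigma}^2_{\threshold_{\T}}
&=\frac1{D_{\T}}\sum_{\n,\t\in\mc T_2}\indicator(\miss=a)\,\noiseobs^2
+\frac2{D_{\T}}\sum_{\n,\t\in\mc T_2}\indicator(\miss=a)\,\noiseobs\bigparenth{\trueobs-\estobs[\n,\t,\threshold_{\T}]}\\
&\qquad+\frac1{D_{\T}}\sum_{\n,\t\in\mc T_2}\indicator(\miss=a)\,\bigparenth{\trueobs-\estobs[\n,\t,\threshold_{\T}]}^2
\;=:\;I_1+I_2+I_3 .
\end{align*}
I would then show (i) $D_{\T}=\Omega_{P}(\NT\T\pmint[\T])$, so $D_{\T}\to\infty$; (ii) $I_1\stackrel{p}{\longrightarrow}\sigma^2$; (iii) $I_3\stackrel{p}{\longrightarrow}0$; and finally (iv) $I_2\stackrel{p}{\longrightarrow}0$ from $|I_2|\le2\sqrt{I_1}\sqrt{I_3}$ (Cauchy--Schwarz) together with $I_1=\order_{P}(1)$. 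Summing the three pieces gives $\what{\sigma}^2_{\threshold_{\T}}\stackrel{p}{\longrightarrow}\sigma^2$.

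For (i): by \cref{eq:pmin}, $\E[\indicator(\miss=a)\mid\history[\t-1]]=\policy_{\t,\n}(a)\ge\pmint[\t]\ge\pmint[\T]$ a.s., so $\sum_{\n,\t\in\mc T_2}(\indicator(\miss=a)-\policy_{\t,\n}(a))$ is a martingale with increments bounded by $1$ over $\NT\T/2$ terms; Azuma makes it $\order_{P}(\sqrt{\NT\T})$, while its compensator is at least $\NT(\T/2)\pmint[\T]=\Omega(\NT\T^{1-\beta})$, which dominates since $\beta\le\tfrac12$ and $\NT\to\infty$. For (ii): write $I_1-\sigma^2=D_{\T}^{-1}\sum_{\n,\t\in\mc T_2}\indicator(\miss=a)(\noiseobs^2-\sigma^2)$; the noise being exogenous (\cref{assum:zero_mean_noise}), each summand has conditional mean $0$ given $\history[\t-1]$, and conditioning on the relevant history shows distinct summands are uncorrelated, so by boundedness ($|\noiseobs^2-\sigma^2|\le\nconst^2$ a.s.) the numerator has variance $\order(\NT\T)$ and is $\order_{P}(\sqrt{\NT\T})$ by Chebyshev; dividing by $D_{\T}=\Omega_{P}(\NT\T\pmint[\T])$ gives $I_1-\sigma^2=\order_{P}\bigparenth{(\NT\T^{1-2\beta})^{-1/2}}=o_{P}(1)$.

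The crux is (iii). Since $I_3\le\max_{\n\in[\NT],\,\t\in\mc T_2}(\trueobs-\estobs[\n,\t,\threshold_{\T}])^2$, it suffices to make this maximum $o_{P}(1)$. Here $\estobs[\n,\t,\threshold_{\T}]$ is the sample-split estimator: the neighbor set is read off from the distances \cref{eq:time_nbr_dist} computed over $\mc T_1=[\T/2]$ only, while the averaged outcome is taken at $\t\in\mc T_2$. The per-entry bound \cref{eq:nn_bnd} of \cref{thm:anytime_bound} still applies, with two cosmetic modifications: the distance-concentration event $\event[1]$ of \cref{lemma:dist_noise_conc}(a) is run over $\mc T_1$, inflating $\errterm$ by $\sqrt2$ and its probability accordingly; and, because for $\t\in\mc T_2$ the time-$\t$ noise is independent of the $\history[\T/2]$-measurable distances and, conditionally on $\history[\t-1]$ and the time-$\t$ treatments, remains i.i.d.\ mean-zero, the variance is controlled without the sandwich decomposition, so \cref{eq:nn_bnd} holds \emph{a fortiori}. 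I would then apply \cref{eq:nn_bnd} with $\delta=(\NT\T^2)^{-1}$ and union-bound over $(\n,\t)\in[\NT]\times\mc T_2$: the accumulated failure probability is at most $\T^{-1}+2\NT\T\sup_{\lunit[]}\exp\bigparenth{-\tfrac1{16}\pmint[\T]\,\probparam_{\lunit[], a}(\thresonet)(\NT-1)}$, which vanishes by the second term of \cref{eq:ate_regularity_consistency} once the time-average $\tfrac1\T\sum_{\t}\pmint[\t]^{-1}$ is replaced by $\pmint[\T]^{-1}$ as in the hypothesis of \cref{thm:var_est}; and on the good event the right side of \cref{eq:nn_bnd}, bounded uniformly using $\pmint[\t]\ge\pmint[\T]$ for $\t\in\mc T_2$ (and absorbing the $\max\{1,\cdot\}$ corrections, which are lower order exactly as in \cref{cor:anytime_bound}), reduces to the four quantities appearing in \cref{eq:ate_regularity_consistency} under the same substitution, hence is $o(1)$. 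This mirrors the uniform-error step of the proof of \cref{thm:ate_asymp} (cf.\ \cref{eq:ate_err,eq:ate_err_2}); in particular, on this event every $\nestnbr[\n,\t,\threshold_{\T}]>0$ so the fallback \cref{eq:no_nbr_est} is never triggered. Together with $D_{\T}\to\infty$ this gives $I_3\stackrel{p}{\longrightarrow}0$ and completes the argument.

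The main obstacle is precisely this step (iii): verifying that \cref{eq:nn_bnd} survives the sample-split modification---the place where the split genuinely helps, by decoupling the estimated neighbor set from the prediction-time noise and so removing the sequential-policy variance inflation (the fourth term of \cref{eq:nn_bnd})---and checking that the stated regularity conditions of \cref{thm:var_est}, which trade the time-averages $\tfrac1\T\sum_{\t}\pmint[\t]^{-1/2},\ \tfrac1\T\sum_{\t}\pmint[\t]^{-1}$ for their worst-case counterparts $\pmint[\T]^{-1/2},\ \pmint[\T]^{-1}$ appropriate to predictions over the second half of the horizon, are exactly what forces both the accumulated failure probability and the uniform per-entry bound to vanish. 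The remaining pieces ($D_{\T}$, $I_1$, $I_2$) are routine Chernoff/Azuma, second-moment, and Cauchy--Schwarz estimates.
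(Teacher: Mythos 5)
Your proposal is correct and, at the top level, follows the same route as the paper: expand the squared residual into a pure-noise term, a cross term, and a squared-error term; show the first converges to $\sigma^2$, the other two to zero, with the squared-error term handled by a union bound over $(\n,\t)\in[\NT]\times\mc T_2$ of the per-entry bound from \cref{thm:anytime_bound} (suitably adapted to the sample split), exactly as the paper does by mimicking \cref{eq:ate_err,eq:ate_err_1}. Where you differ is in the two ``routine'' pieces, and your versions are arguably cleaner. For the cross term the paper proves a separate martingale lemma (\cref{lem:noise_sq_avg}, inequality \cref{eq:noise_cross}) built on the stopping-time construction of \cref{lem:noise_avg}, whose validity hinges on the observation that the split makes $\estobs[\n,\t,\threshold_{\T}]$ independent of $\noiseobs$ so that the increments are conditionally mean-zero; your deterministic Cauchy--Schwarz bound $|I_2|\le 2\sqrt{I_1}\sqrt{I_3}$ bypasses that lemma entirely and needs no independence at all, at the (immaterial) cost of giving a rate for $I_2$ only as good as $\sqrt{I_3}$. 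For the pure-noise term the paper again uses the stopping-time/Azuma machinery, which bounds $|\sum_{\n}\sum_{\t}\cdots|$ by $\sum_{\n}|\sum_{\t}\cdots|$ and thus forgoes cross-unit cancellation, yielding $\order_P(\log(\NT\T)/(\pmint[\T]\sqrt{\T}))$; your uncorrelated-increments/Chebyshev argument keeps that cancellation and gives the sharper $\order_P((\NT\T^{1-2\beta})^{-1/2})$ --- both suffice under the stated conditions. Your treatment of the key step $I_3=o_P(1)$ is in fact more explicit than the paper's about what the sample split changes in \cref{thm:anytime_bound} (the $\sqrt{2}$ inflation of $\errterm$ and the decoupling that removes the need for the sandwich argument on the variance), and you correctly identify that the substituted worst-case factors $\pmint[\T]^{-1/2},\pmint[\T]^{-1}$ in the hypothesis are exactly what make the uniform bound and the accumulated failure probability vanish. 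No gaps.
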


\paragraph{Interpretation} \cref{thm:var_est} establishes the consistency of our variance estimate $\what{\sigma}$ assuming that a good choice of $\threshold_{\T}$ is already known. However, a good choice of $\threshold_{\T}$ is close to $2\sigma_{a}^2$. This requirement is similar in spirit to the analysis of iterative algorithms that require a good initialization. However, this requirement also appears circular as the knowledge of a good $\threshold$ would imply the knowledge of $\sigma$. Nevertheless, we can construct a practical procedure that builds on this circular reasoning to both tune $\threshold$ and estimate $\sigma$: (i) Initialize $\threshold$, (ii) estimate $\sigma$ using \cref{eq:eta_tune}, (iii) update $\threshold$ to $2\what{\sigma}^2$, and (iv) iterate steps (ii) and (iii) till the estimate for $\sigma$ converges. In full generality, our result do not immediately establish that this procedure contracts towards $2\sigma_{a}^2$ due to unknown constants that arise in the analysis. However, if the multiplicative constant in front of $\threshold-2\sigma_{a}^2$ in display~\cref{eq:nn_bnd} from \cref{thm:anytime_bound} satisfies $\frac{2\vconst^2}{\lambda_{a}}\leq C$, then a straightforward modification of the proof of \cref{thm:var_est} would yield that this iterative procedure contracts towards $\sigma_{a}^2$ if the starting guess is bounded by $\frac{2C-1}{C-1}\sigma_{a}^2$ under suitable regularity conditions like that in \cref{thm:var_est}.
 We note that the constant $\vconst^2/\lambda_{a}$ is necessarily greater than or equal to $1$ due to their definitions (see \cref{assum:lambda_min}).

\begin{remark}
    \label{rem:simple_eta}
    The condition~\cref{eq:ate_regularity_consistency} implies that $\threshold_{\T} \to 2\sigma_{a}^2$ thereby motivating a practical heuristic (with an asymptotic justification) of setting the variance estimate as $\what{\sigma}^2 = \frac{\threshold_{\trm{tuned}}}{2}$, where the hyperparameter $\threshold$ is tuned as per the discussion above and in \cref{sub:tune_eta}.  In practice, we can also set our estimate for noise variance as $\what{\sigma}^2_{\threshold_{\trm{tuned}}}$ given by \cref{eq:eta_tune} since asymptotically, for our generative model and under regularity conditions on the decay of $\eta_{\T}$, we would have that $\what{\sigma}^2_{\threshold_{\trm{tuned}}} \approx \frac{\threshold_{\trm{tuned}}}{2}$.
    We highlight that all the steps are repeated separately for each treatment $a$ in our simulations.
    To improve finite sample coverage, we add to our asymptotic variance estimate the within neighbor variance estimate. That is, we replace the interval \cref{eq:unit_ci} by
    \begin{align}
    \label{eq:unit_ci_adj}
        \biggparenth{\estobs \!\!-\!\! \frac{\zalpha\,\wtil{\sigma}_a}{\sqrt{\nestnbr}},\,\,
        \estobs \!\!+\!\! \frac{\zalpha\,\wtil{\sigma}_a}{\sqrt{\nestnbr}}
        }
        \qtext{where}
       \wtil{\sigma}_a =\what{\sigma}_a + \what{\sigma}_{\n, \t, a}
    \end{align}
    and $\what{\sigma}_{\n, \t, a}^2 =\frac{\sum_{j=1}^{\N}\indicator(\miss=a)(\obs[j, \t]-\estobs)^2}{\sum_{j=1}^{\N}\indicator(\miss=a)}$, where $\what \sigma_a^2$ is the noise variance estimate. 
    See \cref{sec:exp_details} for empirical results for tuning $\eta$ in our simulations and HeartSteps.
\end{remark}

\paragraph{Examples} We reconsider the examples \cref{item:finite_ate_asymp,item:highd_ate_asymp}, with $\pmint = \Omega(\T^{-\beta})$. In this case, we find that both $\pmint^{-1/2}$ and $\frac1{\T}\sum_{\t=1}^{\T}\pmint[\t]^{-1/2}$ are $\order(\T^{\frac{\beta}{2}})$, and both $\pmint^{-1}$ and $\frac1{\T}\sum_{\t=1}^{\T}\pmint[\t]^{-1}$ are $\order(\T^{\beta})$; these upper bounds are what we used in establishing the sufficient conditions \cref{thm:ate_asymp}. Thus, we conclude that for examples \cref{item:finite_ate_asymp,item:highd_ate_asymp}, with the assumed conditions, \cref{thm:var_est} implies that $\what{\sigma}$ is consistent estimator for $\sigma$.

  \subsection{Proof of \lowercase{\Cref{thm:var_est}}: \varestimatename}
    \label{sec:proof_of_thm:var_est}
    Without loss of generality, we can assume $a=1$, and we use the simplified notation $\indicator(\miss=a) = \miss$ in the proof.
    We have
    \begin{align}
        \sumN \sum_{\t> \T/2}\miss (\obs\!-\!(\estobs)^{\dagger})^2 
        = \sumN \sum_{\t> \T/2}\miss  \parenth{(\noiseobs)^2 
        + 2\noiseobs(\trueobs\!-\!(\estobs)^{\dagger}) + (\trueobs\!-\!(\estobs)^{\dagger})^2}.
    \end{align}
    Next, we make use of the following lemma which can be proven essentially by mimicking arguments from the proof of \cref{lem:noise_avg} (see the end of this section for a proof sketch).
    \begin{lemma}
        \label{lem:noise_sq_avg}
        Under \cref{assum:policy,assum:zero_mean_noise}, for any $\delta\in(0, 1]$, we have
        \begin{align}
             \P\brackets{|\sum_{\n=1}^{\N} \sum_{\t>\T/2} \miss ((\noiseobs)^2-\sigma_{a}^2)| \leq
             \nconst^2 \sum_{\n=1}^{\N} \biggparenth{\log(\N\T/\delta) \sum_{\t=1}^{\T} \miss}^{\half} }\geq 1-\delta.
             \label{eq:noise_sq}
        \end{align}
        Furthermore, if \cref{assum:bilinear,assum:lambda_min,assum:iid_unit_lf} also hold, then
         \begin{align}
             \P\brackets{|\sum_{\n=1}^{\N} \sum_{\t> \T/2}\miss\noiseobs(\trueobs\!-\!(\estobs)^{\dagger})| \leq
             c \sum_{\n=1}^{\N} \biggparenth{\log(\N\T/\delta) \sum_{\t=1}^{\T} \miss}^{\half} }\geq 1-\delta.
             \label{eq:noise_cross}
        \end{align}
        for $c=2\nconst (\uconst\vconst+\nconst)$.
    \end{lemma}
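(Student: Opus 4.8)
The plan is to establish both inequalities by the same per-unit stopping-time martingale argument used in the proof of \cref{lem:noise_avg}, only substituting a different summand. Fix a unit $\n$; for $\l = 1, 2, \dots$, let $\t_{(\l)}(\n)$ denote the time of the $\l$-th receipt of treatment $a$ by unit $\n$ among the times in $\mc T_2 = \braces{\T/2+1, \dots, \T}$ (set it to $\T$ if no such time exists), a stopping time with respect to the natural filtration $\braces{\history[\t]}$, and let $\mc H_\l$ be the accompanying $\sigma$-field generated by $\t_{(\l)}(\n)$ (to be enriched below when needed). In each case we will produce a sequence $W_\l$ with $\abss{W_\l} \le M$ almost surely and $\E[W_\l \mid \mc H_\l] = 0$, so that $\braces{W_\l}$ is a bounded martingale difference sequence; then the sum appearing on the left of the corresponding inequality equals $\sum_{\l=1}^{N'_\n} W_\l$ with $N'_\n \defeq \sum_{\t>\T/2}\miss \le \sum_{\t=1}^{\T}\miss$, and applying the Azuma--Hoeffding bound~\cref{eq:hpb_bound_martingale}, a union bound over the at most $\T$ possible values of $N'_\n$, a union bound over $\n \in [\N]$, and the substitution $\delta \mapsto \delta/(\N\T)$ yields the claim --- exactly the three steps of \cref{sub:proof_of_lem:noise_avg}.

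For~\cref{eq:noise_sq} take $W_\l \defeq \noiseobs[\n,\t_{(\l)}(\n)]^2 - \sigma^2$. Boundedness by $\nconst^2$ is immediate from \cref{assum:zero_mean_noise}, since $0 \le \noiseobs^2 \le \nconst^2$ and $0 \le \sigma^2 \le \nconst^2$; this is the source of the factor $\nconst^2$ on the right-hand side. The martingale-difference property mirrors~\cref{eq:wl_eps_zero}: conditioned on $\t_{(\l)}(\n) = \t$, the noise $\noiseobs$ is exogenous and, by \cref{assum:policy}, independent of the treatments at time $\t$ and hence of the event $\braces{\t_{(\l)}(\n) = \t}$ and of $\mc H_\l$, so its conditional second moment is $\sigma^2$, giving $\E[W_\l \mid \mc H_\l] = 0$.

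For~\cref{eq:noise_cross} take $W_\l \defeq \noiseobs[\n,\t_{(\l)}(\n)]\bigl(\trueobs[\n,\t_{(\l)}(\n)] - (\estobs[\n,\t_{(\l)}(\n)])^{\dagger}\bigr)$. Boundedness by $c = 2\nconst(\uconst\vconst + \nconst)$ follows because Cauchy--Schwarz together with \cref{assum:bilinear,assum:lambda_min,assum:iid_unit_lf} gives $\abss{\trueobs} = \abss{\angles{\lunit^{(a)}, \ltime^{(a)}}} \le \uconst\vconst$, while $(\estobs)^{\dagger}$ is an average of observations $\obs[j,\t]$ (or the fallback~\cref{eq:no_nbr_est}), each bounded in magnitude by $\uconst\vconst + \nconst$, so $\abss{\trueobs - (\estobs)^{\dagger}} \le 2(\uconst\vconst + \nconst)$; multiplying by $\abss{\noiseobs} \le \nconst$ gives $\abss{W_\l} \le c$. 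For the martingale-difference property the key structural point is that the estimate $(\estobs[\n,\t])^{\dagger}$ used here is built only from the distances $\rho_{\mrm{vec}}$ computed on $\mc T_1$ and from an average of the \emph{other} units' time-$\t$ observations $\braces{\obs[j,\t]}_{j \ne \n}$, so it never references $\noiseobs[\n,\t]$; enlarging $\mc H_\l$ to contain the $\mc T_1$-data, all latent factors, all treatments, and $\braces{\obs[j,\t_{(\l)}(\n)]}_{j \ne \n}$ makes $\trueobs[\n,\t_{(\l)}(\n)] - (\estobs[\n,\t_{(\l)}(\n)])^{\dagger}$ measurable with respect to $\mc H_\l$, whereas $\noiseobs[\n,\t_{(\l)}(\n)]$ remains conditionally mean zero --- the time-$\t$ noise of unit $\n$ (with $\t > \T/2$) being exogenous, independent across units, independent of the $\mc T_1$-data, and independent of the policy and hence of the stopping time. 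Thus $\E[W_\l \mid \mc H_\l] = \bigl(\trueobs - (\estobs)^{\dagger}\bigr)\,\E[\noiseobs[\n,\t_{(\l)}(\n)] \mid \mc H_\l] = 0$.

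\textbf{Main obstacle.} The only step that is not a verbatim transfer of \cref{sub:proof_of_lem:noise_avg} is the martingale-difference verification for~\cref{eq:noise_cross}: one must enlarge the filtration so that the nontrivial multiplicand $\trueobs - (\estobs)^{\dagger}$ becomes measurable while the single noise variable $\noiseobs[\n,\t]$ stays exogenous and conditionally mean zero. This is exactly where the sample split --- $\mc T_1$ feeding only the distance and neighbor computation, $\mc T_2$ feeding only the sum --- is essential, together with the observation that $(\estobs)^{\dagger}$ never uses unit $\n$'s own noise at the evaluation time. Once this is in place, the remaining Azuma--Hoeffding and union-bound steps are identical to those in \cref{sub:proof_of_lem:noise_avg}.
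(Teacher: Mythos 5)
Your proposal is correct and follows essentially the same route as the paper: the same per-unit stopping-time martingale construction as in \cref{lem:noise_avg}, with $W_{\l}=\noiseobs[\n,\tstop]^2-\sigma^2$ (bounded by $\nconst^2$) for \cref{eq:noise_sq} and $W_{\l}=\noiseobs[\n,\tstop](\trueobs[\n,\tstop]-(\estobs[\n,\tstop,\threshold_{\T}])^{\dagger})$ (bounded by $2\nconst(\uconst\vconst+\nconst)$) for \cref{eq:noise_cross}, followed by Azuma--Hoeffding, the union bounds, and the substitution $\delta\mapsto\delta/(\N\T)$. Your explicit enlargement of the filtration to make $\trueobs-(\estobs)^{\dagger}$ measurable while keeping $\noiseobs[\n,\t]$ conditionally mean zero is a careful spelling-out of the paper's terser remark that the estimate does not depend on unit $\n$'s own time-$\t$ noise because of the data split.
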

    Moreover, \cref{lem:bc_bound} and a standard union bound implies that
    \begin{align}
    \label{eq:tij_bnd}
        \P\brackets{ \sum_{\t> \T/2}\miss  \geq \frac{\pmint\T}{4}, \stext{for all} \n \in [\N]} \geq 1-\frac{1}{\N\T}, \qtext{if} \pmint\geq \frac{16\log(\N\T)}{\T}.
    \end{align}
    Putting together \cref{lem:noise_sq_avg,eq:tij_bnd}  with $\delta= \delta_{\T}= \frac{1}{\NT\T}$, we conclude that
    \begin{align}
        \abss{\frac{\sumN \sum_{\t> \T/2}\miss (\noiseobs)^2}{\sumN \sum_{\t> \T/2}\miss}-\sigma_{a}^2} &= \order_P\parenth{\frac{\log(\NT\T)}{\pmint\sqrt{\T}}} \seq{(i)} o_{P}(1),
        \\
        \abss{\frac{\sumN \sum_{\t> \T/2}\miss\noiseobs(\trueobs\!-\!(\estobs)^{\dagger})}{\sumN \sum_{\t> \T/2}\miss}} &= \order_P\parenth{\frac{\log(\NT\T)}{\pmint\sqrt{\T}}} \seq{(ii)} o_P(1).
    \end{align}
    where steps~(i) and (ii) follow from the the variant of \cref{eq:ate_regularity_consistency} assumed in \cref{thm:var_est}.
    Moreover, applying \cref{thm:anytime_bound} and mimicking steps from \cref{eq:ate_err,eq:ate_err_1}, we find that
    \begin{align}
         &\frac{\sumN \sum_{\t> \T/2}\miss(\trueobs\!-\!(\estobs)^{\dagger})^2}{\sumN \sum_{\t> \T/2}\miss}\\
          &\leq \max_{\n,\t>\T/2} (\trueobs\!-\!(\estobs)^{\dagger})^2 \\
         &=\order_{P}\biggparenth{\threstwot +
         \frac{\log(\NT\T)}
         {\pmint\sup_{\lunit[]} \probparam_{\lunit[], a}(\thresonet) (\NT\!-\!1)}
            +
          \frac{1}{\pmint^2}\sup_{\lunit[]} \biggparenth{\frac{\probparam_{\lunit[], a}(\threstwot)}{\probparam_{\lunit[], a}(\thresonet)}\!-\!1}^2}
          &\seq{(i)} o_P(1),
    \end{align}
    where step~(i) follows from the the variant of \cref{eq:ate_regularity_consistency} assumed in \cref{thm:var_est}.
    Putting the pieces together yields the claim.

    \paragraph{Proof of \cref{lem:noise_sq_avg}}
    The proof follows essentially using the same stopping time construction as in the proof of \cref{lem:noise_avg}, so we only sketch out the definitions of $W_{\l}$ that replace~\cref{eq:wl_eps}. For establishing \cref{eq:noise_sq}, we use $W_{\l} = (\noiseobs[\n, \tstop])^2-\sigma_{a}^2$ and repeat arguments from \cref{sub:proof_of_lem:noise_avg} by noting that $|W_{\l}| \leq \nconst^2$ almost surely, and $\E[W_{\l}\vert \mc H_{\l}]=0$ using the same argument as in \cref{eq:wl_eps_zero}. Next, to establish \cref{eq:noise_cross}, we use similar arguments with $W_{\l} = \noiseobs[\n, \tstop](\trueobs[\n, \tstop]\!-\!(\estobs[\n, \tstop, \threshold_{\T}])^{\dagger})$, after noting that $|W_{\l}| \leq 2\nconst|\vconst\uconst+\nconst|$ almost surely, and that $\E[W_{\l}\vert \mc H_{\l}] = 0$ since by construction the estimate $\estobs[\n, \tstop, \threshold_{\T}]$ does not depend on $\noiseobs[\n, \tstop]$ (due to the data split).

\section{Deferred details and results from Sec.~\ref{sec:experiments}}
\label{sec:exp_details}
In this appendix, we collect supplementary information for the experimental results presented in \cref{sec:experiments}.

\subsection{Deferred details for simulations from \cref{sec:simulations}}
\label{sub:sampling}
We start by presenting additional set of results in our simulations in \cref{fig:ate_1_plot}, which is an analog of \cref{fig:ate_0_plot} except that we now choose a non-zero value of ATE. 
    \begin{figure}[!t]
        \centering
        \begin{tabular}{c}
        \includegraphics[width=\linewidth]{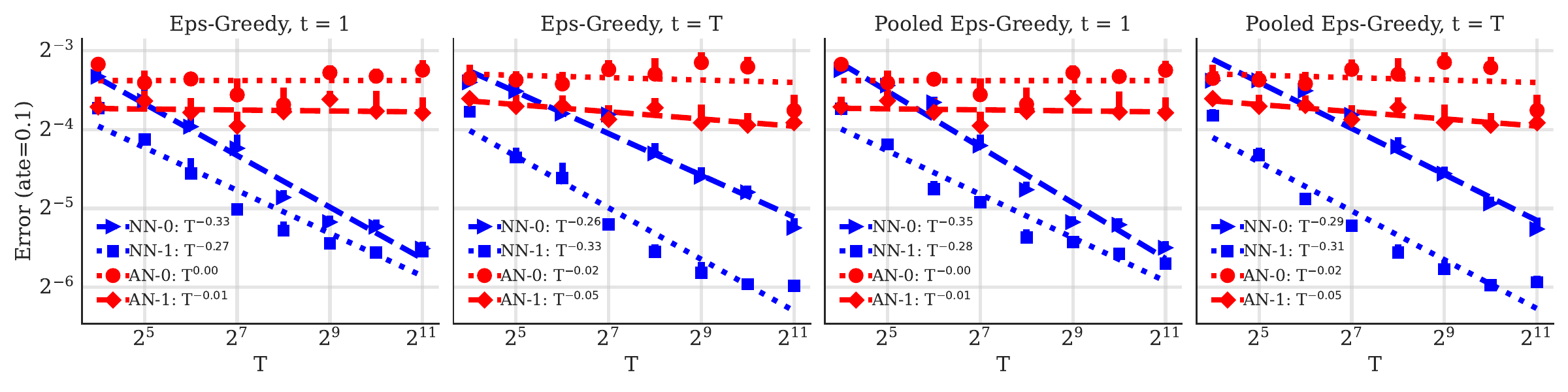} \\
        \includegraphics[width=\linewidth]{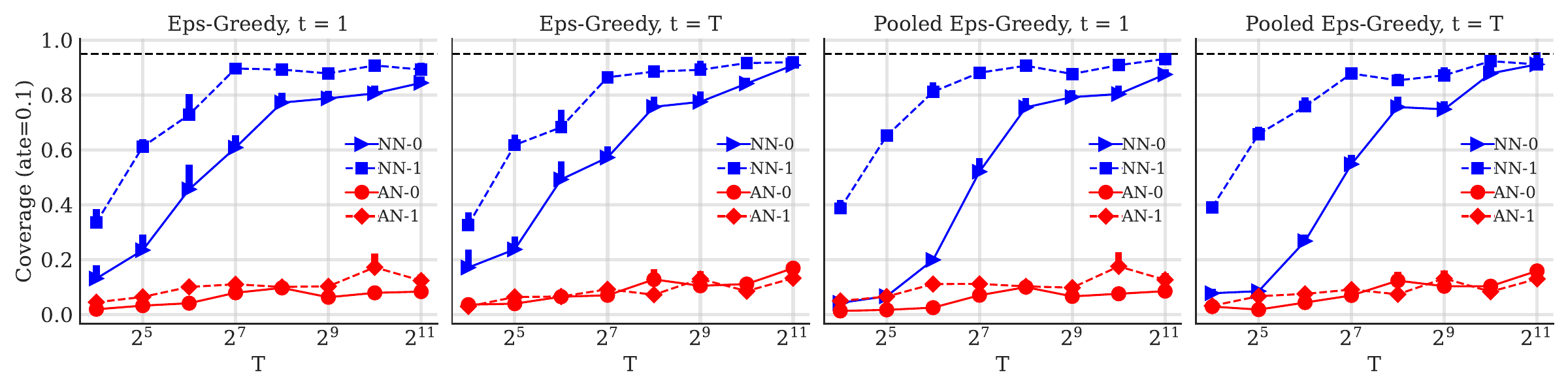} \\
        \includegraphics[width=\linewidth]{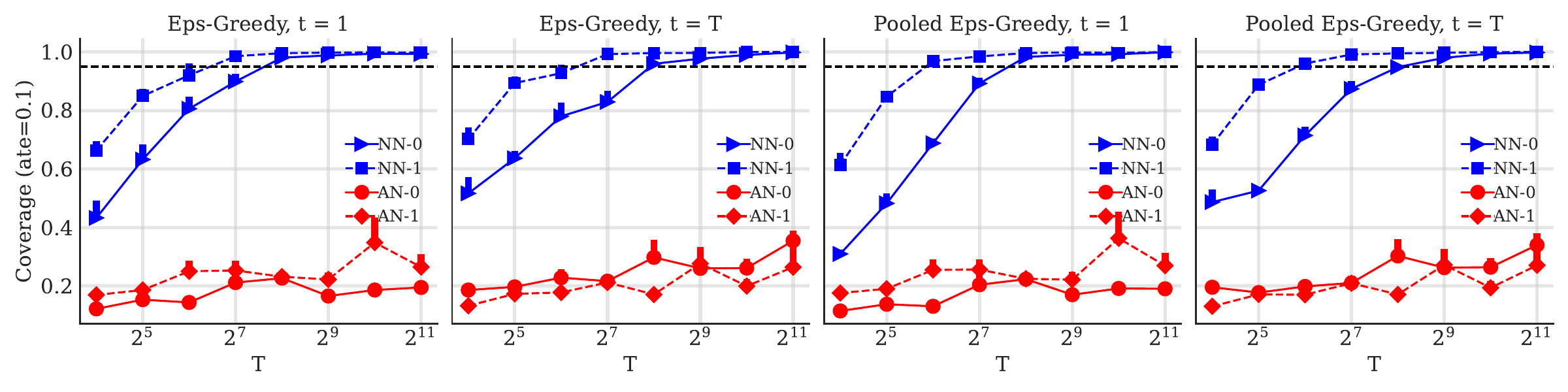} \vspace{-5mm}
        \end{tabular}
        \caption{\tbf{Error (top) and coverage for asymptotic prediction intervals using $\what{\sigma}^2 \!=\! \frac{\eta}{2}$ (middle) and $\what{\sigma}^2$ with finite sample correction~\cref{eq:unit_ci_adj} (bottom),  as a function of total time points $\T$, when $\mrm{ATE}=0.1$ in \cref{eq:sims}}. We plot the results for estimates using nearest neighbors (NN-a)~\cref{eq:obs_estimate} and a baseline that treats all units as neighbors (AN-a), for treatments $a\in\braces{0, 1}$ at time $\t \in \{1, \T\}$. When the total number of time points in the trial is equal to the value on the x-axis, the corresponding value on the y-axis for error plots denotes the mean of the errors $|\estobs\!-\!\trueobs|$ across all $\N\!=\!512$ users for treatment $a$ at time $\t$ in the error plots, and that for the coverage plots denotes the fraction of all $N\!=\!512$ users for which the interval~\cref{eq:unit_ci} with appropriate variance estimate covers the true counterfactual mean $\trueobs$ for treatment $a$ at time~$t$. Note that the only difference in the generative model for \cref{fig:ate_0_plot} and this figure is in the value of $\mrm{ATE}$.}
        \label{fig:ate_1_plot}
    \end{figure}

\subsubsection{Sampling algorithm details}
For binary treatments, the sampling policy for the $\eps$-greedy algorithm relies on a running average of mean outcomes observed for the two treatments, and chooses the better treatment (with higher mean outcome estimate) with probability $0.5(1+\eps)$ and the worst treatment  with probability $0.5(1-\eps)$. In particular, the individualized (unit-specific) $\eps$-greedy's sampling policy for user $\n$ is given by
\begin{align}
    \policy_{\t, \n}(1) = \begin{cases}
        \frac{1+\eps}{2} \qtext{if} \frac{\sum_{\t'<\t} \obs[\n,\t'] \miss[\n,\t']}{\sum_{\t'<\t} \miss[\n,\t'] } > \frac{\sum_{\t'<\t} \obs[\n,\t'] (1-\miss[\n,\t'])}{\sum_{\t'<\t} (1-\miss[\n,\t']) } \\ 
        \frac{1-\eps}{2} \qtext{otherwise.}
    \end{cases}
\end{align}
For the pooled-variant of $\eps$-greedy used by us, the policy $\policy_{\t, \n}$ is common across all users is given by:
\begin{align}
    \policy_{\t, \n}(1) = \displaystyle\begin{cases}
        \frac{1+\eps}{2} \qtext{if} \frac{\sumn[j]\sum_{\t'<\t} \obs[j,\t'] \miss[j,\t']}{\sumn\sum_{\t'<\t} \miss } > \frac{\sumn[j]\sum_{\t'<\t}\obs[j,\t'] (1-\miss[j,\t'])}{\sumn[j]\sum_{\t'<\t} (1-\miss[j,\t']) } \\ 
        \frac{1-\eps}{2} \qtext{otherwise.}
    \end{cases}
\end{align}
In each case, we set $\policy_{\t, \n}(0)=1-\policy_{\t, \n}(1)$.

\subsubsection{Bias-variance tradeoff with $\eta$ in simulations}
    \cref{fig:eta_tune} illustrates the practical approach used in our experiments to tune $\threshold$. We present the results from the tuning procedure for one of the runs for the trial corresponding to the rightmost plot of top row of \cref{fig:ate_1_plot} with $\N=\T=512, d=2, \sigma_{\vareps}=0.1, \mrm{ATE}=0.1$ and $\eps=0.5$ with treatments assigned using pooled $\epsilon$ greedy. In all cases, we exclude the data for test time points $t=1, \T$. 
    We do a $80\%$-$20\%$ split for the remaining time points (for the above case $\sbraces{2, \ldots, \T-1}$) into 402 (80\%) training and validation 108 (20\%) time points.
     and we compute the grid of possible values for $\eta_{\mrm{set}, a}$ using the $\sbraces{0.5, 1, 2, 5, 10, 15, 25, 30, 40, 50}$ percentile values for the pairwise distances $\sbraces{\estdist}_{i\neq j}$.  The two panels denote the results (a) using the naive estimate~\cref{eq:no_nbr_est} for all $(\n, \t)$ pairs with $\nestnbr=0$ while computing $\sigma^2_{a, \eta}$ in \cref{eq:eta_tune}, and (b) omitting all the $(\n, \t)$ pairs with $\nestnbr=0$ while computing $\sigma^2_{a, \eta}$ in \cref{eq:eta_tune}. 

    We observe in \cref{fig:eta_tune} that the validation error (same as $\sigma^2_{a, \eta}$~\cref{eq:eta_tune}) exhibits a $U$-shape curve with $\eta$ as suggested by our theory (see discussion just before \cref{rem:no_nbr}). On the right axis, we also plot the fraction of estimates in the validation set that have a non-zero count of neighbors, i.e., $\nestnbr>0$, and as expected this curve is monotone in $\eta$. 
    \begin{figure}[t!]
        \centering
        \begin{tabular}{ccc}
        \includegraphics[width=0.48\linewidth]{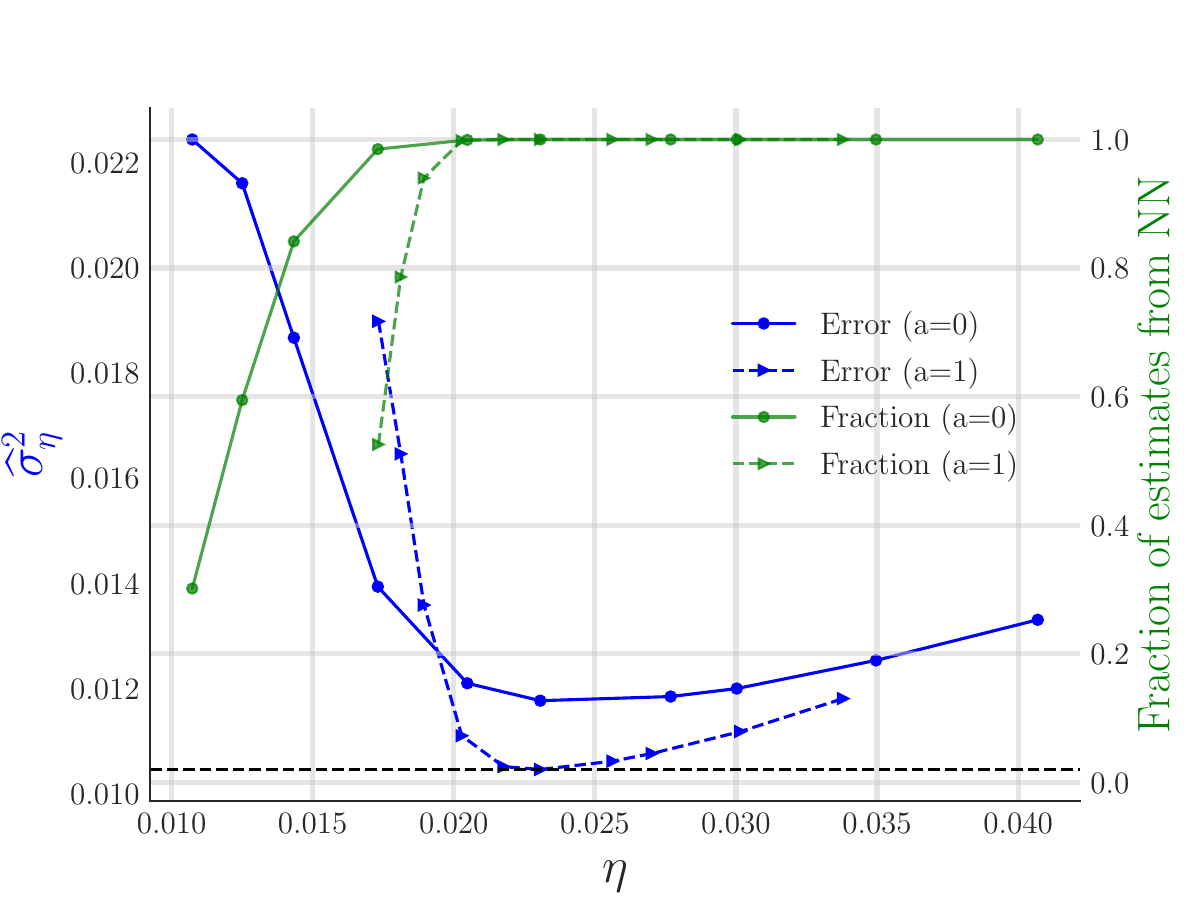} & \hspace{0.2cm}
        &
        \includegraphics[width=0.48\textwidth]{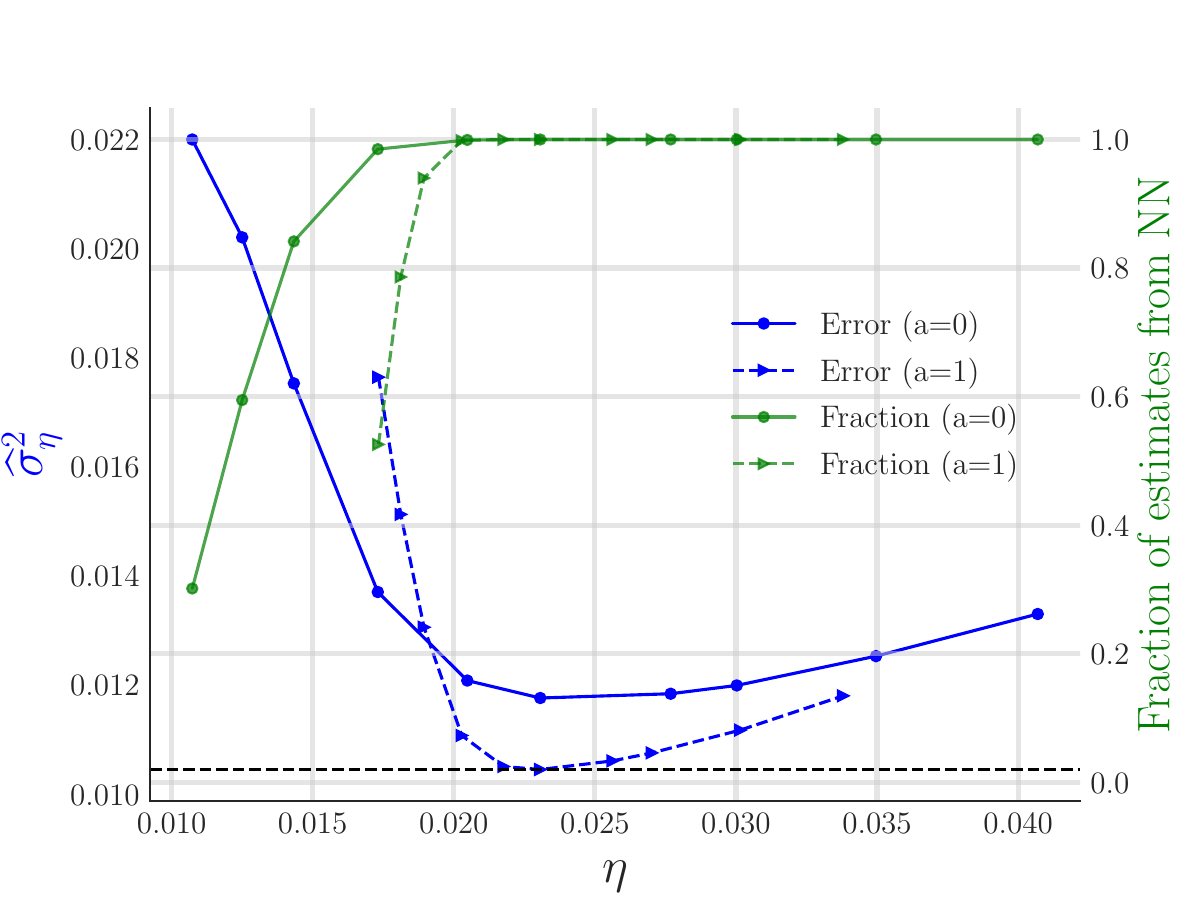} \\ 
        (a) & & (b)
        \end{tabular}
        \caption{\tbf{Hyperparameter $\eta$ tuning in our simulations from \cref{fig:ate_1_plot}.} We present the results for choosing $\eta$ for our nearest neighbors estimates for one trial with $\N=\T=512, d=2, \sigma_{\vareps}=0.1, \mrm{ATE}=0.1$ and $\eps=0.5$ with treatments assigned using pooled $\epsilon$ greedy(same settings as in one of the points in the rightmost plot of top row of \cref{fig:ate_1_plot}). The dashed line denotes the best validation MSE for the better treatment $1$.
         The  y-axis on the left side denotes the averaged value of $(\estobs -\trueobs)^2$ across all validation time points $\t$ and users $\n$ with $\nestnbr > 0$ in panel (a), and across all validation time points and all $\N$ users in panel (b) (where we use the estimate \cref{eq:no_nbr_est} whenever $\nestnbr =0$). The right y-axis in both panels denotes the fraction of estimates in the validation set (out of total $N\times (T-2)/5=52,224$) that have non-zero neighbors, i.e., $\nestnbr > 0$.}
        \label{fig:eta_tune}
    \end{figure}

    \paragraph{Empirical support for the theory-inspired variance heuristic}
    \cref{fig:eta_tune} provides a numerical support to our (theory-inspired) heuristic for setting the variance estimate 
    $\what{\sigma}$ equal to $\min(\what{\sigma}_{\threshold_{\mrm{tuned}}},\sqrt{\threshold_{\mrm{tuned}}/2})$. 
    From \cref{fig:eta_tune}(a), we find that for treatment $1$, the optimal $\eta$, i.e., $\threshold_{\mrm{tuned}}$, is very close to $2\sigma_{\vareps}^2=0.02$. Moreover, the corresponding validation error, $\what{\sigma}^2_{\threshold_{\mrm{tuned}}}$, is also close to $\sigma_{\vareps}^2=0.02=0.01$. In other words, for this treatment both $\what{\sigma}_{\threshold_{\mrm{tuned}}}$ or $\sqrt{\threshold_{\mrm{tuned}}/2}$ are very close to $0.1$. On the other hand, for treatment $0$, we find that $\min(\what{\sigma}_{\threshold_{\mrm{tuned}}},\sqrt{\threshold_{\mrm{tuned}}/2})$ is a good surrogate for the ground truth of $\sigma_{\vareps}=0.1$.

    \begin{remark}
    \label{rem:no_nbr_eta_tune}
        In this paper, all empirical results (except \cref{fig:eta_tune}(b)) present results only across users/timepoints such that $\nestnbr>0$ to ensure that the estimate is reliable. We note that that such a fitering, albeit transparent, might introduce some biases. With users, while the validation MSE typically admits a U-shape curve as in \cref{fig:eta_tune}(a), sometimes the MSE graph might look monotonically increasing with $\eta$ (that is smallest value of $\threshold$ yields the best validation MSE) albeit leading to few estimates with $\nestnbr>0$. As a middle ground to tradeoff MSE with the number of reliable estimates, in all of our simulations, we choose the best $\eta$ that produces at least $70\%$ of the estimates with non-zero neighbors. An alternative choice is to default to \cref{eq:no_nbr_est} for users with no neighbors for a given $\eta$, although such an estimate would be less reliable. For completeness, we present the results for tuning $\eta$ for this alternative choice in \cref{fig:eta_tune}(b). Note that the grid for $\eta$ is the same in two panels of \cref{fig:eta_tune} and furthermore. (The fraction curves would be identical in the two panels.)
    \end{remark}

\subsubsection{Robustness to problem parameters}
\label{sec:robust_params}
In \cref{fig:other_things}, we test the impact on the nearest neighbor performance as a function of various problem parameters; all the unspecified parameter/settings are same as in the simulations for \cref{fig:ate_0_plot}.
We provide the error plots for $\t=\T$ (the analog of \cref{fig:ate_0_plot}(a) except at time $\t=\T$) as we change the total number of units $\N$ in panels (a, b), the dimension of latent factors $d$ in panels (c, d), the minimum sampling probability for the sampling algorithm---the value of $\eps$ for pooled $\eps$-greedy---in panels (e, f). 

As expected, as we decrease the number of units $\N$ or increase the dimensions $d$, the estimation error gets worse (in magnitude) due to lesser number of good neighbors; cf. \cref{fig:ate_0_plot}(a) and \cref{fig:other_things}(a). The decay rate with respect to $\T$ is not affected as long as the bias continues to decrease. However for large values of $\T$ the error begins to flatten, as the variance term (governed largely by $\N$) becomes dominant; this phenomenon leads to an empirical decay rate that is worse than $\T^{-0.25}$ for small $\N$ or large $d$. When $\eps$ is reduced, the bias for the better treatment ($a=1$) is smaller due to more observations under it and thereby it hits an error floor sooner leading to a worsening of decay rate with $\T$. On the contrary, the error for $a=0$ is larger due to lesser number of observations; cf. \cref{fig:other_things}(c).

\subsubsection{Autoregressive time factors} In panels (g, h) of \cref{fig:other_things}, we present results when the time latent factors $\sbraces{\ltime}$ are not drawn \iid and instead from an auto-regressive process:
\begin{align}
\label{eq:ar}
    \ltime[\t+1] = \rho \ltime + \sqrt{1-\rho^2} \nu_{\t+1},
\end{align}
for $\rho \in \sbraces{0.1, 0.9}$, where $\ltime[0], \sbraces{\nu_{t}}_{\t=1}^{\T}$ are drawn \iid from $\mrm{Unif}[-0.5, 0.5]^{d}$. Note that $\rho=0$ in \cref{eq:ar} leads to \iid time factors. 
\cref{fig:other_things}(g,h) when contrasted with \cref{fig:ate_0_plot}(a) reveal that the performance of nearest neighbors is nearly unaffected as $\rho$ varies from $0$ and approaches $1$. This setting is likely to satisfy the sufficient condition \cref{eq:non_iid_suff} stated earlier for non \iid time factors (one can easily verify that \cref{eq:non_iid_suff} holds when in the limit of $\rho=1$). We leave further investigation of the robustness of nearest neighbors to more general time dependent settings for future work.

\addtolength{\tabcolsep}{-0.7em}
\begin{figure}[t!]
    \centering
    \resizebox{\linewidth}{!}{
    \begin{tabular}{cccc}
    \includegraphics[width=0.25\linewidth,trim={0 0 0 1cm},clip]{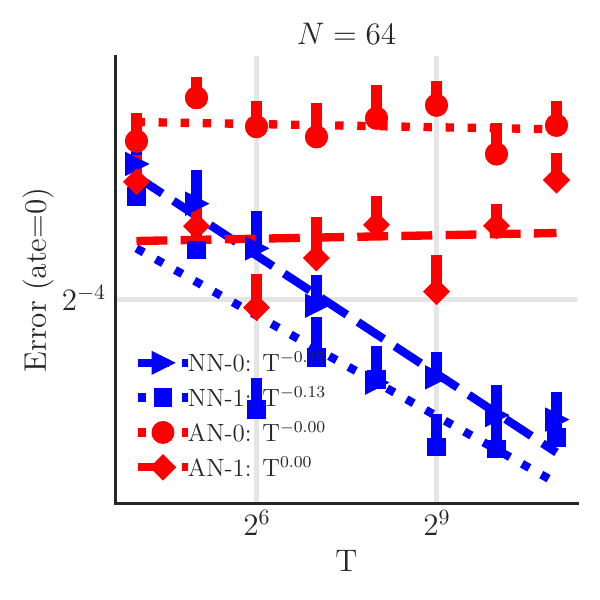}
    &
    \includegraphics[width=0.25\linewidth,trim={0 0 0 1cm},clip]{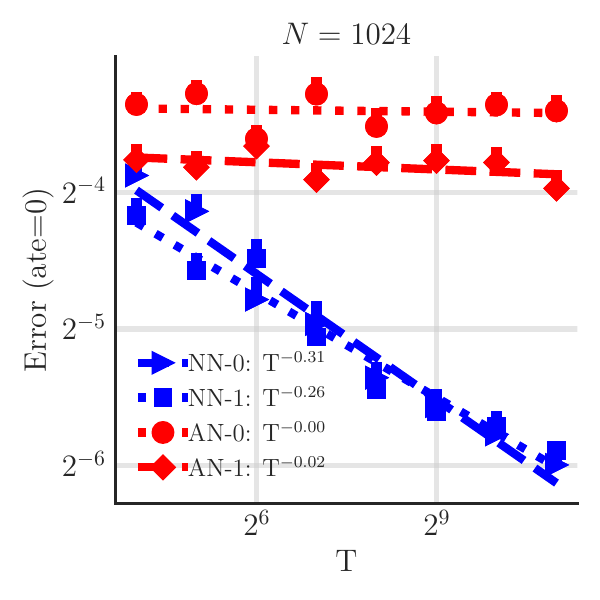}
    &
    \includegraphics[width=0.25\linewidth,trim={0 0 0 1cm},clip]{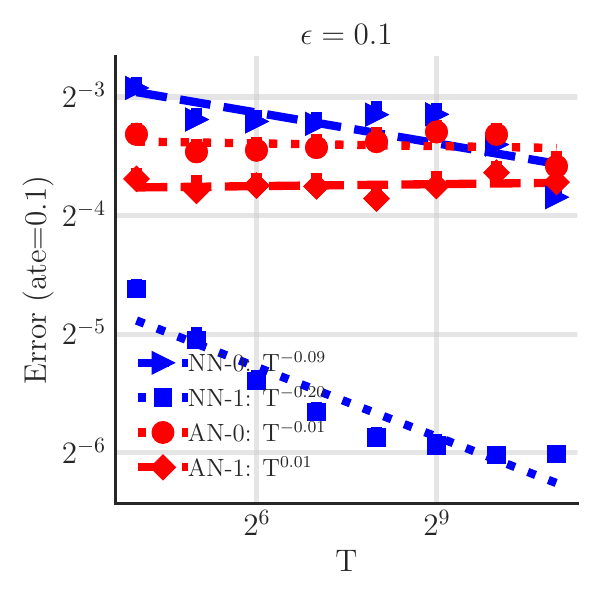}&
    \includegraphics[width=0.25\linewidth,trim={0 0 0 1cm},clip]{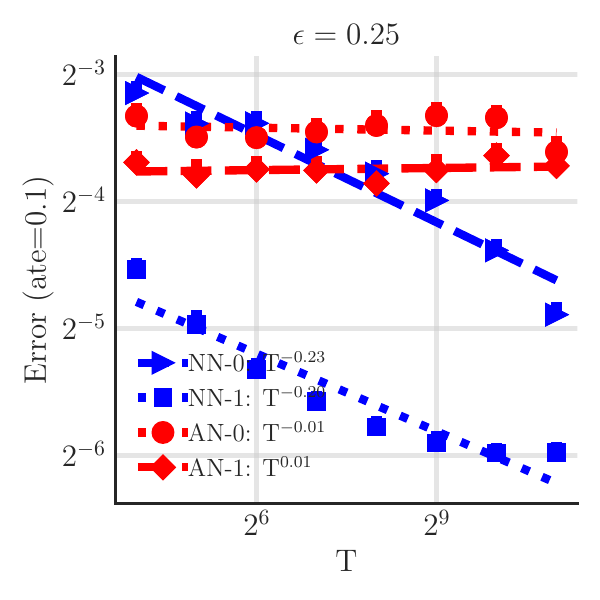}\\[-2mm]
    (a) $\N = 64$  &
    (b) $\N = 1024$ &
    (c) $\epsilon = 0.1$ &
    (d) $\epsilon = 0.25$ \\[2mm] 
    \hline
    \\[-2mm]
    \includegraphics[width=0.25\linewidth,trim={0 0 0 1cm},clip]{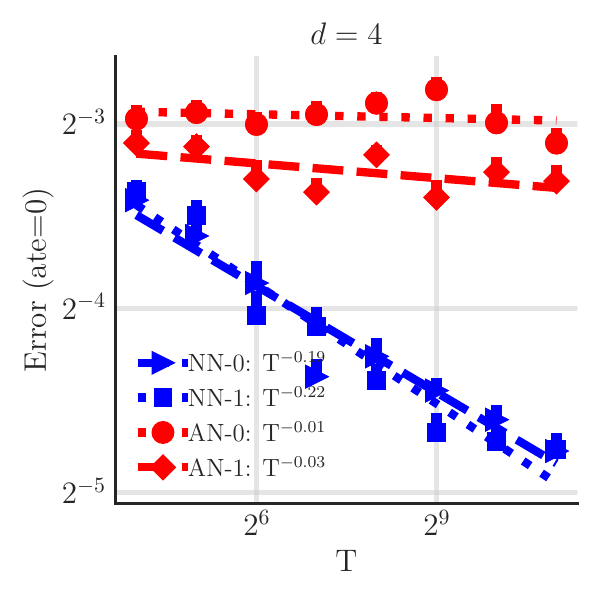}
    &\includegraphics[width=0.25\linewidth,trim={0 0 0 1cm},clip]{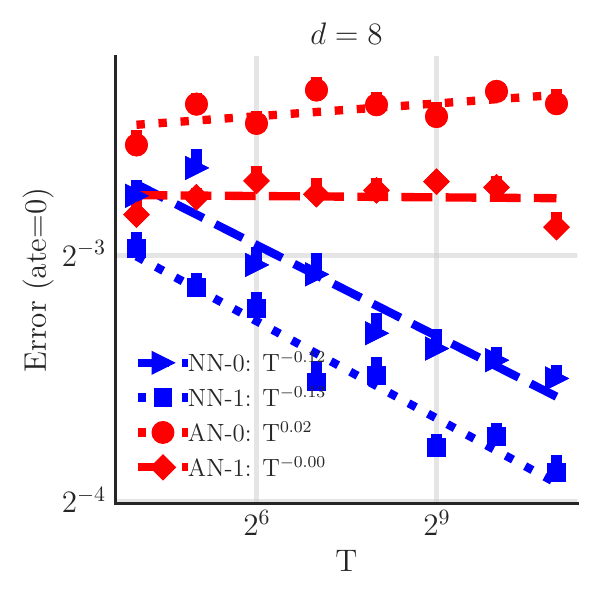}
    &
    \includegraphics[width=0.25\linewidth,trim={0 0 0 1cm},clip]{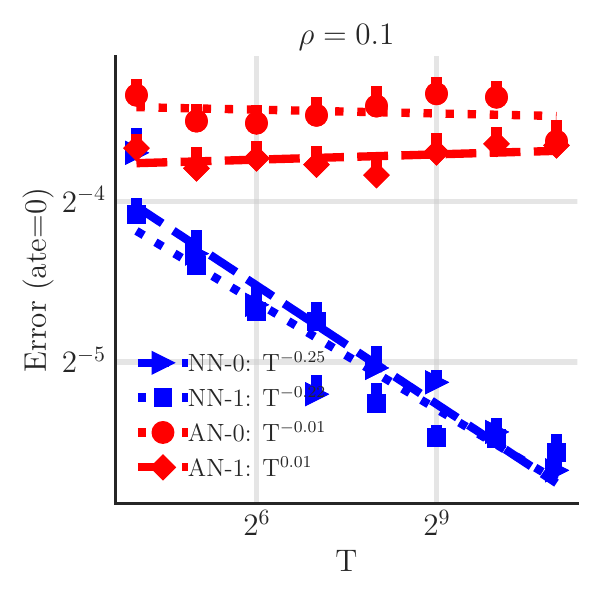}
    &\includegraphics[width=0.25\linewidth,trim={0 0 0 1cm},clip]{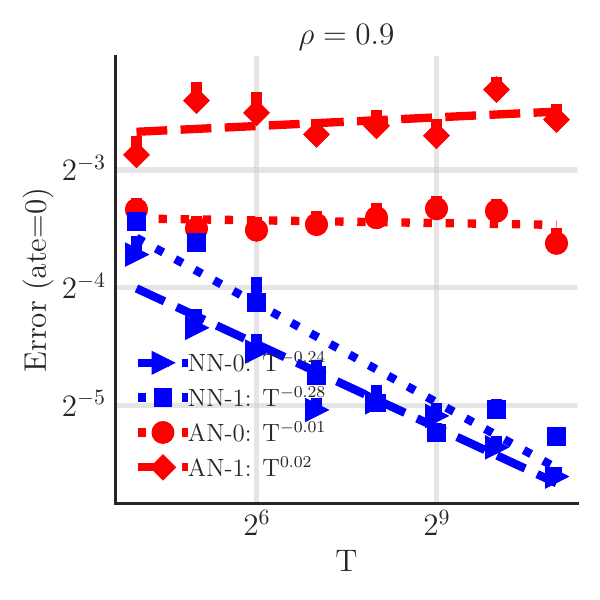}
    \\[-2mm]
    (e) $d=4$ &
    (f) $d=8$ &
    (g) $\rho=0.1$ &
    (h) $\rho=0.9$
    \end{tabular}
    }
    \caption{\tbf{Error as a function of total time points $\T$ when different problem parameters are varied.} We plot the results for estimates using nearest neighbors (NN-a)~\cref{eq:obs_estimate} and a baseline that treats all units as neighbors (AN-a), for treatments $a\in\braces{0, 1}$ at time $\t\! =\!\T$. In all plots, the y-axis denotes the mean of the errors $|\estobs\!-\!\trueobs|$ across all $\N=512$ users for treatment $a$ at time $\t$ when the total number of time points $\T$ in the trial is equal to the value on the x-axis. In panels (a--f), we vary the value of $\N$, $\epsilon$, and $d$ one at a time, while keeping all unspecified parameters at default choices: $\N\!=\! 512, d\!=\!2, \sigma_{\varepsilon} \!=\! 0.1$, and $\epsilon\!=\!0.5$. In panels (g) and (h), we keep all the parameters at default, but generate the latent time factors using AR(1) process with coefficient $\rho$ as in \cref{eq:ar}.
    }
    \label{fig:other_things}
    \vspace{-3mm}
\end{figure}

\subsection{Deferred details for HeartSteps from \cref{sub:mobile_health_study}}
\label{sub:heartsteps}
We now describe additional details and results for the HeartSteps case study.
\subsubsection{Data processing}
We provide a brief overview of our dataset and refer the readers to \cite{liao2020personalized,tomkins2021intelligentpooling} for further details. At each decision time, certain features determined if the user was \emph{available} for randomization, e.g., a suggestion was never sent if the user was driving. We focus on counterfactual inference for such \emph{available times}  and drop all users that are available for less than 20 decision times (out of the maximum possible 450). Such a filtering leaves us with 28 non-pooled, and 7 pooled users, which as a collection are referred to as \emph{\goodusers}. The $5$ user-determined decision times a day represent the user's mid-morning, mid-day, mid-afternoon, mid-evening, and evening. Here, we treat the $5$ decision times on each day to be shared across all users, e.g., we use $\t=$ \{09-14-1,09-14-2, $\ldots$, 09-14-5\} to denote the $5$ decision times on September 14, 2019, for all the users. Overall our data includes $\N=35$ \goodusers (with 27 non-pooled and 8 pooled) and $\T=450$ decision times, where each user is available for a (different) subset of $\T$ decision times.

\subsubsection{Additional results}
We start by illustrating some additional user-specific results and the empirical coverage obtained by our estimates for treatment 0, in \cref{fig:heartsteps_additional_results}. The two users in panels (a) and (b)  have the worst empirical coverage of 60\% and 50\%, respectively, among all 8 pooled and 27 non-pooled \goodusers. Recall that the empirical coverage is defined as the percentage of decision times where the confidence interval covers the user's observed outcome. 
\begin{figure}[t!]
    \centering
    \resizebox{\textwidth}{!}{
    \begin{tabular}{ccc}
    \includegraphics[width=0.33\textwidth]{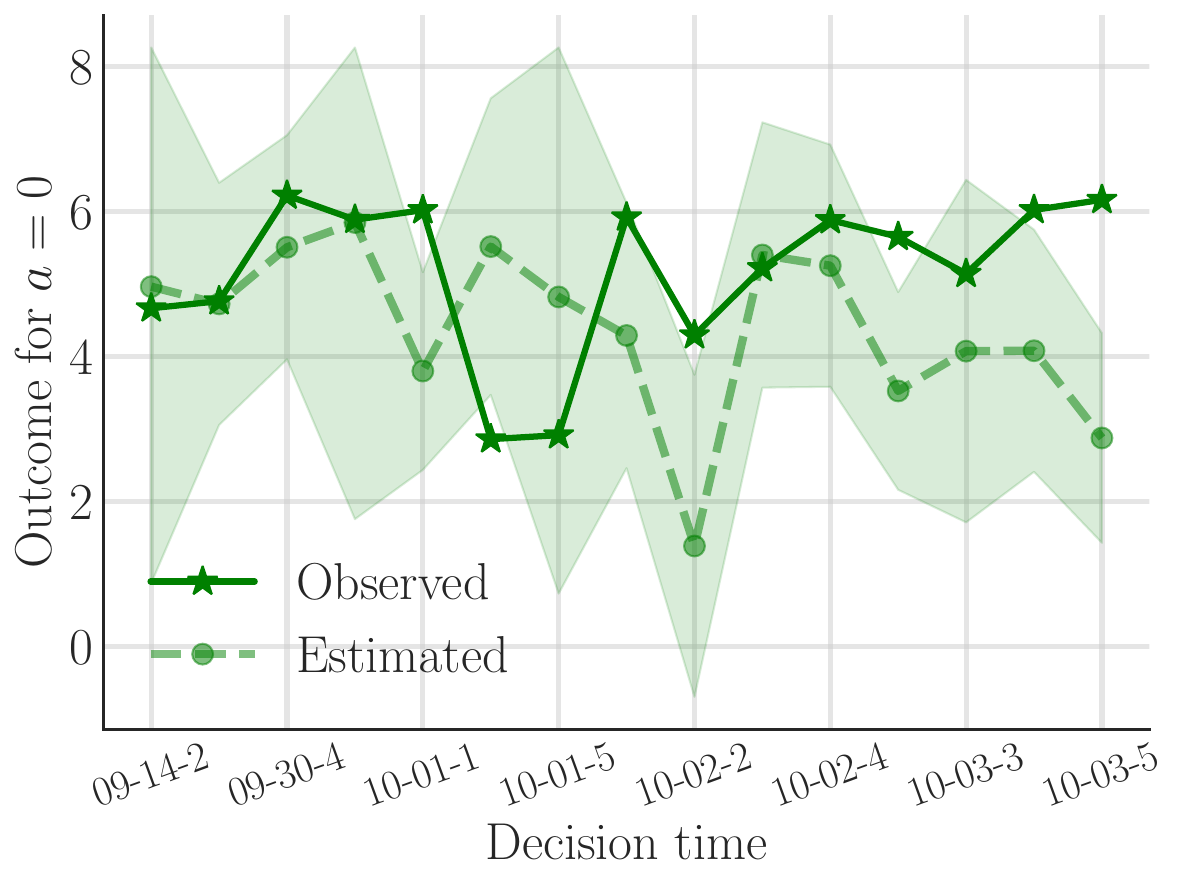}\hspace{2mm}&
    \includegraphics[width=0.33\textwidth]{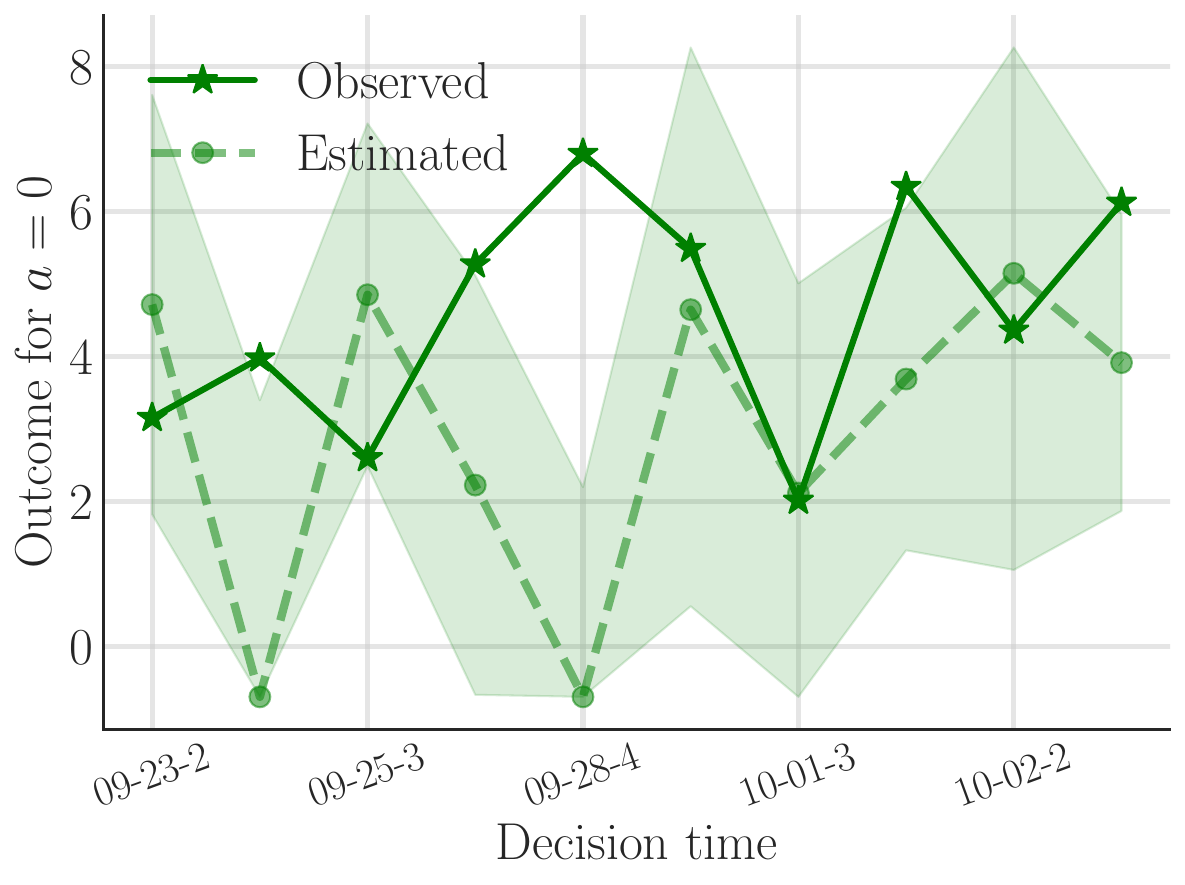}\hspace{2mm}
    & 
    \includegraphics[width=0.33\textwidth]{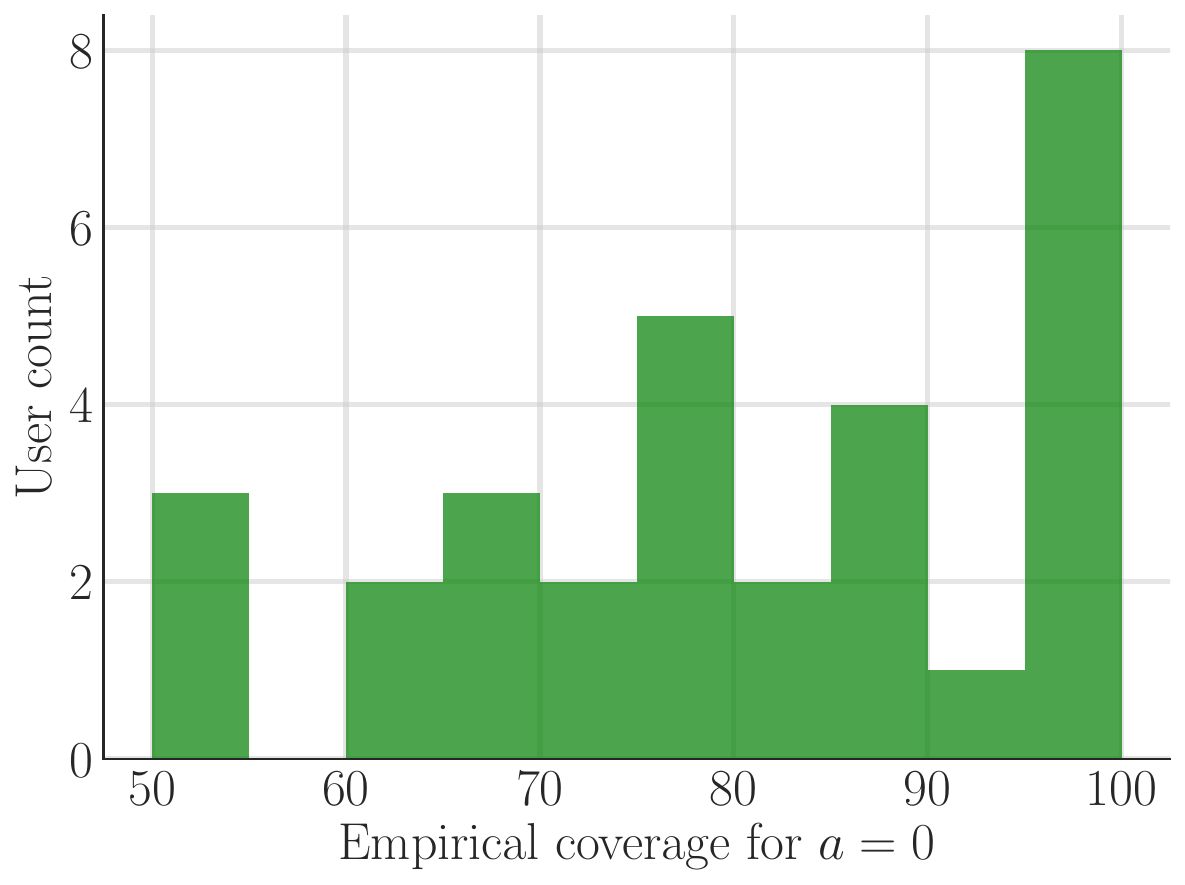}
     \\[-1mm]
    (a) Pooled user 2 & (b) Non-pooled user 2 & (c) All users
    \end{tabular}
    }
    \caption{\tbf{Additional results for $a=0$ for HeartSteps.} Panels (a) and (b) present the outcomes for a pooled and non-pooled user, respectively. The x-axis represents decision time, dark-colored stars represent the observed outcomes, and light-colored circles represent the estimated outcomes. The shaded regions indicate the confidence intervals. Panels (a) and (b) correspond to panels (a) and (b) of \cref{fig:main_results}, respectively, for two different users. Panel (c) shows a histogram of user-specific empirical coverage for all 35 \goodusers where the value on the y-axis represents the user count for which the empirical coverage equals the value on the x-axis.}
    \label{fig:heartsteps_additional_results}
\end{figure}

\paragraph{Relationship to user-heterogeneity} In \cref{fig:corr_plot}, we provide a scatter plot of the user-specific random effect estimates by the IP-TS algorithm at the end of the feasibility study, and the number of nearest neighbors obtained from our NN algorithm, for the eight pooled users. The negative correlation between the two quantities suggests that both IP-TS and nearest neighbors are potentially capturing certain aspects of user heterogeneity.
\begin{figure}[ht!]
    \centering
    \begin{tabular}{c}
    \includegraphics[width=0.45\textwidth]{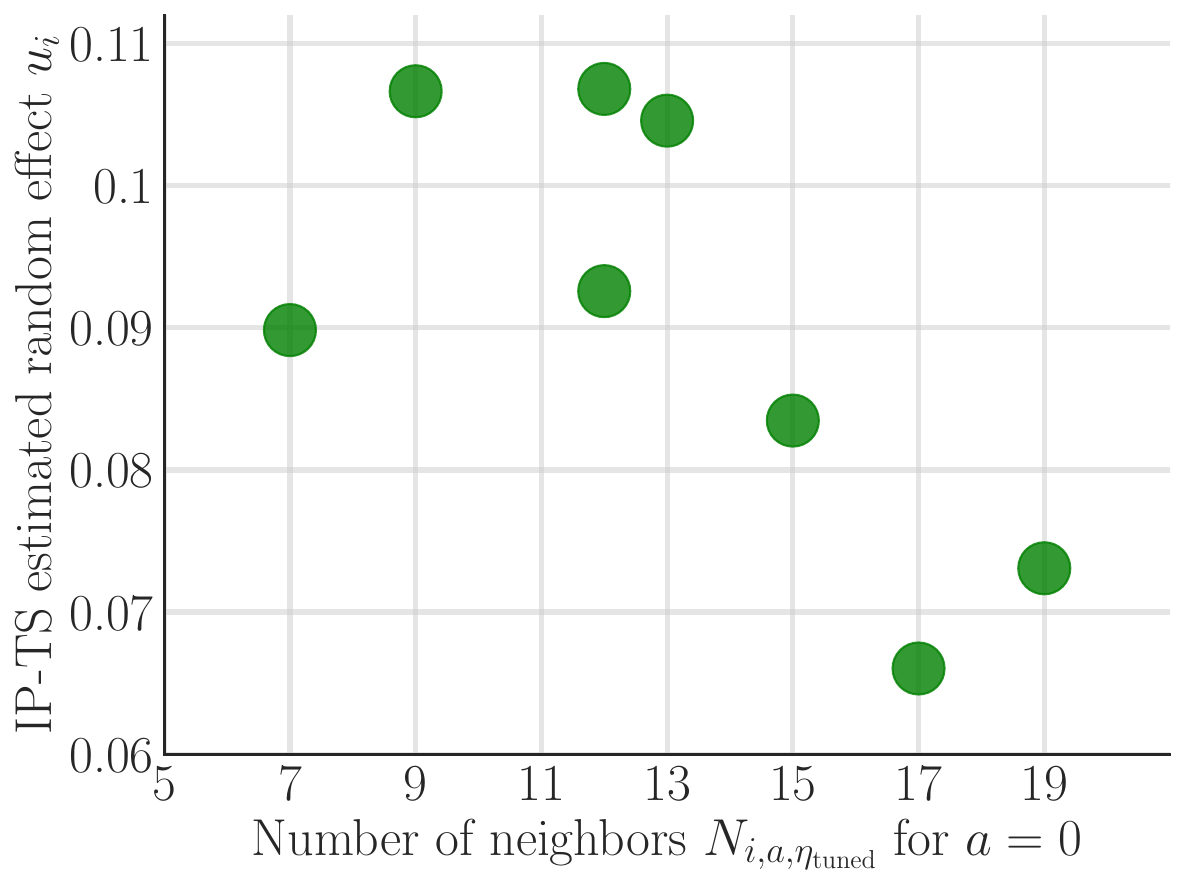}
    \end{tabular}
    \caption{\tbf{Relationship between the user-specific estimated random effects from IP-TS and the number of nearest neighbors for pooled users}. The y-axis plots the value of the user-specific random effect estimated by the IP-TS algorithm by the end of the feasibility study, while the x-axis denotes the number of neighbors $N_{\n, a, \eta_{\mrm{tuned}}} \defeq |\sbraces{j\neq i: \estdist \leq \eta_{\mrm{tuned}}}|$. We observe a correlation of $-0.69$ (with a p-value of 0.06 for the t-test) between the x-axis and y-axis values.}
    \label{fig:corr_plot}
\end{figure}

\paragraph{Hyperparameter tuning} Due to limited data, we do not use a validation set and tune the hyperparameter on the training data. We choose a candidate set for $\eta$ that includes 5 equally-spaced percentiles in the range $[0, 10]$ and 20 equally-spaced percentiles in the range $[10, 100]$ across all the ${35 \choose 2}$ pairwise distances for treatment 0 across the 35 \goodusers. For reference, we provide a histogram of these pairwise distances in \cref{fig:heartsteps_eta_tune}(a). The results for tuning $\eta$ are presented in \cref{fig:heartsteps_eta_tune}(b). We find that the best choice of $\eta$ with at least 70\% valid estimates from nearest neighbor strategy (\cref{rem:no_nbr_eta_tune}) is the $43$-rd percentile value, which is equal to $8.72$. We mark this choice, which is used to compute all the results in \cref{fig:main_results,fig:corr_plot}, as a vertical dashed-dotted line in panels (a) and (b) of \cref{fig:heartsteps_eta_tune}. Note that the tuned choice of $\eta$ yields 3 or more neighbors for the 35 \goodusers as illustrated in \cref{fig:heartsteps_eta_tune}(c).
\begin{figure}[t!]
    \centering
    \resizebox{\textwidth}{!}{
    \begin{tabular}{ccc}
    \includegraphics[width=0.33\textwidth]{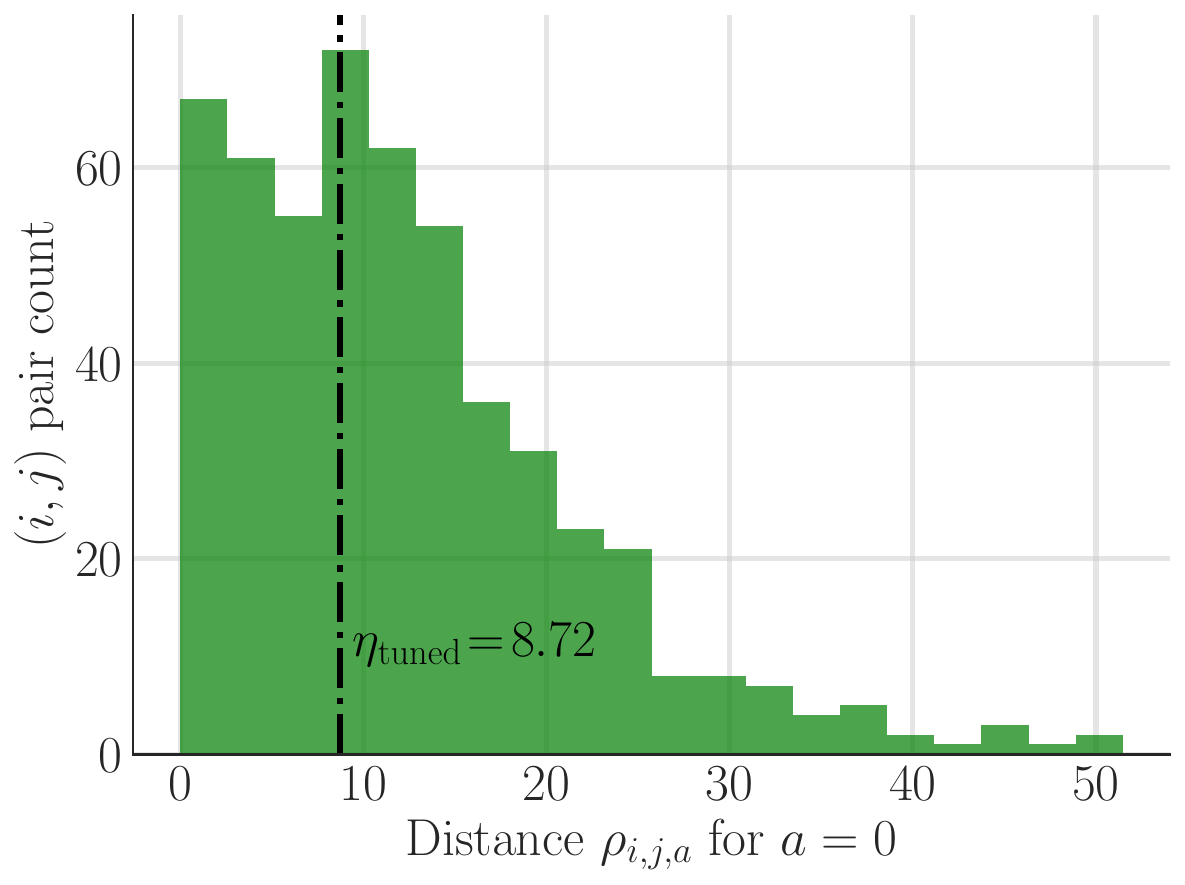}\hspace{2mm}&
    \includegraphics[width=0.33\textwidth]{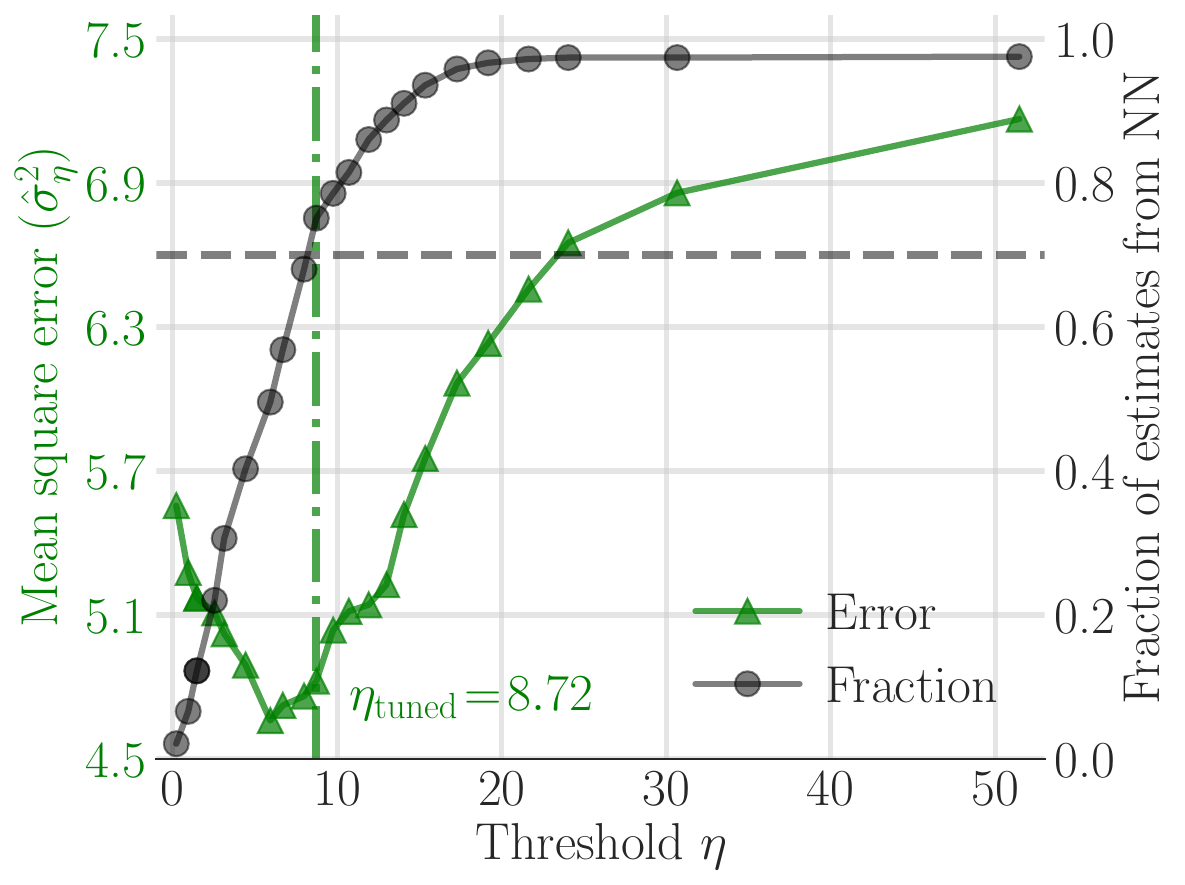}\hspace{2mm}
    & 
    \includegraphics[width=0.33\textwidth]{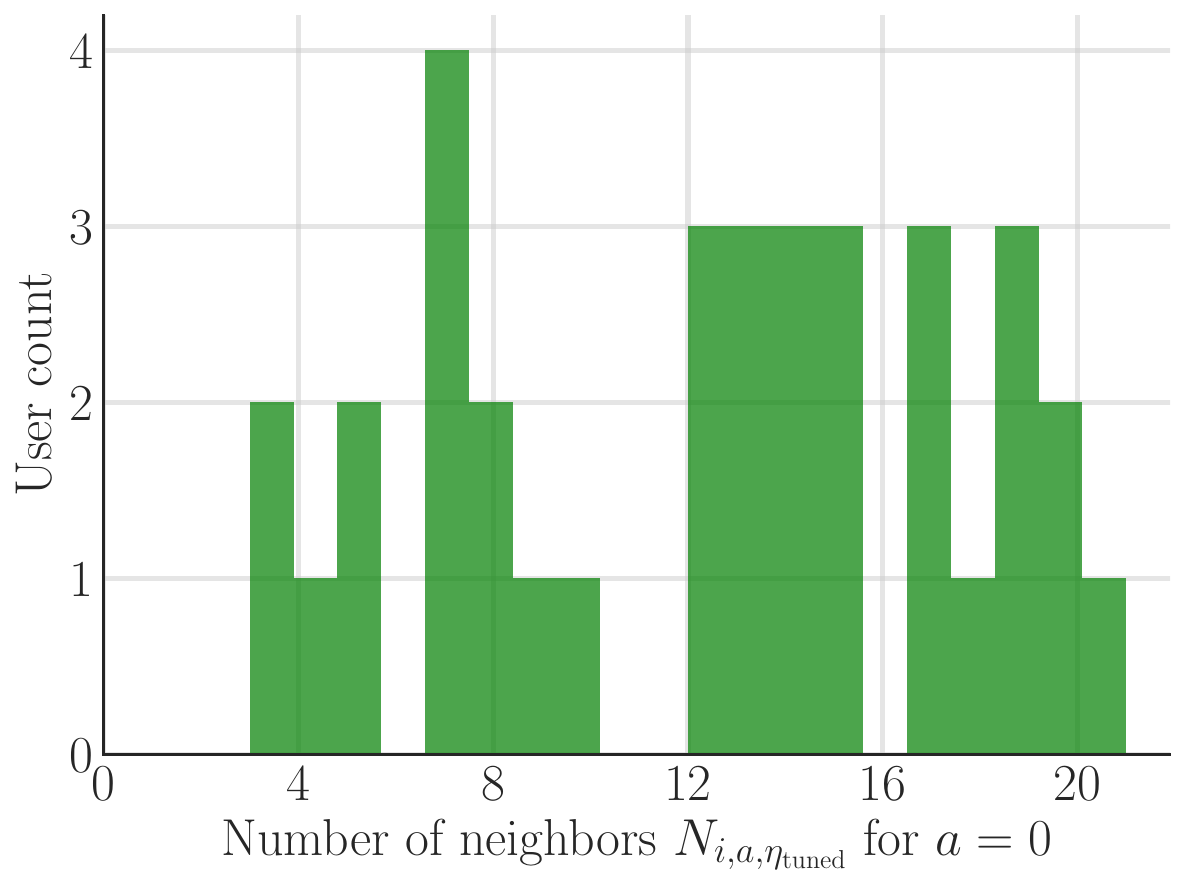}
     \\[-1mm]
    (a) & (b) & (c)
    \end{tabular}
    }
    \caption{\tbf{Hyperparameter $\eta$ tuning and the resulting number of neighbors for treatment $a=0$ in HeartSteps.}  Panel (a) shows a histogram of pair-wise distance across users on the training data with distance on the x-axis and the user-pair count on the y-axis. The left y-axis in panel~(b) shows the mean squared error, same as $\what\sigma^2_{a, \eta}$~\cref{eq:eta_tune} (on training data), for the threshold $\eta$ on the x-axis. The right y-axis in panel (b) displays the fraction of estimates with non-zero neighbors (cf. \cref{fig:eta_tune}). The tuned choice of $\eta$ is marked as a vertical dashed-dotted line in panels (a) and (b).  Panel (c) plots a histogram of the number of nearest neighbors $N_{\n, a, \eta_{\mrm{tuned}}} \defeq |\sbraces{j\neq i: \estdist \leq \eta_{\mrm{tuned}}}|$ across the 35 \goodusers.}
    \label{fig:heartsteps_eta_tune}
\end{figure}

\paragraph{Additional SVD results for different imputation strategy}
Finally, we discuss analogs of \cref{fig:eda_plot}(c), albeit with two alternate strategies for imputing the missing outcomes at non-available times before computing SVD of the outcome matrices (stratified by treatment). However, for missing outcomes at available times, across both strategies, we still use the mean value specific to each user and treatment as in \cref{fig:eda_plot}(c). We present the results for the two strategies, respectively, in the two panels of \cref{fig:svd_plot}. Note that we use the same strategy for the two treatments in a given panel.

For the first strategy, called ``available'' imputation, with results in panel (a), we assign an outcome value equal to the recorded outcome in the original data for non-available times. If no outcome value is available, we assign a value of $\log(0.5)=-0.69$ (equivalent to $0$ step count).  For the second strategy, called ``zero'' imputation, with results in panel (b), we assign a value of $-0.69$ for all non-available times.  

We observe that the first three components in panel (a) account for over 60\% of the variance, while around ten components in panel (b) explain the same amount of variance. 
\begin{figure}[t!]
    \centering
    \resizebox{\textwidth}{!}{
    \begin{tabular}{cc}
    \includegraphics[width=0.5\linewidth]{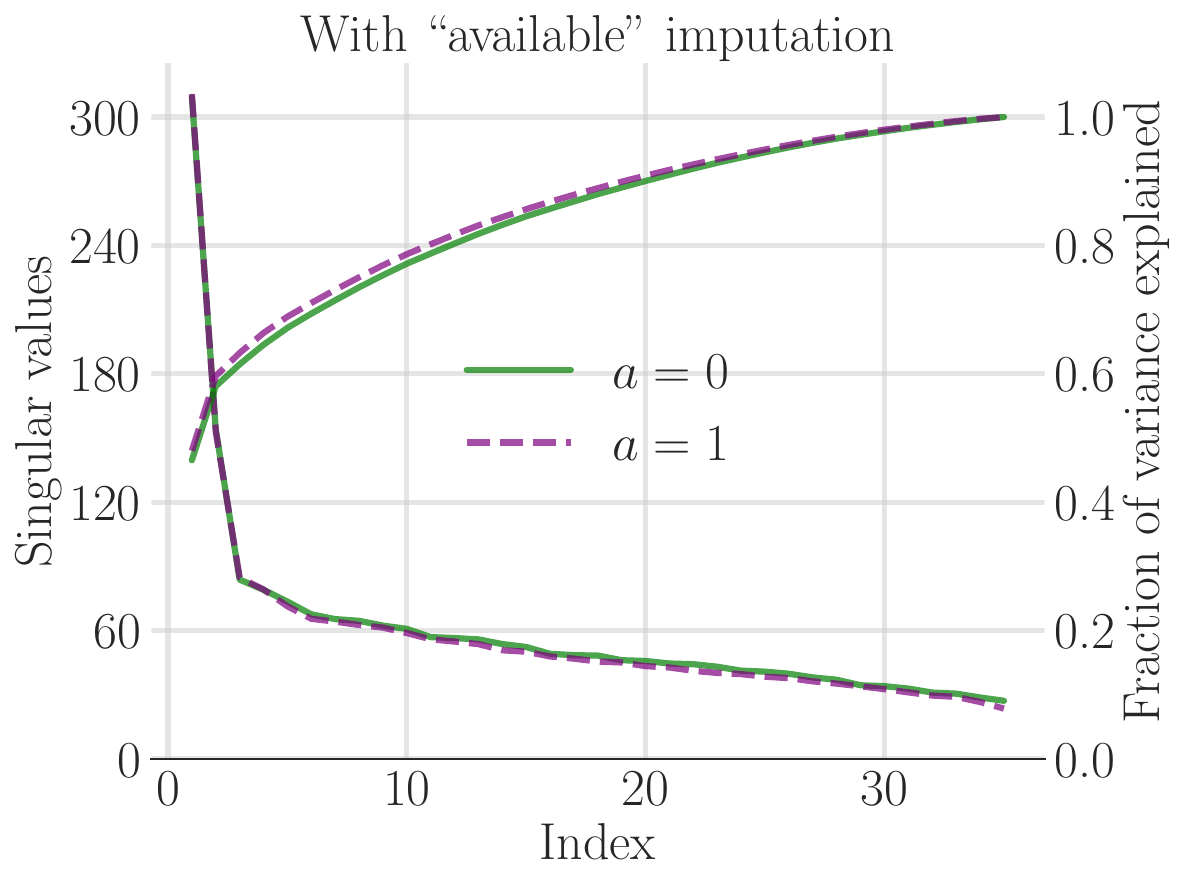}&
    \includegraphics[width=0.5\linewidth]{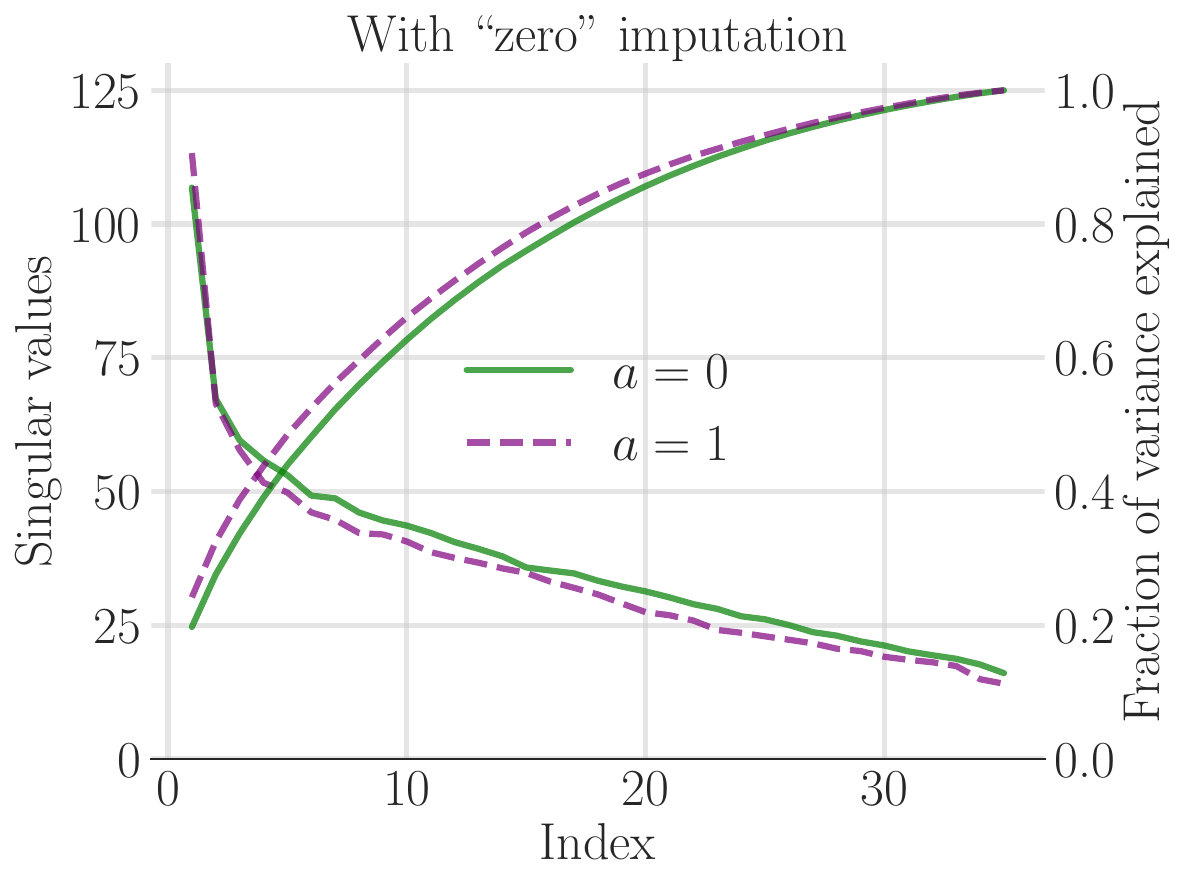} \\ 
    (a) &  (b)
    \end{tabular}
    }
    \caption{\tbf{Singular values for the observed outcomes in HeartSteps.} In each panel, we display the singular values (in decreasing order) for outcomes colored by two treatments, with the index on the x-axis, values on the left y-axis, and the fraction of variance explained on the right y-axis. Note that compared to \cref{fig:eda_plot}(c), the data used here differs only in the imputation strategy to fill missing data at non-available times.}
    \label{fig:svd_plot}
\end{figure}

\section{Other extensions}
\label{sub:other_extensions}

    We now discuss how to extend our theoretical results with possible relaxations of assumptions~\cref{assum:zero_mean_noise,assum:lambda_min,assum:iid_unit_lf} and another setting with spillover effects.

     \subsection{Sub-Gaussian latent factors, and noise variables}
    \label{sub:sub_gauss}
    We now describe how to extend \cref{thm:anytime_bound} to the setting with sub-Gaussian latent factors and noise variables. If $\lunit$, and $\ltime$ are \iid sub-Gaussian random vectors with parameter $\sigma_{u}$, and $\sigma_{v}$ respectively, and the noise is sub-Gaussian with parameter $\sigma_{\vareps}$, then standard sub-Gaussian concentration results yield that
    \begin{align}
        \max_{\n\in[\N]}\twonorm{\lunit} &\leq 2\sigma_{u}\sqrt{d}(2+\sqrt{\log(3\N/\delta)}), \\
        \max_{\t\in[\T]}\twonorm{\ltime} &\leq 2\sigma_{v}\sqrt{d}(2+\sqrt{\log(3\T/\delta)}), \\
        \qtext{and}
        \max_{\n\in[\N],\t\in[\T]}\sabss{\noiseobs} &\leq \sigma_{\vareps}\sqrt{2\log(6\N\T/\delta)}),
    \end{align}
    with at least $1-\delta$. Thus, we can relax the boundedness assumption on the latent factors, and noise variables in \cref{assum:iid_unit_lf,assum:lambda_min,assum:zero_mean_noise} to sub-Gaussian tails for them, and obtain a modified version of \cref{thm:anytime_bound}, wherein the constants $\uconst,\vconst$ and $\nconst$ in the bound~\cref{eq:nn_bnd} are replaced with the appropriate terms appearing in the display above, and this new bound holds with probability at least  $1\!-\!2\delta\!-\!2\!\exp\sparenth{-\frac{\pmint[\t]\probparam_{\lunit}( \threshold')(\N\squash{-}1)}{16}}$. Overall, we find that the error bound  \cref{eq:nn_bnd} is inflated only by logarithmic in $(\N, \T)$ factors, so that the subsequent results (e.g., \cref{thm:anytime_asymp,thm:ate_asymp}) would also hold with logarithmic factor adjustments.
    
    \subsection{Non-positive definite covariance $\Sigv$}
    \label{sub:non_psd}
    In \cref{assum:lambda_min}, we assert that the minimum eigenvalue of $\Sigv$---the non-centered covariance of latent time factors under treatment $a$---is non-zero. We now argue that this assumption is not restrictive. Consider the following eigendecomposition of $\Sigv$ in terms of the sorted eigenvalues $\lambda^{(1)} \geq \lambda^{(2)} \ldots \geq \lambda^{(d)}$, and the corresponding orthonormal eigenvectors $\sbraces{w_{\l}}$:
    \begin{align}
        \Sigv = \sum_{\l=1}^{d} \lambda^{(\l)} w_{\l}w_{\l}\tp.
    \end{align}
    In terms of this notation, the RHS of \cref{eq:nn_bnd} scales with $\lambda^{(d)}$. Suppose that $\lambda^{(d)}=0$ and $\lambda^{(d-1)}>0$, then we claim that \cref{eq:nn_bnd} holds with $\lambda_{a} \gets \lambda^{(d-1)}$. To prove the claim, first we note that
     \begin{align}
        0 =\lambda_{d} = w_{d}\tp \Sigv w_d = w_{d}\tp\E[\ltime^{(a)}(\ltime^{(a)})\tp]w_{d}
         = \E[((\ltime^{(a)})\tp w_d)^2],
    \end{align}
    which implies that $w_{d}\tp \ltime^{(a)}  = 0$ almost surely.
    Thus in this setting, we can repeat the proof of \cref{thm:anytime_bound} with the latent unit factors projected on the subspace spanned by the vectors $\sbraces{w_{\l}}_{\l=1}^{d-1}$. In other words, the effective dimensionality of the problem is $d-1$ for both the latent time factors and latent unit factors (after projecting on to a suitable subspace). Our claim follows.

    \subsection{Non-\iid latent time factors}
    \label{sub:non_iid}
    As noted earlier, the \iid sampling assumption on the latent time factors in \cref{assum:lambda_min} is a bit restrictive, especially for the applications motivating this work. E.g., the latent time factors capture unobserved societal changes over time, and these causes might be correlated, or drawn from a mixing process. We present one illustrative experiment in \cref{sec:simulations} (see \cref{eq:ar,fig:other_things}) that shows that the performance of nearest neighbors does not deteriorate when the \iid assumption is violated.

    For our analysis, the \iid assumption on the latent time factors can be relaxed if we (i) restrict the class of policy (in \cref{assum:policy}) to be unit-wise sequential (i.e., the policy is not pooling data across units) and (ii) use the treatment assignment probabilities to adjust our distance computation. Let $\history[\n, \t\!-\!1] \defeq \braces{\obs[\n, \t'], \miss[\n, \t'], \t'\in [\t\!-\!1]}$ denote the sigma-algebra of the data observed for unit $\n$ till time $\t\!-\!1$. Suppose the treatment policies are sequentially adaptive independently across unit, i.e., at time $t$, $\miss[\n, \t]$ is sampled using policy $\policy_{\t,\n}$, where $\policy_{\t,\n}(a) = \P(\miss = a \vert  \history[i, \t\!-\!1])$ , and that \cref{eq:pmin} holds for some  sequence of non-increasing scalars $\sbraces{\pmint[\t]}_{\t=1}^{\infty}$. Note that under this assumption, treatment assignment to two different units are independent of each other. Next, assuming known $\policy_{\t}$, we replace the distance~\cref{eq:time_nbr_dist} by

    \begin{align}
    \label{eq:time_nbr_dist_pi}
    \estdist^{\pi} \defeq \displaystyle  \frac1{\T}\sumt \frac{\indicator(\miss=a) \indicator(\miss[j, \t]=a)}{\policy_{\t,\n}(a)\policy_{\t,j}(a)} (\obs\!-\!\obs[j, \t])^2, %
    \end{align}
    with neighbors~\cref{eq:reliable_nbr} and the estimate~\cref{eq:obs_estimate} defined using this modified distance~\cref{eq:time_nbr_dist_pi}. Then a straightforward adaptation of our proof would yield a guarantee analogous to \cref{thm:anytime_bound}, now conditional on all time latent factors $\mc V$, the covariance matrix $\Sigv$ replaced by $\frac{1}{\T}\sumt \ltime^{(a)}(\ltime^{(a)})\tp$, and $\errterm$ replaced by $\errterm/\pmint[t]$. (The main change needed in the analysis is to establish an appropriate version of the bound \ox{\cref{eq:event_dist}}\oa{\cite[Eq.~\cref{eq:event_dist}]{supplement}} with $\errterm$ replaced by $\errterm/\pmint[\t]$, which in fact follows directly by using Azuma-Hoeffiding concentration. %

    More generally, the key challenge is to identify whether there exists a deterministic matrix $B$ and a function $g$ (that decreasing with respect to its second argument $\T$) such that
    \begin{align}
    \label{eq:non_iid_suff}
        \sup_{i \neq j}\opnorm{\frac{\sumt\indicator(\miss=a,\miss[j, \t]=a)\ltime^{(a)}(\ltime^{(a)})\tp }{\sumt\indicator(\miss=a,\miss[j, \t]=a)} - B} \leq g(N, \T, \delta)
    \end{align}
     with probability at least $1-\delta$. Under this condition, we can establish a modified version of \cref{thm:anytime_bound} that holds conditional on the latent time factors, with $\Sigv$ and $\errterm$ replaced by $B$ and $g(\N,\T, \delta)$ in all relevant terms and definitions. However, it is non-trivial to identify simple conditions on a sequential policy that pools across users, that would imply \cref{eq:non_iid_suff}, and we leave a further exploration in this direction for future work.

    \subsection{Spillover effects}
    \label{sub:excursion_effects}
    We continue our discussion from \cref{sub:delayed_effects} on how our algorithm can be generalized for inference with spillover effects.

    \paragraph{Bounded spillover effects}
    First we extend the nearest neighbor algorithm to estimate $\trueobsvar_{\n, \t}^{(\overline{a})}$ for any $\overline{a} \in \sbraces{0, 1}^K$ for the model with bounded spillover effects. In essence, effectively the same algorithm from \cref{sub:estimates} works once we redefine the treatments at time $t$ to be the sequence of assignments $\miss[\n, \t-K+1:\t]$ and extend the treatment space to $\sbraces{0, 1}^K$. Put simply, the nearest neighbor estimate for $\trueobsvar_{\n, \t}^{(\overline{a})}$ for the bounded spillover set-up is given by
    \begin{align}
    \label{eq:obs_estimate_K}
        \what{\trueobsvar}_{\n, \t, \eta}^{(\overline{a})} &\defeq 
         \displaystyle\frac{1}{|\estnbr[\n, \t, \overline{a}, \threshold]|} \sum_{j \in 
         \estnbr[\n, \t, \overline{a}, \threshold]} \obs[j, \t]
         \stext{where} 
         \estnbr[\n, \t, \overline{a}, \threshold]
         \defeq 
          \braces{j \neq \n \suchthat \estdist[\n, j, \overline{a}] \leq \threshold, 
        \miss[j, \t-K+1:\t] = \overline{a}} \\ 
        \qtext{and}
        \estdist[\n, j, \overline{a}] &\defeq \displaystyle  \frac{\sum_{s=1}^{\T}\indicator(\miss[\n, s-K+1:s]=\overline{a}) \indicator(\miss[j, s-K+1:s]=\overline{a}) (\obs[\n, s]\!-\!\obs[j, s])^2}{\sum_{s=1}^{\T}\indicator(\miss[\n, s-K+1:s]=\overline{a}) \indicator(\miss[j, s-K+1:s]=\overline{a})}.
    \end{align}
    Under suitable generalization of the assumptions, e.g., a factor model on the counterfactual parameters for $\overline{a}$ and $\policy_{\n,\t}\in[p, 1-p]$, an analogous error guarantee to \cref{thm:anytime_bound} for $(\what{\trueobsvar}_{\n, \t, \eta}^{(\overline{a})}-\trueobsvar_{\n, \t, \eta}^{(\overline{a})})^2$ can be derived for any $\overline{a} \in \sbraces{0, 1}^K$. However, as expected, this squared error would scale with $p^{-K}$ and thus such a strategy would work only when $\min(p^K \T, p^{K} \N) \to \infty $, i.e., $K$ grows very slowly with $\min(\T, \N)$.

    \paragraph{Spillover effects dependent on entire history of treatments}
    Next we consider a setting where the spillover effects at time $t$ are moderated by the lagged outcome at time $\t-1$. In particular, consider a potential outcome and observation model given by,
    \begin{align}
            \obs^{(a|y)} = \alpha^{(a)}y + \trueobsvar_{\n, \t}^{(a)} + \noise_{\n, \t}^{(a)}
            \qtext{and}
            \obs = \obs^{(\miss[\n, \t]\vert \obs[\n, \t-1])}.
    \end{align}    
    In this case, $\obs^{(a|y)}$ is the potential outcome for unit $i$ at time $t$ given treatment $a\in \{0,1\}$ and lagged outcome taking value $y$ at time $\t-1$. This model allows for lagged treatment effects, where the lagged effect is carried over via the auto-regressive term, $\alpha^{(a)} \obs[\n, \t-1]$, with  $\alpha^{(a)} \in (-1, 1)$ being the auto-regressive coefficient for treatment $a\in \{0,1\}$. Note that the dependence of $\obs$ on $ \obs[\n, \t-1]$ makes $\obs$ a function of the entire sequence of treatments till time time $\t$.

    In this case, an important quantity of interest is the following treatment effect for unit $\n$ and time $\t$:
    \begin{align}
        \trm{Exc-Eff}_{\n, t} \defeq \E[ \obs^{(1\vert \obs[\n, \t-1])} - \obs^{(0\vert \obs[\n, \t-1])} \vert \obs[\n, \t-1]] = (\alpha^{(1)} - \alpha^{(0)})\obs[\n, \t-1] + \trueobsvar_{\n, \t}^{(1)} - \trueobsvar_{\n, \t}^{(0)}.
        \label{eq:exc_effect}
    \end{align}
      Such effects are analogous to one-step excursion effects~\cite{qian2019estimating,boruvka2018assessing} for this outcome model. Suppose the outcomes are bounded in magnitude by $C_y$ and we are equipped with an estimate $\alpha^{(a)}$ with an error $\max_{a}|\what{\alpha}^{(a)}-{\alpha}^{(a)}| \leq \Delta(\alpha)$ from historical data (e.g., one can using algorithms from \cite[Sec.~5]{bai2024likelihood} to estimate the auto-regressive coefficients). Then we can estimate the excursion treatment effect via the formula
     \begin{align}
      \hat{\trm{Exc-Eff}}_{\n, t} \defeq (\what\alpha^{(1)} - \what\alpha^{(0)})\obs[\n, \t-1] + \what\trueobsvar_{\n, \t, \eta_1}^{(1)} - \what{\trueobsvar}_{\n, \t, \eta_0}^{(0)} 
     \end{align}
      where $\what\trueobsvar_{\n, \t, \eta_a}^{(a)}$ denote the nearest neighbor estimates obtained from using the residual matrix $\sbraces{\wtil{Y_{\n, t}} = \obs - \what{\alpha}^{(\miss)} \obs[\n, \t-1]}_{\n\in[\N], t\in[\T]}$ as the outcome matrix and the originally observed treatment matrix in constructing the estimates. We remark that our analysis can be generalized to account for deterministic/misspecification errors, so that the overall squared error in estimating one-step excursion effects would scale as the sum of errors obtained from \cref{thm:anytime_bound} for $a=0$ and $1$, plus $4\Delta^2(\alpha) C^2_y$.

\section{Proof of Thm. 4.1: \nonlinearanytimeboundname}
\label{sec:proof_non_linear}
The proof of \cref{thm:non_linear_anytime_bound} mimics most of the steps from the proof of \cref{thm:anytime_bound} from \cref{sec:proof_of_anytime_bound}, except with one major change: calculating the bias term requires new results. 

    We start with the decomposition~\cref{eq:det_final} from the proof of \cref{thm:anytime_bound}.
    \begin{align}
        (\trueobs-\estobs)^2 \leq 2 \parenth{ \frac{\sum_{j \in \estnbr} |\trueobs[\n,\t]- \trueobs[j, \t]|}{\nestnbr}  }^2
        + 2 \parenth{ \frac{\sum_{j \in \estnbr} \noiseobs[j, \t]}{\nestnbr}  }^2
        \defeq 2(\mbb B + \mbb V)
    \end{align}
    Let us replace the definition~\cref{eq:truedist} with
    \begin{align}
    \label{eq:truedist_nonlin}
        \truedist \defeq \E_{\ltime[]^{(a)}}(\lfun^{(a)}(\lunit, \ltime[]^{(a)})-\lfun^{(a)}(\lunit[j], \ltime[]^{(a)}) )^2  + 2\sigma_{a}^2.
    \end{align}
    and redefine the event from \cref{eq:event_dist} to be the event $\event[1]$ such that
    \begin{align}
        \abss{\estdist\!-\!\truedist} \leq \frac{(G_1+\nconst)^2\sqrt{32\log(4\N\T/\delta)}}{\pmint\sqrt{\T}} \seq{\eqref{eq:nonlinear_eta_chi}} \errterm', \stext{for all} j \in [\N]\backslash\braces{\n},
        \label{eq:event_dist_nonlinear}
    \end{align}
    where the last step follows from the definition of $\errterm'$ as stated in \cref{thm:non_linear_anytime_bound}.
    Notably, $\event[1]$ denotes the event that the estimated distance between unit $\n$ and all other units $j$ concentrates around the expected value $\truedist$. Moreover let $\event[2]$ be as defined in the proof of \cref{thm:anytime_bound} (see discussion around \cref{eq:event_noise}). Then with the definitions in place, one can easily derive the following analog of \cref{lemma:dist_noise_conc} by mimicking the proof and simply replacing all the definitions:
    \begin{lemma}
        \label{lemma:dist_noise_conc_non_linear}
        Under the setting of \cref{thm:non_linear_anytime_bound}, the events $\event[1]$~\cref{eq:event_dist_nonlinear} and  $\event[2]$~\cref{eq:event_noise} satisfy
        \begin{align}
        \P(\event[1]\vert \ulf) \sgrt{(a)}  1-\frac{\delta}{2},
        \qtext{and}
        \P(\event[2], \event[1] \vert \lunit) \sgrt{(b)} 1-\frac{\delta}{2}-2\exp\parenth{-\frac{\pmint[\t]\ppau( \threshold')(\N\squash{-}1)}{8}}.
        \label{eq:event_dist_noise_conc_nonlinear}
    \end{align}
    \end{lemma}
    With this result in place, the claimed bound in \cref{thm:non_linear_anytime_bound} follows as long as we can establish that on event $\event[1]$ from \cref{lemma:dist_noise_conc_non_linear}, we have $\mbb B \leq \wtil{\mbb B}^{\trm{non-lin}}$. To this end, we use the following lemma that relates the $\sinfnorm{\cdot}$-norm of a function with its second moment.
     \begin{lemma}
        \label{lem:inf_lip_bound}
        Consider a function $g:[0, 1]^d\to \real$ that is $G$-Lipschitz with respect to $\infnorm{\cdot}$-norm. Let $X \sim \mc P$, where $\mc P$ denotes a distribution on $[0, 1]^d$ with a density that is lower bounded by $c_{\mc P}$ with $\mu_2 \defeq \mbb E[g^2(X)]$. Then, we have
        \begin{align}
             \infnorm{g}  = \order_d(G^{\frac{d}{d+2}} \cdot (\mu_2/c_{\mc P})^{\frac{1}{d+2}}),
             \label{eq:decay_l2_inf}
            \end{align}
            where the big-O notation is indexing sequence of functions such that $\mu_2 \to 0$, and the constant underlying $\order_d$ can be tracked in the proof.
    \end{lemma}
    
    To apply \cref{lem:inf_lip_bound}, note that \cref{assum:nonlinear} implies that the function $g:\ltime[] \mapsto \lfun^{(a)}(\lunit[], \ltime[]) -  \lfun^{(a)}(\lunit[]', \ltime[])$ is $2G_2$-Lipschitz for any $\lunit[],\lunit[]'$ and the distribution on time factors admits a density with a lower bound $c_{\mc P, v}$. Thus we can apply \cref{lem:inf_lip_bound}  for the functions $g_j: v\mapsto \lfun^{(a)}(\lunit, \ltime[]) -  \lfun^{(a)}(\lunit[j], \ltime[])$ for any $j \in [N]$, with $c_{\mc P} = c_{\mc P, v}$ and $G=2G_2$. Putting the pieces together, on event $\event[1]$ we obtain that
    \begin{align}
        \max_{j \in \estnbr}|\lfun^{(a)}(\lunit, \ltime)-\lfun^{(a)}(\lunit[j], \ltime)|
        &\leq \max_{j \in \estnbr} \sinfnorm{g_j} \\
        &\sless{(i)} \max_{j \in \estnbr} c_d(2G_2)^{\frac{d}{d+2}} \parenth{\frac{\truedist-2\sigma_{a}^2}{c_{\mc P, v}}}^{\frac{1}{d+2}}  \\ 
        &\sless{(ii)} c_d'G_2^{\frac{d}{d+2}} \parenth{\frac{\estdist-2\sigma_{a}^2 + \errterm'}{c_{\mc P, v}}}^{\frac{1}{d+2}} \\ 
        &\sless{(ii)} c_d'G_2^{\frac{d}{d+2}} \parenth{\frac{\threshold-2\sigma_{a}^2 + \errterm'}{c_{\mc P, v}}}^{\frac{1}{d+2}}
    \end{align}
    where step~(i) follows from \cref{lem:inf_lip_bound} and the fact that $\E_{\ltime[]} g^2(\ltime[]) = \truedist - 2\sigma_{a}^2$ (see \cref{eq:truedist_nonlin}), step~(ii) follows from the definition~\cref{eq:event_dist_nonlinear} of the event $\event[1]$, and step~(iii) follows from the definition~\cref{eq:reliable_nbr} of $\estnbr$. As a result we immediately have
    \begin{align}
        \mbb B \leq \max_{j \in \estnbr}|\lfun^{(a)}(\lunit, \ltime)-\lfun^{(a)}(\lunit[j], \ltime)|^2 
        \leq c_d''G_2^{\frac{2d}{d+2}} \parenth{\frac{\threshold-2\sigma_{a}^2 + \errterm'}{c_{\mc P, v}}}^{\frac{2}{d+2}}
        \seq{\eqref{eq:nonlinear_bias}} \wtil{\mbb B}^{\trm{non-lin}}
    \end{align}
    on the event $\event[1]$ and our proof is now complete.
\subsection{Proof of \cref{lem:inf_lip_bound}}
        Let $\infnorm{g} = B$ and $x^\star \in [0, 1]^d$ be such that $|g(x^\star)| = B$. Then by Lipschitzness of $g$, for any $x \in [0, 1]^d$ we have
        \begin{align}
            g^2(x) \geq ((B-G\infnorm{x-x^\star})_{+})^2,
        \end{align}
        and consequently also that
        \begin{align}
            \mu_2 \geq c_{\mc P} \int_{[0, 1]^d}  ((B-G\infnorm{x-x^\star})_{+})^2 dx.
        \end{align}
        Next, we claim that for any $x^\star \in [0, 1]^d$,
        \begin{align}
        \label{eq:int_dominance}
            \int_{[0, 1]^d}  ((B-G\infnorm{x-x^\star})_{+})^2 dx
            \geq \int_{[0, 1]^d}  ((B-G\infnorm{x})_{+})^2 dx.
        \end{align}
        We prove this claim at the end of the section and now proceed to provide the remainder of the proof. The claim~\cref{eq:int_dominance} and symmetry of $\infnorm{x}$ implies that
        \begin{align}
            \int_{[0, 1]^d}  ((B-G\infnorm{x})_{+})^2 dx
            &= d \int_{x_2, \ldots, x_d \leq x_1 \cap [0, 1]^d} ((B-Gx_1)_{+})^2  dx\\
            &= d \int_0^1 ((B-Gx_1)_{+})^2 x_1^{d-1} dx_1 \\ 
            &= d \int_0^{\min(1, B/G)} (B-Gx_1)^2 x_1^{d-1} dx_1 \\ 
            &= d \brackets{\frac{B^2x_1^d}{d} + \frac{G^2x_1^{d+2}}{d+2} - \frac{2BGx_1^{d+1}}{d+1}}_{0}^{\min(1, B/G)} \\ 
            &= d\min\bigg\{\frac{B^2}{d} + \frac{G^2}{d+2} - \frac{2BG}{d+1}, 
            \\
            &\qquad\qquad\qquad \frac{B^{d+2}}{G^{d}} \parenth{\frac{1}{d} + \frac{1}{d+2} - \frac{2}{d+1}}
             \bigg\}.
        \end{align}
        Putting the pieces together, we find that
        \begin{align}
            \mu_2 \geq c_{\mc P} d\min \sbraces{c_{d} (B^2+G^2), c_{d}' B^{d+2}/G^d }
        \end{align}
        Noting that $G$ remains fixed, we conclude that as $\mu_2 \to 0$, we must have 
        \begin{align}
            B = \order_d(G^{\frac{d}{d+2}} \cdot (\mu_2/c_{\mc P})^{\frac{1}{d+2}}),
        \end{align}
        as claimed.

        \paragraph{Proof of \cref{eq:int_dominance}} Introduce the shorthand $h(z) = (B-G\sinfnorm{z})_+^2$, Define $x_i' \defeq \min(x_i^\star, 1-x_i^\star) \in [0, 1/2]$ and $x_i'' \defeq \max(x_i^\star, 1-x_i^\star) = 1-x_i' \in [1/2, 1]$. Applying a change of variables with $y\gets x-x^\star$, we find that
        \begin{align}
             \int_{[0, 1]^d} h(x-x^\star) dx &= 
             \int_0^1 \int_0^1\cdots \int_0^1   h(x-x^\star) dx_1dx_2\cdots dx_d \\ 
             &= \int_{-x_1^\star}^{1-x_1^\star} \int_{-x_2^\star}^{1-x_2^\star}\cdots \int_{-x_d^\star}^{1-x_d^\star} h(y) dy_1dy_2\cdots dy_d \\ 
             &= \sum_{z_1 \in \sbraces{x_1', x_1''}}\sum_{z_2 \in \sbraces{x_2', x_2''}}\cdots \sum_{z_d \in \sbraces{x_d', x_d''}} \int_{0}^{z_1} \int_{0}^{z_2} \cdots \int_{0}^{z_d} h(y) dy_1dy_2\cdots dy_d.
        \end{align}
        Noting that $x_1'' = 1-x_1' \geq x_1'$, for any fixed $z_2 \in \sbraces{x_2' ,x_2''}, \ldots, z_d \in \sbraces{x_d', x_d''} $  in the above display, we have
        \begin{align}
            &\sum_{z_1 \in \sbraces{x_1', x_1''}} \int_{0}^{z_1} \int_{0}^{z_2} \cdots \int_{0}^{z_d} h(y) dy_1dy_2\cdots dy_d\\
            &= \int_{0}^{x_1'} \int_{0}^{z_2} \cdots \int_{0}^{z_d} h(y) dy_1dy_2\cdots dy_d \\
            &\qquad + \int_{0}^{x_1''} \int_{0}^{z_2} \cdots \int_{0}^{z_d}h(y) dy_1dy_2\cdots dy_d\\ 
            &= \int_{0}^{x_1'} \int_{0}^{z_2} \cdots \int_{0}^{z_d} h(y) dy_1dy_2\cdots dy_d
            + \int_{x_1'}^{1-x_1'}  \int_{0}^{z_2} \cdots \int_{0}^{z_d} h(y) dy_1dy_2\cdots dy_d
             \\
            &\qquad+ \int_{0}^{x_1'} \int_{0}^{z_2} \cdots \int_{0}^{z_d} h(y) dy_1dy_2\cdots dy_d \\
            &\geq \int_0^1 \int_{0}^{z_2} \cdots \int_{0}^{z_d} h(y) dy_1dy_2\cdots dy_d
            \label{eq:last_arg}
        \end{align}
        where the last inequality follows from the following argument: For any $y=(y_1, y_2, \ldots, y_d)\tp$ and $y'=(1-y_1, y_2, \ldots, y_d)\tp$ such that $y_1 \in [0, x_1']$, since $1-y_1 \geq \half \geq y_1 \geq 0$, we have
        \begin{align}
            \sinfnorm{y'} \geq \sinfnorm{y}
            \implies h(y) \geq h(y')
        \end{align}
        and consequently that
        \begin{align}
            \int_{0}^{x_1'} \int_{0}^{z_2} \cdots \int_{0}^{z_d} h(y) dy_1dy_2\cdots dy_d
            &\geq \int_{0}^{x_1'} \int_{0}^{z_2} \cdots \int_{0}^{z_d} h(y') dy_1dy_2\cdots dy_d \\
            &=\int_{1-x_1'}^{1} \int_{0}^{z_2} \cdots \int_{0}^{z_d} h(y) dy_1dy_2\cdots dy_d,
        \end{align}
        where the last step follows from a change of variables for the first coordinate.
        Repeating the argument from \cref{eq:last_arg} for the summation over other coordinates one-by-one yields that
        \begin{align}
            &\sum_{z_1 \in \sbraces{x_1', x_1''}}\sum_{z_2 \in \sbraces{x_2', x_2''}}\cdots \sum_{z_d \in \sbraces{x_d', x_d''}}  \int_{0}^{z_1} \int_{0}^{z_2}\cdots \int_{0}^{z_d} h(y) dy_1dy_2\cdots dy_d \\
            &\geq \int_0^1 \int_{0}^{z_2} \cdots \int_{0}^{z_d} h(y) dy_1dy_2\cdots dy_d \\
            &\ \ \vdots \\
            &\geq \int_0^1 \int_{0}^{1} \cdots \int_{0}^{1} h(y) dy_1dy_2\cdots dy_d,
        \end{align}
        and the desired claim~\cref{eq:int_dominance} follows.

\section{Asymptotic results for non-linear factor model}
\label{sec:asymp_nonlinear}
    In a manner similar to the generalization of \cref{thm:anytime_bound} to \cref{thm:non_linear_anytime_bound} for non-asymptotic guarantees under a non-linear factor model, even our asymptotic guarantees from \cref{thm:anytime_asymp} for the bilinear model can be extended to the non-linear factor model setting. As noted in \cref{sub:non_linear_lf}, the primary difference between the bilinear and non-linear model results lies in the scaling of the bias term, where in the bias goes to $0$ much more slowly for the non-linear case. 

    Nevertheless, one can see that given \cref{thm:non_linear_anytime_bound}, we can generalize \cref{thm:anytime_asymp} to the non-linear setting by making the following changes: (i) Replacing $\uconst\vconst$ by $G_1$ while defining $\errtwo$ in \cref{eq:errtwo}, (ii) enforcing identically the same regularity conditions as in \cref{eq:regularity_consistency} to derive asymptotic consistency, and (ii) enforcing the regularity conditions same as in \cref{eq:regularity_clt} to derive asymptotic normality, albeit with one change---replacing the condition $L_T \threstwot \to 0$ with $L_T(\threstwot)^{\frac{2}{d+2}}\to 0$, to account for the different scaling of bias for the two settings. Moreover, analogous generalizations for estimating the average treatment effects (like in \cref{thm:ate_asymp}) can be also derived. Given \cref{thm:non_linear_anytime_bound}, the results and proofs for each case are easily derivable and thus we omit stating and proving formal guarantees.
\end{appendix}

\bibliographystyle{imsart-number} %
\bibliography{refs}       %

\end{document}